\documentclass[journal]{IEEEtran}
\usepackage{amsmath,amsfonts,amssymb,amsthm,mathrsfs}
\usepackage{algorithmic}
\usepackage[linesnumbered,ruled,vlined]{algorithm2e}
\usepackage{array}
\usepackage{mathtools}
\usepackage{textcomp}
\usepackage{url}
\usepackage{verbatim}
\usepackage{cite}
\usepackage{color}
\usepackage{xcolor}
\usepackage{dsfont}
\usepackage{multirow}
\usepackage[normalem]{ulem}
\useunder{\uline}{\ul}{}
\usepackage{graphicx}
\usepackage{subfigure}
\usepackage{mdframed}
\usepackage{booktabs}
\usepackage{makecell}
\usepackage{enumerate}
\usepackage{enumitem}

\useunder{\uline}{\ul}{}
\allowdisplaybreaks[4]
\DeclareMathOperator*{\argmax}{arg\,max}

\DeclareMathOperator*{\TV}{\operatorname{TV}}
  
\theoremstyle{definition}
\newtheorem{assumption}{Assumption}
\newtheorem{theorem}{Theorem}
\newtheorem{definition}{Definition}
\newtheorem{corollary}{Corollary}
\newtheorem{lemma}{Lemma}

\newtheorem{proposition}{Proposition}

\begin{document}

\title{One-Shot Federated Model Editing for Device Dynamics at the Edge}

\author{Youming~Tao,
Weijun~Yin, 
Huanyi~Xie, 
Yanqing~Yao,
Dongxiao~Yu,~\IEEEmembership{Senior Member,~IEEE},
Xiuzhen~Cheng,~\IEEEmembership{Fellow,~IEEE},
Di~Wang,~\IEEEmembership{Member,~IEEE},
Falko~Dressler~\IEEEmembership{Fellow,~IEEE}
\thanks{This work was supported by the National Natural Science Foundation of China (Grant No. 62572025), Beijing Advanced Innovation Center for Future Blockchain and Privacy Computing (Grant No. GJJ-25-xxx), and the Fundamental Research Funds for the Central Universities, China.}
\thanks{Youming Tao and Falko Dressler are with the School of Electrical Engineering and Computer Science, TU Berlin, 10623 Berlin, Germany (e-mail: tao@ccs-labs.org; dressler@ccs-labs.org).}%
\thanks{Weijun Yin and Yanqing Yao are with the School of Cyber Science and Technology, Beihang University, Beijing 100191, China, and also with the Beijing Advanced Innovation Center for Future Blockchain and Privacy Computing, Beihang University, Beijing 100083, China (e-mail: yinweijun@buaa.edu.cn; yaoyq@buaa.edu.cn).}%
\thanks{Huanyi Xie and Di Wang are with the Division of Computer, Electrical and Mathematical Sciences and Engineering, King Abdullah University of Science and Technology, Saudi Arabia (e-mail: huanyi.xie@kaust.edu.sa; di.wang@kaust.edu.sa).}%
\thanks{Dongxiao Yu and Xiuzhen Cheng are with the School of Computer Science and Technology, Shandong University, Qingdao 266237, China (e-mail: dxyu@sdu.edu.cn; xzcheng@sdu.edu.cn).}%
\thanks{The first two authors contributed equally.}
\thanks{Corresponding author: Y. Yao and D. Wang.}
}



\maketitle

\begin{abstract}
Standard federated learning (FL) requires persistent device connectivity across multiple communication rounds, making it unsuitable for dynamic environments such as vehicular networks, where devices frequently join and leave. One-shot FL offers a promising alternative by completing training in a single communication round. 
However, one-shot FL suffers from model solidification and cannot readily adapt to subsequent client dynamics, including new client contributions and the withdrawal of previous contributions. We therefore consider exact model editing, which removes withdrawn contributions through unlearning or incorporates newly contributed ones through incremental learning without retraining from scratch.
We identify that efficient editing fundamentally hinges on learning stability. 
To this end, we propose F-SOSA, the first stable one-shot FL algorithm that enables rapid and exact model editing. F-SOSA leverages sub-sampling of local models with carefully tuned rate for stable aggregation and performs curvature-aware model fusion to preserve utility. 
We prove that F-SOSA achieves vanishing training error and reasonably small generalization error bounds.
Building on this stability, we further develop two single-round editing algorithms: F-SOSA-U for unlearning and F-SOSA-I for incremental learning. 
They ensure that the edited model, along with all intermediate algorithmic states, is statistically indistinguishable from retraining while incurring only an expected $O(\rho)$ fraction of its cost, where $\rho$ quantifies stability.
Experiments demonstrate that our approach matches retraining performance while reducing cumulative time and communication overhead by at least 36\% and 22\%, respectively, under realistic urban mobility simulations.

\end{abstract}

\begin{IEEEkeywords}
Federated Learning, Dynamic Participation, Model Editing, Single-Round Communication
\end{IEEEkeywords}

\section{Introduction}
Federated learning (FL)~\cite{mcmahan2017communication} enables collaborative model training across distributed devices without sharing raw data. 
In mobile edge computing (MEC) scenarios such as vehicular networks, client participation is inherently dynamic. 
Devices frequently join or leave due to mobility or network instability. 
This challenges standard FL algorithms like FedAvg~\cite{mcmahan2017communication}, FedProx~\cite{li2020federated}, and SCAFFOLD~\cite{karimireddy2020scaffold}. These methods rely on a largely fixed client population and multiple rounds of communication. Such assumptions rarely hold in practice.

Communication efficiency is also critical in bandwidth-constrained edge environments~\cite{tao2023byzantine-resilient,tao2024private,chen2026fedsubmuon}. To reduce communication rounds and reliance on persistent availability, one-shot FL~\cite{guha2019one} has emerged, which completes training in a single round. 
Recent studies~\cite{jhunjhunwala2024fedfisher,liu2024fedlpa,zhang2022dense} show that one-shot FL can achieve competitive performance over non-iid data. 
This makes it well-suited for dynamic systems. 
Vehicular networks provide a natural setting for collaborative learning across vehicles~\cite{yu2021decentralized}. 
A representative use case is vehicular micro-clouds (VMCs)~\cite{dressler2019virtual}, where nearby vehicles temporarily pool their resources for edge collaboration. Vehicle mobility leads to changes in VMC membership as vehicles join and leave~\cite{zhou2024optimizing}, making persistent multi-round participation difficult to maintain.
One-shot FL is therefore a natural solution.

However, one-shot FL suffers from model solidification. Once trained, the global model cannot adapt to changes in the client population. When clients arrive or depart, the set of participants contributing to the collaborative model changes. A newly arrived client can contribute its local model and benefit from the resulting global model, whereas a departing client may withdraw its previous contribution for privacy before disconnecting from the coordinator. Supporting such contribution additions and withdrawals requires the one-shot model to accommodate incremental learning and unlearning after its initial training. This issue is particularly relevant in dynamic edge applications such as VMCs, where vehicles participate only temporarily and repeated retraining after every arrival or departure would incur substantial communication and computation overhead.
Approximate editing methods such as fine-tuning or model calibration can reduce this cost, but successive approximate updates may deviate from the model produced by retraining and propagate this deviation to later edits. Exact editing instead restores the same distribution as retraining after every change, preventing such history-dependent drift.
These challenges motivate the following question:

\begin{mdframed}[
    leftmargin=0pt,
    rightmargin=0pt,
    innerleftmargin=4pt,
    innerrightmargin=4pt,
    innertopmargin=4pt,
    innerbottommargin=4pt,
]
\textit{Can we efficiently edit a one-shot trained FL model to reflect an updated client set without retraining?}
\end{mdframed}
We formalize this goal as \textit{exact federated model editing}, which
includes two tasks: (1) \textit{client unlearning} to remove a client's previous contribution upon a withdrawal request, and (2) \textit{client incremental learning} to incorporate the contribution of a newly participating client. By exactness, we require statistical indistinguishability between editing and retraining, which applies to  both the final model and all intermediate algorithmic states. To achieve this goal, we propose the first exact model editing framework for one-shot FL. The framework consists of a stable one-shot FL algorithm, F-SOSA, and two editing algorithms: F-SOSA-U for client unlearning and F-SOSA-I for client incremental learning. Our main contributions are summarized as follows:

\begin{enumerate}[leftmargin=*]
    \item We formalize the definition of exact federated model editing through statistical indistinguishability between editing and retraining. Specifically, we require that the joint distributions of the obtained model and all intermediate algorithmic states produced by editing match those generated by retraining on the updated client set.
    \item We design F-SOSA as an edit-friendly one-shot FL algorithm that combines two complementary mechanisms. First, we reformulate curvature-aware model fusion as a projection characterized by client-additive sufficient statistics, so that a single-client change can be incorporated by updating only the corresponding contribution rather than rerunning the full fusion procedure. Second, we introduce randomized client subsampling to make the one-shot output stable in total variation, thereby allowing many membership changes to reuse the existing model and states without recomputation.
    \item We provide theoretical utility guarantees for F-SOSA. Specifically, we establish upper bounds on both training and generalization errors for over-parameterized neural networks. The training error vanishes as network width and local training increase. The generalization error converges to a constant governed by data non-degeneracy and becomes favorably small as the numbers of data samples and clients grow.
    \item Building upon F-SOSA, we develop F-SOSA-U and F-SOSA-I for client unlearning and incremental learning. Thanks to F-SOSA’s total variation stability quantified by $\rho<1$, not all editing requests need execution. We design a lightweight verification to determine the necessity. As a result, only an expected $O(\rho)$-fraction of requests trigger the editing process.
    \item We conduct extensive experiments on standard benchmarks to evaluate model utility, editing exactness, and efficiency. F-SOSA achieves strong accuracy compared with state-of-the-art one-shot FL methods. F-SOSA-U/I match retraining performance while significantly reducing time and communication costs. We verify the exactness of editing using membership inference attacks. We also present a case study with simulated vehicular scenarios to demonstrate the practical applicability of our proposed approach.
\end{enumerate}

\section{Related Work} 
We review three related research directions in FL: unlearning, continual learning, and dynamic client participation.

\noindent\textbf{Federated unlearning.} 
Federated unlearning was first studied in~\cite{liu2021federaser}, which removes a client approximately by reversing its updates. While faster than retraining, it remains computationally expensive. 
Subsequent works improve efficiency through noise perturbation~\cite{fraboni2024sifu,wu2026certified,wu2026trajectoryaware}, knowledge distillation~\cite{wu2022knowledge}, class-discriminative pruning~\cite{wang2022federated}, or gradient ascent~\cite{wu2022federated}. 
However, all these methods provide only approximate removal. 
As a result, the model remains vulnerable to membership inference and gradient leakage~\cite{shokri2017membership,song2020analyzing,wang2019beyond,zhu2019deep,wang2025towards,chen2025fedmua,li2026not}.
Recent studies explore exact unlearning. Liu et al.~\cite{liu2022right} propose a Newton-style retraining method that uses Fisher information for curvature approximation. However, their retraining path deviates from the original trajectory and relies on convexity assumptions. 
Tao et al.~\cite{tao2024communication} introduce a stabilized FedAvg variant called FATS for exact unlearning via resampling and rapid retraining. Their one-shot variant, however, suffers from degraded utility due to simple averaging and does not support client addition.

\noindent\textbf{Federated continual learning.} 
Federated continual learning assumes a fixed set of clients whose local data evolve over time. 
Existing work considers different settings including task-incremental~\cite{bakman2024federated,zhang2024cross,yue2021inexact,ding2024understanding}, class-incremental~\cite{babakniya2023data,dong2022federated,yoon2021federated}, and domain-incremental~\cite{huang2022learn,ma2022efl,zhang2023federated}, which correspond to sequential tasks, new classes, and domain shifts, respectively. 
The primary goal is to mitigate catastrophic forgetting and preserve past knowledge. 
As they prioritize knowledge retention over exact alignment with new data, these methods cannot guarantee statistical equivalence to retraining.
In addition, they rely on multi-round training and continued access to past clients. In contrast, we study dynamic client populations where new clients may join after others have left.

\noindent\textbf{Federated Learning with dynamic participation.} 
Recent studies consider federated learning with dynamic participants. Ruan et al.~\cite{ruan2021towards} and Chang et al.~\cite{chang2024federated} allow clients to join or leave during training and provide convergence guarantees. Their setting differs fundamentally from ours. They study multi-round federated training where client changes occur across rounds and the goal is to adapt during training.  In contrast, we focus on one-shot FL, which involves only a single round. In this setting, client dynamics arise after training rather than during it. Our goal is therefore to enable exact model editing of the trained model to accommodate changes in the client population.

\noindent\textbf{Summary.}
Existing studies address related but distinct challenges. Federated unlearning focuses on removing clients but typically provides only approximate guarantees or suffers from utility degradation in one-shot settings. Federated continual learning assumes a fixed client set and prioritizes knowledge retention rather than exact alignment with updated data. Work on dynamic participation adapts to client churn during multi-round training and does not apply to post-training model editing in one-shot FL. Consequently, none of these directions directly address our setting, where client additions and withdrawals occur after one-shot training. This challenge is amplified under repeated edits, since any residual deviation may propagate to subsequent updates. The goal is therefore not merely post-training adaptation, but efficient and exact editing that remains valid across successive client-set changes.

\section{Preliminaries}\label{sec-pre}

\subsection{System Model}

We consider an edge system consisting of a trusted infrastructure-based coordinator and a dynamic set of devices. The coordinator can, for example, be an RSU or MEC server in a vehicular micro-cloud (VMC), with vehicles serving as participating devices. Time proceeds in discrete rounds. Devices may join or leave at each round. We do not impose a particular distribution on client arrivals or departures; their occurrence may be non-uniform across devices and time. Let $\mathcal M(t)$ denote the set of active devices at round $t$, and let $M(t)=|\mathcal M(t)|$.
The coordinator maintains knowledge of the active device set. When a new device joins, it establishes a connection with the coordinator and may submit its local model for inclusion in the collaborative model. Before an active device departs and loses connectivity to the coordinator, it may issue an unlearning request to withdraw its previous contribution. Standard authenticated communication between devices and the coordinator is assumed.

\subsection{Federated Learning}\label{sec-pre-FL}

Let $M$ denote the number of active devices. Each device $i \in [M]$ holds a local dataset $\mathcal{D}_i = \{(\boldsymbol{x}_{ij}, y_{ij})\}_{j=1}^n$, where $\boldsymbol{x}_{ij} \in \mathbb{R}^p$ denotes the feature vector and $y_{ij} \in \mathbb{R}$ denotes the label. The global dataset is $\mathcal{D} = \bigcup_{i=1}^M \mathcal{D}_i$ with total size $N = Mn$.
The learning objective can be formulated as the following distributed optimization problem:
\begin{equation}\label{eq-flopt}
    \min_{\boldsymbol{W} \in \mathbb{R}^d} 
L_\mathcal{D}^{f,\ell}(\boldsymbol{W}) \coloneqq \frac{1}{M} \sum_{i=1}^M 
L_{\mathcal{D}_i}^{f,\ell}(\boldsymbol{W}),
\end{equation}
where $L_{\mathcal{D}_i}^{f,\ell}(\boldsymbol{W}) \coloneqq \frac{1}{n} \sum_{(\boldsymbol{x}, y) \in \mathcal{D}_i} 
\ell(f(\boldsymbol{W}, \boldsymbol{x}), y)$, $f(\boldsymbol{W}, \cdot)$ denotes the model parameterized by $\boldsymbol{W}$, and $\ell(\cdot,\cdot)$ denotes the loss function. Following the standard streaming-edit setting~\cite{ullah2021machine}, we consider editing sequences in which the active population remains within a constant factor of its initial size, i.e., $M(0)/2 \le M(t) \le 2M(0)$.

\subsection{Federated Model Editing}

To support dynamic client participation, our goal is to enable the global model to adapt efficiently to changes in the client set. Specifically, we aim to (i) remove the influence of departed clients that withdraw their previous contributions (client unlearning), and (ii) incorporate newly arrived clients (client incremental learning), while preserving model utility and avoiding full retraining.

Let $\mathfrak{M}$ denote the space of devices, $\mathcal{W}$ the space of model parameters, and $\mathcal{S}$ the space of internal algorithmic states, which will be specified later. A one-shot FL algorithm is a randomized map $\mathcal{A}: \mathfrak{M}^* \to \mathcal{W} \times \mathcal{S}$. Given a set of devices, it produces a global model together with internal algorithmic states. A model editing algorithm is a map $\mathcal{U}: \mathcal{W} \times \mathcal{S} \times \mathfrak{M}^* \to \mathcal{W} \times \mathcal{S}$. It takes the current model, the associated states, and a set of client addition or removal requests, and outputs an updated model and states. Examples of internal states include sketches of device information or intermediate computations that can be reused during editing.

\begin{definition}[Exact Federated Model Editing]\label{def-data-unlearning}
Let $\mathcal{M}$ and $\mathcal{M}^\prime$ denote the device sets before and after a change. Given a one-shot FL algorithm $\mathcal{A}$, a model editing algorithm $\mathcal{U}$ satisfies exact model editing if for any measurable event $\mathcal{E} \subseteq \mathcal{W} \times \mathcal{S}$,
\[
\mathbb{P}\big[\mathcal{A}(\mathcal{M}^\prime) \in \mathcal{E}\big] =
\mathbb{P}\big[\mathcal{U}(\mathcal{A}(\mathcal{M}), \mathcal{M} \oplus \mathcal{M}^\prime) \in \mathcal{E}\big],
\]
where $\mathcal{M} \oplus \mathcal{M}^\prime$ denotes the symmetric difference, i.e.,
$\mathcal{M} \oplus \mathcal{M}^\prime = (\mathcal{M} \setminus \mathcal{M}^\prime) \cup (\mathcal{M}^\prime \setminus \mathcal{M})$.
\end{definition}

This definition requires that both the output model and all internal algorithmic states after editing be statistically indistinguishable from those obtained by retraining on the updated device set. Requiring exactness for the internal states is important for repeated editing: these states are reused by subsequent editing operations, and any approximation retained in them may be propagated to later model updates. If one edit produces exactly the same joint distribution as retraining, the resulting state can be treated as a fresh execution of the learning algorithm on the updated client set. The guarantee thus extends to an arbitrary sequence of edits by induction, without accumulating approximation error across requests.

\subsection{Total Variation (TV) Stability}

Tao et al.~\cite{tao2024communication} showed that exact federated unlearning can be achieved efficiently when the FL algorithm is stable in total variation (TV) distance. We extend this insight to general model editing. 
Let $P = \mathcal{A}(\mathcal{M})$ and $Q = \mathcal{A}(\mathcal{M}^\prime)$ denote the distributions of the one-shot FL output, including the model and internal states, before and after editing. 
Model editing can be viewed as transporting probability mass from $P$ to $Q$.

Retraining corresponds to a trivial transport plan that samples independently from $Q$, but it incurs high computational cost. Efficient editing instead exploits the overlap between $P$ and $Q$ by reusing existing computations. Let $\Pi(P,Q)$ denote the set of couplings between $P$ and $Q$. To quantify editing cost, we follow~\cite{tao2024communication} and adopt the 0-1 cost function $c(x, y) = \mathds{1}\{x \ne y\}, \quad \forall x, y \in \Omega \coloneqq \mathcal{W} \times \mathcal{S}$. Let $\TV(P,Q)$ denote the total variation distance between $P$ and $Q$:
\begin{equation*}
    \TV(P,Q) \coloneqq \sup_{\text{measurable } Z} |P(Z) - Q(Z)| 
    = \frac{1}{2}\|\phi_P - \phi_Q\|_1,
\end{equation*}
where $\phi_P$ and $\phi_Q$ are the probability densities of $P$ and $Q$. The minimal expected editing cost is then
\begin{equation*}
    \inf_{\pi \in \Pi(P,Q)} \mathbb{E}_{(x,y)\sim\pi}[\mathds{1}\{x \ne y\}] = \TV(P,Q).
\end{equation*}
Thus, only an $O(\TV(P,Q))$ fraction of editing requests requires actual recomputation.
In practice, the future device changes are unknown at training time. When generating $P$, we do not know the corresponding target distribution $Q$. To enable efficient editing under arbitrary future changes, a one-shot FL algorithm must guarantee uniformly bounded TV distance across all possible single-device edits. This motivates the following notion of stability.

\begin{definition}[$\rho$-TV-Stable One-Shot FL Algorithm]\label{def-tvstability}
A one-shot FL algorithm $\mathcal{A}$ is $\rho$-TV-stable if for any device sets $\mathcal{M}$ and $\mathcal{M}^\prime$ with $|\mathcal{M} \oplus \mathcal{M}^\prime| = 1$, we have
$\TV\big(\mathcal{A}_1(\mathcal{M}), \mathcal{A}_1(\mathcal{M}^\prime)\big) \le \rho$,
where $\mathcal{A}_1(\cdot)$ denotes the model output of $\mathcal{A}$.
\end{definition}
Note that Definition~\ref{def-tvstability} only requires stability of the marginal distribution of model output. The internal states need not be considered. For multiple device changes, suppose $|\mathcal{M} \setminus \mathcal{M}^\prime| = k_1$ and $|\mathcal{M}^\prime \setminus \mathcal{M}| = k_2$. By the triangle inequality of TV distance and repeated application of Definition~\ref{def-tvstability}, we have
$\TV\big(\mathcal{A}_1(\mathcal{M}), \mathcal{A}_1(\mathcal{M}^\prime)\big) \le (k_1 + k_2)\rho$.

\begin{algorithm}[t]
\caption{\texttt{F-SOSA}: Fisher-Informed Subsample One-Shot Averaging}
\label{algo-FSOSA}
\KwIn{Initial model $\boldsymbol{W}^{(0)}$, number of local iterations $K$, learning rate $\eta$, stability $\rho$, sample size $P$}

\textbf{Coordinator} samples $\mathcal P$ uniformly from the size-$P$ subsets of $\mathcal M$\;

\textbf{Coordinator} broadcasts $\boldsymbol{W}^{(0)}$ to clients\;

\For{\textbf{\textup{every}} \textbf{client} $i\in\mathcal{P}$ \textbf{\textup{in parallel}}}{
    Set $\boldsymbol{W}_i^{(0)}\gets \boldsymbol{W}^{(0)}$ \;
    \For {$k\gets 1,\cdots, K$}{
        $\boldsymbol{W}_i^{(k)}\gets \boldsymbol{W}_i^{(k-1)}-\eta\nabla L_{\mathcal{D}_i}(\boldsymbol{W}_i^{(k-1)})$\;
    }
    Set $\boldsymbol{\tilde{W}}_i\gets\boldsymbol{W}_i^{(K)}$\;
    Compute the Fisher $\boldsymbol{F}_i\gets\frac{1}{n}\sum\limits_{j=1}^{n}\mathbb{E}_{\boldsymbol{y}}[\nabla \log\mathbb{P}(\boldsymbol{y}|\boldsymbol{x}_{ij}, \boldsymbol{\tilde{W}}_i)\nabla \log\mathbb{P}(\boldsymbol{y}|\boldsymbol{x}_{ij}, \boldsymbol{\tilde{W}}_i)^\top]$\;
    Communicate $\boldsymbol{\tilde{W}}_i$ and $\boldsymbol{F}_i$ to the coordinator\;
}

\textbf{Coordinator} computes and stores $\boldsymbol{\bar{W}}_{\mathcal{P}}$, $\boldsymbol{F}_{\mathcal{P}}$ and $\boldsymbol{b}_{\mathcal{P}}$: $\boldsymbol{\bar{W}}_{\mathcal{P}}\!\gets\!\sum\limits_{i\in\mathcal{P}}\boldsymbol{\tilde{W}}_i/P$, $\boldsymbol{F}_{\mathcal{P}}\!\gets\!\sum\limits_{i\in\mathcal{P}}\boldsymbol{F}_i$, $\boldsymbol{b}_{\mathcal{P}}\!\gets\!\sum\limits_{i\in\mathcal{P}}\boldsymbol{F}_i\boldsymbol{\tilde{W}}_i$\;

\textbf{Coordinator} solves the linear system in $\boldsymbol{\Lambda}\in\mathbb{R}^d$: 
$\boldsymbol{F}_{\mathcal{P}}^{2}\boldsymbol{\Lambda}
=2\big(\boldsymbol{F}_{\mathcal{P}}\boldsymbol{\bar{W}}_{\mathcal{P}}-\boldsymbol{b}_{\mathcal{P}}\big)$\;

\textbf{Coordinator} sets the global model as
$\boldsymbol{\bar{W}}_{\mathcal{P}}^{\text{F}}
\gets
\boldsymbol{\bar{W}}_{\mathcal{P}}-\tfrac{1}{2}\boldsymbol{F}_{\mathcal{P}}\boldsymbol{\Lambda}$\;

\KwOut{$\boldsymbol{\bar{W}}_{\mathcal{P}}^{\text{F}}$}
\end{algorithm}

\section{Our Method}\label{sec-osfr}

\subsection{TV-Stable One-Shot Federated Learning: F-SOSA}
\label{sec:f-sosa}

The federated learning admits a posterior-inference interpretation. Specifically, the learning objective in~\eqref{eq-flopt} corresponds to finding the mode of a global posterior distribution over model parameters. This viewpoint has recently been used to derive one-shot FL algorithms that aggregate local posterior information in a single communication round~\cite{jhunjhunwala2024fedfisher}. We adopt this probabilistic formulation as a technical starting point, but our design goal is fundamentally different. Existing posterior-based one-shot FL methods primarily optimize a single fusion operation. In contrast, our setting requires the fused model to support repeated post-training edits as the client set changes. This creates two additional requirements. First, the fusion state should admit efficient single-client addition and removal without rerunning the entire fusion procedure. Second, the output should remain stable under a single-client change so that previously computed results can be reused whenever possible. We address these requirements in two stages: we first derive a curvature-aware projection characterized by client-additive statistics, and then introduce randomized client subsampling to control total-variation stability.

We first recall the probabilistic formulation. Following standard treatments in posterior-based FL~\cite{jhunjhunwala2024fedfisher}, we assume the loss corresponds to a negative log-likelihood from an exponential-family model, which holds for common choices such as squared loss and cross-entropy.

\begin{assumption}
Given $\boldsymbol{z}=f(\boldsymbol{W},\boldsymbol{x})$, $\ell(\boldsymbol{z},\boldsymbol{y})\propto -\log\mathbb{P}(\boldsymbol{y}|\boldsymbol{z})$ where $\mathbb{P}(\boldsymbol{y}|\boldsymbol{z})=h(\boldsymbol{y})\exp(\boldsymbol{z}^\top T(\boldsymbol{y})-A(\boldsymbol{z}))$ and $h(\boldsymbol{y}), T(\boldsymbol{y}), A(\boldsymbol{z})$ are some real-valued functions. 
\end{assumption}

Under this assumption, the likelihood of a data point $(\boldsymbol{x}, \boldsymbol{y})$ given parameters $\boldsymbol{W}$ can be written as
\begin{equation*}
\begin{aligned}
    \mathbb{P}((\boldsymbol{x}, \boldsymbol{y})|\boldsymbol{W})
    &=\mathbb{P}(\boldsymbol{x})\mathbb{P}(\boldsymbol{y}|\boldsymbol{x},\boldsymbol{W})\\
    &\propto q(\boldsymbol{x})\exp\!\big(-\ell(f(\boldsymbol{W}, \boldsymbol{x}),\boldsymbol{y})\big),
\end{aligned}
\end{equation*}
where $q(\cdot)$ is a prior on $\boldsymbol{x}$ independent of $\boldsymbol{W}$. Under a flat prior $\mathbb{P}(\boldsymbol{W})\propto 1$, the global Maximum A Posteriori (MAP) estimator coincides with the minimizer of~\eqref{eq-flopt}~\cite{al2021federated}. Moreover, the global posterior factorizes as
\begin{equation*}
    \mathbb{P}(\boldsymbol{W}|\mathcal{D}) \propto \prod_{i=1}^M \mathbb{P}(\boldsymbol{W}|\mathcal{D}_i).
\end{equation*}
This factorization suggests a one-shot protocol where each client transmits a summary of its local posterior once and the server combines these summaries to obtain a global MAP estimate. Unfortunately, an analytical expression of $\mathbb{P}(\boldsymbol{W}|\mathcal{D}_i)$ is generally intractable for modern models. A standard remedy is to approximate each local posterior near its mode. Let
\begin{equation*}
    \boldsymbol{\tilde{W}}_i \in \argmax_{\boldsymbol{W}} \mathbb{P}(\boldsymbol{W}|\mathcal{D}_i),
\end{equation*}
which is obtained by local training. A second-order expansion around $\boldsymbol{\tilde{W}}_i$ yields a Gaussian (Laplace) approximation:
\begin{equation*}
\log\mathbb{P}(\boldsymbol{W}|\mathcal{D}_i)
\approx
\log\mathbb{P}(\boldsymbol{\tilde{W}}_i|\mathcal{D}_i)
-\frac{n}{2}(\boldsymbol{W}-\boldsymbol{\tilde{W}}_i)^\top
\boldsymbol{H}_i
(\boldsymbol{W}-\boldsymbol{\tilde{W}}_i),
\end{equation*}
where
\[
\boldsymbol{H}_i
= -\frac{1}{n}\nabla^2\log\mathbb{P}(\boldsymbol{W}|\mathcal{D}_i)\big|_{\boldsymbol{W}=\boldsymbol{\tilde{W}}_i}.
\]
To ensure a tractable and well-behaved curvature surrogate, $\boldsymbol{H}_i$ is commonly replaced by the Fisher information at $\boldsymbol{\tilde{W}}_i$:
\begin{equation*}
\boldsymbol{F}_i=\frac{1}{n}\sum_{j=1}^n\mathbb{E}_{\boldsymbol{y}}
\!\left[\nabla\log\mathbb{P}(\boldsymbol{y}|\boldsymbol{x}_{ij},\boldsymbol{W})
\nabla\log\mathbb{P}(\boldsymbol{y}|\boldsymbol{x}_{ij},\boldsymbol{W})^\top\right],
\end{equation*}
evaluated at $\boldsymbol{W}=\boldsymbol{\tilde{W}}_i$. $\boldsymbol{F}_i$ is symmetric and positive semi-definite (PSD). 
Computing, storing, and communicating full curvature matrices can be prohibitively expensive in large-scale distributed learning~\cite{chen2025adaptive}. 
In our setting, the full Fisher matrix $\boldsymbol F_i\in\mathbb R^{d\times d}$ incurs quadratic storage and communication costs. 
Our fusion formulation does not depend on a particular Fisher representation, so structured approximations can be adopted according to the available computation, storage, and communication resources. For a layer with weight matrix of size $o_\ell\times i_\ell$ and $d_\ell=o_\ell i_\ell$ parameters, Table~\ref{tab:fisher_complexity} summarizes representative per-client Fisher representations. A diagonal approximation retains only the diagonal curvature and has linear representation cost, whereas K-FAC preserves richer layer-wise curvature through Kronecker factors while avoiding the quadratic representation cost of a full Fisher matrix. Thus, the choice of Fisher approximation provides a trade-off between curvature fidelity and resource overhead.
\begin{table}[t]
\centering
\caption{Structural complexity of representative per-client Fisher representations for a layer with $d_\ell=o_\ell i_\ell$ parameters.}
\label{tab:fisher_complexity}
\begin{tabular}{c|c|c}
\hline
Structure & Storage / Communication & Curvature Structure \\
\hline
Full
& $O(d_\ell^2)$
& Full \\
Diagonal
& $O(d_\ell)$
& Diagonal only \\
K-FAC
& $O(i_\ell^2+o_\ell^2)$
& Kronecker-factorized \\
\hline
\end{tabular}
\end{table}
Combining these local posterior approximations yields a Fisher-weighted quadratic form for the global log-posterior:
\begin{equation*}
\begin{aligned}
    &\log\mathbb{P}(\boldsymbol{W}|\mathcal{D})\\
    &\approx
    \sum_{i=1}^M \log\mathbb{P}(\boldsymbol{\tilde{W}}_i|\mathcal{D}_i)
    -\frac{n}{2}\sum_{i=1}^M(\boldsymbol{W}-\boldsymbol{\tilde{W}}_i)^\top
    \boldsymbol{F}_i
    (\boldsymbol{W}-\boldsymbol{\tilde{W}}_i).
\end{aligned}
\end{equation*}
Maximizing this approximation is equivalent to solving
\begin{equation}\label{eq-fishermin}
\min_{\boldsymbol{W}\in\mathbb{R}^d}\sum_{i=1}^M(\boldsymbol{W}-\boldsymbol{\tilde{W}}_i)^\top \boldsymbol{F}_i(\boldsymbol{W}-\boldsymbol{\tilde{W}}_i).
\end{equation}
Since each $\boldsymbol{F}_i\succeq 0$, any minimizer satisfies the first-order condition
\begin{equation}\label{eq-solution}
\left(\sum_{i=1}^M\boldsymbol{F}_i\right)\boldsymbol{W}
=\sum_{i=1}^M\boldsymbol{F}_i\boldsymbol{\tilde{W}}_i.
\end{equation}
In over-parameterized regimes ($d\gg N$), the system in~\eqref{eq-solution} can admit multiple solutions. 
To select a unique and well-behaved estimator, an effective selection rule is to choose the feasible solution that stays closest to the collection of local optima~\cite{jhunjhunwala2024fedfisher}. This leads to the following constrained least-squares formulation
\begin{equation}\label{eq-fedfisher}
\min_{\boldsymbol{W}\in\mathbb{R}^d}\left\{\sum_{i=1}^M\|\boldsymbol{W}-\boldsymbol{\tilde{W}}_i\|_2^2 \;\text{s.t.}\;
\left(\sum_{i=1}^M\boldsymbol{F}_i\right)\boldsymbol{W}
=\sum_{i=1}^M\boldsymbol{F}_i\boldsymbol{\tilde{W}}_i\right\}.
\end{equation}
A solution to~\eqref{eq-fishermin} can be obtained by iterative optimization~\cite{jhunjhunwala2024fedfisher}. Starting from the average initialization $\boldsymbol{W}^{[0]}=\frac{1}{M}\sum_{i=1}^M\boldsymbol{\tilde{W}}_i$, gradient descent with step size $\eta_S \le 1/\lambda_{\max}$, where $\lambda_{\max}$ is the largest eigenvalue of $\sum_{i=1}^M\boldsymbol{F}_i$, generates iterates $\boldsymbol{W}^{[t]}
\gets
\boldsymbol{W}^{[t-1]}
-
\tilde{\eta}
\sum_{i=1}^M
\boldsymbol{F}_i
(\boldsymbol{W}^{[t-1]}-\boldsymbol{\tilde{W}}_i)$,
which converge asymptotically to a minimizer $\boldsymbol{W}^\star$. 
While such iterative fusion is suitable for one-time aggregation, it is poorly matched to repeated model editing. When a single client changes, the fusion objective changes accordingly, and the iterative procedure generally needs to be rerun to obtain the updated solution. Moreover, the previous optimization trajectory is not directly reusable for updating only the affected client contribution. This motivates the following projection formulation. Let
\begin{equation}
    \boldsymbol{\bar{W}} \coloneqq \frac{1}{M}\sum_{i=1}^M \boldsymbol{\tilde{W}}_i.
\end{equation}
Using the identity
\[
\sum_{i=1}^M\|\boldsymbol{W}-\boldsymbol{\tilde{W}}_i\|_2^2
=
M\|\boldsymbol{W}-\boldsymbol{\bar{W}}\|_2^2
+
\sum_{i=1}^M\|\boldsymbol{\bar{W}}-\boldsymbol{\tilde{W}}_i\|_2^2,
\]
problem~\eqref{eq-fedfisher} is equivalent to
\begin{equation}\label{eq-newfedfisher}
    \min_{\boldsymbol{W}\in\mathbb{R}^d}
    \left\{
        \|\boldsymbol{W}-\boldsymbol{\bar{W}}\|_2^2
        \;\text{s.t.}\;
        \boldsymbol{F}\boldsymbol{W} = \boldsymbol{b}
    \right\},
\end{equation}
where
\begin{equation}
\boldsymbol{F} \coloneqq \sum_{i=1}^M \boldsymbol{F}_i,
\qquad
\boldsymbol{b} \coloneqq \sum_{i=1}^M \boldsymbol{F}_i \boldsymbol{\tilde{W}}_i.
\end{equation}
Hence, the fusion state needed for subsequent editing is summarized by $(\bar{\boldsymbol W},\boldsymbol F,\boldsymbol b)$, each of which decomposes over clients. Under a single-client replacement, these statistics can be updated by subtracting the outgoing contribution and adding the incoming one, while all unchanged client contributions are reused.
Since each $\boldsymbol F_i\succeq 0$, $\boldsymbol b = \sum_i \boldsymbol F_i\widetilde{\boldsymbol W}_i \in \mathrm{Range}(\boldsymbol F)$, so the affine constraint $\boldsymbol F\boldsymbol W=\boldsymbol b$ in~\eqref{eq-newfedfisher} is feasible. Thus, solving~\eqref{eq-newfedfisher} reduces to computing the orthogonal projection of $\boldsymbol{\bar{W}}$ onto the affine set $\{\boldsymbol{W}:\boldsymbol{F}\boldsymbol{W}=\boldsymbol{b}\}$. This projection admits an explicit characterization via the KKT conditions. Consider the Lagrangian
\begin{equation}\label{eq-lagrangian}
\mathcal{L}(\boldsymbol{W},\boldsymbol{\Lambda})
=
\|\boldsymbol{W}-\boldsymbol{\bar{W}}\|_2^2
+
\boldsymbol{\Lambda}^\top(\boldsymbol{F}\boldsymbol{W}-\boldsymbol{b}),
\end{equation}
where $\boldsymbol{\Lambda}\in\mathbb{R}^d$ is the dual variable. Since the objective is strictly convex and the constraint is affine, the KKT conditions are necessary and sufficient for optimality. Taking derivatives of~\eqref{eq-lagrangian} yields the stationarity and primal feasibility conditions:
\begin{equation}\label{eq-kkt-stationarity}
\left\{
\begin{aligned}
2(\boldsymbol{W}-\boldsymbol{\bar{W}})+\boldsymbol{F}\boldsymbol{\Lambda} &= \boldsymbol{0},\\
\boldsymbol{F}\boldsymbol{W}-\boldsymbol{b} &= \boldsymbol{0}.
\end{aligned}
\right.
\end{equation}
From the first condition,
\begin{equation}\label{eq-W-from-lambda}
\boldsymbol{W}
=
\boldsymbol{\bar{W}}-\tfrac{1}{2}\boldsymbol{F}\boldsymbol{\Lambda}.
\end{equation}
Substituting into the constraint yields a linear system in the dual variable $\boldsymbol{\Lambda}$:
\begin{equation}\label{eq-lambda-system}
\boldsymbol{F}^2\boldsymbol{\Lambda}
=
2(\boldsymbol{F}\boldsymbol{\bar{W}}-\boldsymbol{b}).
\end{equation}
Therefore, model fusion reduces to solving the linear system~\eqref{eq-lambda-system}. The linear system in~\eqref{eq-lambda-system} remains consistent even when the aggregated Fisher matrix $\boldsymbol F$ is sparse, singular, or low-rank. Since $\boldsymbol b\in\mathrm{Range}(\boldsymbol F)$ and $\boldsymbol F\bar{\boldsymbol W}\in \mathrm{Range}(\boldsymbol F)$, we have $\boldsymbol F\bar{\boldsymbol W}-\boldsymbol b \in \mathrm{Range}(\boldsymbol F)$. Moreover, because $\boldsymbol F$ is symmetric and positive semidefinite, $\mathrm{Range}(\boldsymbol F^2) = \mathrm{Range}(\boldsymbol F)$. Therefore,~\eqref{eq-lambda-system} always admits a solution. Rank deficiency may make the dual solution $\boldsymbol\Lambda^\star$ non-unique. However, if $\boldsymbol\Lambda_1^\star$ and $\boldsymbol\Lambda_2^\star$ are two solutions, then $\boldsymbol F^2 (\boldsymbol\Lambda_1^\star-\boldsymbol\Lambda_2^\star)=0$. For symmetric positive semidefinite $\boldsymbol F$, $\mathrm{Null}(\boldsymbol F^2) = \mathrm{Null}(\boldsymbol F)$, and hence $\boldsymbol F (\boldsymbol\Lambda_1^\star-\boldsymbol\Lambda_2^\star)=0$. Thus, all dual solutions yield the same $\boldsymbol F\boldsymbol\Lambda^\star$ and therefore the same projected primal solution $\boldsymbol W^\star = \bar{\boldsymbol W} -\frac{1}{2}\boldsymbol F\boldsymbol\Lambda^\star$. Hence, rank deficiency may affect numerical conditioning and dual uniqueness, but not the existence or uniqueness of the projected primal solution.
Once a solution $\boldsymbol{\Lambda}^\star$ is obtained, the optimal model follows directly from~\eqref{eq-W-from-lambda}:
\begin{equation}\label{eq-projection-solution}
\boldsymbol{W}^\star
=
\boldsymbol{\bar{W}}-\tfrac{1}{2}\boldsymbol{F}\boldsymbol{\Lambda}^\star.
\end{equation}
This formulation makes clear that the fusion problem decomposes into two steps: solving a linear system defined by aggregated curvature statistics, and then recovering the projected model in closed form.

The linear system~\eqref{eq-lambda-system} in $\boldsymbol{\Lambda}$ can be solved efficiently and accurately by standard linear solvers. For example, conjugate gradient (CG) applies here because $\boldsymbol F^2$ is symmetric positive semidefinite and the linear system~\eqref{eq-lambda-system} is always consistent. In particular, CG minimizes the quadratic energy
\begin{equation}\label{eq-cg-energy}
\min_{\boldsymbol{\Lambda}}
\ \tfrac{1}{2}\boldsymbol{\Lambda}^\top\boldsymbol{F}^2\boldsymbol{\Lambda}
-
\boldsymbol{\Lambda}^\top 2(\boldsymbol{F}\boldsymbol{\bar{W}}-\boldsymbol{b}),
\end{equation}
whose first-order optimality condition is exactly~\eqref{eq-lambda-system}. CG terminates in at most $r=\mathrm{rank}(\boldsymbol{F})$ iterations, which is typically much smaller than $d$ in over-parameterized settings. This follows from the fact that CG minimizes the quadratic energy over an expanding Krylov subspaces and  the Krylov subspace associated with the nonzero eigenspace has dimension
at most $r=\operatorname{rank}(\boldsymbol F)$. 
The practical cost of the projection step also depends on the Fisher representation. A diagonal approximation incurs the lowest structural overhead, while K-FAC retains richer curvature information and substantially reduces representation and communication costs relative to a full Fisher matrix. However, a sum of client-wise K-FAC matrices does not generally preserve a single Kronecker factorization. Since our coordinator stores only the aggregated curvature statistic rather than individual K-FAC factors, this structure may be lost after aggregation, and the coordinator-side projection cost can approach that of a less structured representation. Hence, the Fisher approximation can be selected according to the desired trade-off among curvature fidelity, communication overhead, and coordinator-side computation.

To enable efficient exact model editing, the one-shot FL algorithm must be $\rho$-TV-stable (Definition~\ref{def-tvstability}) for a pre-specified stability level $\rho$. The fusion rule above uses all available clients and is deterministic. This benefits model utility, but it yields no distributional stability. In particular, for any neighboring device sets $\mathcal{M}$ and $\mathcal{M}^\prime$ with $|\mathcal{M}\oplus\mathcal{M}^\prime|=1$, the outputs $\mathcal{A}_1(\mathcal{M})$ and $\mathcal{A}_1(\mathcal{M}^\prime)$ are Dirac measures supported at different points. Hence, $\TV(\mathcal{A}_1(\mathcal{M}),\mathcal{A}_1(\mathcal{M}^\prime))=1$, which cannot meet an arbitrary target $\rho<1$. We address this by introducing randomized client subsampling for model aggregation. Let \(M_0\) denote the number of clients at initialization. We set \(P=\lfloor\rho M_0\rfloor\) once and retain this subset size throughout the editing sequence. The coordinator samples a client subset $\mathcal P \subseteq [M]$ of size $P$ for local training and model fusion. This shared randomness induces overlap between the output distributions under neighboring client sets, which is directly controlled by the sampling rate $\rho$. 
Note that, for the general dynamic-device learning setting considered in this work, we assume that client contributions have comparable aggregation priority, under which uniform random subsampling is appropriate. 

Combining Fisher-informed projection with client subsampling yields our $\rho$-TV-stable one-shot FL algorithm, \textsc{F-SOSA} (Fisher-Informed Subsample One-Shot Averaging). 
Algorithm~\ref{algo-FSOSA} summarizes the procedure. The coordinator samples $\mathcal{P}$ (step~1) and broadcasts the initialization $\boldsymbol{W}^{(0)}$ (step~2). Here, $P=\rho M$ is determined at the initial execution of F-SOSA
and remains fixed during all subsequent editing operations. Each client $i\in\mathcal{P}$ runs $K$ steps of local gradient descent to obtain $\boldsymbol{\tilde{W}}_i$ (steps~3--7), computes its Fisher matrix $\boldsymbol{F}_i$ at $\boldsymbol{\tilde{W}}_i$ (step~8), and uploads $(\boldsymbol{\tilde{W}}_i,\boldsymbol{F}_i)$ to the coordinator (step~9). The coordinator then forms the sufficient statistics
\[
\boldsymbol{\bar{W}}_{\mathcal{P}}=\frac{1}{P}\sum_{i\in\mathcal{P}}\boldsymbol{\tilde{W}}_i,\quad
\boldsymbol{F}_{\mathcal{P}}=\sum_{i\in\mathcal{P}}\boldsymbol{F}_i,\quad
\boldsymbol{b}_{\mathcal{P}}=\sum_{i\in\mathcal{P}}\boldsymbol{F}_i\boldsymbol{\tilde{W}}_i
\]
as shown in step~10. Finally, it outputs the global model by projecting $\boldsymbol{\bar{W}}_{\mathcal{P}}$ onto the affine set $\{\boldsymbol{W}:\boldsymbol{F}_{\mathcal{P}}\boldsymbol{W}=\boldsymbol{b}_{\mathcal{P}}\}$. This projection is computed by solving the linear system for the dual variable $\boldsymbol{\Lambda}$(steps~11) and then recovering the final fused model $\boldsymbol{\bar{W}}_{\mathcal{P}}^{\text{F}}$ (step~12).


\begin{algorithm}[t]
\caption{\texttt{F-SOSA-U}}
\label{algo-FSOSAU}
\KwIn{Unlearning request from client $i^-$ with its local contribution $(\widetilde W_{i^-},F_{i^-})$, initial model $\boldsymbol{W}^{(0)}$, number of local iterations $K$, learning rate $\eta$}

\If{$i^-\in\mathcal{P}$}{
    \textbf{Coordinator} samples $i^+\sim \operatorname{Uniform}([M]\backslash \mathcal{P})$\;
    \textbf{Coordinator} updates $\mathcal{P}\gets\mathcal{P}\backslash\{i^-\}\cup\{i^+\}$\;
    \textbf{Coordinator} communicates $\boldsymbol{W}^{(0)}$ to client $i^+$\;

    \textbf{Client $i^+$} performs local training and Fisher computation following
    Lines 4--9 of Algorithm~\ref{algo-FSOSA}, and sends $(\boldsymbol{\tilde{W}}_{i^+},\boldsymbol{F}_{i^+})$ to the coordinator\;

    \textbf{Coordinator} updates $\boldsymbol{\bar{W}}_{\mathcal{P}}$, $\boldsymbol{F}_{\mathcal{P}}$ and $\boldsymbol{b}_{\mathcal{P}}$:
    \[
    \boldsymbol{\bar{W}}_{\mathcal{P}}\gets\boldsymbol{\bar{W}}_{\mathcal{P}}+\tfrac{1}{P}\big(\boldsymbol{\tilde{W}}_{i^+}-\boldsymbol{\tilde{W}}_{i^-}\big),
    \]
    \[
    \boldsymbol{F}_{\mathcal{P}}\gets\boldsymbol{F}_{\mathcal{P}}+\big(\boldsymbol{F}_{i^+}-\boldsymbol{F}_{i^-}\big),
    \]
    \[
    \boldsymbol{b}_{\mathcal{P}}\gets\boldsymbol{b}_{\mathcal{P}}+\big(\boldsymbol{F}_{i^+}\boldsymbol{\tilde{W}}_{i^+}-\boldsymbol{F}_{i^-}\boldsymbol{\tilde{W}}_{i^-}\big) ;
    \]

    \textbf{Coordinator} solves the linear system for $\boldsymbol{\Lambda}\in\mathbb{R}^d$:
    $\boldsymbol{F}_{\mathcal{P}}^{2}\boldsymbol{\Lambda}
    =2\big(\boldsymbol{F}_{\mathcal{P}}\boldsymbol{\bar{W}}_{\mathcal{P}}-\boldsymbol{b}_{\mathcal{P}}\big)$\;

    \textbf{Coordinator} sets the updated global model:
    $\boldsymbol{\bar{W}}_{\mathcal{P}}^{\text{F}}
    \gets
    \boldsymbol{\bar{W}}_{\mathcal{P}}-\tfrac{1}{2}\boldsymbol{F}_{\mathcal{P}}\boldsymbol{\Lambda}$\;
}

\KwOut{$\boldsymbol{\bar{W}}_{\mathcal{P}}^{\text{F}}$}
\end{algorithm}

\begin{algorithm}[t]
\caption{\texttt{F-SOSA-I}}
\label{algo-FSOSAI}

\KwIn{new client index $i^+$}

\If{$\operatorname{Bernoulli}(\frac{P}{M+1})$}{
    \textbf{Coordinator} samples $i^-\sim\operatorname{Uniform}(\mathcal{P})$\;
    \textbf{Coordinator} requests $(\widetilde{\boldsymbol W}_{i^-},\boldsymbol F_{i^-})$ from client $i^-$ and $(\widetilde{\boldsymbol W}_{i^+},
    \boldsymbol F_{i^+})$ from client $i^+$\;
    \textbf{Coordinator} updates $\mathcal{P}\gets\mathcal{P}\backslash\{i^-\}\cup\{i^+\}$\;

    \textbf{Coordinator} updates $\boldsymbol{\bar{W}}_{\mathcal{P}}$, $\boldsymbol{F}_{\mathcal{P}}$ and $\boldsymbol{b}_{\mathcal{P}}$:
    \[
    \boldsymbol{\bar{W}}_{\mathcal{P}}
    \gets
    \boldsymbol{\bar{W}}_{\mathcal{P}}+\frac{1}{P}\big(\boldsymbol{\tilde{W}}_{i^+}-\boldsymbol{\tilde{W}}_{i^-}\big),
    \]
    \[\boldsymbol{F}_{\mathcal{P}}
    \gets
    \boldsymbol{F}_{\mathcal{P}}+\big(\boldsymbol{F}_{i^+}-\boldsymbol{F}_{i^-}\big),
    \]
    \[\boldsymbol{b}_{\mathcal{P}}
    \gets
    \boldsymbol{b}_{\mathcal{P}}+\big(\boldsymbol{F}_{i^+}\boldsymbol{\tilde{W}}_{i^+}-\boldsymbol{F}_{i^-}\boldsymbol{\tilde{W}}_{i^-}\big);\]

    \textbf{Coordinator} solves the linear system for $\boldsymbol{\Lambda}\in\mathbb{R}^d$:
    $\boldsymbol{F}_{\mathcal{P}}^{2}\boldsymbol{\Lambda}
    =2\big(\boldsymbol{F}_{\mathcal{P}}\boldsymbol{\bar{W}}_{\mathcal{P}}-\boldsymbol{b}_{\mathcal{P}}\big)$\;

    \textbf{Coordinator} updates the global model:
    $\boldsymbol{\bar{W}}_{\mathcal{P}}^{\text{F}}
    \gets
    \boldsymbol{\bar{W}}_{\mathcal{P}}-\tfrac{1}{2}\boldsymbol{F}_{\mathcal{P}}\boldsymbol{\Lambda}$\;
}

\KwOut{$\boldsymbol{\bar{W}}_{\mathcal{P}}^{\text{F}}$}
\end{algorithm}

\subsection{Efficient Editing Algorithms: F-SOSA-U and F-SOSA-I}
\label{sec:editing}

For a fixed subset size $P$, F-SOSA aggregates a uniformly sampled client subset $\mathcal P$. When the client population changes, the support of the uniform size-$P$ subset distribution changes accordingly. Exact model editing therefore reduces to constructing a coupling between these sampling distributions while retaining $P$.
For a current population of $M$ clients, the deletion coupling retains the subset when the departing client is absent and otherwise replaces that client. The insertion coupling includes the new client with probability $P/(M+1)$ by replacing a uniformly selected member; otherwise, it retains the subset. These transitions are maximal couplings of the respective subset distributions, with trigger probabilities $P/M$ and $P/(M+1)$. Thus, the trigger probabilities depend on the current population even though $P$ remains fixed. Whenever the subset is retained, its contributions, aggregate statistics, and fused model can be reused. We develop the corresponding procedures below:
\textsc{F-SOSA-U} for client unlearning (Algorithm~\ref{algo-FSOSAU}) and 
\textsc{F-SOSA-I} for client incremental learning (Algorithm~\ref{algo-FSOSAI}).
The coordinator persistently maintains only the sampled subset $\mathcal P$ and the aggregate statistics $(\bar{\boldsymbol W}_{\mathcal P}, \boldsymbol F_{\mathcal P}, \boldsymbol b_{\mathcal P})$. Individual client contributions $(\widetilde{\boldsymbol W}_i,\boldsymbol F_i)$ need not be retained at the coordinator. While connected, each active client keeps its own contribution and transmits it only when required by an editing operation; in particular, a departing client sends its contribution together with the unlearning request before disconnecting. The coordinator then updates the aggregate statistics by subtracting the departing contribution and adding the incoming one. Hence, its persistent state does not grow with the number of clients that have participated historically.

\begin{itemize}[leftmargin=*]
    \item \noindent\textbf{Client unlearning (F-SOSA-U).}
        When the coordinator receives an unlearning request from client $i^-$, the coordinator first checks whether it belongs to the sampled subset $\mathcal{P}$. If $i^- \notin \mathcal{P}$, then under the maximal coupling the subset remains unchanged, and no editing is needed. Otherwise, we must replace the removed client to preserve the subset size and sampling distribution. The coordinator therefore draws a replacement client $i^+$ uniformly from $[M]\setminus \mathcal{P}$ and updates 
        $\mathcal{P}\gets\mathcal{P}\setminus\{i^-\}\cup\{i^+\}$. 
        The new client performs local training and Fisher computation, after which the coordinator updates the sufficient statistics via client-additive adjustments:
        $\boldsymbol{\bar{W}}_{\mathcal{P}},
        \boldsymbol{F}_{\mathcal{P}},
        \boldsymbol{b}_{\mathcal{P}}$.
        The edited global model is then obtained by solving the same projection system as in F-SOSA. Since only one client contribution changes, the update cost is independent of the total number of clients.
    \item \noindent\textbf{Client incremental learning (F-SOSA-I).}
        When a new client $i^+$ joins, we determine whether it would have been selected under the updated F-SOSA sampling distribution. Conditional on the current client set in the learning objective containing $M+1$ clients, F-SOSA samples a size-$P$ subset uniformly at random. Hence, each client in the updated set, including the newly arrived client $i^+$, has inclusion probability $\Pr(i^+\in\mathcal P')=P/(M+1)$. This probability is determined by the conditional subset-sampling rule and is independent of the external arrival distribution. If the Bernoulli trial fails, the sampled subset remains unchanged under the coupling. Otherwise, the coordinator replaces a uniformly chosen client $i^-\in\mathcal P$ with $i^+$ and updates the sufficient statistics using the same adjustments as in unlearning.
\end{itemize}
Both editing algorithms modify only a single selected-client contribution
while reusing the current aggregate fusion statistics. Consequently, each triggered editing operation updates only one client contribution to the sufficient statistics and performs one projection solve. Its coordinator-side cost depends on the chosen Fisher representation, as discussed above, but does not require recomputing the contributions of all previously sampled clients. Moreover, since editing is triggered with probability at most $O(\rho)$, the expected editing cost per client change is $O(\rho)$ relative to full retraining.
Together, these procedures enable provably exact and communication-efficient model adaptation to dynamic client participation in one-shot federated learning. For sequential requests, F-SOSA-U/I update the model after each request, preserving fixed-\(P\) exactness by induction.

Finally, we consider simultaneous requests. A batch specifies deletions $\mathcal D\subseteq\mathcal M$ and additions $\mathcal A\cap\mathcal M=\varnothing$, independently of the sampling randomness. Let $M=|\mathcal M|$, $d=|\mathcal D|$, $a=|\mathcal A|$, $\mathcal M'=(\mathcal M\setminus\mathcal D)\cup\mathcal A$, and $M'=M-d+a$, with $1\le P\le\min\{M,M'\}$. We condition on the shared initialization and fixed per-client training randomness, independent of subset sampling. Each identifier denotes an immutable local contribution under this configuration; a changed contribution uses a new identifier.
Algorithm~\ref{algo-FSOSABatch} pairs $\min\{d,a\}$ departing and arriving clients independently of the realized subset. Each selected departing identifier is replaced by its paired arriving identifier. Unmatched arrivals are handled by the fixed-$P$ insertion coupling; unmatched departures are removed together and the subset is uniformly replenished. Pairing operates only on identifiers: each added client supplies its own model and Fisher information. Once the final subset $\mathcal P'$ is obtained, the coordinator applies only the net changes $S_-=\mathcal P\setminus\mathcal P'$ and $S_+=\mathcal P'\setminus\mathcal P$ to the original aggregate statistics. Departing selected clients provide their original contributions before disconnecting, with temporary buffering until batch completion. The model is reused if the final subset is unchanged; otherwise, one projection is performed. No intermediate model is released.
Proposition~\ref{prop:paired-batch} establishes exactness and shows that this coupling minimizes both the probability of a subset change and the expected number of replaced members.

\begin{algorithm}[t]
\small
\caption{\texttt{F-SOSA-Batch}: Paired Joint Editing}
\label{algo-FSOSABatch}
\KwIn{Current client set $\mathcal M$, size-$P$ subset $\mathcal P$, aggregate statistics and fused model; deletion set $\mathcal D$ and addition set $\mathcal A$}
Set $\mathcal M'\gets(\mathcal M\setminus\mathcal D)\cup\mathcal A$\;
Buffer original contributions of $\mathcal P\cap\mathcal D$ before their clients disconnect\;
Choose $\mathcal D_0\subseteq\mathcal D$, $\mathcal A_0\subseteq\mathcal A$ of size $\min\{|\mathcal D|,|\mathcal A|\}$ and a bijection $\pi:\mathcal D_0\to\mathcal A_0$, independently of $\mathcal P$\;
Set $\widehat{\mathcal M}\gets(\mathcal M\setminus\mathcal D_0)\cup\mathcal A_0$ and $\mathcal P'\gets(\mathcal P\setminus\mathcal D_0)\cup\pi(\mathcal P\cap\mathcal D_0)$\;
\uIf{$|\mathcal A|>|\mathcal D|$}{
    \ForEach{$i\in\mathcal A\setminus\mathcal A_0$ in a fixed order}{
        \If{$\operatorname{Bernoulli}(P/(|\widehat{\mathcal M}|+1))=1$}{
            Sample $j\sim\operatorname{Uniform}(\mathcal P')$ and set $\mathcal P'\gets(\mathcal P'\setminus\{j\})\cup\{i\}$\;
        }
        $\widehat{\mathcal M}\gets\widehat{\mathcal M}\cup\{i\}$\;
    }
}\Else{
    $\mathcal P'\gets\mathcal P'\setminus(\mathcal D\setminus\mathcal D_0)$\;
    Sample $\mathcal B$ uniformly among size-$(P-|\mathcal P'|)$ subsets of $\mathcal M'\setminus\mathcal P'$; set $\mathcal P'\gets\mathcal P'\cup\mathcal B$\;
}
Set $S_-\gets\mathcal P\setminus\mathcal P'$ and $S_+\gets\mathcal P'\setminus\mathcal P$\;
\If{$S_-\cup S_+\ne\varnothing$}{
    Retrieve original contributions for $S_-$ from the buffer or connected clients; obtain contributions for $S_+$ using the fixed F-SOSA training configuration\;
    From the unchanged original statistics, compute
    \[
    \begin{aligned}
    \bar{\boldsymbol W}'&\gets\bar{\boldsymbol W}_{\mathcal P}
      +\frac{1}{P}\left(\sum_{i\in S_+}\widetilde{\boldsymbol W}_i-\sum_{i\in S_-}\widetilde{\boldsymbol W}_i\right),\\
    \boldsymbol F'&\gets\boldsymbol F_{\mathcal P}
      +\sum_{i\in S_+}\boldsymbol F_i-\sum_{i\in S_-}\boldsymbol F_i,\\
    \boldsymbol b'&\gets\boldsymbol b_{\mathcal P}
      +\sum_{i\in S_+}\boldsymbol F_i\widetilde{\boldsymbol W}_i
      -\sum_{i\in S_-}\boldsymbol F_i\widetilde{\boldsymbol W}_i.
    \end{aligned}
    \]
    Solve $(\boldsymbol F')^2\boldsymbol\Lambda=2(\boldsymbol F'\bar{\boldsymbol W}'-\boldsymbol b')$ and set $\bar{\boldsymbol W}^{\prime\mathrm F}\gets\bar{\boldsymbol W}'-\tfrac12\boldsymbol F'\boldsymbol\Lambda$\;
}\Else{
    Reuse the original statistics and fused model\;
}
Commit $\mathcal M\gets\mathcal M'$, $\mathcal P\gets\mathcal P'$, the statistics and model; discard temporary per-client buffers\;
\KwOut{Updated model, client set, sampled subset, and aggregate statistics}
\end{algorithm}

\section{Theoretical Results}

\subsection{Stability of F-SOSA}

We consider a general device set $\mathcal{M}$ with $|\mathcal{M}|=M$, and note that all results apply to any round $t$ with device set $\mathcal{M}(t)$. We first prove that F-SOSA is $\rho$-TV-stable for any given stability level $\rho$, and then analyze its learning utility. Our stability analysis uses the concept of the push-forward measure: for a measurable map $f:\mathcal{X}\to\mathcal{Y}$ and a measure $\mu$ on $\mathcal{X}$, the push-forward $f\#\mu$ is defined by $(f\#\mu)(\mathcal{Z}) = \mu(f^{-1}(\mathcal{Z}))$ for any measurable set $\mathcal{Z} \subseteq \mathcal{Y}$.

\begin{theorem}\label{th-stability}
    F-SOSA is $\rho$-TV-stable for any $\rho \in (0, 1)$.
\end{theorem}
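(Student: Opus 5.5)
We prove the bound by writing the model output as a push-forward of the subset-sampling distribution and then bounding the TV distance between two subset distributions. As in the editing discussion, we condition on the shared initialization $\boldsymbol W^{(0)}$ and fix each client's local training configuration. Each client's contribution $(\tilde{\boldsymbol W}_i,\boldsymbol F_i)$ is then a deterministic function of its data. The fused model $\bar{\boldsymbol W}^{\mathrm F}_{\mathcal P}$ is a deterministic measurable function $g$ of the sampled subset $\mathcal P$, because the projection solution is unique even when $\boldsymbol F_{\mathcal P}$ is rank deficient, as shown in Section~\ref{sec:f-sosa}. This gives $\mathcal A_1(\mathcal M)=g\#\mu_{\mathcal M}$, where $\mu_{\mathcal M}$ is the uniform distribution on size-$P$ subsets of $\mathcal M$.

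By the data-processing inequality for push-forwards, $\TV(g\#\mu,g\#\nu)\le \TV(\mu,\nu)$. To see this, take any measurable $Z$ and note $|\mu(g^{-1}(Z))-\nu(g^{-1}(Z))|\le \TV(\mu,\nu)$. Neighboring device sets $\mathcal M,\mathcal M'$ use the same $g$ on their common clients, so it suffices to bound $\TV(\mu_{\mathcal M},\mu_{\mathcal M'})$ when $|\mathcal M\oplus\mathcal M'|=1$.

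By symmetry, assume $\mathcal M'=\mathcal M\cup\{i^+\}$ with $|\mathcal M|=M$. The deletion case is the same statement with the roles of the two sets swapped. Both distributions are uniform, and every subset in the support of $\mu_{\mathcal M}$ also lies in the support of $\mu_{\mathcal M'}$. Hence
\[
\TV(\mu_{\mathcal M},\mu_{\mathcal M'})=\mu_{\mathcal M'}(\{\mathcal P: i^+\in\mathcal P\})=\frac{P}{M+1}.
\]
One way to verify this is to compute $\sum_{\mathcal P}(\mu_{\mathcal M}-\mu_{\mathcal M'})_+$. On the $\binom{M}{P}$ subsets not containing $i^+$, the difference is $1/\binom{M}{P}-1/\binom{M+1}{P}$. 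Multiplying by $\binom{M}{P}$ gives $1-\binom{M}{P}/\binom{M+1}{P}=P/(M+1)$. Equivalently, the maximal coupling used in F-SOSA-I differs only with probability $P/(M+1)$, so $\TV\le P/(M+1)$ is immediate. For a deletion from a set of size $M$, the bound is $P/M$.

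It remains to check that these quantities are at most $\rho$. This is the main obstacle, because $P=\lfloor\rho M_0\rfloor$ is fixed while the population size drifts. At initialization, $P/M_0\le\rho$ holds directly. For later rounds, the constraint $M_0/2\le M(t)\le 2M_0$ only gives $P/M(t)\le 2\rho$.

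I would handle this by stating the theorem relative to the population at which $P$ is chosen, so that $P=\lfloor\rho M\rfloor$ yields $P/M\le\rho$ and $P/(M+1)\le\rho$. I would then remark that along the editing sequence the stability level is $P/M(t)\le 2\rho=O(\rho)$. Alternatively, one can rescale by choosing $P=\lfloor\rho M_0/2\rfloor$ to guarantee $\rho$ uniformly. Finally, $\rho\in(0,1)$ ensures $P<M$, so the sampling is well defined and nontrivial. The boundary case $P=0$ for tiny $\rho$ makes the bound trivially true.
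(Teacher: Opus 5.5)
Your proposal is correct and follows the paper's own route. You condition on the initialization so the fused model is a fixed map of the sampled subset, reduce via data processing to the TV distance between uniform size-$P$ subset distributions, and bound that by the mass on subsets containing the changed client ($P/M$ for deletion, $P/(M+1)$ for insertion). Your explicit treatment of the fixed $P$ also matches how the paper implicitly reads the theorem: stability $\rho$ holds relative to the population at which $P$ is chosen, and the factor $2\rho$ along the editing sequence appears only later, in Theorem~\ref{th-editing}.
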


\begin{proof}
    Let $\mathcal{M}$ be original device set and $\mathcal{M}^\prime$ be the update device set such that $|\mathcal{M}\oplus\mathcal{M}^\prime|=1$. We first consider $\mathcal{M}$ contains $M-1$ devices, where w.l.o.g. we assume device $M$ leaves, i.e., $\mathcal{M}=\{1,2,\cdots,M\}$ and $\mathcal{M}^\prime=\{1,2,\cdots,M-1\}$. Let the sigma-algebra on these sets be the power sets of $\mathcal{M}$ and $\mathcal{M}^\prime$ respectively, denoted by $\operatorname{Pow}(\mathcal{M})$ and $\operatorname{Pow}(\mathcal{M}^\prime)$ respectively. Let $\mu_{(P,M)}$ be the sub-sampling probability measure on $M$ devices in $\mathcal{M}$, i.e., sub-samples a subset $\mathcal{P}$ of size $P$ out of $M$ elements in $\mathcal{M}$ uniformly at random. Similarly we can define $\mu_{(P,M-1)}$ for $\mathcal{M}^\prime$. We first extend the sigma-algebra for the probability spaces so that the random variables $\mu_{(P,M)}$ and $\mu_{(P,M-1)}$ are defined on a common probability space. For this, we will just add an event where the device $M$ can be sampled under $\mu_{(P,M-1)}$ with probability $0$. We define $\mu_{(P,M)}^\prime$ as follows: for any set $\mathcal{P}\in\operatorname{Pow}(\mathcal{M})$, 
    $\mu_{(P,M)}^\prime(\mathcal{P}) \coloneqq
    \begin{cases}
        \mu_{(P,M-1)}(\mathcal{P}), & \text{if } M \notin \mathcal{P} \\
        0,         & \text{otherwise}
    \end{cases}$.
    Note that, for fixed initialization $\boldsymbol{W}^{(0)}$ and other parameters, Algorithm $\mathcal{A}(\mathcal{M})$ and $\mathcal{A}(\mathcal{M}^\prime)$ is the same deterministic map from $\mathcal{P}\in[M]^P$ to $\mathcal{W}$. They only differ because of different measure on the input space. Hence TV distance between $\mathcal{A}(\mathcal{M})$ and $\mathcal{A}(\mathcal{M}^\prime)$ is just the TV distance between the push-forward measures $\mathcal{A}(\mathcal{M})\#\mu_{(P,M)}$ and $\mathcal{A}(\mathcal{M}^\prime)\#\mu_{(P,M)}^\prime$, which by using the fact that $\mathcal{A}(\mathcal{M})\equiv\mathcal{A}(\mathcal{M}^\prime)$ and data processing inequality is at most the TV distance between $\mu_{(P,M)}$ and $\mu_{(P,M)}^\prime$. Therefore, 
    \begin{equation*}
        \begin{aligned}
            &\TV(\mathcal{A}(\mathcal{M}), \mathcal{A}(\mathcal{M}^\prime))
            \le\TV(\mu_{(P,M)}, \mu_{(P,M)}^\prime)\\
            &\le\mu_{(P,M)}(S \text{ containing } M)
            =\binom{M-1}{P-1}/\binom{M}{P}=P/M.
        \end{aligned}
    \end{equation*}
    A similar argument works when $\mathcal{M}^\prime=\{1,2,\cdots,M,M+1\}$, yielding a TV bound of $\frac{P}{M+1}\le\frac{P}{M}$. Taking a uniform bound over all neighbouring set $\mathcal{M}^\prime$, we get that $\sup_{|\mathcal{M}\oplus\mathcal{M}^\prime|=1}\TV(\mathcal{A}(\mathcal{M}), \mathcal{A}(\mathcal{M}^\prime))\le P/M$. Since in F-SOSA, $P$ is set to $\rho M$, thus $\TV(\mathcal{A}(\mathcal{M}),  \mathcal{A}(\mathcal{M}^\prime))\le\rho$. 
\end{proof}

\subsection{Learning Utility of F-SOSA}

We analyze the utility guarantee of $\boldsymbol{\bar{W}}_{\mathcal{P}}^{\text{F}}$ obtained by  F-SOSA under the following standard data assumptions:

\begin{assumption}[Data Normalization]\label{asp-normalization}
For any $(\boldsymbol{x}, y) \in \mathcal{D}$, $|\boldsymbol{x}|_2 = 1$ and $|y| \leq C$ for some constant $C > 0$.
\end{assumption}

\begin{assumption}[Data Uniqueness]\label{asp-uniqueness}
Any two samples $(\boldsymbol{x}, y)$ and $(\boldsymbol{x}', y')$ in $\mathcal{D}$ satisfy $|\boldsymbol{x} - \boldsymbol{x}'|_2 \ge \phi > 0$.
\end{assumption}

\begin{assumption}[Data Non-Degeneracy]\label{asp-degen}
The minimum eigenvalue of the Gram matrix $\mathbf{H}^{\infty}\!\in\!\mathbb{R}^{N \times N}$ defined by $(\mathbf{H}^{\infty})_{k,l}\!=\!\mathbb{E}_{\boldsymbol{w} \sim \mathcal{N}(\boldsymbol{0},\mathbf{I})}[\boldsymbol{x}_{k}^{\top}\boldsymbol{x}_{l}\mathds{1}\{ \boldsymbol{w}^{\top}\boldsymbol{x}_{k} \!\geq\! 0\}\!\mathds{1}\{\boldsymbol{w}^{\top}\boldsymbol{x}_{l} \!\geq\! 0\} ]$, is bounded below: $\lambda_{\min}(\mathbf{H}^\infty) = \lambda_0 > 0$.
\end{assumption}
No bounded heterogeneity assumption is made, so our results hold under arbitrary non-i.i.d. client data. 
We consider a two-layer ReLU neural network (NN) where the first-layer consists of $m$ trainable weights $\{\boldsymbol{w}_r\}_{r=1}^m$ and the second-layer weights $\boldsymbol{a} = \{a_r\}_{r=1}^m$ are drawn i.i.d. from $\{+1, -1\}$ at random and then fixed~\cite{arora2019fine,jhunjhunwala2024fedfisher}. Let $\boldsymbol{W} = \operatorname{vec}(\boldsymbol{w}_1,\boldsymbol{w}_2,\dots,\boldsymbol{w}_m)$.  The NN is modeled as 
\begin{equation}
    f_{\boldsymbol{a}}(\boldsymbol{W},\boldsymbol{x})=\frac{1}{\sqrt{m}}\sum_{r=1}^m a_r\gamma(\boldsymbol{x}^\top\boldsymbol{w}_r),
\end{equation}
where $\gamma(x) = \max\{x, 0\}$ is the ReLU activation function. This model can be equivalently written in a linearized form: 
\begin{equation}
    f_{\boldsymbol{a}}(\boldsymbol{W},\!\boldsymbol{x})\!=\!\phi_{\boldsymbol{a}}(\boldsymbol{W},\! \boldsymbol{x})^\top\!\boldsymbol{W},
\end{equation}
where 
\begin{equation}
\phi_{\boldsymbol{a}}(\boldsymbol{W}, \boldsymbol{x})\coloneqq\frac{1}{\sqrt{m}}
\begin{bmatrix}
    a_1 \boldsymbol{x} \mathds{1} \{ \boldsymbol{x}^\top \boldsymbol{w}_1 \geq 0 \} \\
    a_2 \boldsymbol{x} \mathds{1} \{ \boldsymbol{x}^\top \boldsymbol{w}_2 \geq 0 \} \\
    \vdots \\
    a_m \boldsymbol{x} \mathds{1} \{ \boldsymbol{x}^\top \boldsymbol{w}_m \geq 0 \}
\end{bmatrix}\in\mathbb{R}^{mp}.
\end{equation}
The neural-network objective is non-convex. We analyze it using the two-layer ReLU model, a standard canonical setting for NTK-style analysis of over-parameterized neural networks, where the training dynamics can be characterized through activation stability and the associated Gram matrix. In this regime, gradient descent remains within a controlled neighborhood of the random initialization, the relevant Gram matrices remain well conditioned, and the empirical fitting error decreases geometrically with the number of local iterations.

We first record a basic bound on the feature representation that will be used repeatedly in the subsequent analysis. It follows directly from the definition
of $\phi_{\boldsymbol{a}}(\boldsymbol{W}, \boldsymbol{x})$, since $a_r\in\{-1,+1\}$ and each activation indicator is at
most one.

\begin{lemma}\label{le-bounded_phi}
    For $\forall\boldsymbol{W}\in\mathbb{R}^d$ and $\|x\|_2\le 1$, $\|\phi_{\boldsymbol{a}}(\boldsymbol{W}, \boldsymbol{x})\|_2\le 1$.
\end{lemma}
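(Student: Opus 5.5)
The plan is to compute the squared norm of $\phi_{\boldsymbol{a}}(\boldsymbol{W},\boldsymbol{x})$ block by block and then bound each block. By definition, $\phi_{\boldsymbol{a}}(\boldsymbol{W},\boldsymbol{x})$ is the stacking of $m$ blocks in $\mathbb{R}^p$, the $r$-th block being $\frac{1}{\sqrt{m}}a_r\boldsymbol{x}\mathds{1}\{\boldsymbol{x}^\top\boldsymbol{w}_r\ge 0\}$. The squared Euclidean norm of a stacked vector is the sum of the squared norms of its blocks, so
\begin{equation*}
\|\phi_{\boldsymbol{a}}(\boldsymbol{W},\boldsymbol{x})\|_2^2=\frac{1}{m}\sum_{r=1}^m a_r^2\,\mathds{1}\{\boldsymbol{x}^\top\boldsymbol{w}_r\ge 0\}^2\,\|\boldsymbol{x}\|_2^2 .
\end{equation*}

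Next I would bound each summand. Since $a_r\in\{-1,+1\}$, we have $a_r^2=1$. The indicator takes values in $\{0,1\}$, so its square is at most $1$. By hypothesis $\|\boldsymbol{x}\|_2^2\le 1$. Each summand is therefore at most $1$, and the sum of $m$ such terms is at most $m$. After the prefactor $1/m$, this gives $\|\phi_{\boldsymbol{a}}(\boldsymbol{W},\boldsymbol{x})\|_2^2\le 1$, and taking square roots yields the claim.

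The bound holds uniformly in $\boldsymbol{W}\in\mathbb{R}^d$, because $\boldsymbol{W}$ enters only through the indicators, which are bounded by $1$ regardless. There is no real obstacle here. The only point needing care is to use the block structure correctly: the $1/\sqrt{m}$ normalization per block becomes $1/m$ after squaring, and this exactly cancels the count of $m$ blocks. Under Assumption~\ref{asp-normalization}, where $\|\boldsymbol{x}\|_2=1$, the bound is attained whenever every neuron is active.
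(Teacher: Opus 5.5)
Your proof is correct and takes the same route as the paper, which only remarks that the bound follows from the definition of $\phi_{\boldsymbol{a}}$ because $a_r\in\{-1,+1\}$ and each activation indicator is at most one. Your block-wise computation $\|\phi_{\boldsymbol{a}}(\boldsymbol{W},\boldsymbol{x})\|_2^2=\frac{1}{m}\sum_{r=1}^m \mathds{1}\{\boldsymbol{x}^\top\boldsymbol{w}_r\ge 0\}\,\|\boldsymbol{x}\|_2^2\le\|\boldsymbol{x}\|_2^2\le 1$ simply makes that remark explicit.
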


We initialize $\boldsymbol{w}_r^{(0)} \overset{\text{i.i.d.}}{\sim} \mathcal{N}(\mathbf{0}, \sigma^2 \mathbf{I})$. To quantify activation pattern stability, given a real $R\ge 0$, define event $\mathcal{E}_{ijr}(R) \coloneqq \{ \exists \boldsymbol{w} : \|\boldsymbol{w} - \boldsymbol{w}^{(0)}_{r}\|_2 \le R, \mathds{1}\{\boldsymbol{x}_{ij}^\top\boldsymbol{w}^{(0)}_{r} \ge 0\} \neq \mathds{1}\{\boldsymbol{x}_{ij}^\top\boldsymbol{w} \ge 0\}\}.$ We next characterize the stability of ReLU activation patterns around the random initialization.

\begin{lemma}[{\cite[Lemma~3]{jhunjhunwala2024fedfisher}}]\label{le-learnability}
The random initialization of $\boldsymbol{w}_r^{(0)} \overset{\text{i.i.d.}}{\sim} \mathcal{N}(\mathbf{0}, \sigma^2 \mathbf{I})$ ensures that $\mathbb{P}(\mathcal{E}_{ijr}(R))\le 2R/(\sqrt{2\pi}\sigma)$.
\end{lemma}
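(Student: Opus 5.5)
Since this lemma is quoted from \cite[Lemma~3]{jhunjhunwala2024fedfisher}, I will reprove it through the standard anti-concentration argument for ReLU activation flips. The plan is to first reduce the event $\mathcal{E}_{ijr}(R)$ to a one-dimensional condition on the Gaussian scalar $z \coloneqq \boldsymbol{x}_{ij}^\top\boldsymbol{w}_r^{(0)}$. Then I will bound the probability that $z$ falls in a small interval around the origin.

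\textbf{Step 1 (reduction).} Suppose $\mathcal{E}_{ijr}(R)$ occurs. Then there is $\boldsymbol{w}$ with $\|\boldsymbol{w}-\boldsymbol{w}_r^{(0)}\|_2\le R$ whose activation sign differs from that of $\boldsymbol{w}_r^{(0)}$ on $\boldsymbol{x}_{ij}$. Hence $\boldsymbol{x}_{ij}^\top\boldsymbol{w}$ and $\boldsymbol{x}_{ij}^\top\boldsymbol{w}_r^{(0)}$ lie on opposite sides of zero, with one of them possibly equal to zero. This gives
\[
|\boldsymbol{x}_{ij}^\top\boldsymbol{w}_r^{(0)}|\le|\boldsymbol{x}_{ij}^\top(\boldsymbol{w}-\boldsymbol{w}_r^{(0)})|\le\|\boldsymbol{x}_{ij}\|_2R=R.
\]
The last inequality uses Cauchy--Schwarz and the normalization $\|\boldsymbol{x}_{ij}\|_2=1$ from Assumption~\ref{asp-normalization}. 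Therefore $\mathcal{E}_{ijr}(R)\subseteq\{|z|\le R\}$.

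\textbf{Step 2 (distribution of $z$).} Since $\boldsymbol{w}_r^{(0)}\sim\mathcal{N}(\boldsymbol{0},\sigma^2\mathbf{I})$ and $\|\boldsymbol{x}_{ij}\|_2=1$, rotation invariance gives $z\sim\mathcal{N}(0,\sigma^2)$.

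\textbf{Step 3 (anti-concentration).} The density of $z$ is at most $1/(\sqrt{2\pi}\sigma)$ everywhere. Integrating this bound over $[-R,R]$ yields
\[
\mathbb{P}(|z|\le R)\le 2R/(\sqrt{2\pi}\sigma).
\]

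The only delicate point is the reduction in Step~1. Because the indicator is defined with $\ge 0$, the boundary case $\boldsymbol{x}_{ij}^\top\boldsymbol{w}=0$ has to be handled. The opposite-side argument still gives the needed inequality there, since one inner product is $\ge 0$ and the other is $<0$. Their difference therefore has absolute value at least $|z|$. Beyond this, the argument is routine, and no other step should present an obstacle.
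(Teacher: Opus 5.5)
Your proof is correct and matches the standard anti-concentration argument behind the cited FedFisher lemma. The paper only imports that lemma, and its proof likewise reduces $\mathcal{E}_{ijr}(R)$ to $|\boldsymbol{x}_{ij}^\top\boldsymbol{w}_r^{(0)}|\le R$ and bounds the $\mathcal{N}(0,\sigma^2)$ density by $1/(\sqrt{2\pi}\sigma)$. Your handling of the $\ge 0$ boundary case and your use of $\|\boldsymbol{x}_{ij}\|_2=1$ from Assumption~\ref{asp-normalization} are both sound.
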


We then invoke a standard over-parameterized training result to control both the local fitting error and the distance traveled by each local model.

\begin{lemma}[{\cite[Lemma~2]{jhunjhunwala2024fedfisher}}]\label{le-loc-err}
    Let $m = \Omega(\frac{N^6}{\lambda_0^4 \sigma^2 \delta^3})$ and $\eta = O(\frac{\lambda_0}{N^2})$. With probability at least $1 - \delta$ over the initialization of $\boldsymbol{W}^{(0)}$ and $\boldsymbol{a}$, we have simultaneously for all $i \in [M]$: 
    \begin{enumerate}
        \item $\sum_{j=1}^n(y_{ij}-f_{\boldsymbol{a}}(\boldsymbol{W}^{(0)},\boldsymbol{x}_{ij}))^2=O(\frac{N}{\delta})$; 
        \item $\sum_{j=1}^n(y_{ij}-f_{\boldsymbol{a}}(\boldsymbol{\tilde{W}}_i, \boldsymbol{x}_{ij}))^2\le(1-\frac{\eta\lambda_0}{2})^K\cdot O(\frac{N}{\delta})$; 
        \item For $\forall r\in[m]$, $\|\boldsymbol{\tilde{w}}_{i,r}-\boldsymbol{w}^{(0)}_r\|_2\le R_0$, where $R_0\!=\!\frac{4}{\lambda_0}\!\sqrt{\frac{N}{m}}\!(\!1\!-\!(\!1\!-\!\frac{\eta\lambda_0}{4}\!)^K\!)\!\sqrt{\sum_{j=1}^n(\!y_{ij}\!-\!f_{\boldsymbol{a}}\!(\!\boldsymbol{W}^{(0)},\boldsymbol{x}_{ij}\!)\!)^2}$.
    \end{enumerate} 
    Combining 1) and 3), we have $R_0\le O(\frac{N}{\lambda_0\sqrt{\delta m}})$.
\end{lemma}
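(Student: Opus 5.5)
The plan is to prove Lemma~\ref{le-loc-err} by running the standard NTK-style argument for over-parameterized two-layer ReLU networks (in the style of Du et al.\ and \cite{arora2019fine}) on each client separately, and then taking a union bound over clients. Fix a client $i$ and write $\boldsymbol u_i(k)\in\mathbb R^n$ for its predictions $f_{\boldsymbol a}(\boldsymbol W_i^{(k)},\boldsymbol x_{ij})$. Let $\boldsymbol H_i(k)$ be the local Gram matrix $(\boldsymbol H_i(k))_{jl}=\frac1m\boldsymbol x_{ij}^\top\boldsymbol x_{il}\sum_r\mathds 1\{\boldsymbol x_{ij}^\top\boldsymbol w_{i,r}^{(k)}\ge0,\boldsymbol x_{il}^\top\boldsymbol w_{i,r}^{(k)}\ge0\}$.

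\textbf{Item 1.} Each $y_{ij}$ is bounded by $C$ (Assumption~\ref{asp-normalization}). Each initial output $f_{\boldsymbol a}(\boldsymbol W^{(0)},\boldsymbol x_{ij})$ has mean zero because the signs $a_r$ are symmetric, and its second moment is $O(\sigma^2)$. So $\mathbb E\sum_{j}(y_{ij}-f)^2=O(n)$, and summing over all clients gives $O(N)$. Markov's inequality at level $\delta/3$ then yields $O(N/\delta)$ simultaneously for every client.

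\textbf{Item 2.}
\begin{enumerate}
\item \emph{Spectral bound at initialization.} The matrix $\boldsymbol H_i^\infty$ is a principal submatrix of $\mathbf H^\infty$, so $\lambda_{\min}(\boldsymbol H_i^\infty)\ge\lambda_0$ by Cauchy interlacing. Hoeffding's inequality applied entrywise, followed by a union bound and Weyl's inequality, gives $\lambda_{\min}(\boldsymbol H_i(0))\ge\frac34\lambda_0$ once $m\gtrsim N^2\log(N/\delta)/\lambda_0^2$.
\item \emph{Stability of the activation pattern.} If every $\boldsymbol w_{i,r}$ stays within radius $R$ of initialization, then only the neurons with $\mathcal E_{ijr}(R)$ can change the corresponding entries. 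By Lemma~\ref{le-learnability} and Markov's inequality, $\|\boldsymbol H_i(k)-\boldsymbol H_i(0)\|_F\le O(N^2R/(\sigma\delta))$ with probability $1-\delta/3$. Choosing $R$ so that this is at most $\lambda_0/4$ keeps $\lambda_{\min}\ge\lambda_0/2$.
\item \emph{Induction on $k$.} Under the hypothesis $\lambda_{\min}(\boldsymbol H_i(k))\ge\lambda_0/2$, expand one gradient step as $\boldsymbol u_i(k+1)-\boldsymbol u_i(k)=-\eta\boldsymbol H_i(k)(\boldsymbol u_i(k)-\boldsymbol y_i)+\boldsymbol\epsilon$. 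The residual $\boldsymbol\epsilon$ comes from neurons whose pattern flips, and it is controlled by the same counting. This gives $\|\boldsymbol u_i(k+1)-\boldsymbol y_i\|^2\le(1-\eta\lambda_0/2)\|\boldsymbol u_i(k)-\boldsymbol y_i\|^2$ when $\eta=O(\lambda_0/N^2)$, since $\|\boldsymbol H_i\|\le n\le N$. Iterating proves item~2.
\end{enumerate}

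\textbf{Item 3.} Each per-neuron gradient has norm at most $\frac{\sqrt n}{\sqrt m}\|\boldsymbol u_i(k)-\boldsymbol y_i\|$. Summing the geometric series $\eta\sum_{s<K}(1-\eta\lambda_0/4)^{s}$, which uses the square-root contraction rate, gives exactly $R_0$. It remains to close the induction by checking that $R_0$ is at most the radius $R$ required in step~2. Substituting item~1 gives $R_0=O(N/(\lambda_0\sqrt{\delta m}))$. Requiring $N^2R_0/(\sigma\delta)\lesssim\lambda_0$ gives $m=\Omega(N^6/(\lambda_0^4\sigma^2\delta^3))$, which is the stated width condition.

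\textbf{Union bound and main obstacle.} The three events (item~1, concentration at initialization, pattern stability) each hold with probability $1-\delta/3$. All clients share $\boldsymbol W^{(0)}$, so the same events serve every $i\in[M]$ simultaneously, giving total probability $1-\delta$. The main obstacle is the interlocked induction: eigenvalue lower bound $\Rightarrow$ linear convergence $\Rightarrow$ small movement $\Rightarrow$ eigenvalue lower bound. The plan for handling the perturbation term $\boldsymbol\epsilon$ is to split the neurons into flipped and unflipped sets and bound the flipped contribution by $O(\eta N R/(\sigma\delta))\|\boldsymbol u_i(k)-\boldsymbol y_i\|$, following Du et al.
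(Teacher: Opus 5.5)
The paper does not prove this lemma itself; it imports it from \cite{jhunjhunwala2024fedfisher}, which rests on the standard Du et al./Arora et al.\ over-parameterization analysis applied to each client. Your outline is exactly that argument and is correct: Markov's inequality on the total initial residual, interlacing plus Hoeffding at initialization, a single global flip-counting event from Lemma~\ref{le-learnability} at a deterministic radius $R=\Theta(\lambda_0\sigma\delta/N^2)$, the coupled induction, and the geometric sum via $\sqrt{1-x}\le 1-x/2$; together these give the width condition $m=\Omega(N^6/(\lambda_0^4\sigma^2\delta^3))$.

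Two minor remarks. First, your movement bound carries $\sqrt{n}$ rather than $\sqrt{N}$, which is only stronger than the stated $R_0$. Second, the expansion $\boldsymbol u_i(k+1)-\boldsymbol u_i(k)=-\eta\boldsymbol H_i(k)(\boldsymbol u_i(k)-\boldsymbol y_i)+\boldsymbol\epsilon$ presumes the unnormalized local loss $\frac12\sum_j(\cdot)^2$. With the $\frac1n$-averaged $L_{\mathcal D_i}$ used in Algorithm~\ref{algo-FSOSA}, the effective step size would be $\eta/n$, so you should fix that convention explicitly.
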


Building on these local training properties, we next bound the deviations of the trained local models and their subsampled average from the common initialization. Proofs of all auxiliary lemmas established in this paper are deferred to Appendix~\ref{app:aux_proofs}.

\begin{lemma}\label{le-model-dist}
Let $m = \frac{\operatorname{poly}(N, \delta^{-1}, \lambda_0^{-1})}{\sigma^2}$, $\eta = O(\frac{\lambda_0}{N^2})$, and $\sigma = O(\frac{\lambda_0 \delta}{N})$. With probability at least $1 - \delta$ over random initialization: 
    \begin{enumerate}
        \item for all $i \in [M]$, $\|\boldsymbol{\tilde{W}}_i - \boldsymbol{W}^{(0)}\|_2 \le \sqrt{\boldsymbol{y}_i^\top (\mathbf{H}_i^{\infty})^{-1} \boldsymbol{y}_i} + O(\frac{N \sigma}{\lambda_0 \delta}) + \frac{\operatorname{poly}(N, \lambda_0^{-1}, \delta^{-1})}{m^{1/4} \sigma^{1/2}}$;
        \item $\|\boldsymbol{\bar{W}}_\mathcal{P} - \boldsymbol{W}^{(0)}\|_2 \le \sqrt{\frac{2\boldsymbol{y}^\top(\mathbf{H}^\infty)^{-1}\boldsymbol{y}}{\rho}} + O(\frac{N \sigma}{\lambda_0 \delta}) + \frac{\operatorname{poly}(N, \lambda_0^{-1}, \delta^{-1})}{m^{1/4} \sigma^{1/2}}$.
    \end{enumerate}
\end{lemma}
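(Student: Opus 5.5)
Part 1 follows the NTK-style argument of Arora et al.~\cite{arora2019fine}, applied separately to each client's local problem on its own $n$ samples. For client $i$, write $\mathbf{H}_i^\infty\in\mathbb{R}^{n\times n}$ for the principal submatrix of $\mathbf{H}^\infty$ indexed by $\mathcal{D}_i$. By Cauchy interlacing, $\lambda_{\min}(\mathbf{H}_i^\infty)\ge\lambda_0$, so the local problem inherits the non-degeneracy of Assumption~\ref{asp-degen}.

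The local gradient-descent trajectory is written as $\boldsymbol{\tilde{W}}_i-\boldsymbol{W}^{(0)}=\sum_{k=0}^{K-1}\eta\,\mathbf{Z}_i(k)(\boldsymbol{y}_i-\boldsymbol{u}_i(k))$. Here $\mathbf{Z}_i(k)$ collects the features $\phi_{\boldsymbol a}(\boldsymbol{W}_i^{(k)},\boldsymbol{x}_{ij})$ and $\boldsymbol{u}_i(k)$ is the vector of predictions. I then replace $\mathbf{Z}_i(k)$ by $\mathbf{Z}_i(0)$ and the residual recursion by its linearization $\boldsymbol{y}_i-\boldsymbol{u}_i(k)\approx(\mathbf{I}-\eta\mathbf{H}_i(0))^k(\boldsymbol{y}_i-\boldsymbol{u}_i(0))$. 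The sum then telescopes to $\mathbf{Z}_i(0)\mathbf{H}_i(0)^{-1}(\mathbf{I}-(\mathbf{I}-\eta\mathbf{H}_i(0))^K)(\boldsymbol{y}_i-\boldsymbol{u}_i(0))$.

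The norm of this leading term is at most $\sqrt{(\boldsymbol{y}_i-\boldsymbol{u}_i(0))^\top\mathbf{H}_i(0)^{-1}(\boldsymbol{y}_i-\boldsymbol{u}_i(0))}$, since $\mathbf{Z}_i(0)^\top\mathbf{Z}_i(0)=\mathbf{H}_i(0)$. Three error terms then have to be controlled.
\begin{enumerate}
\item The initial outputs $\boldsymbol{u}_i(0)$ have size $O(\sigma\sqrt{n/\delta})$. Dividing by $\sqrt{\lambda_0}$ and absorbing factors gives the $O(N\sigma/(\lambda_0\delta))$ term.
\item Replacing $\mathbf{H}_i(0)$ by $\mathbf{H}_i^\infty$ costs $\|\mathbf{H}_i(0)-\mathbf{H}_i^\infty\|=\tilde O(N/\sqrt m)$ by Hoeffding's inequality. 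For this to be harmless we need $\lambda_{\min}(\mathbf{H}_i(0))\ge\lambda_0/2$, which holds for large $m$.
\item Activation flips along the trajectory control $\|\mathbf{Z}_i(k)-\mathbf{Z}_i(0)\|$. Lemma~\ref{le-learnability} with radius $R_0$ from Lemma~\ref{le-loc-err} bounds the expected fraction of flipped neurons by $O(R_0/\sigma)$. Markov's inequality converts this to a high-probability bound, giving $\|\mathbf{Z}_i(k)-\mathbf{Z}_i(0)\|_F\lesssim\sqrt{NR_0/(\sigma\delta)}$. With $R_0=O(N/(\lambda_0\sqrt{\delta m}))$ this produces the $\operatorname{poly}/(m^{1/4}\sigma^{1/2})$ term.
\end{enumerate}
A union bound over $i\in[M]$ adjusts $\delta$ by at most a polynomial factor, which is absorbed into the $\operatorname{poly}$ term.

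Part 2 follows from part 1 by convexity. First, $\|\boldsymbol{\bar{W}}_\mathcal{P}-\boldsymbol{W}^{(0)}\|_2\le\frac1P\sum_{i\in\mathcal P}\|\boldsymbol{\tilde{W}}_i-\boldsymbol{W}^{(0)}\|_2$, so the two error terms carry over unchanged. For the main terms, Cauchy--Schwarz gives $\frac1P\sum_{i\in\mathcal P}\sqrt{\boldsymbol y_i^\top(\mathbf H_i^\infty)^{-1}\boldsymbol y_i}\le\sqrt{\frac1P\sum_{i\in[M]}\boldsymbol y_i^\top(\mathbf H_i^\infty)^{-1}\boldsymbol y_i}$, where enlarging the sum from $\mathcal P$ to $[M]$ is valid because every term is nonnegative.

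The remaining step is to compare the block-diagonal matrix $\operatorname{diag}((\mathbf H_i^\infty)^{-1})$ with $(\mathbf H^\infty)^{-1}$. The inequality $\operatorname{diag}(\mathbf{H}_i^\infty)^{-1}\preceq(\mathbf{H}^\infty)^{-1}$ is false in general, so I would instead go through the trace. Each diagonal block of $(\mathbf H^\infty)^{-1}$ dominates the inverse of the corresponding block, $(\mathbf H_i^\infty)^{-1}$, by the Schur complement. Combining this with $\lambda_{\min}(\mathbf H^\infty)\ge\lambda_0$ and $\lambda_{\max}(\mathbf H^\infty)\le N$ should yield $\sum_i\boldsymbol y_i^\top(\mathbf H_i^\infty)^{-1}\boldsymbol y_i\le 2\boldsymbol y^\top(\mathbf H^\infty)^{-1}\boldsymbol y$, possibly only up to the constant that appears in the statement. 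Finally, substituting $P=\rho M$ and using $M(0)/2\le M(t)\le 2M(0)$ gives the factor $2/\rho$.

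The main obstacle is this comparison of the block quadratic forms with the global one. If the Schur-complement route fails to give the constant $2$, I would fall back on a cruder bound that goes through $\lambda_0$. I would also accept that the constant $2$ may partly come from rounding in $P=\lfloor\rho M_0\rfloor$ and from the population drift.
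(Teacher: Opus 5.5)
Your Part~1 is fine: it re-derives the Arora et al.\ bound that the paper simply imports as \cite[Lemma~10]{jhunjhunwala2024fedfisher}. Your reduction in Part~2 (triangle inequality, Cauchy--Schwarz, then enlarging the sum from $\mathcal P$ to $[M]$) is also correct.

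The gap is in the one step that matters, the block comparison. The inequality you aim for, $\sum_{i=1}^M\boldsymbol y_i^\top(\mathbf H_i^\infty)^{-1}\boldsymbol y_i\le 2\,\boldsymbol y^\top(\mathbf H^\infty)^{-1}\boldsymbol y$, is false. It fails by a factor of $M$, not by a constant. If all clients hold nearly identical data and labels, $\boldsymbol y^\top(\mathbf H^\infty)^{-1}\boldsymbol y$ stays close to a single client's $\boldsymbol y_1^\top(\mathbf H_1^\infty)^{-1}\boldsymbol y_1$, while the left-hand side is $M$ times that.

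The Schur-complement route cannot give the right bound either. The domination $(\mathbf H_i^\infty)^{-1}\preceq[(\mathbf H^\infty)^{-1}]_{ii}$ is true. However, the block diagonal of $(\mathbf H^\infty)^{-1}$ is not dominated by $C(\mathbf H^\infty)^{-1}$ for any $C$ depending only on $M$. Along that line you must pay a condition-number factor $\lambda_{\max}(\mathbf H^\infty)/\lambda_0\lesssim N/\lambda_0$. After dividing by $P=\rho M$, this leaves a spurious factor of order $n/\lambda_0$ under the square root, so you do not recover the stated lemma. Your bookkeeping is also inconsistent: with your inequality, $P=\rho M$ would give $2/(\rho M)$, not $2/\rho$. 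No rounding of $P$ or population drift can absorb a factor of $M$ or $n/\lambda_0$.

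The missing idea, which is exactly what the paper uses, is that the comparison becomes true once it is scaled by $M$. Write $\mathbf H^\infty=\mathbf B^\top\mathbf B$ with column blocks $\mathbf B_i$, so that $\mathbf B_i^\top\mathbf B_i=\mathbf H_i^\infty$. Cauchy--Schwarz gives $\boldsymbol v^\top\mathbf H^\infty\boldsymbol v=\|\sum_i\mathbf B_i\boldsymbol v_i\|_2^2\le M\sum_i\boldsymbol v_i^\top\mathbf H_i^\infty\boldsymbol v_i$, i.e.\ $\mathbf H^\infty\preceq M\operatorname{diag}(\mathbf H_1^\infty,\dots,\mathbf H_M^\infty)$. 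Inversion reverses the Loewner order, so
\[
\sum_{i=1}^M\boldsymbol y_i^\top(\mathbf H_i^\infty)^{-1}\boldsymbol y_i
=M\,\boldsymbol y^\top\bigl(M\operatorname{diag}(\mathbf H_1^\infty,\dots,\mathbf H_M^\infty)\bigr)^{-1}\boldsymbol y
\le M\,\boldsymbol y^\top(\mathbf H^\infty)^{-1}\boldsymbol y .
\]
The factor $M$ then cancels against $1/P=1/(\rho M)$. Plugged into your triangle-inequality route, this gives the main term $\sqrt{\boldsymbol y^\top(\mathbf H^\infty)^{-1}\boldsymbol y/\rho}$, slightly sharper than stated. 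The paper's constant $2$ arises only because it applies convexity of $\|\cdot\|_2^2$ and then $(a+b)^2\le 2a^2+2b^2$ when squaring Part~1.
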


These parameter-space bounds further allow us to control the prediction discrepancy between the averaged model and the collection of local models.
Let $\boldsymbol{y} = \operatorname{vec}(\boldsymbol{y}_1, \boldsymbol{y}_2, \cdots, \boldsymbol{y}_M)\in\mathbb{R}^N$ be the true label vector and $\boldsymbol{\tilde{y}} = \operatorname{vec}(\boldsymbol{\tilde{y}}_1, \boldsymbol{\tilde{y}}_2, \cdots, \boldsymbol{\tilde{y}}_M) = [f_{\boldsymbol{a}}(\boldsymbol{\tilde{W}}_1, \boldsymbol{x}_{11}),\ f_{\boldsymbol{a}}(\boldsymbol{\tilde{W}}_1, \boldsymbol{x}_{12}),\cdots, f_{\boldsymbol{a}}(\boldsymbol{\tilde{W}}_M, \boldsymbol{x}_{Mn})]^\top \in \mathbb{R}^N$ be the predictions of trained local models. We define the proxy output $\tilde{f}(\boldsymbol{W},\boldsymbol{x}_{ij})=\phi_{\boldsymbol{a}}(\boldsymbol{\tilde{W}}_i,\boldsymbol{x}_{ij})^\top\boldsymbol{W}$ for $\forall \boldsymbol{W}\in\mathbb{R}^d$ and $\boldsymbol{x}_{ij}$. 

\begin{lemma}\label{le-lse}
Let $m = \operatorname{poly}(N, \delta^{-1}, \lambda_0^{-1})/\sigma^2$, $\eta = O(\lambda_0/N^2)$, and $\sigma = O(\lambda_0 \delta / N)$. With probability at least $1 - \delta$ over the random initialization, we have
\begin{equation*}
    \left\|\tilde{f}(\boldsymbol{\bar{W}}_{\mathcal{P}},\boldsymbol{x}_{ij})-\boldsymbol{\tilde{y}}\right\|_2^2\le O\left(\frac{N}{\rho}\boldsymbol{y}^\top(\mathbf{H}^\infty)^{-1}\boldsymbol{y}\right).
\end{equation*}
\end{lemma}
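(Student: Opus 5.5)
We bound the squared residual one coordinate at a time and then sum. The residual at coordinate $(i,j)$ is
\[
\tilde f(\bar{\boldsymbol W}_{\mathcal P},\boldsymbol x_{ij})-\tilde y_{ij}
=\phi_{\boldsymbol a}(\tilde{\boldsymbol W}_i,\boldsymbol x_{ij})^\top\bar{\boldsymbol W}_{\mathcal P}-f_{\boldsymbol a}(\tilde{\boldsymbol W}_i,\boldsymbol x_{ij}).
\]
By the linearized form $f_{\boldsymbol a}(\boldsymbol W,\boldsymbol x)=\phi_{\boldsymbol a}(\boldsymbol W,\boldsymbol x)^\top\boldsymbol W$, the second term equals $\phi_{\boldsymbol a}(\tilde{\boldsymbol W}_i,\boldsymbol x_{ij})^\top\tilde{\boldsymbol W}_i$. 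Hence the residual is exactly
\[
\phi_{\boldsymbol a}(\tilde{\boldsymbol W}_i,\boldsymbol x_{ij})^\top(\bar{\boldsymbol W}_{\mathcal P}-\tilde{\boldsymbol W}_i).
\]
Lemma~\ref{le-bounded_phi} gives $\|\phi_{\boldsymbol a}\|_2\le 1$, so by Cauchy--Schwarz each squared residual is at most $\|\bar{\boldsymbol W}_{\mathcal P}-\tilde{\boldsymbol W}_i\|_2^2$.

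Next we insert the common initialization $\boldsymbol W^{(0)}$:
\[
\|\bar{\boldsymbol W}_{\mathcal P}-\tilde{\boldsymbol W}_i\|_2^2\le 2\|\bar{\boldsymbol W}_{\mathcal P}-\boldsymbol W^{(0)}\|_2^2+2\|\tilde{\boldsymbol W}_i-\boldsymbol W^{(0)}\|_2^2 .
\]
Both terms are controlled by Lemma~\ref{le-model-dist}. Both parts are taken on the same high-probability event of that lemma, so no extra union bound over $i$ is needed.
\begin{itemize}
\item For the averaged model, part 2) gives $2\boldsymbol y^\top(\mathbf H^\infty)^{-1}\boldsymbol y/\rho$ plus lower-order terms, after squaring with $(a+b+c)^2\le 3(a^2+b^2+c^2)$.
\item For each local model, part 1) gives $\boldsymbol y_i^\top(\mathbf H_i^\infty)^{-1}\boldsymbol y_i$ plus lower-order terms.
\end{itemize}
The local term must then be compared with the global one. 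Since $\mathbf H_i^\infty$ is a principal submatrix of $\mathbf H^\infty$, an interlacing / Schur-complement argument gives $\boldsymbol y_i^\top(\mathbf H_i^\infty)^{-1}\boldsymbol y_i\le \boldsymbol y^\top(\mathbf H^\infty)^{-1}\boldsymbol y$. Concretely, the quadratic form of the inverse of a PSD matrix restricted to a block is dominated by the full quadratic form. Together with $\rho<1$, the local term is therefore absorbed into $O(\boldsymbol y^\top(\mathbf H^\infty)^{-1}\boldsymbol y/\rho)$.

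Finally we sum over all $N$ coordinates $(i,j)$, which produces the factor $N$. The remaining terms are $O(N\sigma/(\lambda_0\delta))^2$ and $\operatorname{poly}/(m^{1/4}\sigma^{1/2})$. The choice $\sigma=O(\lambda_0\delta/N)$ makes the first one $O(1)$. The choice $m=\operatorname{poly}(N,\delta^{-1},\lambda_0^{-1})/\sigma^2$ makes the second one arbitrarily small.

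The main obstacle is absorbing this $O(1)$ remainder into $O(\boldsymbol y^\top(\mathbf H^\infty)^{-1}\boldsymbol y/\rho)$. One option is to tune the constant in $\sigma$ so the remainder is below any fixed constant. The other is to note that $\boldsymbol y^\top(\mathbf H^\infty)^{-1}\boldsymbol y\ge\|\boldsymbol y\|^2/\lambda_{\max}(\mathbf H^\infty)$ is bounded below for nontrivial labels. Either way, the whole bound reduces to $O\big(\frac{N}{\rho}\boldsymbol y^\top(\mathbf H^\infty)^{-1}\boldsymbol y\big)$, holding with probability at least $1-\delta$ by Lemma~\ref{le-model-dist}.
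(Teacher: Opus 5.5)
Your proof is correct and follows the paper's argument step for step: the exact residual $\phi_{\boldsymbol a}(\tilde{\boldsymbol W}_i,\boldsymbol x_{ij})^\top(\bar{\boldsymbol W}_{\mathcal P}-\tilde{\boldsymbol W}_i)$, Cauchy--Schwarz with Lemma~\ref{le-bounded_phi}, the split through $\boldsymbol W^{(0)}$, and Lemma~\ref{le-model-dist}. The only deviation is minor: you bound each $\boldsymbol y_i^\top(\mathbf H_i^\infty)^{-1}\boldsymbol y_i\le\boldsymbol y^\top(\mathbf H^\infty)^{-1}\boldsymbol y$ by the Schur-complement argument, whereas the paper sums over clients first and uses $M\,\mathrm{diag}(\mathbf H_1^\infty,\dots,\mathbf H_M^\infty)\succeq\mathbf H^\infty$; both give $N\,\boldsymbol y^\top(\mathbf H^\infty)^{-1}\boldsymbol y$, and your handling of the $O(1)$ remainder is more explicit than the paper's, though your second option implicitly needs $\|\boldsymbol y\|_2^2/N\gtrsim\rho$, since $\lambda_{\max}(\mathbf H^\infty)\le\operatorname{tr}(\mathbf H^\infty)=N/2$ only yields $\boldsymbol y^\top(\mathbf H^\infty)^{-1}\boldsymbol y\ge 2\|\boldsymbol y\|_2^2/N$.
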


We define four matrices associated with the initial model $\boldsymbol{W}^{(0)}$ and participant set $\mathcal{P} = \{i_1, \dots, i_P\}$: $\mathbf{A}^{(0)}_{\mathcal{P}}, \mathbf{\tilde{A}}_{\mathcal{P}} \in \mathbb{R}^{(Pn) \times (mp)}$ and $\mathbf{H}_{\mathcal{P}}^{(0)}, \mathbf{\tilde{H}}_{\mathcal{P}} \in \mathbb{R}^{(Pn) \times (Pn)}$:
\[
\begin{alignedat}{2}
    &\mathbf{A}^{(0)}_{\mathcal{P}} \coloneqq
    \begin{bmatrix}
        \phi_{\boldsymbol{a}}(\boldsymbol{W}^{(0)}, \boldsymbol{x}_{i_1 1})^\top \\
        \phi_{\boldsymbol{a}}(\boldsymbol{W}^{(0)}, \boldsymbol{x}_{i_1 2})^\top \\
        \vdots \\
        \phi_{\boldsymbol{a}}(\boldsymbol{W}^{(0)}, \boldsymbol{x}_{i_P n})^\top
    \end{bmatrix},
    \quad
    &&\mathbf{\tilde{A}}_{\mathcal{P}} \coloneqq
    \begin{bmatrix}
        \phi_{\boldsymbol{a}}(\boldsymbol{\tilde{W}}_1, \boldsymbol{x}_{i_1 1})^\top \\
        \phi_{\boldsymbol{a}}(\boldsymbol{\tilde{W}}_1, \boldsymbol{x}_{i_1 2})^\top \\
        \vdots \\
        \phi_{\boldsymbol{a}}(\boldsymbol{\tilde{W}}_M, \boldsymbol{x}_{i_P n})^\top
    \end{bmatrix}, \\
    &\mathbf{H}_{\mathcal{P}}^{(0)} \coloneqq \mathbf{A}^{(0)}_{\mathcal{P}} \mathbf{A}^{(0)\top}_{\mathcal{P}},
    \quad
    &&\mathbf{\tilde{H}}_{\mathcal{P}} \coloneqq \mathbf{\tilde{A}}_{\mathcal{P}} \mathbf{\tilde{A}}_{\mathcal{P}}^\top.
\end{alignedat}
\]

We next study the conditioning of the Gram matrices associated with the sampled clients, which is needed to control the projection step.

\begin{lemma}\label{le-H0}
    Let $\mathbf{H}^{\infty}_{\mathcal{P}} \in \mathbb{R}^{(Pn)\times (Pn)}$ be the principal submatrix of $\mathbf{H}^{\infty}$ formed by  $\mathcal{P}$. If $m = \Omega(\frac{(\rho N)^2 \log(\rho N/\delta)}{\lambda_0^2})$, with probability at least $1 - \delta$, $\|\mathbf{H}_\mathcal{P}^{(0)}\!-\!\mathbf{H}^{\infty}_{\mathcal{P}}\|\!\le\!\lambda_0 / 4$, $\lambda_{\min}(\mathbf{H}_\mathcal{P}^{(0)})\!\ge\!3\lambda_0 / 4$.
\end{lemma}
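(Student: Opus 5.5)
The plan is the standard matrix-concentration argument of Du et al.\ and Arora et al., applied entrywise to the $Pn\times Pn$ submatrix. For each pair of sampled data points $(k,l)$ from clients in $\mathcal{P}$, the definition of $\phi_{\boldsymbol{a}}$ gives
\[
(\mathbf{H}_\mathcal{P}^{(0)})_{k,l}=\frac{1}{m}\sum_{r=1}^m a_r^2\,\boldsymbol{x}_k^\top\boldsymbol{x}_l\,\mathds{1}\{\boldsymbol{x}_k^\top\boldsymbol{w}_r^{(0)}\ge 0\}\mathds{1}\{\boldsymbol{x}_l^\top\boldsymbol{w}_r^{(0)}\ge 0\}.
\]
Since $a_r^2=1$, this is an average of $m$ i.i.d.\ terms. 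The indicators are invariant to the scale $\sigma$ of $\boldsymbol{w}_r^{(0)}\sim\mathcal N(\mathbf 0,\sigma^2\mathbf I)$, so each term has expectation exactly $(\mathbf{H}^\infty)_{k,l}$. Hence $\mathbb{E}[\mathbf{H}_\mathcal{P}^{(0)}]=\mathbf{H}^\infty_\mathcal{P}$, conditioning on the realized subset $\mathcal P$, which is drawn independently of the initialization.

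Each summand lies in $[-1,1]$ by Assumption~\ref{asp-normalization}. Hoeffding's inequality then gives, for each fixed pair,
\[
\mathbb{P}\bigl(|(\mathbf{H}_\mathcal{P}^{(0)}-\mathbf{H}^\infty_\mathcal{P})_{k,l}|>t\bigr)\le 2e^{-mt^2/2}.
\]
A union bound over the $(Pn)^2=(\rho N)^2$ entries shows that, with probability at least $1-\delta$, every entry deviates by at most $t=\sqrt{2\log(2(\rho N)^2/\delta)/m}$. To pass to the operator norm, bound the spectral norm by the Frobenius norm:
\[
\|\mathbf{H}_\mathcal{P}^{(0)}-\mathbf{H}^\infty_\mathcal{P}\|\le\|\cdot\|_F\le \rho N t .
\]
Requiring $\rho N t\le\lambda_0/4$ means $m\ge 32(\rho N)^2\log(2(\rho N)^2/\delta)/\lambda_0^2$, which matches $m=\Omega\bigl((\rho N)^2\log(\rho N/\delta)/\lambda_0^2\bigr)$ up to constants inside the logarithm.

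For the eigenvalue bound, $\mathbf{H}^\infty_\mathcal{P}$ is a principal submatrix of $\mathbf{H}^\infty$. By Cauchy interlacing, or directly from the Rayleigh quotient restricted to vectors supported on the $\mathcal P$ coordinates, $\lambda_{\min}(\mathbf{H}^\infty_\mathcal{P})\ge\lambda_{\min}(\mathbf{H}^\infty)=\lambda_0$ by Assumption~\ref{asp-degen}. Weyl's inequality then yields
\[
\lambda_{\min}(\mathbf{H}^{(0)}_\mathcal{P})\ge\lambda_0-\lambda_0/4=3\lambda_0/4 .
\]

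There is no serious obstacle. The points needing care are that $\sigma$ does not affect the expectation, so $\mathbf H^\infty$ is defined with $\mathcal N(\mathbf 0,\mathbf I)$ consistently, and that the concentration is done conditionally on $\mathcal P$, or uniformly over $\mathcal P$. If a bound uniform over all subsets is desired, one applies the same argument to the full $N\times N$ matrix. The Frobenius-norm bound on the full matrix dominates that of every principal submatrix, at the cost of replacing $\rho N$ by $N$ in the width requirement. Since the lemma is stated with $\rho N$, I would present the conditional version.
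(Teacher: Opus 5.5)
Your proposal is correct and follows the paper's proof essentially step for step: entrywise Hoeffding with a union bound over the $(\rho N)^2$ entries, then the Frobenius-norm bound on the spectral norm, then Cauchy interlacing to get $\lambda_{\min}(\mathbf{H}^\infty_{\mathcal{P}})\ge\lambda_0$, followed by Weyl's inequality. Your added remarks make explicit two points the paper leaves implicit: the indicators are invariant to the scale $\sigma$, so the expectation is exactly $\mathbf{H}^\infty_{\mathcal{P}}$, and the concentration can be done conditionally on $\mathcal{P}$, since $\mathcal{P}$ is sampled independently of the initialization.
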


\begin{lemma}\label{le-eigen}
If $m = \Omega(\frac{N^6}{\lambda_0^4\sigma^2\delta^3})$, then with probability at least $1 - \delta$, we have $\lambda_{\min}(\mathbf{\tilde{H}}_{\mathcal{P}}) \ge \lambda_0 / 2$ and $\lambda_{\max}(\mathbf{\tilde{H}}_{\mathcal{P}}) \le \rho N$.
\end{lemma}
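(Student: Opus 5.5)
The plan is to compare $\mathbf{\tilde{H}}_{\mathcal{P}}$ with $\mathbf{H}^{(0)}_{\mathcal{P}}$ and then use Lemma~\ref{le-H0}. Weyl's inequality gives
\[
\lambda_{\min}(\mathbf{\tilde{H}}_{\mathcal{P}})\ge \lambda_{\min}(\mathbf{H}^{(0)}_{\mathcal{P}})-\|\mathbf{\tilde{H}}_{\mathcal{P}}-\mathbf{H}^{(0)}_{\mathcal{P}}\|.
\]
Since Lemma~\ref{le-H0} already supplies $\lambda_{\min}(\mathbf{H}^{(0)}_{\mathcal{P}})\ge 3\lambda_0/4$, it suffices to show $\|\mathbf{\tilde{H}}_{\mathcal{P}}-\mathbf{H}^{(0)}_{\mathcal{P}}\|\le\lambda_0/4$ with high probability. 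The width condition $m=\Omega(N^6/(\lambda_0^4\sigma^2\delta^3))$ dominates the one required in Lemma~\ref{le-H0}, and it also matches Lemma~\ref{le-loc-err}. Both lemmas can therefore be invoked together, splitting $\delta$ between them and absorbing constants.

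To bound the perturbation, I would bound the spectral norm by the Frobenius norm and work entrywise. Write $k=(i,j)$ and $l=(i',j')$ for two samples of the sampled clients. The $(k,l)$ entry of $\mathbf{\tilde{H}}_{\mathcal{P}}-\mathbf{H}^{(0)}_{\mathcal{P}}$ is
\[
\frac{1}{m}\sum_{r=1}^m \boldsymbol{x}_k^\top\boldsymbol{x}_l\Big(\mathds{1}\{\boldsymbol{x}_k^\top\tilde{\boldsymbol{w}}_{i,r}\ge0\}\mathds{1}\{\boldsymbol{x}_l^\top\tilde{\boldsymbol{w}}_{i',r}\ge0\}-\mathds{1}\{\boldsymbol{x}_k^\top\boldsymbol{w}^{(0)}_r\ge0\}\mathds{1}\{\boldsymbol{x}_l^\top\boldsymbol{w}^{(0)}_r\ge0\}\Big).
\]
Part~3 of Lemma~\ref{le-loc-err} gives $\|\tilde{\boldsymbol{w}}_{i,r}-\boldsymbol{w}^{(0)}_r\|_2\le R_0$ for every client and neuron. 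A summand can therefore be nonzero only if $\mathcal{E}_{kr}(R_0)$ or $\mathcal{E}_{lr}(R_0)$ occurs, and each summand has magnitude at most one since $\|\boldsymbol{x}\|_2=1$.

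Lemma~\ref{le-learnability} then bounds the expected absolute entry by $4R_0/(\sqrt{2\pi}\sigma)$, so the expected sum of absolute entries is at most $(Pn)^2\cdot 4R_0/(\sqrt{2\pi}\sigma)$. Markov's inequality bounds $\sum_{k,l}|\cdot|$, and the Frobenius norm is controlled by it because every entry is at most $1$ in absolute value. This yields, with probability $1-\delta$,
\[
\|\mathbf{\tilde{H}}_{\mathcal{P}}-\mathbf{H}^{(0)}_{\mathcal{P}}\|\le O\Big(\frac{(\rho N)^2R_0}{\sigma\delta}\Big).
\]
Substituting $R_0\le O(N/(\lambda_0\sqrt{\delta m}))$ makes this $O(N^3/(\lambda_0\sigma\delta^{3/2}\sqrt m))$. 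It is at most $\lambda_0/4$ exactly when $m=\Omega(N^6/(\lambda_0^4\sigma^2\delta^3))$, which is the stated condition, and this explains its form.

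The main obstacle is the randomness of $R_0$ and of $\tilde{\boldsymbol{w}}$: both depend on the initialization, so Lemma~\ref{le-learnability} cannot be applied directly to the trained weights. I would use a deterministic upper bound $\bar R$ on $R_0$ valid on the good event of Lemma~\ref{le-loc-err}. The events $\mathcal{E}_{kr}(\bar R)$ depend only on $\boldsymbol{w}^{(0)}_r$, so on the intersection of the two good events the indicator changes are dominated by these events, and a union bound over the failure probabilities finishes the argument.

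For the upper eigenvalue, use $\lambda_{\max}(\mathbf{\tilde{H}}_{\mathcal{P}})\le\operatorname{tr}(\mathbf{\tilde{H}}_{\mathcal{P}})=\sum_{k}\|\phi_{\boldsymbol{a}}(\tilde{\boldsymbol{W}}_i,\boldsymbol{x}_k)\|_2^2$. By Lemma~\ref{le-bounded_phi} each term is at most one, so the trace is at most $Pn=\rho N$. This bound is deterministic.
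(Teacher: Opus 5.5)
Your proposal is correct and follows the paper's proof essentially step for step: the entrywise comparison of $\mathbf{\tilde{H}}_{\mathcal{P}}$ with $\mathbf{H}^{(0)}_{\mathcal{P}}$ through the events $\mathcal{E}_{kr}(R_0)$ and Lemma~\ref{le-learnability}, Markov's inequality on the entrywise $\ell_1$ sum, the spectral norm bounded by the Frobenius norm, Weyl's inequality with Lemma~\ref{le-H0}, and the estimate $\sum_k\|\phi_{\boldsymbol{a}}\|_2^2\le\rho N$ for $\lambda_{\max}$ (your trace formulation is the same estimate). Replacing the random $R_0$ by a deterministic bound that holds on the good event of Lemma~\ref{le-loc-err}, and then taking a union bound, fixes a point the paper glosses over: it applies Lemma~\ref{le-learnability} directly with the initialization-dependent radius $R_0$.
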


With these spectral bounds in place, we can quantify the additional displacement introduced by Fisher-based projection.

\begin{lemma}\label{le-fishdev}
    Let $m\ge\frac{\sigma^2}{\operatorname{poly}(\rho, N, \delta^{-1}, \lambda_0^{-1})}$, $\sigma=O(\frac{\lambda_0\delta}{N})$. With probability at least $1-\delta$ over the initialization, we have
    \begin{equation*}
        \left\|\boldsymbol{\bar{W}}_{\mathcal{P}}^{\text{F}}-\boldsymbol{\bar{W}}_{\mathcal{P}}\right\|_2=O\left(\sqrt{\frac{N}{\rho\lambda_0}\boldsymbol{y}^\top(\mathbf{H}^{\infty})^{-1}\boldsymbol{y}}\right).
    \end{equation*}
\end{lemma}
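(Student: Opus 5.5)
The plan is to use the projection characterization of step~12. The difference $\boldsymbol{\bar{W}}_{\mathcal{P}}^{\text{F}}-\boldsymbol{\bar{W}}_{\mathcal{P}}$ equals $-\tfrac12\boldsymbol{F}_{\mathcal{P}}\boldsymbol{\Lambda}^\star$. This is the minimum-norm correction $\boldsymbol{\Delta}$ that makes $\boldsymbol{\bar{W}}_{\mathcal{P}}+\boldsymbol{\Delta}$ feasible for $\boldsymbol{F}_{\mathcal{P}}\boldsymbol{W}=\boldsymbol{b}_{\mathcal{P}}$. It is also the orthogonal projection of $\boldsymbol{\bar{W}}_{\mathcal{P}}$ onto that affine set, so it suffices to control $\boldsymbol{F}_{\mathcal{P}}^{+}(\boldsymbol{b}_{\mathcal{P}}-\boldsymbol{F}_{\mathcal{P}}\boldsymbol{\bar{W}}_{\mathcal{P}})$.

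Under the squared-loss (Gaussian) exponential family, the Fisher of the two-layer ReLU network at $\boldsymbol{\tilde{W}}_i$ is $\boldsymbol{F}_i=\frac1n\sum_j\phi_{\boldsymbol{a}}(\boldsymbol{\tilde{W}}_i,\boldsymbol{x}_{ij})\phi_{\boldsymbol{a}}(\boldsymbol{\tilde{W}}_i,\boldsymbol{x}_{ij})^\top$. Summing over $i\in\mathcal{P}$ gives $\boldsymbol{F}_{\mathcal{P}}=\frac1n\mathbf{\tilde{A}}_{\mathcal{P}}^\top\mathbf{\tilde{A}}_{\mathcal{P}}$. Moreover, $f_{\boldsymbol{a}}(\boldsymbol{\tilde{W}}_i,\boldsymbol{x}_{ij})=\phi_{\boldsymbol{a}}(\boldsymbol{\tilde{W}}_i,\boldsymbol{x}_{ij})^\top\boldsymbol{\tilde{W}}_i$, so $\boldsymbol{b}_{\mathcal{P}}=\frac1n\mathbf{\tilde{A}}_{\mathcal{P}}^\top\boldsymbol{\tilde{y}}_{\mathcal{P}}$. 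Hence the constraint is equivalent, on $\mathrm{Range}(\mathbf{\tilde{A}}_{\mathcal{P}}^\top)$, to the interpolation condition $\mathbf{\tilde{A}}_{\mathcal{P}}\boldsymbol{W}=\boldsymbol{\tilde{y}}_{\mathcal{P}}$; here one uses that $\mathbf{\tilde{A}}_{\mathcal{P}}$ has full row rank by Lemma~\ref{le-eigen}. The minimum-norm correction is therefore
\[
\boldsymbol{\Delta}=\mathbf{\tilde{A}}_{\mathcal{P}}^\top\mathbf{\tilde{H}}_{\mathcal{P}}^{-1}\big(\boldsymbol{\tilde{y}}_{\mathcal{P}}-\mathbf{\tilde{A}}_{\mathcal{P}}\boldsymbol{\bar{W}}_{\mathcal{P}}\big),
\]
and its squared norm is $\boldsymbol{r}^\top\mathbf{\tilde{H}}_{\mathcal{P}}^{-1}\boldsymbol{r}\le\|\boldsymbol{r}\|_2^2/\lambda_{\min}(\mathbf{\tilde{H}}_{\mathcal{P}})$, where $\boldsymbol{r}$ is the residual vector.

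The residual $\boldsymbol{r}$ has entries $\tilde f(\boldsymbol{\bar{W}}_{\mathcal{P}},\boldsymbol{x}_{ij})-\tilde y_{ij}$ over the sampled clients. Its norm is bounded by Lemma~\ref{le-lse}, restricted to the coordinates in $\mathcal{P}$, which only decreases the norm: $\|\boldsymbol{r}\|_2^2=O(\frac{N}{\rho}\boldsymbol{y}^\top(\mathbf{H}^\infty)^{-1}\boldsymbol{y})$. Combining this with $\lambda_{\min}(\mathbf{\tilde{H}}_{\mathcal{P}})\ge\lambda_0/2$ from Lemma~\ref{le-eigen} gives $\|\boldsymbol{\Delta}\|_2^2=O(\frac{N}{\rho\lambda_0}\boldsymbol{y}^\top(\mathbf{H}^\infty)^{-1}\boldsymbol{y})$. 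Taking square roots yields the claim. The failure probabilities of Lemmas~\ref{le-lse} and~\ref{le-eigen} are combined by a union bound, with $\delta$ rescaled by a constant.

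The main obstacle is the identification of $\boldsymbol{F}_{\mathcal{P}}$ and $\boldsymbol{b}_{\mathcal{P}}$ with the Gram structure. This requires the Fisher for the Gaussian likelihood to reduce exactly to the outer product of features, and the gradient $\nabla_{\boldsymbol{W}}f_{\boldsymbol{a}}$ to equal $\phi_{\boldsymbol{a}}$ almost everywhere. Because ReLU is piecewise linear and the ReLU kink has measure zero, this holds almost surely. A second point to check is that the ranges of $\boldsymbol{F}_{\mathcal{P}}$ and $\mathbf{\tilde{A}}_{\mathcal{P}}^\top$ coincide, so that the projection onto $\{\boldsymbol{F}_{\mathcal{P}}\boldsymbol{W}=\boldsymbol{b}_{\mathcal{P}}\}$ is the same as the projection onto $\{\mathbf{\tilde{A}}_{\mathcal{P}}\boldsymbol{W}=\boldsymbol{\tilde{y}}_{\mathcal{P}}\}$. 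The width condition in the statement is what the invoked lemmas need.
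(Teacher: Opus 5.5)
Your proof is correct and uses the same two ingredients as the paper. Both arguments reduce to $\|\boldsymbol{\bar{W}}_{\mathcal{P}}^{\text{F}}-\boldsymbol{\bar{W}}_{\mathcal{P}}\|_2^2=\boldsymbol{r}^\top\mathbf{\tilde{H}}_{\mathcal{P}}^{-1}\boldsymbol{r}$, and both bound it using $\lambda_{\min}(\mathbf{\tilde{H}}_{\mathcal{P}})\ge\lambda_0/2$ from Lemma~\ref{le-eigen} and the residual bound of Lemma~\ref{le-lse}.

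The difference is how you reach that identity.
\begin{itemize}
\item \textbf{The paper's route.} It cites FedFisher for the equivalence between the projected solution and the limit of gradient descent on the Fisher-weighted objective started at $\boldsymbol{\bar{W}}_{\mathcal{P}}$. It writes the displacement as $-\frac{\tilde\eta}{n}\sum_t\mathbf{\tilde{A}}_{\mathcal{P}}^\top(\mathbf{I}-\frac{\tilde\eta}{n}\mathbf{\tilde{H}}_{\mathcal{P}})^t(\tilde f(0)-\boldsymbol{\tilde{y}})$ and sums the geometric series to $\mathbf{\tilde{H}}_{\mathcal{P}}^{-1}$. The identification $\boldsymbol{F}_{\mathcal{P}}=\frac1n\mathbf{\tilde{A}}_{\mathcal{P}}^\top\mathbf{\tilde{A}}_{\mathcal{P}}$, $\boldsymbol{b}_{\mathcal{P}}=\frac1n\mathbf{\tilde{A}}_{\mathcal{P}}^\top\boldsymbol{\tilde{y}}_{\mathcal{P}}$ appears only implicitly in its recurrence.
\item \textbf{Your route.} You make that identification explicit and use full row rank of $\mathbf{\tilde{A}}_{\mathcal{P}}$ to rewrite the constraint as $\mathbf{\tilde{A}}_{\mathcal{P}}\boldsymbol{W}=\boldsymbol{\tilde{y}}_{\mathcal{P}}$. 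You then write the projection in closed form.
\end{itemize}

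Your version is more elementary: it needs no GD equivalence, no step-size condition and no series, and it gives the identity as an exact equality. It also keeps dimensions consistent by restricting the residual to the sampled coordinates. The paper, by contrast, stacks $\tilde f(t)$ over all $N$ samples while multiplying by $\mathbf{\tilde{A}}_{\mathcal{P}}^\top$, which has only $Pn$ columns.

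The paper's trajectory form is reused later, in Lemma~\ref{le-fishopt} and in the interpolation step of Theorem~\ref{th-training-err-fish}. Your closed form yields the same by-products directly: the neuron-wise blocks of $\mathbf{\tilde{A}}_{\mathcal{P}}^\top\mathbf{\tilde{H}}_{\mathcal{P}}^{-1}\boldsymbol{r}$, and $\mathbf{\tilde{A}}_{\mathcal{P}}\boldsymbol{\bar{W}}_{\mathcal{P}}^{\text{F}}=\boldsymbol{\tilde{y}}_{\mathcal{P}}$.

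One correction to your last sentence. The width condition printed in the statement, $m\ge\sigma^2/\mathrm{poly}$, is not what Lemmas~\ref{le-eigen} and~\ref{le-lse} require. Like the paper, you actually need $m=\mathrm{poly}(N,\delta^{-1},\lambda_0^{-1})/\sigma^2$, and Lemma~\ref{le-lse} also needs $\eta=O(\lambda_0/N^2)$. State those explicitly rather than asserting that the stated condition suffices.
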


\begin{lemma}\label{le-fishopt}
    Let $m = \operatorname{poly}(N, \delta^{-1}, \lambda_0^{-1})/\sigma^2$, $\eta = O(\lambda_0/N^2)$, and $\sigma = O(\lambda_0 \delta / N)$. With probability at least $1-\delta$ over the random initialization, we have
    \begin{equation*}
        \|\boldsymbol{\bar{w}}_{\mathcal{P},r}^{\text{F}}-\boldsymbol{\bar{w}}_{\mathcal{P},r}\|_2\le R_1\le O\left(\frac{N}{\sqrt{m}\lambda_0}\sqrt{\boldsymbol{y}^\top(\mathbf{H}^\infty)^{-1}\boldsymbol{y}}\right).
    \end{equation*}    
\end{lemma}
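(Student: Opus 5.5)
The bound in Lemma~\ref{le-fishopt} is a per-neuron (blockwise) version of Lemma~\ref{le-fishdev}. The idea is that the Fisher projection moves $\boldsymbol{\bar{W}}_{\mathcal{P}}$ only within $\mathrm{Range}(\boldsymbol{F}_{\mathcal{P}})$. For the two-layer ReLU model under squared loss, this range is spanned by the rows of $\mathbf{\tilde{A}}_{\mathcal{P}}$.

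\textbf{Step 1: describe the displacement.} Write $\boldsymbol{F}_{\mathcal{P}}=\frac{1}{n}\sum_{i\in\mathcal{P}}\sum_j \phi_{\boldsymbol a}(\boldsymbol{\tilde W}_i,\boldsymbol x_{ij})\phi_{\boldsymbol a}(\boldsymbol{\tilde W}_i,\boldsymbol x_{ij})^\top \propto \mathbf{\tilde A}_{\mathcal P}^\top\mathbf{\tilde A}_{\mathcal P}$. By~\eqref{eq-projection-solution}, the displacement $\boldsymbol{\bar{W}}_{\mathcal{P}}^{\text F}-\boldsymbol{\bar{W}}_{\mathcal{P}}$ is the minimum-norm correction in $\mathrm{Range}(\mathbf{\tilde A}_{\mathcal P}^\top)$ that enforces the constraint. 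Hence it has the form $\mathbf{\tilde A}_{\mathcal P}^\top\boldsymbol\beta$ with
\[
\boldsymbol\beta=\mathbf{\tilde H}_{\mathcal P}^{-1}\big(\boldsymbol{\tilde y}_{\mathcal P}-\mathbf{\tilde A}_{\mathcal P}\boldsymbol{\bar W}_{\mathcal P}\big).
\]
Here I use that the constraint $\boldsymbol F_{\mathcal P}\boldsymbol W=\boldsymbol b_{\mathcal P}$ is equivalent to $\mathbf{\tilde A}_{\mathcal P}\boldsymbol W=\mathbf{\tilde A}_{\mathcal P}\boldsymbol{\tilde W}$ entrywise, i.e.\ to $\tilde f(\boldsymbol W,\boldsymbol x_{ij})=\tilde y_{ij}$. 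This uses $\phi_{\boldsymbol a}(\boldsymbol{\tilde W}_i,\boldsymbol x_{ij})^\top\boldsymbol{\tilde W}_i=f_{\boldsymbol a}(\boldsymbol{\tilde W}_i,\boldsymbol x_{ij})$. It also requires $\mathbf{\tilde H}_{\mathcal P}$ to be invertible, which Lemma~\ref{le-eigen} provides.

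\textbf{Step 2: restrict to neuron $r$.} The $r$-th block of $\mathbf{\tilde A}_{\mathcal P}^\top\boldsymbol\beta$ is $\frac{1}{\sqrt m}\sum_{ij}\beta_{ij}a_r\boldsymbol x_{ij}\mathds 1\{\cdot\}$. Since $\|\boldsymbol x_{ij}\|_2=1$, Cauchy--Schwarz gives
\[
\|\boldsymbol{\bar w}^{\text F}_{\mathcal P,r}-\boldsymbol{\bar w}_{\mathcal P,r}\|_2\le \frac{1}{\sqrt m}\|\boldsymbol\beta\|_1\le\sqrt{\frac{\rho N}{m}}\,\|\boldsymbol\beta\|_2 .
\]
This $1/\sqrt m$ factor is exactly what distinguishes this lemma from Lemma~\ref{le-fishdev}.

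\textbf{Step 3: bound $\|\boldsymbol\beta\|_2$.} Using $\|\boldsymbol\beta\|_2\le\lambda_{\min}(\mathbf{\tilde H}_{\mathcal P})^{-1}\|\boldsymbol{\tilde y}_{\mathcal P}-\mathbf{\tilde A}_{\mathcal P}\boldsymbol{\bar W}_{\mathcal P}\|_2$, Lemma~\ref{le-eigen} gives the factor $2/\lambda_0$. Lemma~\ref{le-lse} bounds the residual by $O(\sqrt{(N/\rho)\,\boldsymbol y^\top(\mathbf H^\infty)^{-1}\boldsymbol y})$. Multiplying gives
\[
\|\boldsymbol{\bar w}^{\text F}_{\mathcal P,r}-\boldsymbol{\bar w}_{\mathcal P,r}\|_2\le \sqrt{\tfrac{\rho N}{m}}\cdot\tfrac{2}{\lambda_0}\cdot O\Big(\sqrt{\tfrac{N}{\rho}\,\boldsymbol y^\top(\mathbf H^\infty)^{-1}\boldsymbol y}\Big)=O\Big(\tfrac{N}{\sqrt m\lambda_0}\sqrt{\boldsymbol y^\top(\mathbf H^\infty)^{-1}\boldsymbol y}\Big).
\]
The $\rho$ factors cancel, and the result is defined as $R_1$. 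A union bound over the events of Lemmas~\ref{le-lse} and~\ref{le-eigen}, with $\delta$ rescaled, yields the stated probability.

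\textbf{Main obstacle.} The delicate point is Step 1: identifying $\mathrm{Range}(\boldsymbol F_{\mathcal P})$ with the row space of $\mathbf{\tilde A}_{\mathcal P}$, and turning the constraint into the interpolation condition. For squared loss, the Fisher of each sample is exactly $\phi\phi^\top$ up to a constant, so the identification holds; I would state this explicitly. A second concern is whether Lemma~\ref{le-lse} measures the residual with features at $\boldsymbol{\tilde W}_i$, matching the rows of $\mathbf{\tilde A}_{\mathcal P}$. Its proxy $\tilde f$ is defined precisely that way, so the two match.
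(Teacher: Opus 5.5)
Your proof is correct and follows the paper's argument. Both write the displacement as $\mathbf{\tilde A}_{\mathcal P}^\top\mathbf{\tilde H}_{\mathcal P}^{-1}$ applied to the proxy residual, bound its $r$-th neuron block by $\sqrt{\rho N/m}\cdot\frac{2}{\lambda_0}$ (Lemma~\ref{le-eigen}) times the residual from Lemma~\ref{le-lse}, and let the $\rho$ factors cancel; the only difference is that the paper obtains the $\mathbf{\tilde H}_{\mathcal P}^{-1}$ form by summing the geometric series of the equivalent gradient-descent iterates in~\eqref{eq-model-distance}, whereas you read it off directly from the closed-form projection, which justifies the same identity more cleanly.
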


The resulting neuron-wise displacement bound allows us to control activation-pattern mismatches between the fused and local models.
Given any $(i,j)$ pair, we define $S_{ij}^{\text{F}} \coloneqq \{ r \in [m] : \mathds{1}\{\boldsymbol{x}_{ij}^\top \boldsymbol{\bar{w}}_{\mathcal{P},r}^{\text{F}} \ge 0\} = \mathds{1}\{\boldsymbol{x}_{ij}^\top \boldsymbol{\tilde{w}}_{i,r} \ge 0\} \}$ as the set of neurons with matching activation signs between the global and local models. Let $S_{ij}^{\text{F}\perp} \coloneqq [m] \setminus S_{ij}^{\text{F}}$ denote the mismatch set.

\begin{lemma}\label{le-mismatch-size}
    With probability at least $1-\delta$ over the initialization, $|S_{ij}^{\text{F}\perp}|=O(\frac{mN(R_0+R_1)}{\delta\sigma})$ for all $i,j$ simultaneously.
\end{lemma}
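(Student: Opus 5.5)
The plan is to bound the mismatch set by passing through the initialization $\boldsymbol{w}^{(0)}_r$. Whenever $r\in S_{ij}^{\text{F}\perp}$, the fused neuron $\boldsymbol{\bar{w}}_{\mathcal{P},r}^{\text{F}}$ and the local neuron $\boldsymbol{\tilde{w}}_{i,r}$ lie on opposite sides of the hyperplane $\{\boldsymbol{w}:\boldsymbol{x}_{ij}^\top\boldsymbol{w}=0\}$. So at least one of them has an activation sign different from that of $\boldsymbol{w}^{(0)}_r$. If both lie within a common radius $R$ of $\boldsymbol{w}^{(0)}_r$, this forces the event $\mathcal{E}_{ijr}(R)$. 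This gives the pointwise inclusion
\[
|S_{ij}^{\text{F}\perp}|\le \sum_{r=1}^m \mathds{1}\{\mathcal{E}_{ijr}(R)\}.
\]

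\textbf{Step 1: a common radius.} By Lemma~\ref{le-loc-err}(3), $\|\boldsymbol{\tilde{w}}_{i,r}-\boldsymbol{w}^{(0)}_r\|_2\le R_0$ for all $i,r$. Averaging over $i\in\mathcal{P}$ and using convexity of the norm gives $\|\boldsymbol{\bar{w}}_{\mathcal{P},r}-\boldsymbol{w}^{(0)}_r\|_2\le R_0$. Lemma~\ref{le-fishopt} then gives $\|\boldsymbol{\bar{w}}^{\text{F}}_{\mathcal{P},r}-\boldsymbol{\bar{w}}_{\mathcal{P},r}\|_2\le R_1$. By the triangle inequality, both neurons lie within $R:=R_0+R_1$ of $\boldsymbol{w}^{(0)}_r$.

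\textbf{Step 2: expectation and Markov.} By Lemma~\ref{le-learnability}, $\mathbb{E}\big[\sum_r \mathds{1}\{\mathcal{E}_{ijr}(R)\}\big]\le 2mR/(\sqrt{2\pi}\sigma)$. There are $N$ pairs $(i,j)$, so
\[
\mathbb{E}\Big[\sum_{i,j}\sum_r \mathds{1}\{\mathcal{E}_{ijr}(R)\}\Big]\le \frac{2NmR}{\sqrt{2\pi}\sigma}.
\]
Markov's inequality then shows that, with probability at least $1-\delta$, the total over all pairs is $O(mNR/(\delta\sigma))$. Since each individual count is at most this total, the bound holds for every $(i,j)$ simultaneously. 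This is exactly the claimed rate, which explains the factor $N$ together with the $1/\delta$ and the absence of any union-bound logarithm.

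\textbf{Main obstacle.} The real subtlety is that $R_0$ and $R_1$ are data- and initialization-dependent random quantities, while Lemma~\ref{le-learnability} is stated for a fixed $R$. I would first replace them by their deterministic high-probability upper bounds: $R_0\le O(N/(\lambda_0\sqrt{\delta m}))$ from Lemma~\ref{le-loc-err}, and the bound on $R_1$ from Lemma~\ref{le-fishopt}. I would then apply the expectation argument with this fixed $R$, on which $\mathcal{E}_{ijr}(R)$ is monotone. The failure probabilities of those two lemmas and of the Markov step are combined by a union bound, with $\delta$ rescaled by a constant, as is standard in this NTK-style analysis.
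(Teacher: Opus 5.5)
Your proof is correct and follows essentially the same route as the paper's. Every mismatch is charged to a sign flip relative to the shared initialization $\boldsymbol{w}^{(0)}_r$ within radius $R_0+R_1$, the expected count is bounded via Lemma~\ref{le-learnability}, and Markov's inequality turns this into a bound with the $N/\delta$ factor holding simultaneously for all pairs.

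The differences are minor:
\begin{itemize}
\item The paper applies Markov to each pair at level $\delta/N$ and then union-bounds over the $N$ pairs; your single Markov step on the aggregate sum gives the same rate.
\item The paper bounds the mismatch probability by the sum of two flip events, whereas you use the single event $\mathcal{E}_{ijr}(R_0+R_1)$; this changes only a constant.
\item The paper applies Lemma~\ref{le-learnability} directly with the initialization-dependent radii. Your replacement of $R_0,R_1$ by their deterministic high-probability bounds is the more rigorous way to justify that step.
\end{itemize}
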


Note that the mismatch size $|S_{ij}^{\text{F}\perp}|$ is independent of the client subsampling rate $\rho$, since $R_0$ bounds the deviation of each local trained model $\boldsymbol{\tilde{w}}_{i,r}$ from initial $\boldsymbol{w}^{(0)}_r$ uniformly across $i \in [M]$, regardless of which clients are sampled.

We are now ready to establish the empirical fitting guarantee of F-SOSA. Lemma~\ref{le-loc-err} controls the local fitting error, while Lemmas~\ref{le-fishdev}--\ref{le-mismatch-size} bound the additional error induced by Fisher-based fusion through the resulting activation-pattern mismatches. Combining these two components yields the following training-error guarantee.

\begin{theorem}[Training Error for Fitting the Data]\label{th-training-err-fish}
Under Assumptions~\ref{asp-normalization} -- \ref{asp-degen}, let $m = \Omega(N^6 / (\lambda_0^4 \sigma^2 \delta^3))$, $\eta = \lambda_0 / N^2$, $\boldsymbol{w}_r^{(0)} \sim \mathcal{N}(0, \sigma^2 \mathbf{I})$, $a_r \sim \text{Uniform}(\{-1, +1\})$, then with probability at least $1 - \delta$, we have 
\begin{equation*}
    L_{\mathcal{D}}^{f,\ell}(\boldsymbol{\bar{W}}_{\mathcal{P}}^{\text{F}}) \le O((1 - \frac{\eta \lambda_0}{2})^K \cdot \frac{N}{\delta} + \frac{N^6 (\boldsymbol{y}^\top (\mathbf{H}^\infty)^{-1} \boldsymbol{y})^2}{m \delta^4 \sigma^2 \lambda_0^4}).
\end{equation*}
\end{theorem}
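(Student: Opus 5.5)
We bound the loss pointwise. The loss is the squared loss, so $L_{\mathcal{D}}^{f,\ell}(\boldsymbol{\bar{W}}_{\mathcal{P}}^{\text{F}})$ is proportional to $\sum_{i,j}(y_{ij}-f_{\boldsymbol{a}}(\boldsymbol{\bar{W}}_{\mathcal{P}}^{\text{F}},\boldsymbol{x}_{ij}))^2$. For each sample we insert the local prediction $\tilde y_{ij}=f_{\boldsymbol{a}}(\boldsymbol{\tilde{W}}_i,\boldsymbol{x}_{ij})$ and use $(a+b)^2\le 2a^2+2b^2$:
\[
(y_{ij}-f_{\boldsymbol{a}}(\boldsymbol{\bar{W}}_{\mathcal{P}}^{\text{F}},\boldsymbol{x}_{ij}))^2\le 2(y_{ij}-\tilde y_{ij})^2+2(\tilde y_{ij}-f_{\boldsymbol{a}}(\boldsymbol{\bar{W}}_{\mathcal{P}}^{\text{F}},\boldsymbol{x}_{ij}))^2 .
\]
Summing the first term over $i,j$ and applying Lemma~\ref{le-loc-err}(2) for every client gives $O((1-\frac{\eta\lambda_0}{2})^K\cdot\frac{N}{\delta})$, up to the $1/M$ normalization, which only helps. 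This is the first term of the bound. All remaining work goes into the fusion discrepancy $\tilde y_{ij}-f_{\boldsymbol{a}}(\boldsymbol{\bar{W}}_{\mathcal{P}}^{\text{F}},\boldsymbol{x}_{ij})$.

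For the second term we use the linearized form $f_{\boldsymbol{a}}(\boldsymbol{W},\boldsymbol{x})=\phi_{\boldsymbol{a}}(\boldsymbol{W},\boldsymbol{x})^\top\boldsymbol{W}$. We write $f_{\boldsymbol{a}}(\boldsymbol{\bar{W}}_{\mathcal{P}}^{\text{F}},\boldsymbol{x}_{ij})=\tilde f(\boldsymbol{\bar{W}}_{\mathcal{P}}^{\text{F}},\boldsymbol{x}_{ij})+\big(\phi_{\boldsymbol{a}}(\boldsymbol{\bar{W}}_{\mathcal{P}}^{\text{F}},\boldsymbol{x}_{ij})-\phi_{\boldsymbol{a}}(\boldsymbol{\tilde{W}}_i,\boldsymbol{x}_{ij})\big)^\top\boldsymbol{\bar{W}}_{\mathcal{P}}^{\text{F}}$. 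The difference of feature maps is supported only on neurons in the mismatch set $S_{ij}^{\text{F}\perp}$. On a mismatched neuron $r$, the two pre-activations $\boldsymbol{x}_{ij}^\top\boldsymbol{\bar{w}}^{\text{F}}_{\mathcal{P},r}$ and $\boldsymbol{x}_{ij}^\top\boldsymbol{\tilde{w}}_{i,r}$ have opposite signs. Hence the contribution of neuron $r$ is at most $\frac{1}{\sqrt m}|\boldsymbol{x}_{ij}^\top(\boldsymbol{\bar{w}}^{\text{F}}_{\mathcal{P},r}-\boldsymbol{\tilde{w}}_{i,r})|\le\frac{1}{\sqrt m}(R_0+R_1+\|\boldsymbol{\bar w}_{\mathcal{P},r}-\boldsymbol w_r^{(0)}\|)$, using Lemma~\ref{le-loc-err}(3) and Lemma~\ref{le-fishopt}. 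The neuron-wise drift of the average is itself $\le R_0$, since it is an average of the local drifts. The mismatch error is therefore at most $\frac{1}{\sqrt m}|S_{ij}^{\text{F}\perp}|\cdot O(R_0+R_1)$. By Lemma~\ref{le-mismatch-size} this is $O\big(\frac{\sqrt m N(R_0+R_1)^2}{\delta\sigma}\big)$. Lemma~\ref{le-fishopt} gives $R_1=O(\frac{N}{\sqrt m\lambda_0}\sqrt{\boldsymbol y^\top(\mathbf H^\infty)^{-1}\boldsymbol y})$, and $R_0$ is of the same or smaller order. Squaring yields $O\big(\frac{N^4(\boldsymbol y^\top(\mathbf H^\infty)^{-1}\boldsymbol y)^2}{m\delta^2\sigma^2\lambda_0^4}\big)$ per sample. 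Summing over $N$ samples and absorbing the union-bound factors of $\delta$ from combining the lemmas gives the claimed $\frac{N^6(\cdot)^2}{m\delta^4\sigma^2\lambda_0^4}$ scaling; we do not track the exact powers of $N$ and $\delta$ here.

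The remaining piece is the proxy error $\tilde y_{ij}-\tilde f(\boldsymbol{\bar{W}}_{\mathcal{P}}^{\text{F}},\boldsymbol{x}_{ij})$, which we expect to be the main obstacle. The intended mechanism is the projection constraint $\boldsymbol F_{\mathcal P}\boldsymbol{\bar{W}}_{\mathcal{P}}^{\text{F}}=\boldsymbol b_{\mathcal P}$. For squared loss, $\boldsymbol F_i$ is proportional to $\tilde{\mathbf A}_i^\top\tilde{\mathbf A}_i$, where $\tilde{\mathbf A}_i$ is the block of rows of $\tilde{\mathbf A}_{\mathcal{P}}$ for client $i$. The constraint then says $\sum_{i\in\mathcal P}\tilde{\mathbf A}_i^\top(\tilde{\mathbf A}_i\boldsymbol{\bar{W}}_{\mathcal{P}}^{\text{F}}-\tilde{\mathbf A}_i\boldsymbol{\tilde W}_i)=0$. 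The Euler identity $\tilde{\mathbf A}_i\boldsymbol{\tilde W}_i=\boldsymbol{\tilde y}_i$ holds because ReLU is positively homogeneous. Since $\tilde{\mathbf A}_{\mathcal P}$ has full row rank by Lemma~\ref{le-eigen} ($\lambda_{\min}(\tilde{\mathbf H}_{\mathcal P})\ge\lambda_0/2$), the stacked residual vanishes exactly on the sampled clients. We will first try to verify this carefully, arguing that $\tilde{\mathbf A}_{\mathcal P}^\top$ is injective on the residual vector, so the proxy error is zero on $\mathcal P$.

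For clients outside $\mathcal P$ the constraint gives nothing directly, and this is the delicate point. Our fallback is to bound the proxy error there by comparing with $\tilde f(\boldsymbol{\bar W}_{\mathcal P},\cdot)$ through Lemma~\ref{le-lse}, and by controlling the projection displacement via Lemma~\ref{le-fishdev} together with $\|\phi\|\le1$ (Lemma~\ref{le-bounded_phi}). We will then check that the resulting terms are dominated by, or can be merged into, the stated second term. If they cannot, the loss must be interpreted over the sampled data as in the paper's proxy notation. Finally, we take a union bound over the events of Lemmas~\ref{le-loc-err}, \ref{le-eigen}, \ref{le-fishopt}, and \ref{le-mismatch-size}, rescaling $\delta$ by a constant, to obtain the overall probability $1-\delta$.
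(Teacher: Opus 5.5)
Your overall route is the paper's. You split off the local-fit term with Lemma~\ref{le-loc-err}, use the proxy identity $\tilde f(\bar{\boldsymbol W}^{\text{F}}_{\mathcal P},\boldsymbol x_{ij})=\tilde y_{ij}$, and bound the activation-mismatch term by the sign-flip argument, Lemma~\ref{le-mismatch-size} and the drifts $R_0,R_1$. Your $\ell_1$ bound gives the same $\frac1m|S_{ij}^{\text{F}\perp}|^2\max_r\|\bar{\boldsymbol w}^{\text{F}}_{\mathcal P,r}-\tilde{\boldsymbol w}_{i,r}\|_2^2$ as the paper's Cauchy--Schwarz step. Your $R_1+2R_0$ estimate for that distance makes explicit what the paper uses silently. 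You derive the proxy identity directly from the constraint: $\boldsymbol F_i=\frac1n\tilde{\mathbf A}_i^\top\tilde{\mathbf A}_i$, $\tilde{\mathbf A}_i\tilde{\boldsymbol W}_i=\tilde{\boldsymbol y}_i$, and $\tilde{\mathbf A}_{\mathcal P}^\top$ is injective by Lemma~\ref{le-eigen}. This is correct and more direct than the paper, which instead lets $t\to\infty$ in the GD recurrence~\eqref{eq-predict}.

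The genuine gap is the one you flag, namely clients $i\notin\mathcal P$, and your fallback cannot close it. Lemma~\ref{le-lse} and Lemma~\ref{le-fishdev} contribute terms of order $\frac1\rho\boldsymbol y^\top(\mathbf H^\infty)^{-1}\boldsymbol y$ and $\frac{N}{\rho\lambda_0}\boldsymbol y^\top(\mathbf H^\infty)^{-1}\boldsymbol y$ to the normalized loss. These do not decay in $m$ or $K$, so they cannot be merged into the stated bound.

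No other argument can work either. $\bar{\boldsymbol W}^{\text{F}}_{\mathcal P}$ depends only on $\boldsymbol W^{(0)}$, $\boldsymbol a$ and $\{\mathcal D_i\}_{i\in\mathcal P}$. So whenever $i_0\notin\mathcal P$ (probability $1-P/M$), switching client $i_0$'s labels between $+C$ and $-C$ leaves the fused model unchanged. Hence one of the two labelings has $L_{\mathcal D}^{f,\ell}(\bar{\boldsymbol W}^{\text{F}}_{\mathcal P})\ge C^2/M$. Meanwhile the stated right-hand side tends to $0$ as $m,K\to\infty$, since $\boldsymbol y^\top(\mathbf H^\infty)^{-1}\boldsymbol y\le NC^2/\lambda_0$ for both labelings. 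With $\rho<1$ and no heterogeneity assumption, the bound can only hold for the loss over the sampled clients' data.

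The paper's proof has exactly this hole. The recurrence~\eqref{eq-predict} is driven by $\tilde{\mathbf H}_{\mathcal P}\in\mathbb R^{Pn\times Pn}$ and only forces the proxy residual to zero on the sampled block. Yet the identity $\tilde y_{ij}=\phi_{\boldsymbol a}(\tilde{\boldsymbol W}_i,\boldsymbol x_{ij})^\top\bar{\boldsymbol W}^{\text{F}}_{\mathcal P}$ is then used in $T_1$ for every $i\in[M]$. So drop the fallback and commit to your last alternative: the loss averaged over $i\in\mathcal P$, or $\rho=1$. With that reading your argument is complete and coincides with the paper's.

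Also fix your bookkeeping. Per sample, $\frac1m|S_{ij}^{\text{F}\perp}|^2(R_0+R_1)^2=O\bigl(\frac{mN^2(R_0+R_1)^4}{\delta^2\sigma^2}\bigr)$, which is already of order $\frac{N^6(\boldsymbol y^\top(\mathbf H^\infty)^{-1}\boldsymbol y)^2}{m\delta^2\sigma^2\lambda_0^4}$, not $N^4$. The $1/N$-normalized average over samples adds no further factor of $N$. The extra $\delta^{-2}$ comes from $R_0^4=O\bigl(\frac{N^4}{\lambda_0^4\delta^2m^2}\bigr)$, not from union bounds.
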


\begin{proof}
We decompose the loss $L_{\mathcal{D}}^{f,\ell}(\boldsymbol{\bar{W}}_{\mathcal{P}}^{\text{F}})$ as follows
\begin{flalign*}
    &L(\boldsymbol{\bar{W}}_{\mathcal{P}}^{\text{F}})
    =\frac{1}{N}\sum_{i=1}^M\sum_{j=1}^n(f(\boldsymbol{\bar{W}}_{\mathcal{P}}^{\text{F}}, \boldsymbol{x}_{ij})-y_{ij})^2\\
    =&\frac{1}{N}\sum_{i=1}^M\sum_{j=1}^n(\phi(\boldsymbol{\bar{W}}_{\mathcal{P}}^{\text{F}}, \boldsymbol{x}_{ij})^\top\boldsymbol{\bar{W}}_{\mathcal{P}}^{\text{F}}-\tilde{y}_{ij}+\tilde{y}_{ij}-y_{ij})^2\\
    \le& \underbrace{\frac{2}{N}\!\sum_{i=1}^M\!\sum_{j=1}^n(\phi(\boldsymbol{\bar{W}}_{\mathcal{P}}^{\text{F}},\!\boldsymbol{x}_{ij})^\top\!\boldsymbol{\bar{W}}_{\mathcal{P}}^{\text{F}}\!-\!\tilde{y}_{ij})^2}_{T_1}\!+\!\underbrace{\frac{2}{N}\!\sum_{i=1}^M\!\sum_{j=1}^n(\tilde{y}_{ij}\!-\!y_{ij})^2}_{T_2}.
\end{flalign*}
$T_2$ measures how well the local models fit their local data and can be bounded via Lemma~\ref{le-loc-err} as $T_2=O\left(\left(1-\frac{\eta\lambda_0}{2}\right)^K\frac{N}{\delta}\right)$.
By taking $t\to\infty$ in \eqref{eq-predict}, $\tilde{y}_{ij} = \phi_{\boldsymbol{a}}(\boldsymbol{\tilde{W}}_i, \boldsymbol{x}_{ij})^\top \boldsymbol{\bar{W}}_{\mathcal{P}}^{\text{F}}$, we have
\begin{flalign*}
    &T_1
    =\frac{2}{N}\sum_{i=1}^M\sum_{j=1}^n(\phi(\boldsymbol{\bar{W}}_{\mathcal{P}}^{\text{F}}, \boldsymbol{x}_{ij})^\top\boldsymbol{\bar{W}}_{\mathcal{P}}^{\text{F}}-\phi(\boldsymbol{\tilde{W}}_{i}, \boldsymbol{x}_{ij})^\top\boldsymbol{\bar{W}}_{\mathcal{P}}^{\text{F}})^2\\
    &=\frac{2}{N}\sum_{i=1}^M\sum_{j=1}^n\\
    &\frac{(\sum\nolimits_{r\in S_{ij}^{\text{F}\perp}} a_r\boldsymbol{x}_{ij}^\top\boldsymbol{\bar{w}}_{\mathcal{P},r}^{\text{F}}(\mathds{1}\{\boldsymbol{x}_{ij}^\top\boldsymbol{\bar{w}}_{\mathcal{P},r}^{\text{F}}\!\ge\! 0\}\!-\!\mathds{1}\{\boldsymbol{x}_{ij}^\top\boldsymbol{\tilde{w}}_{i,r}\!\ge\! 0\}))^2}{m}\\
    &\le \frac{2}{N}\sum_{i=1}^M\sum_{j=1}^n\frac{|S_{ij}^{\text{F}\perp}|}{m}\sum_{r\in S_{ij}^\perp}\left(a_r\boldsymbol{x}_{ij}^\top\boldsymbol{\bar{w}}_{\mathcal{P}, r}^{\text{F}}\right)^2\\
    &\le \frac{2}{N}\sum_{i=1}^M\sum_{j=1}^n\frac{|S_{ij}^{\text{F}\perp}|}{m}\sum_{r\in S_{ij}^{\text{F}\perp}}\left(\boldsymbol{x}_{ij}^\top\boldsymbol{\bar{w}}_{\mathcal{P}, r}^{\text{F}}-\boldsymbol{x}_{ij}^\top\boldsymbol{\tilde{w}}_{i, r}\right)^2\\
    &\le \frac{2}{N}\sum_{i=1}^M\sum_{j=1}^n\frac{|S_{ij}^{\text{F}\perp}|^2}{m}\max_{r\in[m]}\left\|\boldsymbol{\bar{w}}_{\mathcal{P},r}^{\text{F}}-\boldsymbol{\tilde{w}}_{i,r}\right\|_2^2\\
    &= O\left(\frac{mN^2(R_0^2+R_1^2)}{\delta^2\sigma^2}(R_0^2+R_1^2)\right)\le O\left(\frac{mN^2(R_0^4+R_1^4)}{\delta^2\sigma^2}\right)\\
    &\le O\left(\frac{mN^2}{\delta^2\sigma^2}\left(\frac{N^4}{\lambda_0^4\delta^2 m^2}+\frac{N^4(\boldsymbol{y}^\top(\mathbf{H}^\infty)^{-1}\boldsymbol{y})^2}{\lambda_0^4 m^2}\right)\right)\\
    &\le O\left(\frac{N^{6}\left(\boldsymbol{y}^\top(\mathbf{H}^\infty)^{-1}\boldsymbol{y}\right)^2}{m\delta^4\sigma^2\lambda_0^4}\right)
\end{flalign*}
where the second-to-the-last inequality uses upper bounds on $R_0$ and $R_1$ from Lemma~\ref{le-loc-err} and Lemma~\ref{le-fishopt}.
\end{proof}

\begin{corollary}\label{co-training-err-fish}
Under the setting of Theorem~\ref{th-training-err-fish}, given a target training error $\epsilon$, if we set $m = \Omega(\operatorname{poly}(N, \delta^{-1}, \lambda_0^{-1})/\sigma^2)$, $\eta = \lambda_0 / N^2$, $\sigma = O(\lambda_0 \delta / N)$, and $K = \Omega((1/\eta \lambda_0) \log(N / \delta \epsilon))$, then with probability at least $1 - \delta$, we have $L_{\mathcal{D}}^{f,\ell}(\boldsymbol{\bar{W}}_{\mathcal{P}}^{\text{F}}) \le \epsilon$.
\end{corollary}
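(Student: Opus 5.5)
The plan is to invoke Theorem~\ref{th-training-err-fish} directly and choose the free parameters so that each of its two error terms is at most $\epsilon/2$. The theorem holds with probability at least $1-\delta$ once $m=\Omega(N^6/(\lambda_0^4\sigma^2\delta^3))$ and $\eta=\lambda_0/N^2$. On that event,
\[
L_{\mathcal{D}}^{f,\ell}(\boldsymbol{\bar{W}}_{\mathcal{P}}^{\text{F}})\le c_1\Big(1-\tfrac{\eta\lambda_0}{2}\Big)^K\frac{N}{\delta}+c_2\frac{N^6(\boldsymbol{y}^\top(\mathbf{H}^\infty)^{-1}\boldsymbol{y})^2}{m\delta^4\sigma^2\lambda_0^4}
\]
for absolute constants $c_1,c_2$. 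The requirement $\sigma=O(\lambda_0\delta/N)$ is imposed only so that the auxiliary Lemmas~\ref{le-model-dist}--\ref{le-fishopt}, used inside that theorem, apply.

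\textbf{Optimization term.} Using $1-x\le e^{-x}$, we have $(1-\eta\lambda_0/2)^K\le \exp(-K\eta\lambda_0/2)$. This is at most $\delta\epsilon/(2c_1N)$ as soon as $K\ge \frac{2}{\eta\lambda_0}\log\frac{2c_1N}{\delta\epsilon}$, which is exactly the stated condition $K=\Omega((1/\eta\lambda_0)\log(N/\delta\epsilon))$.

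\textbf{Width term.} Under Assumptions~\ref{asp-normalization} and \ref{asp-degen}, we have $\boldsymbol{y}^\top(\mathbf{H}^\infty)^{-1}\boldsymbol{y}\le \|\boldsymbol{y}\|_2^2/\lambda_0\le C^2N/\lambda_0$. The second term is therefore at most $c_2C^4N^8/(m\delta^4\sigma^2\lambda_0^6)$. It drops below $\epsilon/2$ once $m\ge 2c_2C^4N^8/(\epsilon\delta^4\sigma^2\lambda_0^6)$, which has the form $\operatorname{poly}(N,\delta^{-1},\lambda_0^{-1})/\sigma^2$ (with $\epsilon^{-1}$ absorbed into the polynomial, as the corollary implicitly allows). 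Taking the maximum of this width with $N^6/(\lambda_0^4\sigma^2\delta^3)$ and with the widths required by Lemmas~\ref{le-model-dist}--\ref{le-fishopt} satisfies every hypothesis simultaneously.

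\textbf{Probability.} The main point of care is the probability bookkeeping. Theorem~\ref{th-training-err-fish} already combines several lemmas, each holding with probability $1-\delta$. To obtain the clean $1-\delta$ claimed here, I will apply each lemma with $\delta/c$ for a constant $c$ and take a union bound. This only changes constants inside the polynomial width and inside the logarithm defining $K$. Adding the two halves then gives $L_{\mathcal{D}}^{f,\ell}(\boldsymbol{\bar{W}}_{\mathcal{P}}^{\text{F}})\le\epsilon$ with probability at least $1-\delta$.
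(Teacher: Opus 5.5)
Your proposal is correct and follows the route the paper intends: the paper gives no separate proof, and the corollary follows immediately from Theorem~\ref{th-training-err-fish} by taking $K$ large enough (via $1-x\le e^{-x}$) to push the optimization term below $\epsilon/2$, and $m$ large enough to push the width term below $\epsilon/2$. You are right that the width must also scale with $\epsilon^{-1}$ (after bounding $\boldsymbol{y}^\top(\mathbf{H}^\infty)^{-1}\boldsymbol{y}\le C^2N/\lambda_0$), a dependence the statement leaves implicit. Your extra union-bound bookkeeping is harmless but unnecessary, since the theorem already holds with probability $1-\delta$.
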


Corollary~\ref{co-training-err-fish} shows that the empirical risk can be made vanishingly small in the over-parameterized regime by choosing the network width $m$ and the number of local iterations $K$ sufficiently large. 

Having established the empirical fitting guarantee, we next study generalization to unseen data. For this purpose, we adopt a two-level data distribution framework~\cite{hu2023generalization}. Let $\mathcal{Z} \coloneqq \mathcal{X} \times \mathcal{Y}$ denote the data space, $\mathfrak{D}$ the set of distributions over $\mathcal{Z}$, and $\mathcal{T}$ a meta-distribution over $\mathfrak{D}$. Each client $i$ samples a distribution $D_i \sim \mathcal{T}$ and obtains $n$ i.i.d. samples $\mathcal{D}_i \sim D_i^n$. Let $\mathcal{D} = \bigcup_{i=1}^M \mathcal{D}_i$ be the union of local datasets.
We consider the squared loss $\ell(z, y) = \frac{1}{2}(z - y)^2$. The empirical risk over training data is 
\begin{equation*}
    L_{\mathcal{D}}^{f,\ell}(\boldsymbol{W}) = \frac{1}{M} \sum_{i=1}^M \frac{1}{n} \sum_{j=1}^n \frac{1}{2}(f_{\boldsymbol{a}}(\boldsymbol{W}, \boldsymbol{x}_{ij}) - y_{ij})^2.
\end{equation*}
The population risk is defined as 
\begin{equation*}
    L_{\mathcal{T}}^{f,\ell}(\boldsymbol{W}) = \mathbb{E}_{D \sim \mathcal{T}} \mathbb{E}_{\boldsymbol{z} \sim D} [ \frac{1}{2}(f_{\boldsymbol{a}}(\boldsymbol{W}, \boldsymbol{x}) - y)^2 ].
\end{equation*}
We also define the semi-population risk:
\begin{equation*}
    L_S^{f,\ell}(\boldsymbol{W}) = \frac{1}{M} \sum_{i=1}^M \mathbb{E}_{\boldsymbol{z} \sim D_i} [ \frac{1}{2}(f_{\boldsymbol{a}}(\boldsymbol{W}, \boldsymbol{x}) - y)^2 ].
\end{equation*}

To extend the empirical analysis to unseen data, we adopt the following federated Rademacher complexity.

\begin{definition}[Federated Rademacher complexity]\label{def-rademacher}
Let $\mathcal{F}$ be a class of functions $g: \mathcal{X} \to \mathbb{R}$. For $M$ clients with $n$ samples each, the federated Rademacher complexity is 
\begin{equation*}
    \mathcal{R}_n^M(\mathcal{F}) = \mathbb{E}_{\{\iota_{ij}\}} [ \sup_{g \in \mathcal{F}} \frac{1}{M} \sum_{i=1}^M \frac{1}{n} \sum_{j=1}^n \iota_{ij} \cdot g(\boldsymbol{x}_{ij})],
\end{equation*}
where $\{\iota_{ij}\}$ are i.i.d. Rademacher variables.
\end{definition}
When $M=1$, this reduces to the classical Rademacher complexity $\mathbb{E}_{\{\iota_j\}} [ \sup_{g \in \mathcal{F}} \frac{1}{n} \sum_{j=1}^n \iota_j g(\boldsymbol{x}_j)]$.
We define a neural network class with weight constraints relative to initialization: $\mathcal{F}_{\boldsymbol{W}^{(0)}, \boldsymbol{a}, R, B} \coloneqq \{ f_{\boldsymbol{a}}(\boldsymbol{W}, \cdot) : \|\boldsymbol{w}_r - \boldsymbol{w}^{(0)}_r\| \le R,\ \forall r \in [m],\ \|\boldsymbol{W} - \boldsymbol{W}^{(0)}\|_2 \le B \}$.
The following existing result bounds the Rademacher complexity of the network class determined by the parameter neighborhood above.

\begin{lemma}[{\cite[Lemma~8]{jhunjhunwala2024fedfisher}}]\label{le-rademacher}
Given $R > 0$, with probability at least $1 - \delta$ over the initialization, uniformly over all $B > 0$, 
\begin{equation*}
\begin{aligned}
    &\mathcal{R}_n^M(\mathcal{F}_{\boldsymbol{W}^{(0)}, \boldsymbol{a}, R, B}) \\
    &\le \frac{B}{\sqrt{2N}}(1 + (\frac{2\log(2/\delta)}{m})^{1/4}) + \frac{2R^2\sqrt{m}}{\sigma} + R\sqrt{2\log(2/\delta)}.
\end{aligned}
\end{equation*}
\end{lemma}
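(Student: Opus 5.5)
The plan follows the linearization argument of Arora et al.\ for two-layer ReLU networks. I would split each $f_{\boldsymbol a}(\boldsymbol W,\cdot)$ in the class $\mathcal{F}_{\boldsymbol{W}^{(0)},\boldsymbol a,R,B}$ into a part that is linear in $\boldsymbol W$ and uses the \emph{initial} activation pattern, plus a remainder coming from activation flips. For each sample,
\[
f_{\boldsymbol a}(\boldsymbol W,\boldsymbol x_{ij})=\phi_{\boldsymbol a}(\boldsymbol W^{(0)},\boldsymbol x_{ij})^\top\boldsymbol W^{(0)}+\phi_{\boldsymbol a}(\boldsymbol W^{(0)},\boldsymbol x_{ij})^\top(\boldsymbol W-\boldsymbol W^{(0)})+e_{ij}(\boldsymbol W).
\]
The Rademacher complexity is subadditive over this decomposition, so the three pieces can be treated separately. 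The first term does not depend on $\boldsymbol W$, so its contribution has zero expectation against the $\iota_{ij}$. The two remaining pieces give the $B$-term and the $R$-terms of the bound.

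\textbf{Linear part.} Since the federated average has weights $\frac{1}{Mn}=\frac1N$, Cauchy--Schwarz and Jensen give
\[
\mathbb E\sup_{\|\boldsymbol W-\boldsymbol W^{(0)}\|_2\le B}\frac1N\sum_{i,j}\iota_{ij}\,\phi_{\boldsymbol a}(\boldsymbol W^{(0)},\boldsymbol x_{ij})^\top(\boldsymbol W-\boldsymbol W^{(0)})\le\frac BN\sqrt{\sum_{i,j}\|\phi_{\boldsymbol a}(\boldsymbol W^{(0)},\boldsymbol x_{ij})\|_2^2}.
\]
Because $\|\boldsymbol x_{ij}\|_2=1$, we have $\|\phi_{\boldsymbol a}(\boldsymbol W^{(0)},\boldsymbol x_{ij})\|_2^2=\frac1m\sum_r\mathds 1\{\boldsymbol x_{ij}^\top\boldsymbol w_r^{(0)}\ge0\}$. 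This is an average of $m$ i.i.d.\ Bernoulli$(1/2)$ variables, since $\boldsymbol w^{(0)}_r$ is rotationally symmetric. Hoeffding's inequality bounds it by $\frac12+\sqrt{\log(2/\delta)/(2m)}$. Taking the square root and using $\sqrt{1/2+t}\le\frac{1}{\sqrt2}(1+\sqrt{2t})^{1/2}$ gives the factor $\frac{B}{\sqrt{2N}}\bigl(1+(\frac{2\log(2/\delta)}{m})^{1/4}\bigr)$.

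\textbf{Remainder.} For a neuron whose sign at $\boldsymbol x_{ij}$ differs between $\boldsymbol w_r$ and $\boldsymbol w^{(0)}_r$, the event $\mathcal E_{ijr}(R)$ holds, which forces $|\boldsymbol x_{ij}^\top\boldsymbol w_r^{(0)}|\le R$. Hence that neuron's discrepancy $|\gamma(\boldsymbol x_{ij}^\top\boldsymbol w_r)-\mathds 1\{\boldsymbol x_{ij}^\top\boldsymbol w_r^{(0)}\ge0\}\boldsymbol x_{ij}^\top\boldsymbol w_r|$ is at most $|\boldsymbol x_{ij}^\top\boldsymbol w_r|\le 2R$, and it is $O(R)$ uniformly over the class. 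It follows that
\[
\sup_{\boldsymbol W}|e_{ij}(\boldsymbol W)|\le\frac{cR}{\sqrt m}\sum_r\mathds 1\{\mathcal E_{ijr}(R)\}.
\]
The Rademacher average of functions bounded by a constant is at most the average of those bounds, so this piece contributes at most $\frac{cR}{N\sqrt m}\sum_{i,j}\sum_r\mathds 1\{\mathcal E_{ijr}(R)\}$. By Lemma~\ref{le-learnability} each indicator has mean at most $2R/(\sqrt{2\pi}\sigma)$. Hoeffding's inequality over the $m$ independent neurons then bounds the count by $m\cdot\frac{2R}{\sqrt{2\pi}\sigma}+\sqrt{m\log(2/\delta)/2}$. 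Multiplying by $R/\sqrt m$ yields the terms $\frac{2R^2\sqrt m}{\sigma}$ and $R\sqrt{2\log(2/\delta)}$, up to constants.

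\textbf{Uniformity and main obstacle.} The bound holds uniformly over $B$ because the only randomness-dependent events, namely the concentration of activation counts and the flip counts, involve the initialization and $R$ alone, not $B$. A single failure probability $\delta$ therefore covers all $B>0$ at once. The delicate step is the remainder term. One must control the flip count simultaneously for all $N$ samples, or at least their aggregate, with the stated $\log(2/\delta)$ rather than $\log(2N/\delta)$. I would first try to apply Hoeffding directly to the aggregate $\frac1N\sum_{i,j}\frac1m\sum_r\mathds 1\{\mathcal E_{ijr}(R)\}$, viewed as a function of the $m$ independent neurons with bounded differences $1/m$. This avoids a union bound over the samples. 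Tracking the constants so that the two terms come out exactly as stated is routine bookkeeping.
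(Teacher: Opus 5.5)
Your proposal is correct and follows the same route as the proof behind this lemma. The paper gives no proof of its own; it imports Lemma~8 of FedFisher, which is Arora et al.'s linearization argument with the sample size replaced by $N$, since $\mathcal R_n^M$ is just the pooled Rademacher complexity. That argument matches yours step by step: the initial-activation linear part is handled by Cauchy--Schwarz and Jensen, flipped neurons are confined to $\{|\boldsymbol x_{ij}^\top\boldsymbol w_r^{(0)}|\le R\}$ with discrepancy at most $2R$, and a one-sided Hoeffding bound over the $m$ independent neurons is used at level $\delta/2$ for each of the two events (with $c=2$ this reproduces the constants exactly, since $4/\sqrt{2\pi}<2$).

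Two small repairs. In the linear part, apply the same aggregated Hoeffding you propose for the remainder, i.e.\ to $\frac1m\sum_r\bigl(\frac1N\sum_{i,j}\mathds 1\{\boldsymbol x_{ij}^\top\boldsymbol w_r^{(0)}\ge0\}\bigr)$, because a per-sample bound would need a union bound over the $N$ samples. The elementary inequality should read $\sqrt{1/2+t}=\tfrac{1}{\sqrt2}\sqrt{1+2t}\le\tfrac{1}{\sqrt2}(1+\sqrt{2t})$, since with the exponent $1/2$ it fails for $t>1/2$.
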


We next recall the empirical-to-semi-population generalization bound under the two-level federated data model.

\begin{lemma}[{\cite[Lemma~1]{hu2023generalization}}]\label{le-general-err}
Let the loss function $\ell(f(\boldsymbol{W}, \boldsymbol{x}), y)$ be bounded in $[0,c]$. Under the two-level distribution setting, with probability at least $1 - \delta$, we have
\begin{equation*}
    \sup_{f \in \mathcal{F}} \{ L_S^{f,\ell} - L_{\mathcal{D}}^{f,\ell} \} \le 2 \mathcal{R}_n^M(\mathcal{F}) + 3c \sqrt{\frac{\log(2/\delta)}{2N}}.
\end{equation*}
\end{lemma}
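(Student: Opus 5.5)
The plan is the standard McDiarmid-plus-symmetrization argument, carried out conditionally on the client distributions. Fix the realizations $D_1,\dots,D_M$ drawn from $\mathcal{T}$; we argue conditionally on them, so the bound also holds unconditionally. Under this conditioning the $N=Mn$ samples $\boldsymbol{z}_{ij}=(\boldsymbol{x}_{ij},y_{ij})$ are independent, and $\boldsymbol{z}_{ij}\sim D_i$. They are not identically distributed across clients. Define
\[
\Phi(\mathcal{D}) \coloneqq \sup_{f\in\mathcal{F}}\bigl\{L_S^{f,\ell}-L_{\mathcal{D}}^{f,\ell}\bigr\}.
\]
Note that $L_S^{f,\ell}=\mathbb{E}[L_{\mathcal{D}}^{f,\ell}\mid D_1,\dots,D_M]$, because each summand of the empirical risk has weight $\frac{1}{Mn}=\frac1N$ and its mean is the corresponding term of $L_S^{f,\ell}$. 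Throughout, $\mathcal{R}_n^M(\mathcal{F})$ is read as the federated Rademacher complexity of the loss class $\{\boldsymbol{z}\mapsto \ell(f(\boldsymbol{W},\boldsymbol{x}),y)\}$, which is the object the argument naturally produces.

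\textbf{Step 1 (concentration of $\Phi$).} The loss lies in $[0,c]$, so replacing a single sample $\boldsymbol{z}_{ij}$ changes $\Phi$ by at most $c/N$. McDiarmid's inequality, which needs only independence and not identical distributions, then gives
\[
\Phi\le\mathbb{E}\Phi+c\sqrt{\log(2/\delta)/(2N)}
\]
with probability at least $1-\delta/2$.

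\textbf{Step 2 (symmetrization).} Introduce a ghost sample $\boldsymbol{z}'_{ij}\sim D_i$, independent of everything else. Writing $L_S^{f,\ell}=\mathbb{E}'[L_{\mathcal{D}'}^{f,\ell}]$ and pulling the supremum inside the expectation yields
\[
\mathbb{E}\Phi\le\mathbb{E}\sup_f\frac1N\sum_{i,j}\bigl(\ell(\boldsymbol{z}'_{ij})-\ell(\boldsymbol{z}_{ij})\bigr).
\]
This is the step that needs care in the federated setting, since the usual argument exchanges identically distributed points. Here, for each fixed pair $(i,j)$, the variables $\boldsymbol{z}_{ij}$ and $\boldsymbol{z}'_{ij}$ are exchangeable, both being drawn from $D_i$. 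Swapping them coordinatewise leaves the joint law invariant, so we may insert i.i.d. Rademacher signs $\iota_{ij}$. Splitting the supremum then gives
\[
\mathbb{E}\Phi\le 2\,\mathbb{E}_{\mathcal{D}}\hat{\mathcal{R}}_n^M,
\]
where $\hat{\mathcal{R}}_n^M$ denotes the empirical federated Rademacher complexity of Definition~\ref{def-rademacher}.

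\textbf{Step 3 (expected to empirical complexity).} Replacing one sample changes $\hat{\mathcal{R}}_n^M$ by at most $c/N$. A second application of McDiarmid therefore gives
\[
\mathbb{E}\hat{\mathcal{R}}_n^M\le\hat{\mathcal{R}}_n^M+c\sqrt{\log(2/\delta)/(2N)}
\]
with probability at least $1-\delta/2$. In Step 2 this term is multiplied by $2$, contributing $2c\sqrt{\log(2/\delta)/(2N)}$.

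\textbf{Conclusion.} A union bound over the two events of Steps 1 and 3 holds with probability at least $1-\delta$. Adding the $c$ from Step 1 to the $2c$ from Step 3 gives
\[
\sup_{f \in \mathcal{F}}\bigl\{L_S^{f,\ell}-L_{\mathcal{D}}^{f,\ell}\bigr\}\le 2\mathcal{R}_n^M(\mathcal{F})+3c\sqrt{\log(2/\delta)/(2N)},
\]
which is the stated bound. The main obstacle is only the bookkeeping in Step 2. The heterogeneous two-level model breaks the i.i.d. assumption across clients, and one must check that symmetrization needs only pairwise exchangeability within each $(i,j)$ and independence across pairs. Both follow from conditioning on $D_1,\dots,D_M$.
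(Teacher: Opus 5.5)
Your proof is correct. Conditioning on $D_1,\dots,D_M$ makes the $N$ samples independent, and the chain you use then gives exactly $2\mathcal{R}_n^M(\mathcal{F})+3c\sqrt{\log(2/\delta)/(2N)}$: McDiarmid on the supremum, ghost-sample symmetrization within each pair $(i,j)$, and a second McDiarmid step that turns the expected federated Rademacher complexity into the empirical one. The paper gives no proof of its own and imports the lemma from Hu et al.; your argument is the standard derivation that produces the $3c$ constant, and its last step is the same bounded-difference step the paper uses in its proof of Lemma~\ref{le-general-err2}. Reading $\mathcal{R}_n^M(\mathcal{F})$ as the complexity of the loss class is the right choice and matches the paper's usage there; stated for the bare hypothesis class, the bound would need an additional contraction step.
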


The remaining step is to control the gap between the semi-population and population risks.

\begin{lemma}\label{le-general-err2}
Let the loss $\ell(f(\boldsymbol{W}, \boldsymbol{x}), y)$ be bounded in $[0, c]$. With probability at least $1 - \delta$,
\begin{equation*}
    \sup_{f \in \mathcal{F}} \{ L_{\mathcal{T}}^{f,\ell} - L_S^{f,\ell} \} \le 2\mathcal{R}_n^M(\mathcal{F}) + 2c\sqrt{\frac{\log(2/\delta)}{2N}} + c\sqrt{\frac{\log(2/\delta)}{2M}}.
\end{equation*}
\end{lemma}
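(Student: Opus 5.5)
The plan is to treat the population risk as an expectation over client distributions $D\sim\mathcal{T}$ and the semi-population risk as the empirical average of that same quantity over the $M$ sampled distributions $D_1,\dots,D_M$. For each $f\in\mathcal{F}$, let
\[
g_f(D)\coloneqq\mathbb{E}_{\boldsymbol z\sim D}\big[\ell(f(\boldsymbol W,\boldsymbol x),y)\big]\in[0,c].
\]
Then $L_{\mathcal{T}}^{f,\ell}=\mathbb{E}_{D\sim\mathcal{T}}[g_f(D)]$ and $L_S^{f,\ell}=\frac1M\sum_{i=1}^M g_f(D_i)$. The quantity $\Phi(D_1,\dots,D_M)\coloneqq\sup_{f\in\mathcal{F}}\{L_{\mathcal{T}}^{f,\ell}-L_S^{f,\ell}\}$ is a function of $M$ i.i.d. draws from $\mathcal{T}$. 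Replacing one $D_i$ changes $\Phi$ by at most $c/M$, so McDiarmid's inequality gives, with probability at least $1-\delta/2$,
\[
\Phi\le\mathbb{E}\Phi+c\sqrt{\log(2/\delta)/(2M)}.
\]
This accounts for the $c\sqrt{\log(2/\delta)/(2M)}$ term in the statement.

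Next, bound $\mathbb{E}\Phi$ by the standard ghost-sample symmetrization over distributions. Introduce ghost draws $D_i'\sim\mathcal{T}$ together with Rademacher signs $\iota_i$. This yields $\mathbb{E}\Phi\le 2\,\mathbb{E}\sup_f\frac1M\sum_i\iota_i g_f(D_i)$. The difficulty is that this is a Rademacher complexity at the level of distributions, whereas the statement is phrased in terms of the federated complexity $\mathcal{R}_n^M(\mathcal{F})$ on the actual samples $\boldsymbol x_{ij}$. To bridge the two, I will use the fact that $g_f(D_i)=\mathbb{E}_{\mathcal{D}_i\sim D_i^n}\big[\frac1n\sum_j\ell(f,\boldsymbol z_{ij})\big]$. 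Since the supremum of an expectation is at most the expectation of the supremum (Jensen), the distribution-level complexity is dominated by
\[
\mathbb{E}\sup_f\frac1M\sum_i\iota_i\frac1n\sum_j\ell(f,\boldsymbol z_{ij}).
\]
In this expression all samples within client $i$ share the sign $\iota_i$. I would then compare this block-sign process to i.i.d. signs $\iota_{ij}$ by resymmetrizing over both levels, since the same two-level argument used in Lemma~\ref{le-general-err} (from~\cite{hu2023generalization}) controls such terms. Passing from the loss class to $\mathcal{F}$ is done through the contraction lemma for the bounded squared loss, whose constant is absorbed the same way as in Lemma~\ref{le-general-err}. The result is a bound of $2\mathcal{R}_n^M(\mathcal{F})$ in expectation.

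Finally, because $\mathcal{R}_n^M$ is evaluated on the realized data, I will convert the expected empirical complexity into its realized value. This uses another McDiarmid step over the $N$ samples: changing one sample alters the complexity by at most $c/N$, so the step costs at most $2c\sqrt{\log(2/\delta)/(2N)}$. That produces the middle term of the statement. A union bound over the two high-probability events, each with failure probability $\delta/2$, completes the argument. The main obstacle is the symmetrization step: the block-constant signs are not i.i.d. per sample, and I need to ensure that the reduction to $\mathcal{R}_n^M$ with i.i.d. $\iota_{ij}$ is valid without picking up an extra factor. My first attempt would be to mimic the decoupling proof of~\cite[Lemma~1]{hu2023generalization}. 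If that fails, I would accept a dominating distribution-level complexity term and bound it by the same quantity as in Lemma~\ref{le-rademacher}.
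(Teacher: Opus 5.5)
Your route is the paper's route. The paper gets your first two steps (McDiarmid over $D_1,\dots,D_M$ and client-level symmetrization) by citing \cite[Lemma~2]{hu2023generalization}, pulls the expectation over $\boldsymbol z_{ij}\sim D_i$ outside the supremum, replaces the client-shared signs $\iota_i$ by i.i.d.\ signs $\iota_{ij}$ with a bare ``$\le$'', and finishes with McDiarmid over the samples.

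The step you singled out as the main obstacle is a genuine gap, and no resymmetrization can close it, because the inequality it needs is false. A sign shared by a whole client multiplies the block average $\frac1n\sum_j\ell(f,\boldsymbol z_{ij})$. Such block Rademacher averages can exceed the per-sample ones by a factor of order $\sqrt n$, and in fact the lemma as stated (for a general class $\mathcal F$) fails. Take $y\equiv 0$, $\mathcal X$ a finite set of size $K$, and $\mathcal T$ the law of the point mass $\delta_{(x,0)}$ with $x$ uniform on $\mathcal X$, so client $i$ holds $n$ copies of one point $x_i$. Let $\mathcal F$ be all maps $\mathcal X\to\{0,c\}$ and $\ell(z,y)=|z-y|$. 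The function $g=c\,\mathds{1}\{x\notin\{x_1,\dots,x_M\}\}$ gives $L_{\mathcal T}-L_S\ge c(1-M/K)$ surely. On the other hand, $\mathcal R_n^M(\mathcal F)=\frac cN\,\mathbb E\sum_u (S_u)_+\le \frac{c}{2\sqrt n}$, where $S_u$ is the sum of the $n\,|\{i:x_i=u\}|$ signs attached to copies of $u$. For $K\gg M$ and large $n,M$, the right-hand side is $O(c/\sqrt n+c\sqrt{\log(1/\delta)/M})$ while the left-hand side is nearly $c$. The client-level complexity, which is what the symmetrization actually delivers, is close to $c/2$ here. The analogy with Lemma~\ref{le-general-err} is misleading. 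There the ghost sample comes from the same $D_i$, so swaps and signs act on individual samples. Here the ghost is a fresh distribution $D_i'$, so the swap and its sign act on the whole client.

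What this route really proves keeps the client-shared-sign term $2\,\mathbb E\sup_f\frac1M\sum_i\iota_i\frac1n\sum_j\ell(f,\boldsymbol z_{ij})$, which is your fallback. That is a different statement. Lemma~\ref{le-rademacher} does not control it, because that lemma is for per-sample signs; with client-shared signs the linearized part is only $O(B/\sqrt M)$ rather than $O(B/\sqrt N)$.

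Two further steps also need repair, and the paper's proof has the same issues. First, your final McDiarmid step treats the $N$ samples as independent. Under the two-level model they are independent only given the $D_i$, while the symmetrized term is an expectation over $(D,\boldsymbol z)$ jointly. The independent units are the $M$ client blocks, each with bounded difference $c/M$, so this step costs a term of order $c\sqrt{\log(1/\delta)/M}$, not just the $N$-rate term. Second, contraction puts the Lipschitz constant of $\ell$ in front of $\mathcal R_n^M(\mathcal F)$. That factor cannot be absorbed into an inequality whose constant is exactly $2$; the paper avoids it only by silently reading $\mathcal R_n^M$ as the complexity of the loss class.
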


Lemmas~\ref{le-model-dist} and \ref{le-fishdev}--\ref{le-fishopt} control the fused model around the initialization, while Lemmas~\ref{le-rademacher}--\ref{le-general-err2} bound the generalization gaps. Together with Theorem~\ref{th-training-err-fish}, these results yield the following population-risk bound.

\begin{theorem}[Generalization Error for Unseen Data]\label{th-gen-err}
    Under Assumptions~\ref{asp-normalization},~\ref{asp-uniqueness} and~\ref{asp-degen}, let $\sigma=O(\lambda_0 \delta/N)$, $m=\Omega(\operatorname{poly}(N, \delta^{-1}, \lambda_0^{-1})/\sigma^2)$, $\eta=\lambda_0/N^2$,  $\boldsymbol{w}_r^{(0)}\sim\mathcal{N}(0, \sigma^2 \mathbf{I})$, $a_r\sim\text{Uniform}(\{-1, +1\})$ and $K=\Omega((1/\eta \lambda_0)\log(\rho N / (\delta(1+\rho)\boldsymbol{y}^\top (\mathbf{H}^\infty)^{-1} \boldsymbol{y})))$, with probability at least $1 - \delta$, we have 
    \begin{equation*}
        L_{\mathcal{T}}^{f,\ell}(\boldsymbol{\bar{W}}_{\mathcal{P}}^{\text{F}}) \le O(\frac{1}{\rho}\sqrt{\frac{\boldsymbol{y}^\top (\mathbf{H}^\infty)^{-1} \boldsymbol{y}}{\lambda_0}} + \sqrt{\frac{\log\frac{1}{\delta}}{N}} + \sqrt{\frac{\log \frac{1}{\delta}}{M}}).
    \end{equation*}
\end{theorem}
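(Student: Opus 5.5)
We decompose the population risk of the fused model into three parts:
\[
L_{\mathcal{T}}^{f,\ell}(\boldsymbol{\bar{W}}_{\mathcal{P}}^{\text{F}})
=\underbrace{\big(L_{\mathcal{T}}^{f,\ell}-L_S^{f,\ell}\big)}_{\text{meta gap}}
+\underbrace{\big(L_S^{f,\ell}-L_{\mathcal{D}}^{f,\ell}\big)}_{\text{sample gap}}
+L_{\mathcal{D}}^{f,\ell}(\boldsymbol{\bar{W}}_{\mathcal{P}}^{\text{F}}).
\]
The last term is the training error. Corollary~\ref{co-training-err-fish}, applied with target $\epsilon=\frac{1}{\rho}\sqrt{\boldsymbol{y}^\top(\mathbf{H}^\infty)^{-1}\boldsymbol{y}/\lambda_0}$, makes it at most of that order. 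The stated choice of $K$ (logarithmic in $\rho N/(\delta\,\boldsymbol{y}^\top(\mathbf{H}^\infty)^{-1}\boldsymbol{y})$) is what makes $(1-\eta\lambda_0/2)^K N/\delta\le\epsilon$. A large $m$ kills the second term of Theorem~\ref{th-training-err-fish}. The two gaps are handled uniformly over a function class that contains the fused model. Since the supremum bounds hold over the whole class, the data dependence of $\boldsymbol{\bar{W}}_{\mathcal{P}}^{\text{F}}$ causes no difficulty.

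\textbf{Step 1: localizing the fused model.} We show that $f_{\boldsymbol{a}}(\boldsymbol{\bar{W}}_{\mathcal{P}}^{\text{F}},\cdot)\in\mathcal{F}_{\boldsymbol{W}^{(0)},\boldsymbol{a},R,B}$ for suitable $R$ and $B$.
\begin{itemize}
\item \emph{Per-neuron radius $R$.} The $r$-th neuron of $\boldsymbol{\bar{W}}_{\mathcal{P}}$ is an average of local neurons, each within $R_0$ of $\boldsymbol{w}_r^{(0)}$ by Lemma~\ref{le-loc-err}. Convexity then gives $\|\boldsymbol{\bar{w}}_{\mathcal{P},r}-\boldsymbol{w}^{(0)}_r\|\le R_0$. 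Lemma~\ref{le-fishopt} adds $R_1$, so we take $R=R_0+R_1=O(N\sqrt{1+\boldsymbol{y}^\top(\mathbf{H}^\infty)^{-1}\boldsymbol{y}}/(\lambda_0\sqrt{\delta m}))$.
\item \emph{Global radius $B$.} The triangle inequality with Lemma~\ref{le-model-dist}(2) and Lemma~\ref{le-fishdev} gives
\[
B=\sqrt{2\boldsymbol{y}^\top(\mathbf{H}^\infty)^{-1}\boldsymbol{y}/\rho}+O\!\left(\sqrt{N\boldsymbol{y}^\top(\mathbf{H}^\infty)^{-1}\boldsymbol{y}/(\rho\lambda_0)}\right)+\text{small terms}.
\]
The small terms vanish under the choices of $\sigma$ and $m$.
\end{itemize}
Every lemma invoked here holds with probability $1-\delta$, so we rescale $\delta$ by a constant and take a union bound.

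\textbf{Step 2: Rademacher complexity and the gaps.} Lemma~\ref{le-rademacher} gives
\[
\mathcal{R}_n^M\le \frac{B}{\sqrt{2N}}\big(1+o(1)\big)+\frac{2R^2\sqrt m}{\sigma}+R\sqrt{2\log(2/\delta)}.
\]
Because $R^2\sqrt m/\sigma=O(N^2/(\lambda_0^2\delta\sqrt m\,\sigma))$, both $R$-terms vanish for $m=\operatorname{poly}/\sigma^2$. The $B$-term is
\[
\frac{B}{\sqrt{2N}}=O\!\left(\sqrt{\tfrac{\boldsymbol{y}^\top(\mathbf{H}^\infty)^{-1}\boldsymbol{y}}{\rho\lambda_0}}\right)\le O\!\left(\tfrac{1}{\rho}\sqrt{\tfrac{\boldsymbol{y}^\top(\mathbf{H}^\infty)^{-1}\boldsymbol{y}}{\lambda_0}}\right),
\]
using $\rho<1$.

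To apply Lemmas~\ref{le-general-err} and~\ref{le-general-err2} we need the loss bounded by some $c$. On the class, $|f_{\boldsymbol{a}}(\boldsymbol{W},\boldsymbol{x})|\le|f_{\boldsymbol{a}}(\boldsymbol{W}^{(0)},\boldsymbol{x})|+\|\boldsymbol{W}-\boldsymbol{W}^{(0)}\|$, since ReLU is $1$-Lipschitz and Lemma~\ref{le-bounded_phi} applies. The initial output is $O(\sigma\sqrt{\log(m/\delta)})$ w.h.p. and $|y|\le C$, so we treat $c$ as a constant. Alternatively, we truncate predictions at $C$, which only decreases the squared loss. The two lemmas then give the meta and sample gaps as $4\mathcal{R}_n^M+O(\sqrt{\log(1/\delta)/N})+O(\sqrt{\log(1/\delta)/M})$. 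Summing the three parts yields the claim.

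The main obstacle is the boundedness constant $c$. A literal bound $c=O((C+B)^2)$ would multiply the concentration terms by $B^2$, which scales as $N/\rho$. I would first use the truncation argument to get $c=O(C^2)$. If that is not admissible, I would invoke the Lipschitz contraction of the squared loss on a bounded range, which keeps the Rademacher term linear in $B$.
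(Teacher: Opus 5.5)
Your proposal follows the paper's proof essentially step for step: you place $f_{\boldsymbol{a}}(\boldsymbol{\bar{W}}_{\mathcal{P}}^{\text{F}},\cdot)$ in $\mathcal{F}_{\boldsymbol{W}^{(0)},\boldsymbol{a},R,B}$ via Lemmas~\ref{le-loc-err}, \ref{le-model-dist}, \ref{le-fishdev} and~\ref{le-fishopt}, you control both gaps with Lemmas~\ref{le-general-err}, \ref{le-general-err2} and~\ref{le-rademacher}, and you add the training error from Theorem~\ref{th-training-err-fish}; you are also more explicit than the paper about why the $R$-terms vanish and why the $B$-term is only $O(\rho^{-1/2}\sqrt{\boldsymbol{y}^\top(\mathbf{H}^\infty)^{-1}\boldsymbol{y}/\lambda_0})$. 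The paper simply absorbs the loss bound $c$ into the $O(\cdot)$, so the obstacle you flag is shared, but neither of your patches closes it: truncation proves the bound for the clipped network rather than for $f_{\boldsymbol{a}}(\boldsymbol{\bar{W}}_{\mathcal{P}}^{\text{F}},\cdot)$ (clipping only lowers the loss, which is the wrong direction), and contraction fails because the Lipschitz constant of the squared loss on $[-A,A]$ grows with $A$, so the dependence on the output range remains.
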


\begin{proof}
We first bound $\|\boldsymbol{\bar{W}}_{\mathcal{P}}^{\text{F}} - \boldsymbol{\bar{W}}_{\mathcal{P}}\|_2$ via Lemmas~\ref{le-fishdev} and Lemma~\ref{le-model-dist}, which gives
\begin{equation}\label{eq-model-dist-fish}
    \|\boldsymbol{\bar{W}}_{\mathcal{P}}^{\text{F}} - \boldsymbol{\bar{W}}_{\mathcal{P}}\|_2 \le O(\sqrt{N/(\rho \lambda_0)} \cdot \sqrt{\boldsymbol{y}^\top (\mathbf{H}^\infty)^{-1} \boldsymbol{y}}).
\end{equation}
Similarly, we bound $\|\boldsymbol{\bar{w}}_{\mathcal{P}, r}^{\text{F}} - \boldsymbol{\bar{w}}_{\mathcal{P}, r}\|_2$ via Lemmas~\ref{le-loc-err} and Lemma~\ref{le-fishopt}, which gives
\begin{equation}\label{eq-weight-dist-fish}
    \|\boldsymbol{\bar{w}}_{\mathcal{P}, r}^{\text{F}} - \boldsymbol{w}_r^{(0)}\|_2 \le O(\operatorname{poly}(N, \delta^{-1}, \lambda_0^{-1})/\sqrt{m})
\end{equation}
Using above bounds, we can conclude that $f_{\boldsymbol{a}}(\boldsymbol{\bar{W}}_{\mathcal{P}}^{\text{F}}, \cdot) \in \mathcal{F}_{\boldsymbol{W}^{(0)}, \boldsymbol{a}, R, B}$ with $R$ and $B$ set by Equation~\eqref{eq-weight-dist-fish} and~\eqref{eq-model-dist-fish} respectively. Using Lemma~\ref{le-general-err} and Lemma~\ref{le-general-err2}, we have
\begin{flalign*}
    &\sup_{f\in\mathcal{F}_{\boldsymbol{W}^{(0)},\boldsymbol{a},R, B}}\{L_{\mathcal{T}}^{f,\ell}(\boldsymbol{\bar{W}}_{\mathcal{P}}^{\text{F}})-L_{\mathcal{D}}^{f, \ell}(\boldsymbol{\bar{W}}_{\mathcal{P}}^{\text{F}})\} \\
    \le &\sup_{f\in\mathcal{F}_{\boldsymbol{W}^{(0)},\boldsymbol{a},R, B}}\{L_{\mathcal{T}}^{f,\ell}(\boldsymbol{\bar{W}}_{\mathcal{P}}^{\text{F}})\!-\!L_{\mathcal{S}}^{f, \ell}(\boldsymbol{\bar{W}}_{\mathcal{P}}^{\text{F}})\}\\
    &+\sup_{f\in\mathcal{F}_{\boldsymbol{W}^{(0)},\boldsymbol{a},R, B}}\{L_{S}^{f,\ell}(\boldsymbol{\bar{W}}_{\mathcal{P}}^{\text{F}})\!-\!L_{\mathcal{D}}^{f, \ell}(\boldsymbol{\bar{W}}_{\mathcal{P}}^{\text{F}})\}\\
    \le& O(\mathcal{R}_n^M(\mathcal{F}_{\boldsymbol{W}^{(0)},\boldsymbol{a},R, B})+\sqrt{\log(1/\delta)/N}+\sqrt{\log(1/\delta)/M}).
\end{flalign*}
By Lemma~\ref{le-rademacher}, the Rademacher complexity satisfies $\mathcal{R}_n^M(\mathcal{F}_{\boldsymbol{W}^{(0)}, \boldsymbol{a}, R, B}) \le O(\frac{1}{\rho} \sqrt{\boldsymbol{y}^\top (\mathbf{H}^\infty)^{-1} \boldsymbol{y} / \lambda_0})$. Since $L_{\mathcal{T}}^{f,\ell}(\boldsymbol{\bar{W}}_{\mathcal{P}}^{\text{F}})=L_{\mathcal{T}}^{f,\ell}(\boldsymbol{\bar{W}}_{\mathcal{P}}^{\text{F}})-L_{\mathcal{D}}^{f, \ell}(\boldsymbol{\bar{W}}_{\mathcal{P}}^{\text{F}})+L_{\mathcal{D}}^{f, \ell}(\boldsymbol{\bar{W}}_{\mathcal{P}}^{\text{F}})$, by combing with the training error established in Theorem~\ref{th-training-err-fish}, we obtain the result by union bound.
\end{proof}

Theorem~\ref{th-gen-err} shows that, with sufficiently many clients and local samples, the generalization of F-SOSA is governed by a Rademacher complexity term of order $\frac{1}{\rho} \sqrt{\boldsymbol{y}^\top (\mathbf{H}^\infty)^{-1} \boldsymbol{y} / \lambda_0}$. As shown in \cite{arora2019fine}, the quantity $\boldsymbol{y}^\top (\mathbf{H}^\infty)^{-1} \boldsymbol{y}$ is bounded by a constant independent of sample size for a broad class of target functions, including linear, quadratic, and cosine models. The factor $1/\rho$ reflects a trade-off between editing efficiency and generalization: larger $\rho$ includes more clients and improves the utility/generalization bound, but weakens TV stability and increases the expected editing cost. 

\subsection{Exactness and Efficiency of Model Editing}

F-SOSA-U/I are valid transports and efficient for editing.

\begin{theorem}\label{th-editing}
    (F-SOSA, F-SOSA-U/I) satisfies exact federated model editing. For $R$ editing requests, F-SOSA-U/I triggers at most $O(\rho R)$ recomputation in expectation, with total expected runtime bounded by $O(\max\{\rho R \cdot \text{Training Time}, R\})$.
\end{theorem}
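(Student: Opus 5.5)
The plan has three parts: show that each single edit is exact, extend this to sequences of edits by induction, and then count the expected cost.

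\textbf{Reduction to the subset.} Conditioned on the shared initialization $\boldsymbol{W}^{(0)},\boldsymbol{a}$ and the fixed training configuration, each client's contribution $(\boldsymbol{\tilde W}_i,\boldsymbol F_i)$ is a deterministic function of its identifier. The full output of F-SOSA on $\mathcal M$, namely $(\boldsymbol{\bar W}^{\text F}_{\mathcal P},\mathcal P,\boldsymbol{\bar W}_{\mathcal P},\boldsymbol F_{\mathcal P},\boldsymbol b_{\mathcal P})$, is therefore a fixed deterministic map $G$ applied to the random subset $\mathcal P$. Moreover, $G$ does not depend on $\mathcal M$ beyond $\mathcal P$. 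The additive updates in F-SOSA-U/I produce exactly $G(\mathcal P')$, because the statistics are sums over $\mathcal P'$ and the projection is unique, as argued after \eqref{eq-projection-solution}. It therefore suffices to show one thing: if $\mathcal P\sim\mathrm{Unif}\binom{\mathcal M}{P}$, then the edited subset $\mathcal P'$ is distributed as $\mathrm{Unif}\binom{\mathcal M'}{P}$. By the push-forward argument, $G$ then carries equality of distributions over to the joint (model, state) output, which is exactly Definition~\ref{def-data-unlearning}.

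\textbf{Coupling checks.} We fix a target set $S\subseteq\mathcal M'$ of size $P$ and compute $\Pr[\mathcal P'=S]$ in each case.
\begin{itemize}
\item \emph{Deletion of $i^-$, so $|\mathcal M'|=M-1$.} The edited subset equals $S$ either because $\mathcal P=S$, which happens with probability $1/\binom{M}{P}$, or because $\mathcal P=S\setminus\{j\}\cup\{i^-\}$ for some $j\in S$ and then $i^+=j$ is drawn. The second route contributes
\[
P\cdot\frac{1}{\binom{M}{P}}\cdot\frac{1}{M-P}.
\]
Adding the two gives $\frac{M}{(M-P)\binom MP}=1/\binom{M-1}{P}$, as required.
\item \emph{Insertion of $i^+$, so $|\mathcal M'|=M+1$.} If $i^+\notin S$, the edited subset equals $S$ only if $\mathcal P=S$ and the Bernoulli trial fails, giving
\[
\Bigl(1-\frac{P}{M+1}\Bigr)\Big/\binom MP=1\Big/\binom{M+1}{P}.
\]
If $i^+\in S$, there are $M+1-P$ choices of $\mathcal P=S\setminus\{i^+\}\cup\{j\}$ with $j\notin S$, and each is followed by success and removal of $j$. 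This gives
\[
(M+1-P)\cdot\frac{1}{\binom MP}\cdot\frac{P}{M+1}\cdot\frac1P=1\Big/\binom{M+1}{P}.
\]
\end{itemize}
Each edit therefore leaves the output distributed as a fresh run of F-SOSA on the updated client set. Because the state is exactly distributed as such a fresh run and the fixed $P$ is reused, induction over the request sequence extends exactness to arbitrary finite sequences; the constraint $M(0)/2\le M(t)\le 2M(0)$ keeps $P\le M(t)$ valid throughout.

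\textbf{Cost.} At step $t$, recomputation is triggered with probability $P/M(t)$ for a deletion and $P/(M(t)+1)$ for an insertion. Under the fresh-run distribution, these probabilities are exact, and they hold conditionally on the past because each edit's triggering is determined by the current uniform subset or by a fresh coin. Since $P=\lfloor\rho M_0\rfloor$ and $M(t)\ge M_0/2$, each trigger probability is at most $2\rho$. Linearity of expectation then gives at most $2\rho R=O(\rho R)$ triggered edits in expectation.

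A triggered edit costs one client's local training plus one projection solve; we absorb the latter into ``Training Time'', since it is bounded by the cost of the original fusion. A non-triggered edit costs $O(1)$: a membership test or a single Bernoulli draw. The total expected runtime is therefore $O(\rho R\cdot\text{Training Time}+R)=O(\max\{\rho R\cdot\text{Training Time},R\})$.

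\textbf{Main obstacle.} The delicate point is the induction. The requests may be adaptively chosen, and the departing client could depend on the realized history. To handle this, I will use the convention stated in the paper that requests are independent of the sampling randomness. Under that convention, the conditional law of $\mathcal P$ given the request sequence stays uniform at every step, so the one-step coupling lemma applies at each stage.
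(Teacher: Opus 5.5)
Your proposal is correct and follows essentially the paper's argument. Exactness reduces to showing that the deletion and insertion maps push the uniform size-$P$ subset law on $\mathcal M$ forward to the uniform law on $\mathcal M'$; your route-by-route computation of $\Pr[\mathcal P'=S]$ is the same calculation as the paper's two-case split, done more cleanly. You then bound each request's trigger probability directly by $P/M(t)\le 2\rho$, where the paper appeals to TV-stability, and sum by linearity of expectation.

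Two minor corrections do not affect the argument:
\begin{itemize}
\item $M(t)\ge M_0/2$ guarantees $P\le M(t)$ only when $\rho\le 1/2$. For larger $\rho$, well-definedness must be assumed, as the paper implicitly does.
\item The deletion trigger probability equals $P/M(t)$ marginally, or conditionally on the request sequence, but not conditionally on past trigger outcomes. This is fine, since linearity of expectation needs only the marginals.
\end{itemize}
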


\begin{proof}
Let $\mu_{(P,M)}$ be the uniform distribution over size-$P$ subsets of $[M]$. Let $\mathcal{P} \sim \mu_{(P,M)}$ be a sampled client subset. Assume $M$ leaves or $M+1$ joins, and consider a deletion map $\operatorname{DEL} : \mathcal{P} \mapsto \mathcal{P}' \subseteq [M{-}1]$ (for client removal), and an insertion map $\operatorname{INS} : \mathcal{P} \mapsto \mathcal{P}' \subseteq [M{+}1]$ (for new client addition). These induce pushforward distributions $\mu_{(P,M)}^{\operatorname{DEL}} := \operatorname{DEL}\#\mu_{(P,M)}$ and $\mu_{(P,M)}^{\operatorname{INS}} := \operatorname{INS}\#\mu_{(P,M)}$. To show F-SOSA-U/I are valid transports, we prove that for any $\mathcal{P}\in[M]^P$, $\mu_{(P,M)}^{\operatorname{DEL}}(\mathcal{P})=\mu_{P,M-1}(\mathcal{P})$ and for any $\mathcal{P}\in[M+1]^P$, $\mu_{(P,M)}^{\operatorname{INS}}(\mathcal{P})=\mu_{(P,M+1)}(\mathcal{P})$.

For F-SOSA-U, we consider two cases based on whether the membership verification succeeds. \textbf{Case 1 (verification succeeds).} The pushforward measure $\mu_{(\!P,M\!)}^{\operatorname{DEL}}$ is simply $\mu_{(P,M)}$ conditioned on the event $M\!\notin\!\mathcal{P}$,
\begin{equation*}
    \mu_{(\!P,M\!)}^{\operatorname{DEL}}(\mathcal{P})\!=\!\mu_{(\!P,M\!)}(\mathcal{P}|M\!\notin\!\mathcal{P})\!=\!\frac{\mu_{(\!P,M\!)}(\mathcal{P}\cap\{M\!\notin\!\mathcal{P}\})}{\mu_{(\!P,M\!)}(\{M\!\notin\!\mathcal{P}\})}.
\end{equation*}
The denominator $\mu_{(\!P,M\!)}(\{M\!\notin\!\mathcal{P}\})\!=\!1\!-\!\mu_{(\!P,M\!)}(\{M\!\in\!\mathcal{P}\})=1\!-\!\binom{M\!-\!1}{P\!-\!1}/\binom{M}{P}=1\!-\!P/M$. For $\mu_{(P,M)}(\mathcal{P}\cap\{M\notin\mathcal{P}\})$, it equals $0$ if $M\in\mathcal{P}$, and $\mu_{(P,M)}(\mathcal{P})$ otherwise. In both branches, we have $\mu_{(P,M)}^{\operatorname{DEL}}(\mathcal{P})=\mu_{P,M-1}(\mathcal{P})$. 
\textbf{Case 2 (verification fails).} In this case, $M$ would have been replaced by some point in $\mathcal{P}$ and there are $M$ possibilities. Let $\upsilon$ be the uniform probability measure over $[M]$. For each $k\in[M]$, let $\mathcal{E}_k$ be the event that $M$ was replaced by $i_k$. The events $\mathcal{E}_k$'s are disjoint and $\mathcal{P}$ is $\cup_{k=1}^{M}\mathcal{E}_k$. Then $\mu_{(\!P,M\!)}^{\operatorname{DEL}}(\mathcal{P})=\mu_{(\!P,M\!)}^{\operatorname{DEL}}(\sum_{k=1}^M\!\mathcal{E}_k)\!=\!\sum_{k=1}^M\mu_{(\!P,M\!)}(\mathcal{P}\backslash\{\!i_k\!\}\cup\{\!M\!\})\upsilon(\{\!i_k\!\})\!=\!1/\binom{M\!-\!1}{P}\!=\!\mu_{(\!P,M\!-\!1\!)}(\mathcal{P})$.

For F-SOSA-I, given $\mathcal{P}$, we consider two cases based on whether the inserted $M+1$ lies in $\mathcal{P}$ or not. \textbf{Case 1 ($M\!+\!1\in\mathcal{P}$).} In this case, $\operatorname{Bernoulli}(P/(M\!+\!1)$ must be $1$, $M\!+\!1$ would have replaced some other points in $[M\!+\!1]\backslash\mathcal{P}$ and there are $M\!+\!1\!-\!P$ possibilities. Let $\nu$ be a uniform probability measure over $[M\!+\!1\!-\!P]$. For each $k\in[M\!+\!1\!-\!P]$, let $\mathcal{E}_k$ be the event that $M\!+\!1$ replaced $i_k$, then $\mathcal{E}_k$'s are disjoint and the event $\mathcal{P}$ is $\cup_{k=1}^{M\!+\!1\!-\!P}\mathcal{E}_k$. Then $\mu_{(\!P,M\!)}^{\operatorname{INS}}(\mathcal{E}_k)$ equals
\begin{flalign*}
    &\mu_{(\!P,M\!)}^{\operatorname{INS}}(\mathcal{P}\backslash\{\!M\!+\!1\!\}\!\cup\!\{\!i_k\!\}| i_k \,\text{replaced})\mu_{(\!P,M\!)}^{\operatorname{INS}}(i_k\,\text{replaced})\\
    &=\mu_{(\!P,M\!)}(\mathcal{P}\backslash\{\!M\!+\!1\!\}\!\cup\!\{\!i_k\!\}|\{\!i_k\!\})\nu(i_k)=\frac{1}{\binom{M}{P\!-\!1}(M\!+\!1\!-\!P)}.
\end{flalign*}
Thus, $\mu_{(\!P,M\!)}^{\operatorname{INS}}(\mathcal{P})=\frac{P}{M+1}\mu_{(\!P,M\!)}^{\operatorname{INS}}(\cup_{k=1}^{M\!+\!1\!-\!P}\mathcal{E}_k)=\mu_{(\!P,M+1\!)}(\mathcal{P})$. 
\textbf{Case 2 ($M\!+\!1\!\notin\!\mathcal{P}$).} $\operatorname{Bernoulli}(P/(M+1))=0$, there is no replacement, $\mu_{(P,M)}^{\operatorname{INS}}(\mathcal{P})=(1-\frac{P}{M+1})/\binom{M}{P}=1/\binom{M+1}{P}=\mu_{(P,M+1)}(\mathcal{P})$.

Therefore, F-SOSA-U/I are both valid transports. Let $\mathcal{P}^{\operatorname{EDT}}$ be the post-editing sampled subset. By $\rho$-TV-stability of F-SOSA, $\mathbb{P}(\mathcal{P}\!\ne\! \mathcal{P}^{\operatorname{EDT}})\!\le\!\rho$, and $M/2\le M(t)\le 2M$, so each request triggers recomputation with probability at most $2\rho$. For $R$ total edits, the expected number of recomputations is $O(\rho R)$, with total expected runtime bounded by $O(\max\{\rho R \cdot \text{Training Time}, R\})$.
\end{proof}

\begin{proposition}[Exact and Optimal Batch Coupling]
\label{prop:paired-batch}
In the batch setting of Section~\ref{sec:editing}, Algorithm~\ref{algo-FSOSABatch} satisfies exact model editing in Definition~\ref{def-data-unlearning} at batch completion.
Moreover, $\Pr(\mathcal P'\ne\mathcal P)=1-\binom{M-d}{P}/\max\{\binom MP,\binom{M'}P\}$,
and
$\mathbb E[|\mathcal P\setminus\mathcal P'|]
=\max\left\{\frac{Pd}{M},\frac{Pa}{M'}\right\}$.
The algorithm simultaneously minimizes these two quantities over all couplings of the initial and final uniform size-$P$ subset distributions, where $\binom{M-d}{P}=0$ if $M-d<P$.
\end{proposition}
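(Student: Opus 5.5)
The plan is to reduce everything to the sampled subset. By the conditioning in Section~\ref{sec:editing}, the shared initialization and the per-client training randomness are fixed, and each identifier carries an immutable contribution $(\widetilde{\boldsymbol W}_i,\boldsymbol F_i)$. Under this conditioning, the whole output of F-SOSA is a deterministic function $G(\mathcal P)$ of the subset: the model, $\mathcal P$, and the statistics $(\bar{\boldsymbol W}_{\mathcal P},\boldsymbol F_{\mathcal P},\boldsymbol b_{\mathcal P})$. The statistics are client-additive, so the net update with $S_-=\mathcal P\setminus\mathcal P'$ and $S_+=\mathcal P'\setminus\mathcal P$ in Algorithm~\ref{algo-FSOSABatch} reproduces exactly the statistics a fresh run on $\mathcal P'$ would compute. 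The projected primal solution is unique (shown in Section~\ref{sec:f-sosa}), so the model also equals $G(\mathcal P')$. Exactness in Definition~\ref{def-data-unlearning} therefore reduces to one claim: if $\mathcal P\sim\mu_{(P,\mathcal M)}$, then $\mathcal P'\sim\mu_{(P,\mathcal M')}$.

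\textbf{Step 1: uniformity of $\mathcal P'$.} I would follow the three phases of the algorithm.

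\emph{Pairing.} The map $\mathcal P\mapsto(\mathcal P\setminus\mathcal D_0)\cup\pi(\mathcal P\cap\mathcal D_0)$ is the relabeling induced by a bijection $\mathcal M\to\widehat{\mathcal M}$. This bijection is chosen independently of $\mathcal P$, so it pushes the uniform law forward to the uniform law on size-$P$ subsets of $\widehat{\mathcal M}$.

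\emph{Case $a>d$.} Each unmatched arrival is one application of the insertion coupling. Its validity is exactly the F-SOSA-I part of Theorem~\ref{th-editing}, applied with the current population $|\widehat{\mathcal M}|$. Induction over the fixed order then gives uniformity on $\mathcal M'$.

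\emph{Case $a\le d$.} Removing $\mathcal D\setminus\mathcal D_0$ and replenishing uniformly from $\mathcal M'\setminus\mathcal P'$ is the step I expect to need the most care. I would argue by symmetry. The law of $\mathcal P$ on $\widehat{\mathcal M}$ is invariant under permutations of $\mathcal M'$. The remove-and-replenish map is equivariant under such permutations, and so is the uniform draw of $\mathcal B$. Hence the law of $\mathcal P'$ is a permutation-invariant distribution on size-$P$ subsets of $\mathcal M'$, and any such distribution is uniform. As a backup I would do a direct count conditioning on $|\mathcal P\cap(\mathcal D\setminus\mathcal D_0)|$.

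\textbf{Step 2: the two formulas.}

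\emph{Probability of no change.} I would show $\mathcal P'=\mathcal P$ iff $\mathcal P\cap\mathcal D=\varnothing$ and, when $a>d$, no Bernoulli fires. This gives probability $\binom{M-d}{P}/\binom MP$ when $a\le d$ (note $M\ge M'$ in this case). When $a>d$, the insertion trials contribute the factor $\prod_{k=1}^{a-d}\frac{M+k-P}{M+k}=\binom MP/\binom{M'}P$, so the probability becomes $\binom{M-d}{P}/\binom{M'}P$. Together these give the stated $\max$ form.

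\emph{Expected replacements.} Since $|\mathcal P|=|\mathcal P'|$, we have $|\mathcal P\setminus\mathcal P'|=|\mathcal P'\setminus\mathcal P|$. If $a\le d$, every removed original member lies in $\mathcal D$ and every member of $\mathcal D$ in $\mathcal P$ is removed, so the expectation is $\mathbb E|\mathcal P\cap\mathcal D|=Pd/M$. If $a>d$, every new member lies in $\mathcal A$ and every element of $\mathcal A\cap\mathcal P'$ is new. By Step 1, $\mathbb E|\mathcal P'\cap\mathcal A|=Pa/M'$. In each case the result is the stated maximum.

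\textbf{Step 3: optimality.} Consider any coupling of $\mu_{(P,\mathcal M)}$ and $\mu_{(P,\mathcal M')}$. Since $\mathcal P'\subseteq\mathcal M'$ and $\mathcal P\subseteq\mathcal M$, the event $\{\mathcal P'=\mathcal P\}$ forces both $\mathcal P\cap\mathcal D=\varnothing$ and $\mathcal P'\cap\mathcal A=\varnothing$. Using $M'-a=M-d$, this gives $\Pr(\mathcal P'=\mathcal P)\le\min\bigl\{\binom{M-d}P/\binom MP,\ \binom{M-d}P/\binom{M'}P\bigr\}$. Likewise $|\mathcal P\setminus\mathcal P'|\ge|\mathcal P\cap\mathcal D|$ and $|\mathcal P\setminus\mathcal P'|=|\mathcal P'\setminus\mathcal P|\ge|\mathcal P'\cap\mathcal A|$. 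Taking expectations, every coupling has $\mathbb E|\mathcal P\setminus\mathcal P'|\ge\max\{Pd/M,Pa/M'\}$. Step 2 shows that Algorithm~\ref{algo-FSOSABatch} attains both bounds, and the convention $\binom{M-d}{P}=0$ when $M-d<P$ is consistent throughout.
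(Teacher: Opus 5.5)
Your proposal is correct and takes essentially the same route as the paper. Exactness reduces to uniformity of $\mathcal P'$ via the pairing bijection, the fixed-$P$ insertion coupling and uniform replenishment; optimality comes from the forced events $\mathcal P\cap\mathcal D=\varnothing$ and $\mathcal P'\cap\mathcal A=\varnothing$ together with $|\mathcal P\setminus\mathcal P'|\ge\max\{|\mathcal P\cap\mathcal D|,|\mathcal P'\cap\mathcal A|\}$, with attainment checked case by case. Your local variations are valid: you use a permutation-invariance argument for replenishment instead of conditioning on the number of survivors, and a telescoping product of non-firing probabilities instead of the paper's observation that $\mathcal P'=\mathcal P$ iff $\mathcal P'\cap\mathcal A=\varnothing$. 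The one fact you assert without checking is that each branch value is the larger of $Pd/M$ and $Pa/M'$. The paper verifies this via $\frac{a}{M'}-\frac{d}{M}=\frac{(M-d)(a-d)}{MM'}$, but it also follows directly from your own Step~3 lower bound applied to the algorithm's coupling.
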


\begin{proof}
Condition on the common training randomness and an independently chosen pairing. Paired replacement is a bijective relabeling and preserves uniformity. Remaining insertions preserve uniformity by the fixed-$P$ insertion coupling. For remaining deletions, conditional on the number of survivors, the surviving selected clients form a uniform subset of $\mathcal M'$; uniform replenishment therefore produces a uniform size-$P$ subset. Thus $\mathcal P'$ has the retraining subset distribution. The net updates recover its aggregate statistics exactly, and the same fusion rule produces its model. After temporary buffers are discarded, the model and internal state are the same functions of $\mathcal P'$ as in retraining. Averaging over the common training randomness establishes Definition~\ref{def-data-unlearning}. For any coupling $(S,T)$ of the initial and final uniform subset distributions, equality is possible only on the $\binom{M-d}{P}$ subsets containing no deleted or added clients. Each such subset contributes at most the smaller of its two marginal probabilities, giving
$\Pr(S=T) \le {\binom{M-d}{P}}/{\max\{\binom MP,\binom{M'}P\}}$.
Moreover, every selected deleted client must leave, and every selected added client must enter. Since both subsets have size $P$,
$|S\setminus T|\ge |S\cap\mathcal D|,
|S\setminus T|\ge |T\cap\mathcal A|$.
Taking expectations yields 
$\mathbb E[|S\setminus T|]
\ge \max\left\{\frac{Pd}{M},\frac{Pa}{M'}\right\}$.
The algorithm attains both bounds. If $a\ge d$, it adds no previously unselected common client. Hence
$\mathcal P'=\mathcal P$ exactly when
$\mathcal P'\cap\mathcal A=\varnothing$, and
$|\mathcal P\setminus\mathcal P'|
=|\mathcal P'\cap\mathcal A|$.
Uniformity gives agreement probability
$\binom{M-d}{P}/\binom{M'}P$ and expected net replacements $Pa/M'$.
If $d\ge a$, every originally selected common client is retained. Hence equality holds exactly when $\mathcal P\cap\mathcal D=\varnothing$, and $|\mathcal P\setminus\mathcal P'|=|\mathcal P\cap\mathcal D|$, giving agreement probability $\binom{M-d}{P}/\binom MP$ and expectation $Pd/M$.
Since $M'-M=a-d$ and $\frac{a}{M'}-\frac{d}{M}=\frac{(M-d)(a-d)}{MM'}$, both branches attain the two bounds simultaneously.
\end{proof}

The batch updates $|\mathcal P\setminus\mathcal P'|$
contribution pairs and performs one projection if and only
if $\mathcal P'\ne\mathcal P$.
For $k=d+a$, let $M_0$ be the initial population size.
If $P\le\rho M_0$ and $\min\{M,M'\}\ge M_0/2$,
Proposition~\ref{prop:paired-batch} gives
$\mathbb E[|\mathcal P\setminus\mathcal P'|]
\le 2\rho\max\{d,a\}\le 2\rho k$.
Since a subset change requires at least one net replacement,
the projection is triggered with probability at most
$\min\{1,2\rho k\}$.

\renewcommand{\arraystretch}{1.2}
\setlength{\arrayrulewidth}{0.3mm}  
\begin{table*}[ht]
\centering
\caption{Performance comparison of F-SOSA and baselines across MNIST, SVHN, and CIFAR-10 under varying Dirichlet heterogeneity $\alpha$. Best results are in bold; second-best results are underlined.}
\label{tab:my-table1}
\resizebox{\textwidth}{!}{%
\begin{tabular}{cccccccccccccccc}
\hline
\multirow{2}{*}{Dataset} & \multirow{2}{*}{Metrics} & \multicolumn{4}{c}{$\alpha=0.1$} &  & \multicolumn{4}{c}{$\alpha=0.5$} &  & \multicolumn{4}{c}{$\alpha=0.8$} \\ \cline{3-6} \cline{8-11} \cline{13-16}
&  & FedFisher & FedLPA & Dense & F-SOSA &  & FedFisher & FedLPA & Dense & F-SOSA &  & FedFisher & FedLPA & Dense & F-SOSA \\ \hline
\multirow{5}{*}{MNIST} & Accuracy (\%) & {\ul 90.35} & 87.90 & 85.92 & \textbf{91.08} &  & {\ul 95.98} & 94.21 & \textbf{96.06} & 95.43 &  & {\ul 96.08} & 92.82 & \textbf{96.93} & 95.80 \\
& Time (s) & 43.11 & \textbf{24.59} & 187.96 & {\ul 34.63} &  & 43.76 & \textbf{26.61} & 190.54 & {\ul 34.59} &  & 43.15 & \textbf{29.02} & 193.58 & {\ul 35.56} \\
& Comm. (MB) & 35.30 & 793.31 & {\ul 33.76} & \textbf{18.46} &  & 35.30 & 793.31 & {\ul 33.76} & \textbf{18.46} &  & 35.30 & 793.31 & {\ul 33.76} & \textbf{18.46} \\
& Time Efficiency & 2.10 & \textbf{3.57} & 0.52 & {\ul 2.63} &  & 2.19 & \textbf{3.54} & 0.57 & {\ul 2.76} &  & 2.23 & \textbf{3.20} & 0.58 & {\ul 2.69} \\
& Comm. Efficiency & {\ul 2.56} & 0.11 & 2.55 & \textbf{4.93} &  & 2.72 & 0.12 & {\ul 2.85} & \textbf{5.17} &  & 2.72 & 0.12 & {\ul 2.87} & \textbf{5.19} \\ \hline
\multirow{5}{*}{SVHN} & Accuracy (\%) & \textbf{64.89} & 38.81 & 62.78 & {\ul 64.88} &  & \textbf{80.52} & 76.24 & 70.63 & {\ul 80.33} &  & \textbf{79.64} & 77.75 & 78.76 & {\ul 79.40} \\
& Time (s) & {\ul 29.15} & 50.50 & 198.85 & \textbf{23.54} &  & {\ul 28.98} & 51.61 & 184.83 & \textbf{21.29} &  & {\ul 28.53} & 51.33 & 188.70 & \textbf{22.14} \\
& Comm. (MB) & 77.04 & 362.74 & {\ul 75.27} & \textbf{40.31} &  & 77.04 & 362.74 & {\ul 75.27} & \textbf{40.31} &  & 77.04 & 362.74 & {\ul 75.27} & \textbf{40.31} \\
& Time Efficiency & {\ul 2.23} & 0.77 & 0.36 & \textbf{2.76} &  & {\ul 2.78} & 1.48 & 0.40 & \textbf{3.77} &  & {\ul 2.79} & 1.51 & 0.46 & \textbf{3.59} \\
& Comm. Efficiency & {\ul 0.84} & 0.11 & 0.83 & \textbf{1.61} &  & {\ul 1.05} & 0.21 & 0.94 & \textbf{1.99} &  & 1.03 & 0.21 & {\ul 1.05} & \textbf{1.97} \\ \hline
\multirow{5}{*}{CIFAR10} & Accuracy (\%) & \textbf{39.99} & 30.49 & {\ul 36.57} & 32.38 &  & \textbf{51.86} & 40.36 & 50.09 & {\ul 51.69} &  & \textbf{55.18} & 44.32 & 52.58 & {\ul 52.65} \\
& Time (s) & {\ul 22.76} & 63.79 & 202.91 & \textbf{16.55} &  & {\ul 23.16} & 56.50 & 197.96 & \textbf{16.26} &  & {\ul 23.22} & 55.92 & 197.70 & \textbf{16.22} \\
& Comm. (MB) & 77.04 & 362.74 & {\ul 75.27} & \textbf{40.31} &  & 77.04 & 362.74 & {\ul 75.27} & \textbf{40.31} &  & 77.04 & 362.74 & {\ul 75.27} & \textbf{40.31} \\
& Time Efficiency & {\ul 1.76} & 0.48 & 0.19 & \textbf{1.96} &  & {\ul 2.24} & 0.72 & 0.26 & \textbf{3.18} &  & {\ul 2.38} & 0.79 & 0.28 & \textbf{3.25} \\
& Comm. Efficiency & {\ul 0.52} & 0.08 & 0.49 & \textbf{0.80} &  & {\ul 0.67} & 0.11 & 0.67 & \textbf{1.28} &  & {\ul 0.72} & 0.12 & 0.70 & \textbf{1.31} \\ \hline
\end{tabular}%
}
\end{table*}

\renewcommand{\arraystretch}{1.2}
\setlength{\arrayrulewidth}{0.3mm}  
\begin{table}[t]
\centering
\caption{Performance comparison on FEMNIST under a real-world non-IID setting. Best results are in bold; second-best results are underlined.}
\label{tab:my-table2}
\begin{tabular}{ccccc}
\hline
\multirow{2}{*}{Metrics} & \multirow{2}{*}{FedFisher} & \multirow{2}{*}{FedLPA} & \multirow{2}{*}{Dense} & \multirow{2}{*}{F-SOSA} \\
&  &  &  &  \\ \hline
Accuracy (\%) & \textbf{76.94} & 70.18 & {\ul 75.97} & 75.71 \\
Time (s) & 115.89 & \textbf{18.69} & 138.57 & {\ul 67.07} \\
Comm. (MB) & 554.12 & 1968.70 & {\ul 528.83} & \textbf{289.68} \\
Time Efficiency & 0.66 & \textbf{3.75} & 0.64 & {\ul 1.13} \\
Comm. Efficiency & 0.139 & 0.036 & {\ul 0.144} & \textbf{0.261} \\ \hline
\end{tabular}%
\end{table}

\renewcommand{\arraystretch}{1.2}
\setlength{\arrayrulewidth}{0.3mm}  
\begin{table*}[htbp]
\centering
\caption{Membership Inference Attack (MIA) results on unlearned global models from different methods on FEMNIST and CIFAR-10.}
\label{tab:my-table3}
\begin{tabular}{cccccccccccccc}
\hline
\multirow{2}{*}{Method} & \multicolumn{1}{l}{} & \multicolumn{5}{c}{FEMNIST} &  & \multicolumn{5}{c}{CIFAR-10} & \multicolumn{1}{l}{} \\ \cline{3-7} \cline{9-13}
 &  & Accuracy (\%) &  & Precision (\%) &  & AUC (\%) &  & Accuracy (\%) &  & Precision (\%) &  & AUC (\%) &  \\ \hline
FedErasesr &  & $50.01_{\pm0.02}$ &  & $51.08_{\pm3.23}$ &  & $50.01_{\pm0.02}$ &  & $48.99_{\pm0.78}$ &  & $40.75_{\pm7.83}$ &  & $48.99_{\pm0.78}$ &  \\
FedRR &  & $50.45_{\pm0.49}$ &  & $50.37_{\pm0.41}$ &  & $50.45_{\pm0.49}$ &  & $45.17_{\pm2.33}$ &  & $42.29_{\pm2.87}$ &  & $45.17_{\pm2.33}$ &  \\
FATS-CU &  & $50.47_{\pm0.25}$ &  & $50.33_{\pm0.18}$ &  & $50.47_{\pm0.25}$ &  & $53.54_{\pm0.88}$ &  & $61.29_{\pm2.99}$ &  & $53.54_{\pm0.88}$ &  \\
Retrain &  & $50.45_{\pm0.21}$ &  & $50.31_{\pm0.15}$ &  & $50.45_{\pm0.21}$ &  & $51.64_{\pm1.67}$ &  & $52.04_{\pm2.09}$ &  & $51.64_{\pm1.67}$ &  \\
F-SOSA-U &  & $50.92_{\pm0.33}$ &  & $50.63_{\pm0.23}$ &  & $50.92_{\pm0.33}$ &  & $51.63_{\pm1.80}$ &  & $52.17_{\pm2.72}$ &  & $51.63_{\pm1.80}$ &  \\ \hline
\end{tabular}%
\end{table*}

\begin{figure*}[t]
    \centering
    \begin{minipage}{0.26\textwidth}
        \centering
        \includegraphics[width=\textwidth]{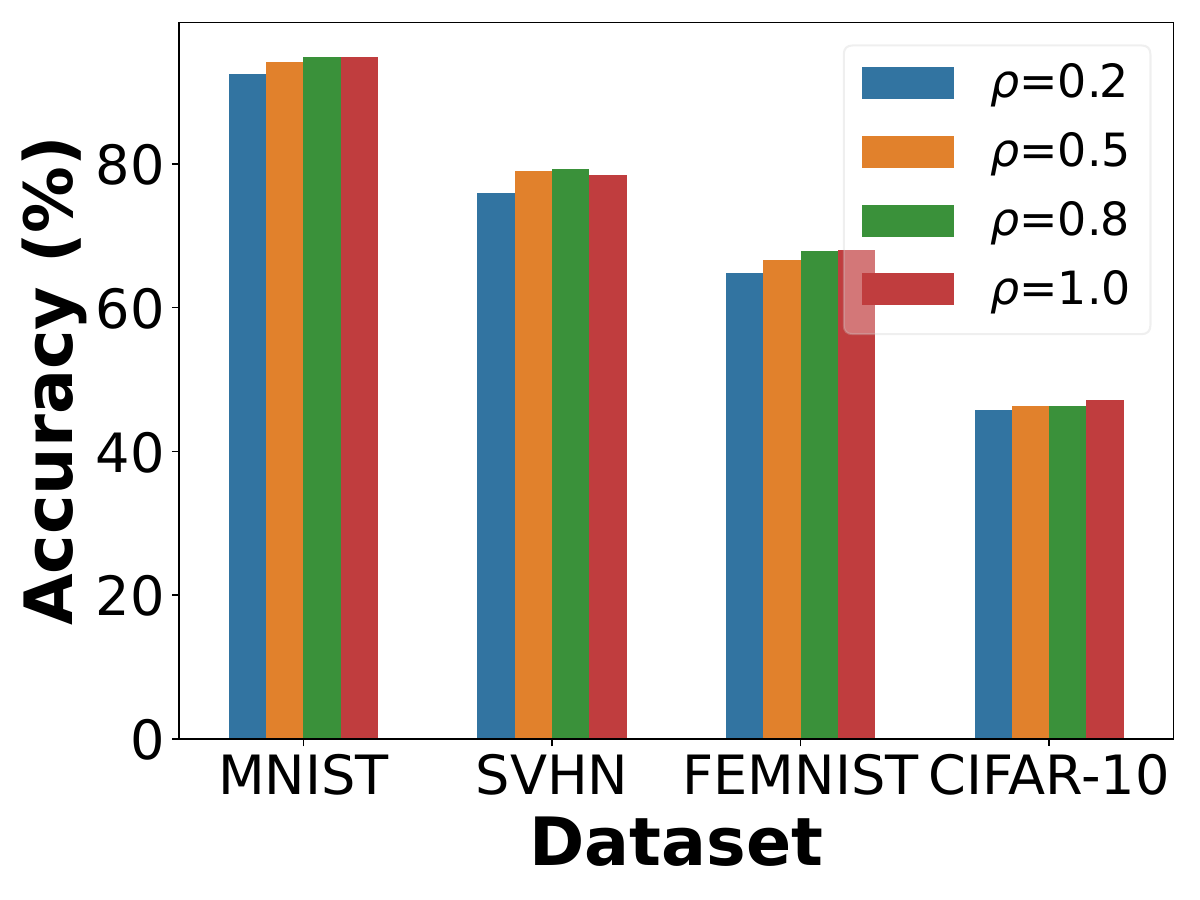}
    \end{minipage}%
    \begin{minipage}{0.26\textwidth}
        \centering
        \includegraphics[width=\textwidth]{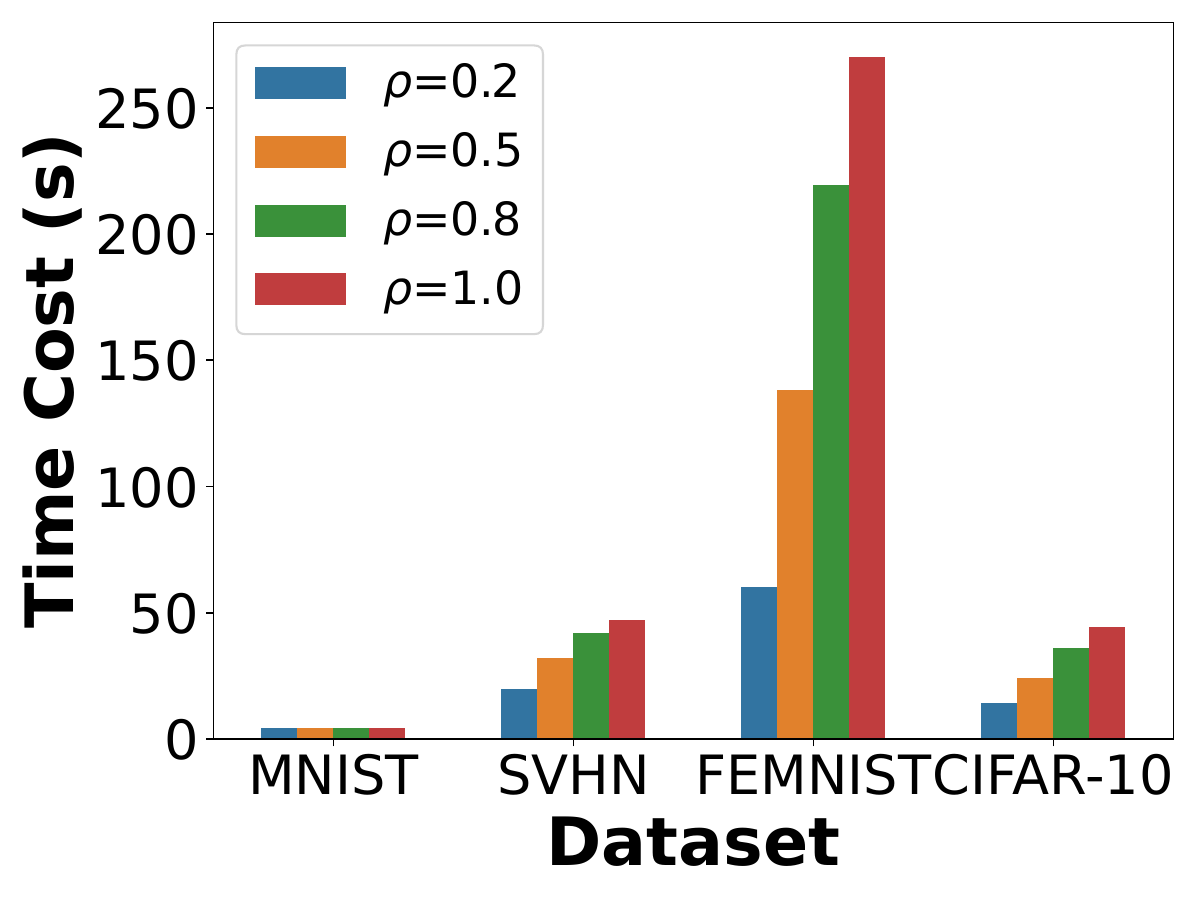}
    \end{minipage}%
    \begin{minipage}{0.26\textwidth}
        \centering
        \includegraphics[width=\textwidth]{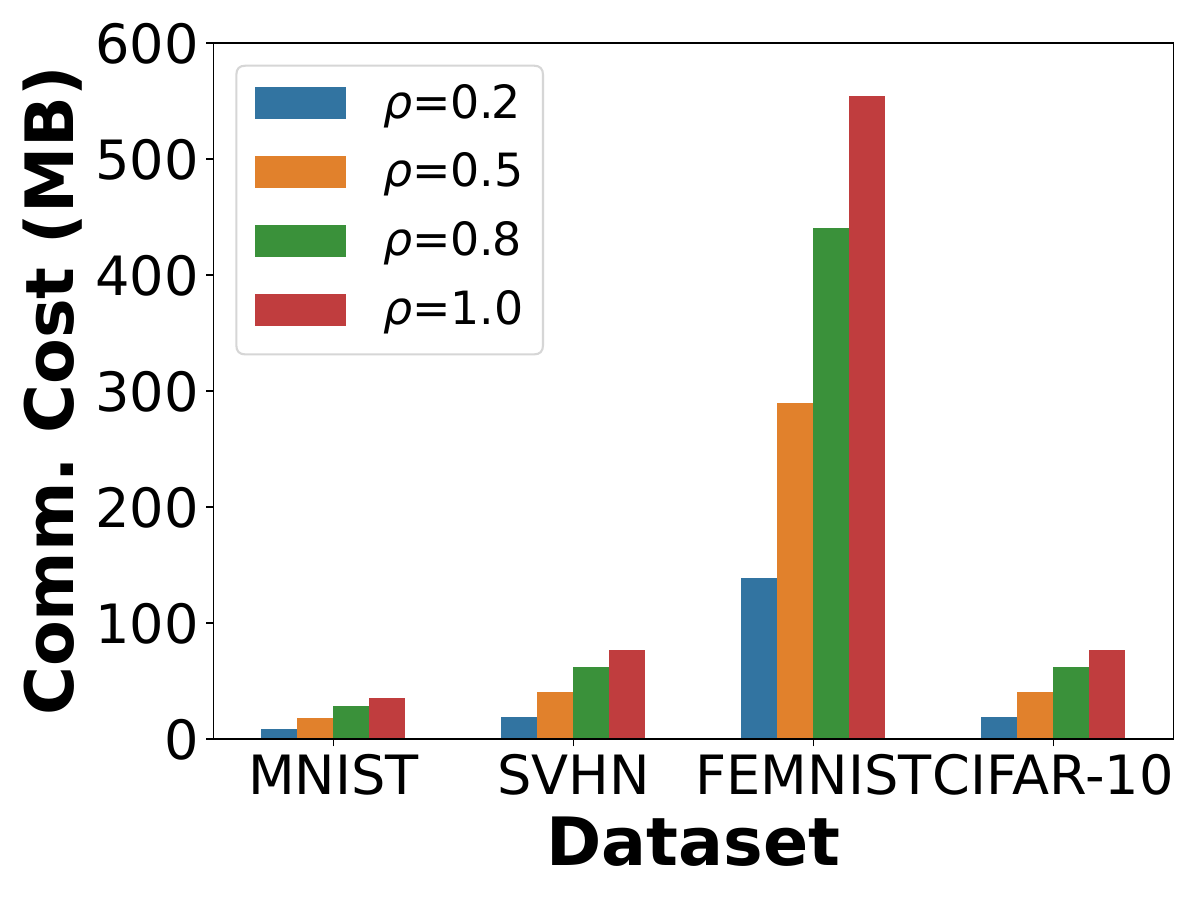}
    \end{minipage}
    
    \caption{Effect of the stability level $\rho$ on F-SOSA’s accuracy and time/ communication overhead across datasets.}
    \label{fig:my-figure2}
\end{figure*}

\begin{figure}[t]
    \centering
    \includegraphics[width=0.23\textwidth]{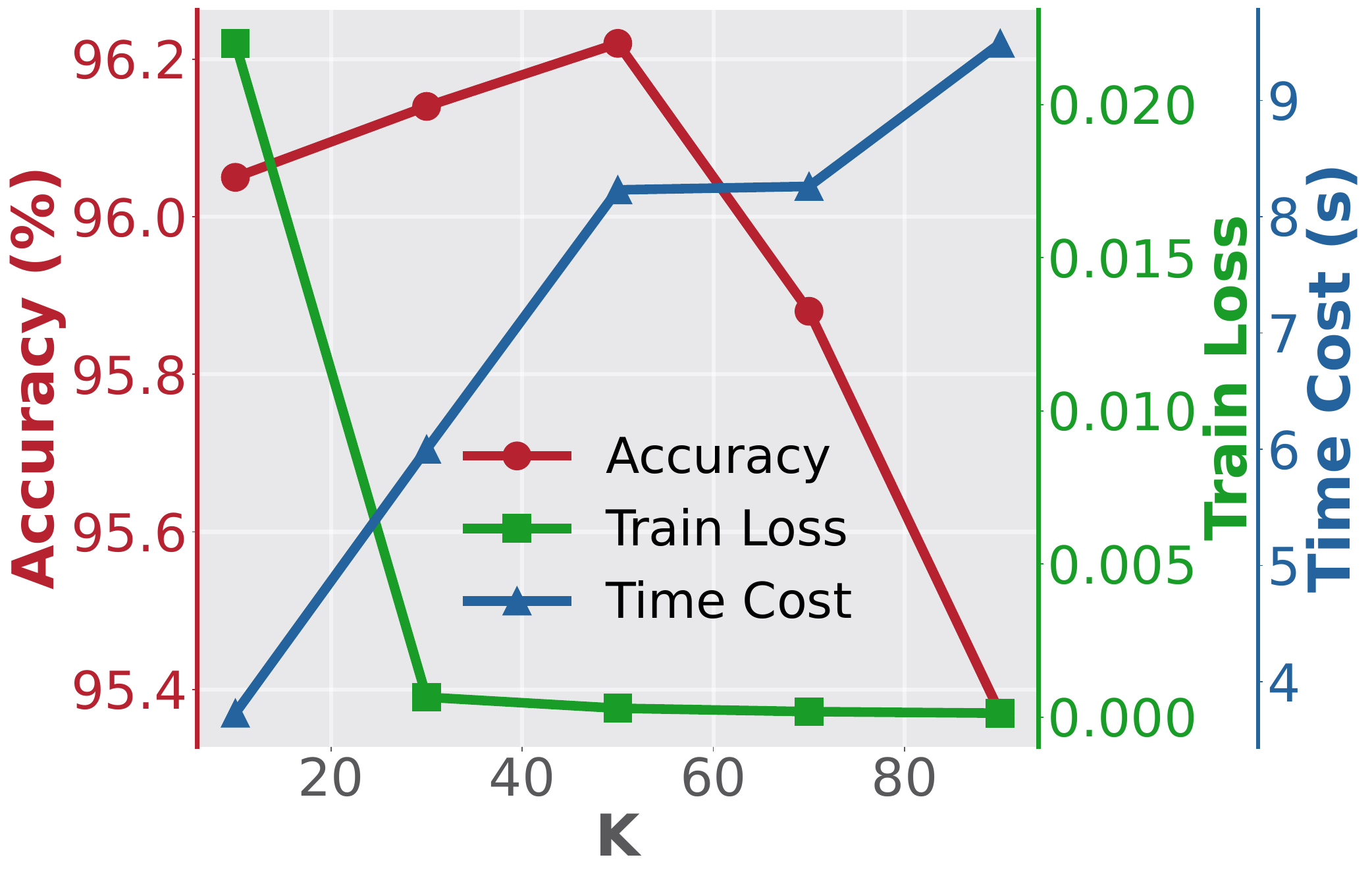}
    \hfill
    \includegraphics[width=0.23\textwidth]{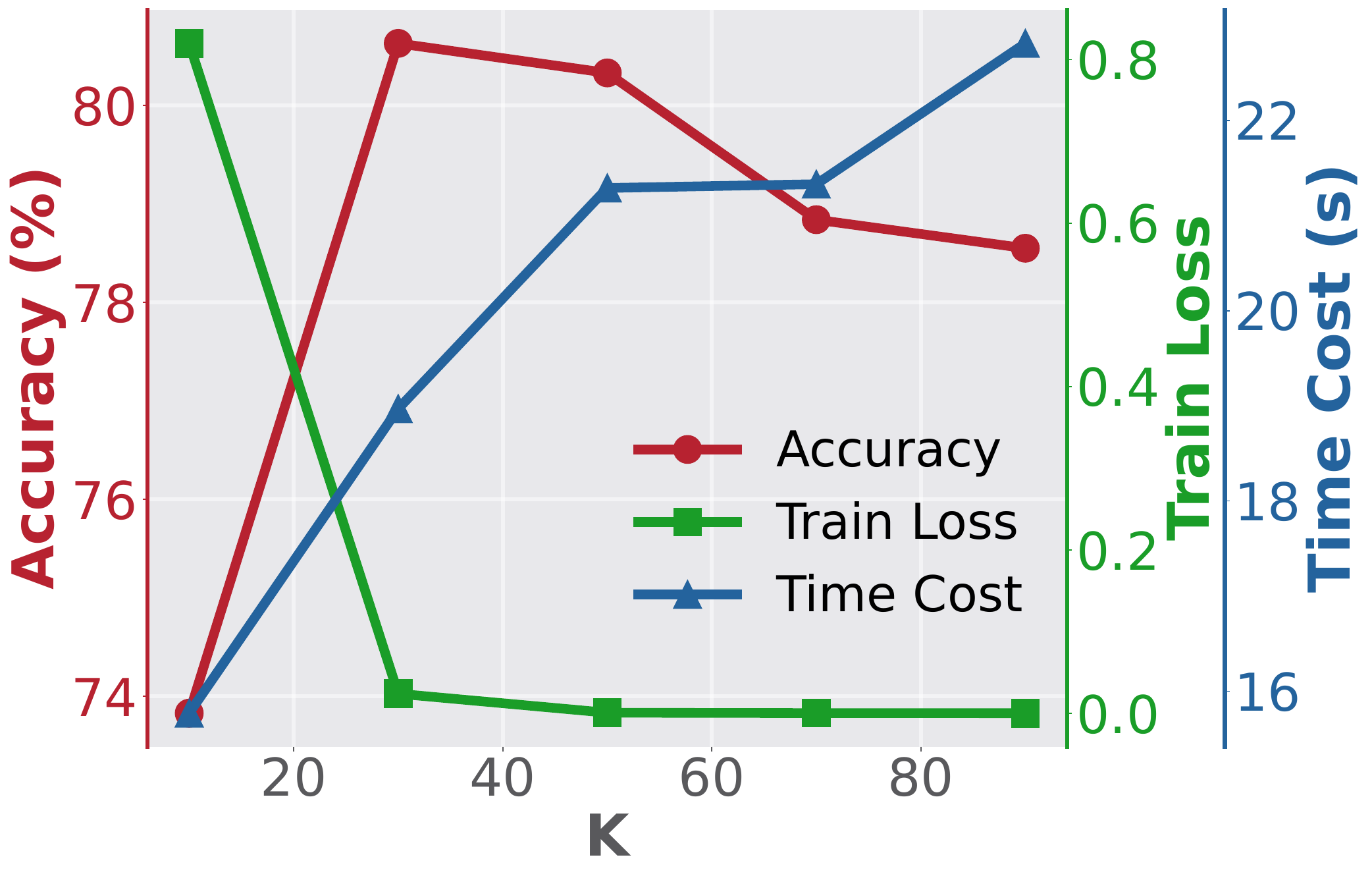}
    \\
    \includegraphics[width=0.23\textwidth]{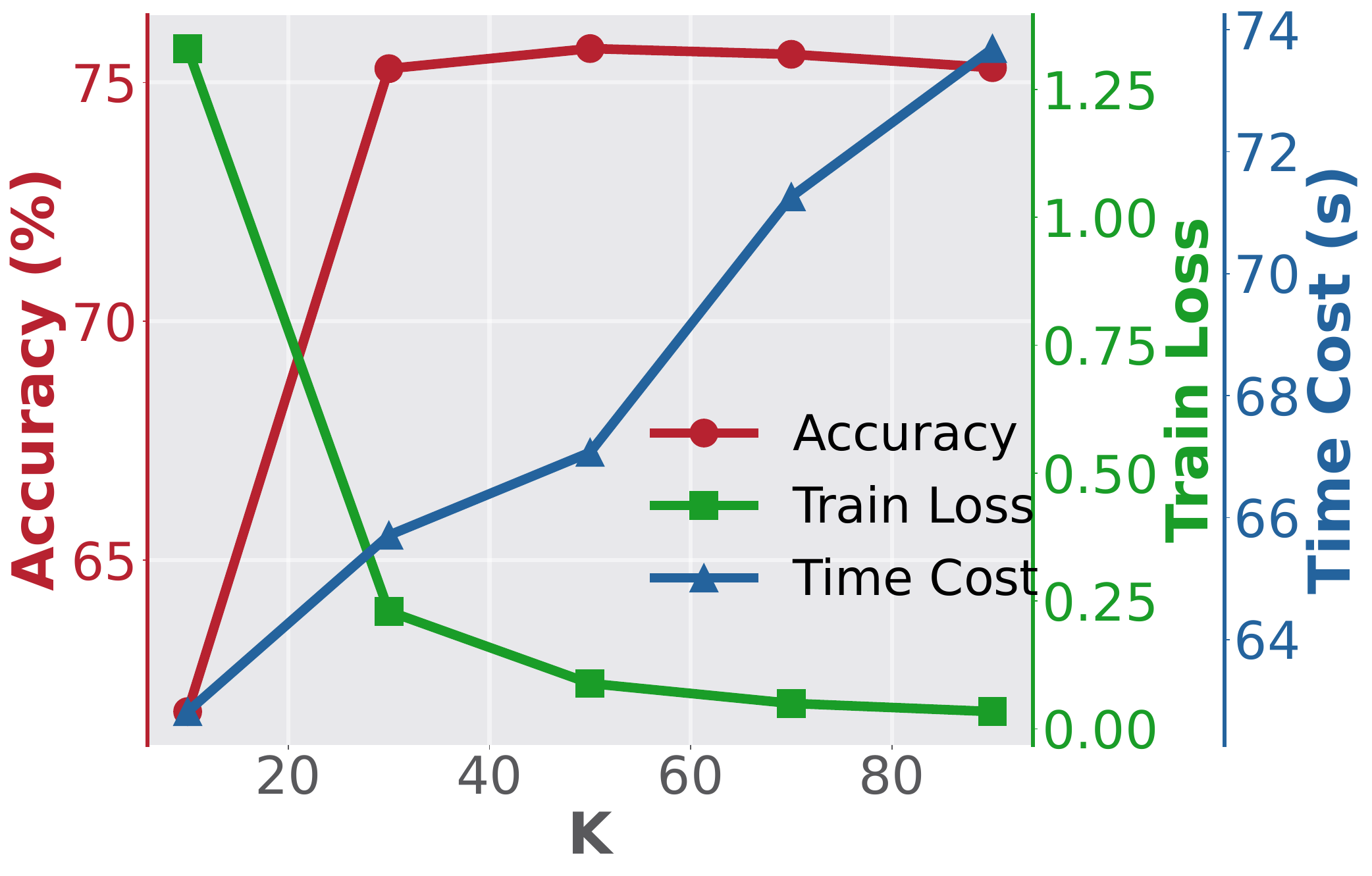}
    \hfill
    \includegraphics[width=0.23\textwidth]{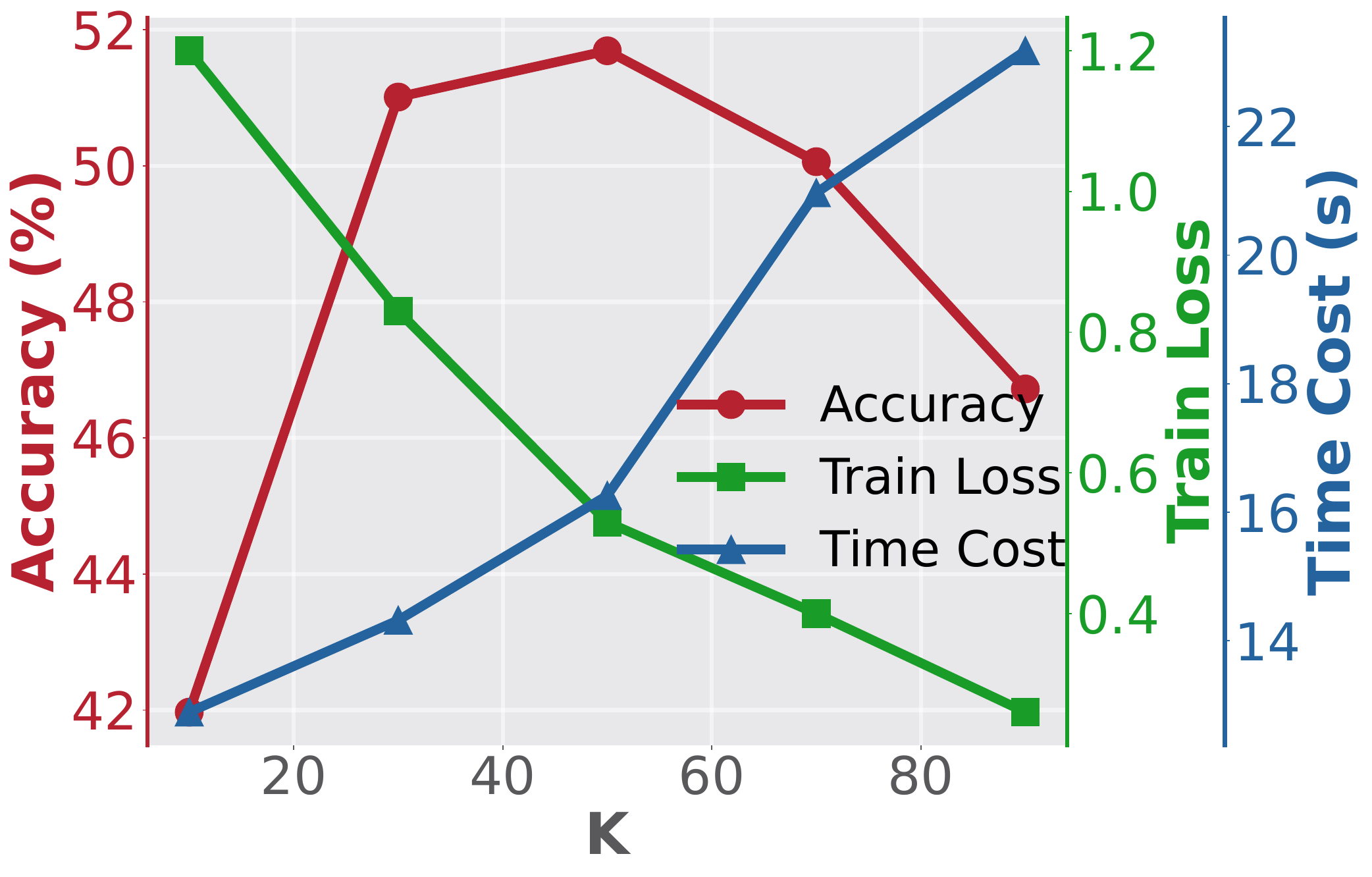}

    \caption{Effect of local epoch number $K$ on performance. Top-left: MNIST, top-right: SVHN, bottom-left: FEMNIST, bottom-right: CIFAR-10.}
    \label{fig:my-figure3}
\end{figure}

\begin{figure}[t]
    \centering
    \includegraphics[width=0.455\linewidth]{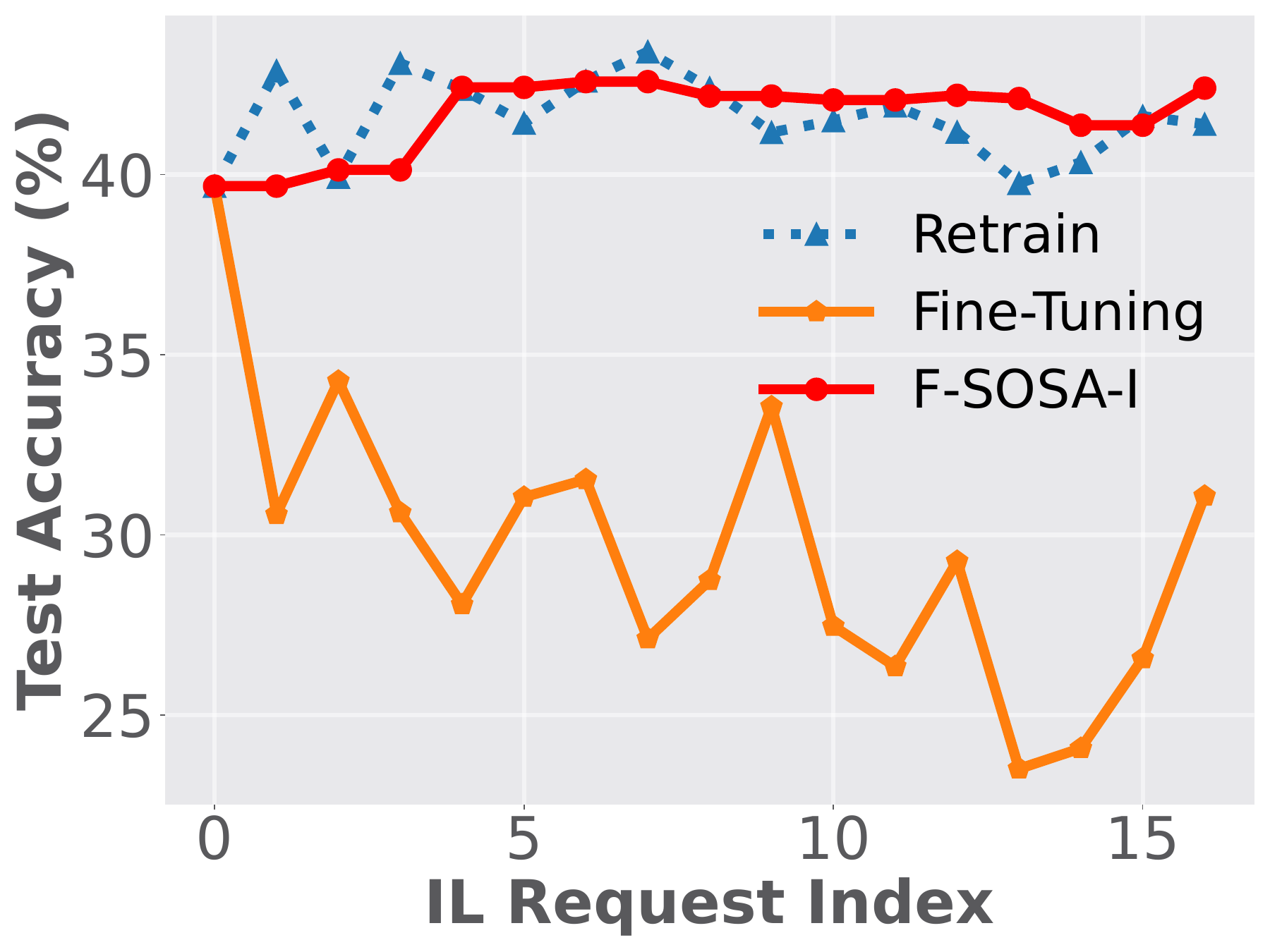}
    \hfill
    \includegraphics[width=0.455\linewidth]{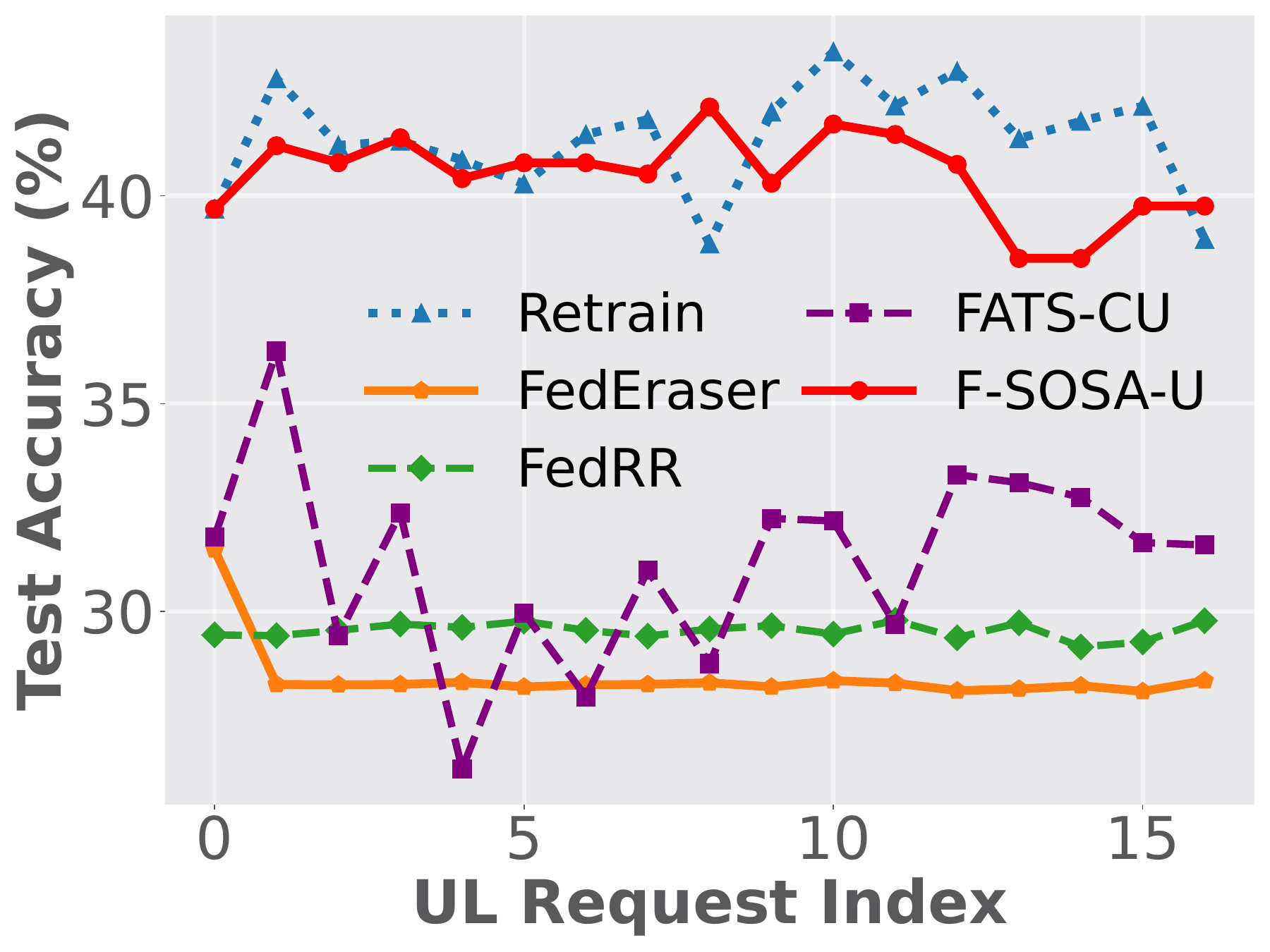}
    \\
    \includegraphics[width=0.455\linewidth]{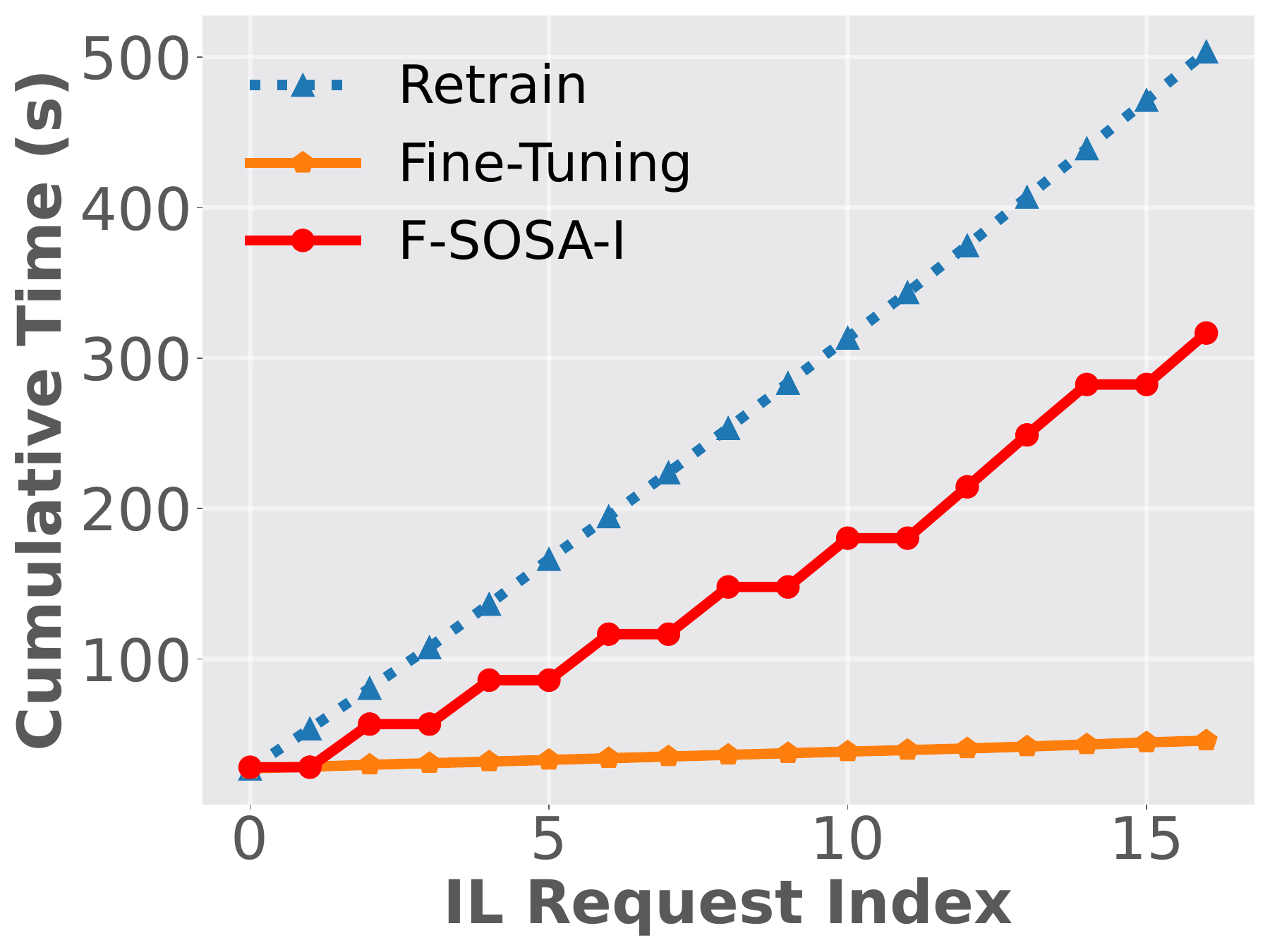}
    \hfill
    \includegraphics[width=0.455\linewidth]{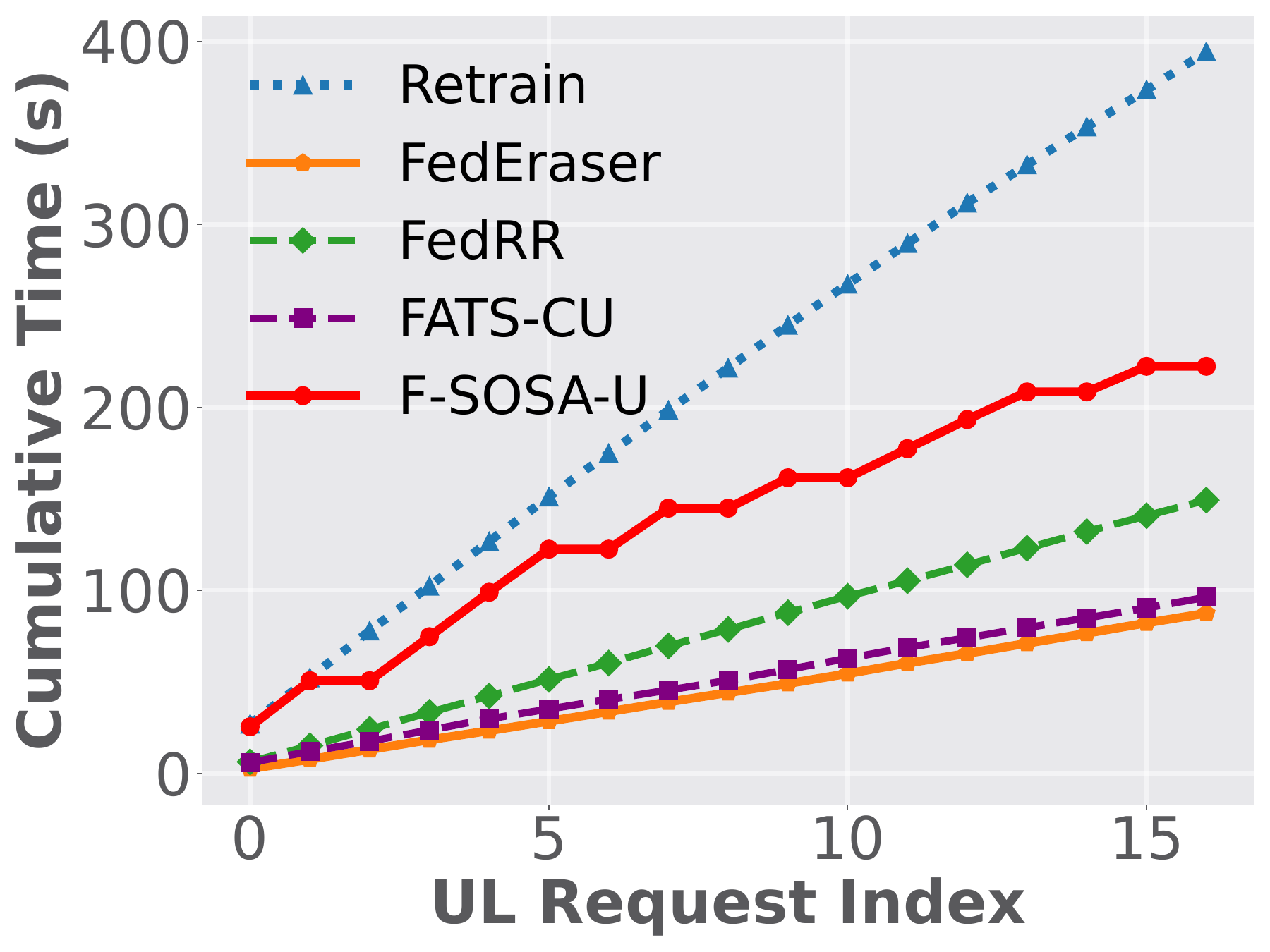}
    \\
    \includegraphics[width=0.455\linewidth]{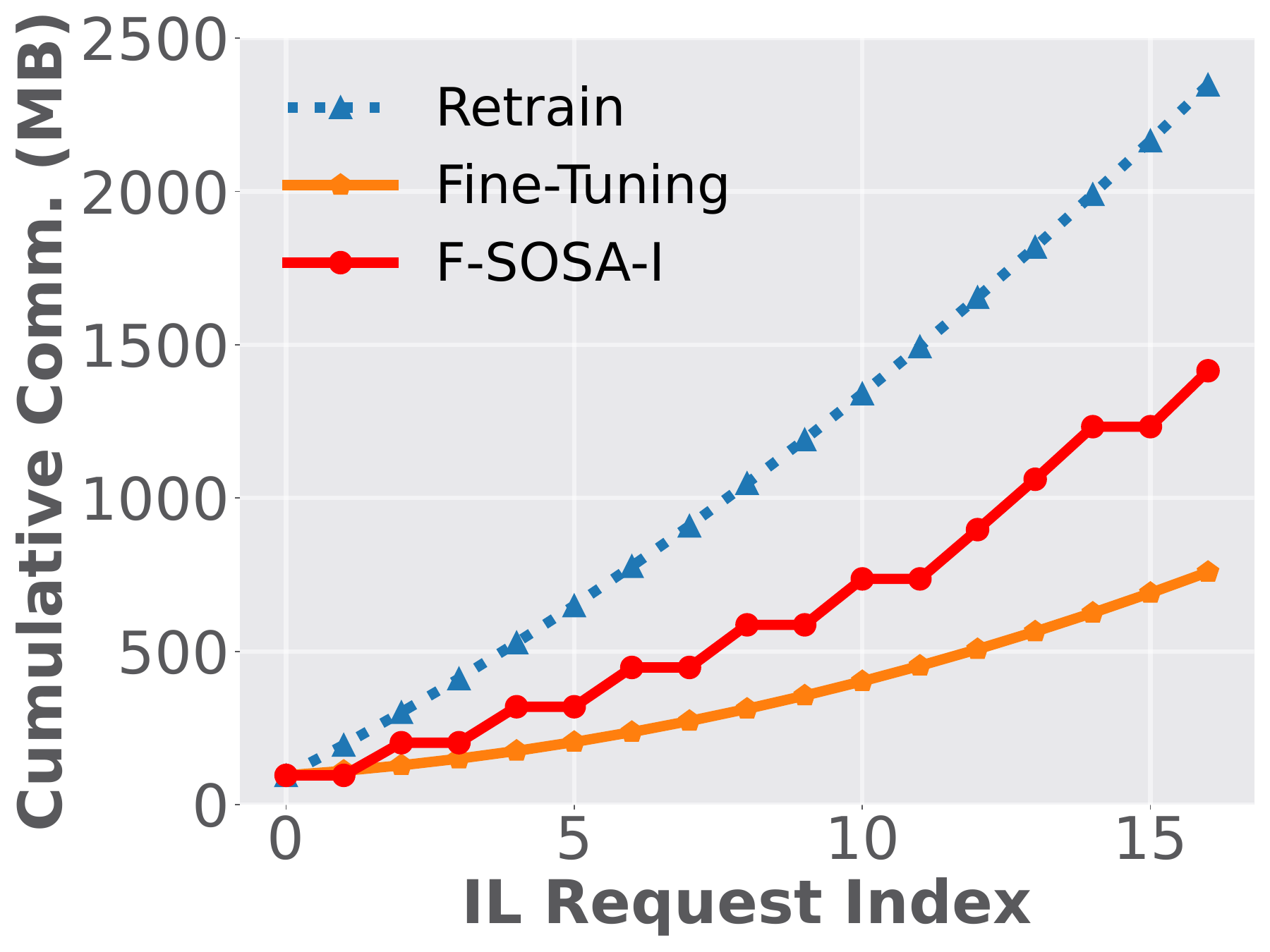}
    \hfill
    \includegraphics[width=0.455\linewidth]{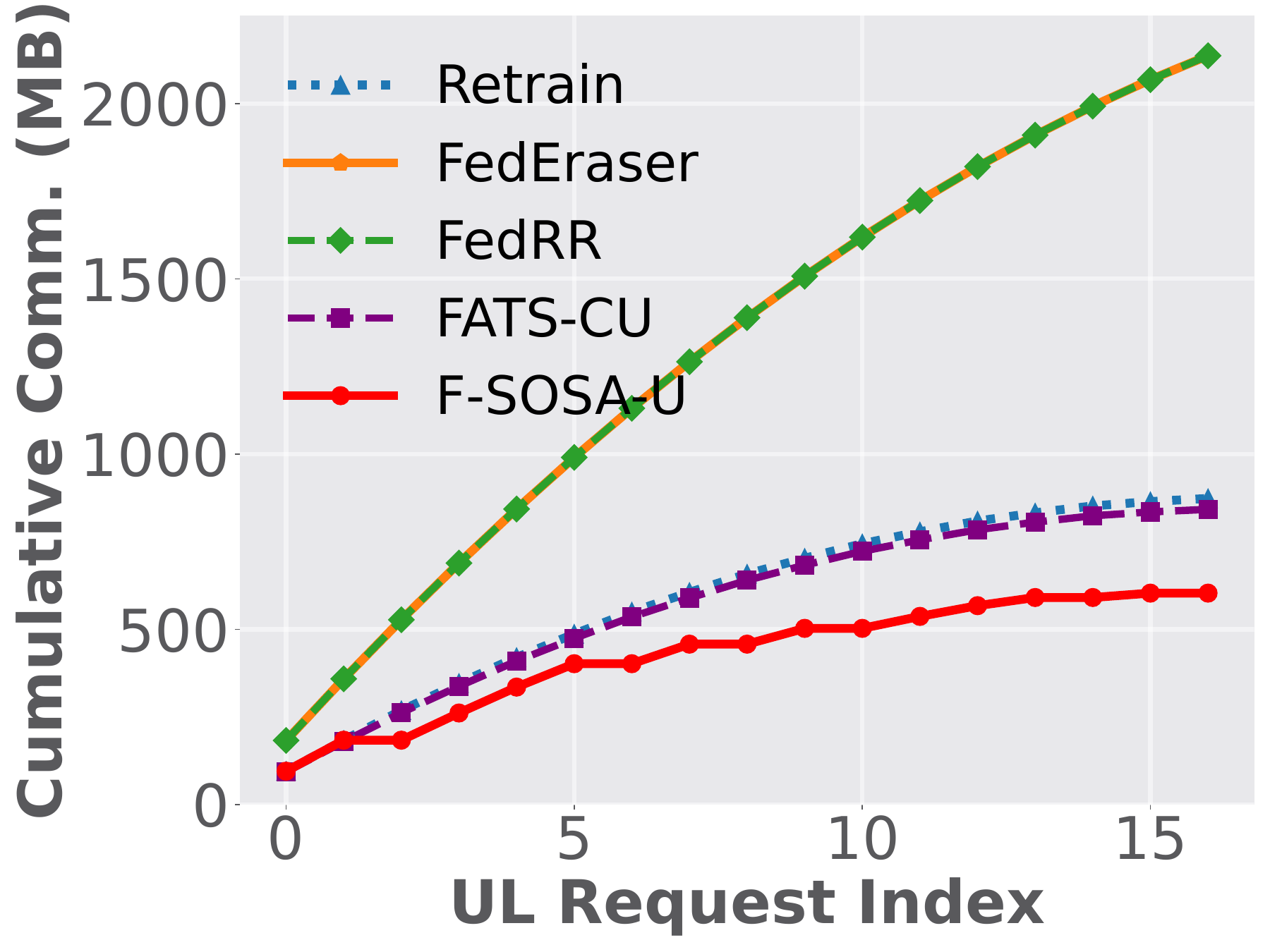}

    \caption{Performance of F-SOSA-I/U vs. baselines on CIFAR-10. Columns: IL (left), UL (right). Rows: test accuracy (top), time cost (middle), communication cost (bottom).}
    \label{fig:my-figure-ulil}
\end{figure}

\begin{figure}[t]
    \centering
    \includegraphics[width=0.46\linewidth]{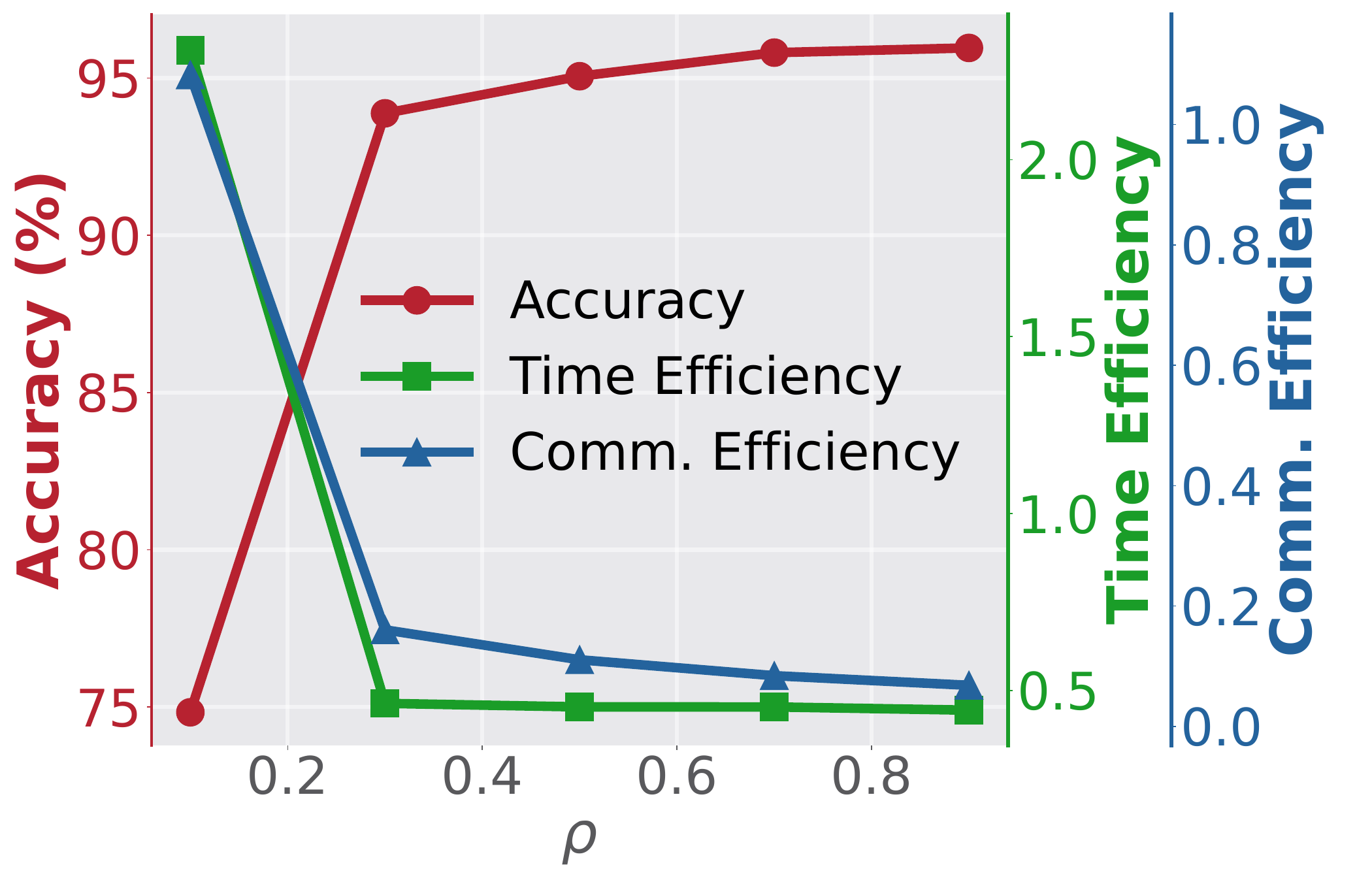}
    \hfill
    \includegraphics[width=0.46\linewidth]{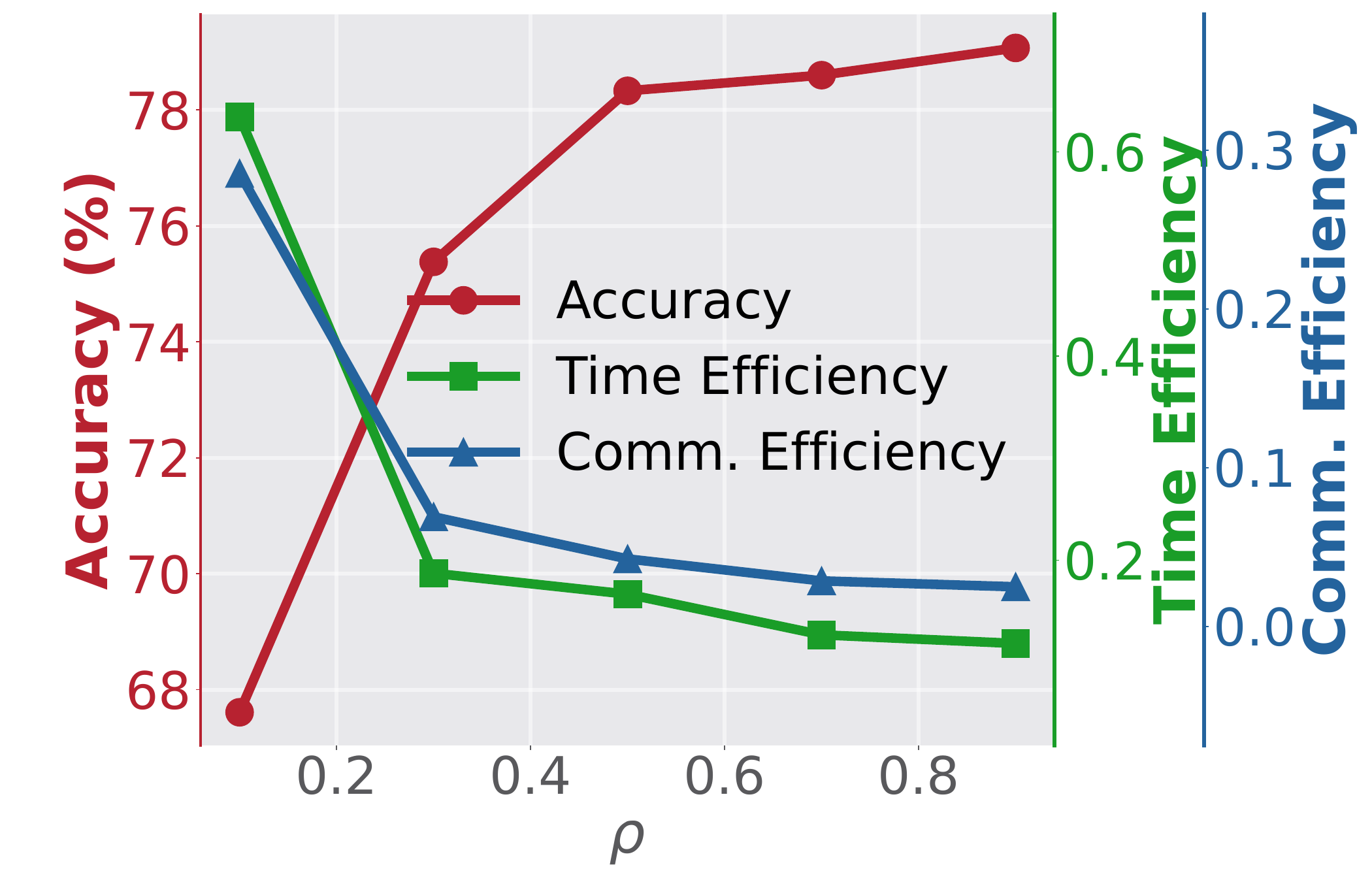}
    \\
    \includegraphics[width=0.46\linewidth]{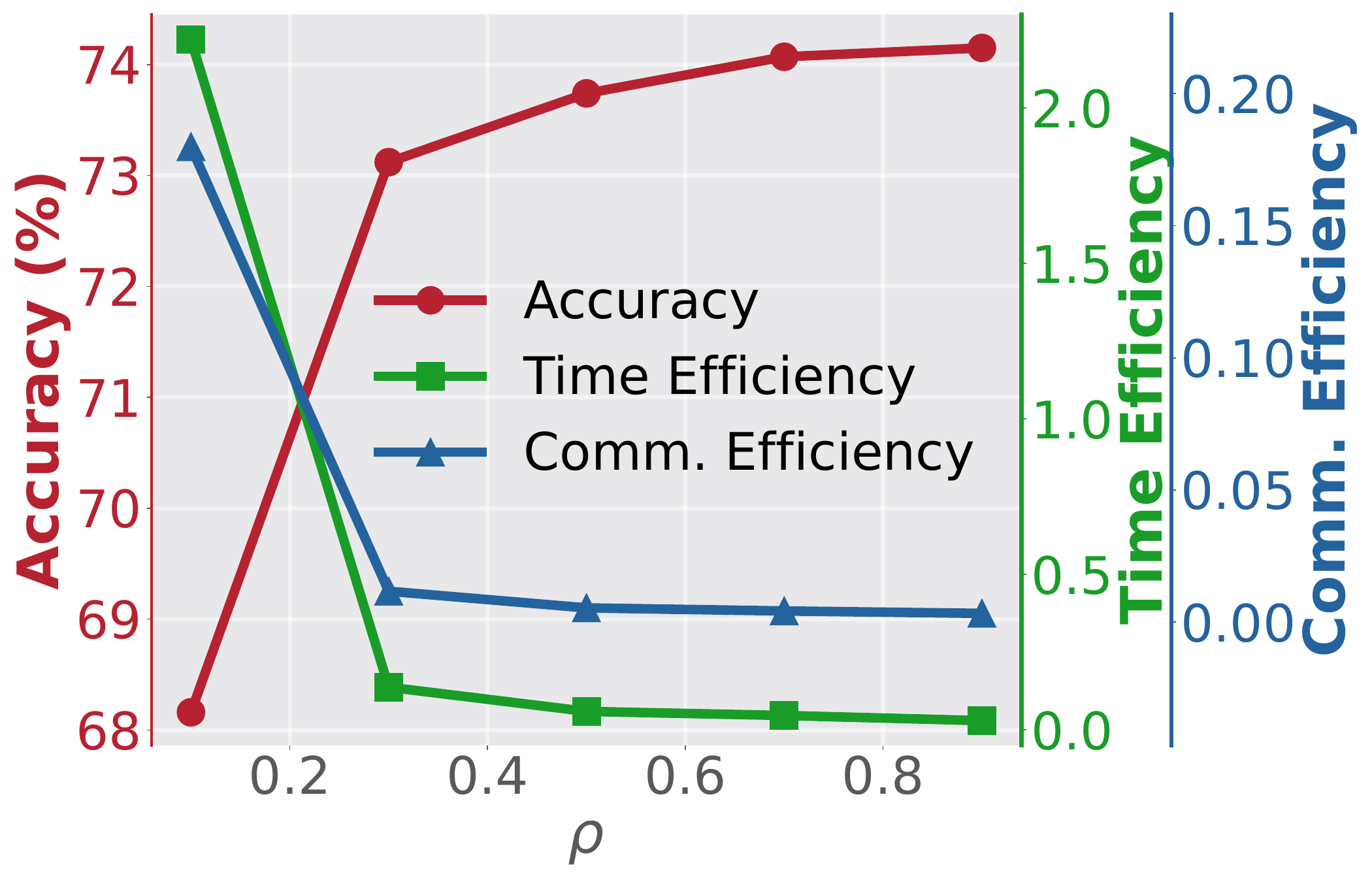}
    \hfill
    \includegraphics[width=0.46\linewidth]{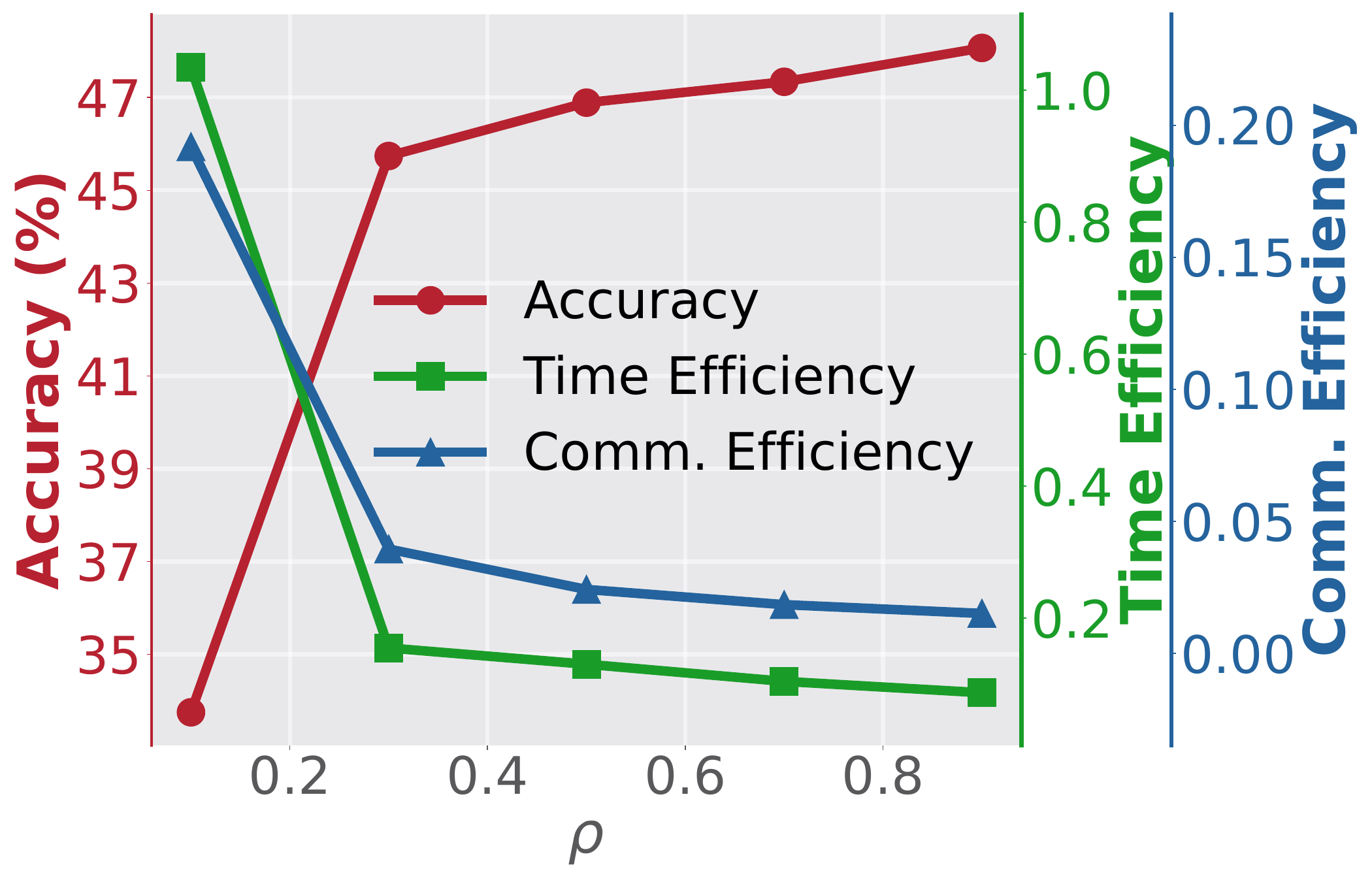}

    \caption{Effect of stability parameter $\rho$ on accuracy and efficiency. Top-left: MNIST, top-right: SVHN, bottom-left: FEMNIST, bottom-right: CIFAR-10.}
    \label{fig:my-figure5}
\end{figure}

\begin{figure}[t]
    \centering
    \includegraphics[width=0.45\textwidth]{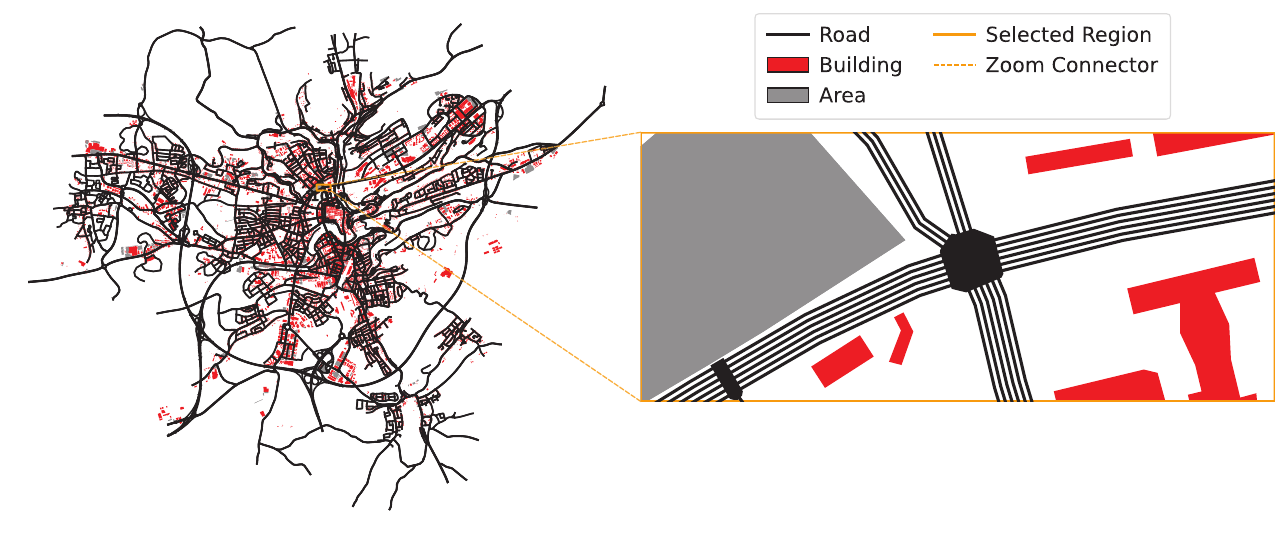}
    \caption{The Luxembourg map and the region considered.}
    \label{fig:simulation}
\end{figure}

\begin{figure}[t]
    \centering
    \includegraphics[width=0.45\linewidth]{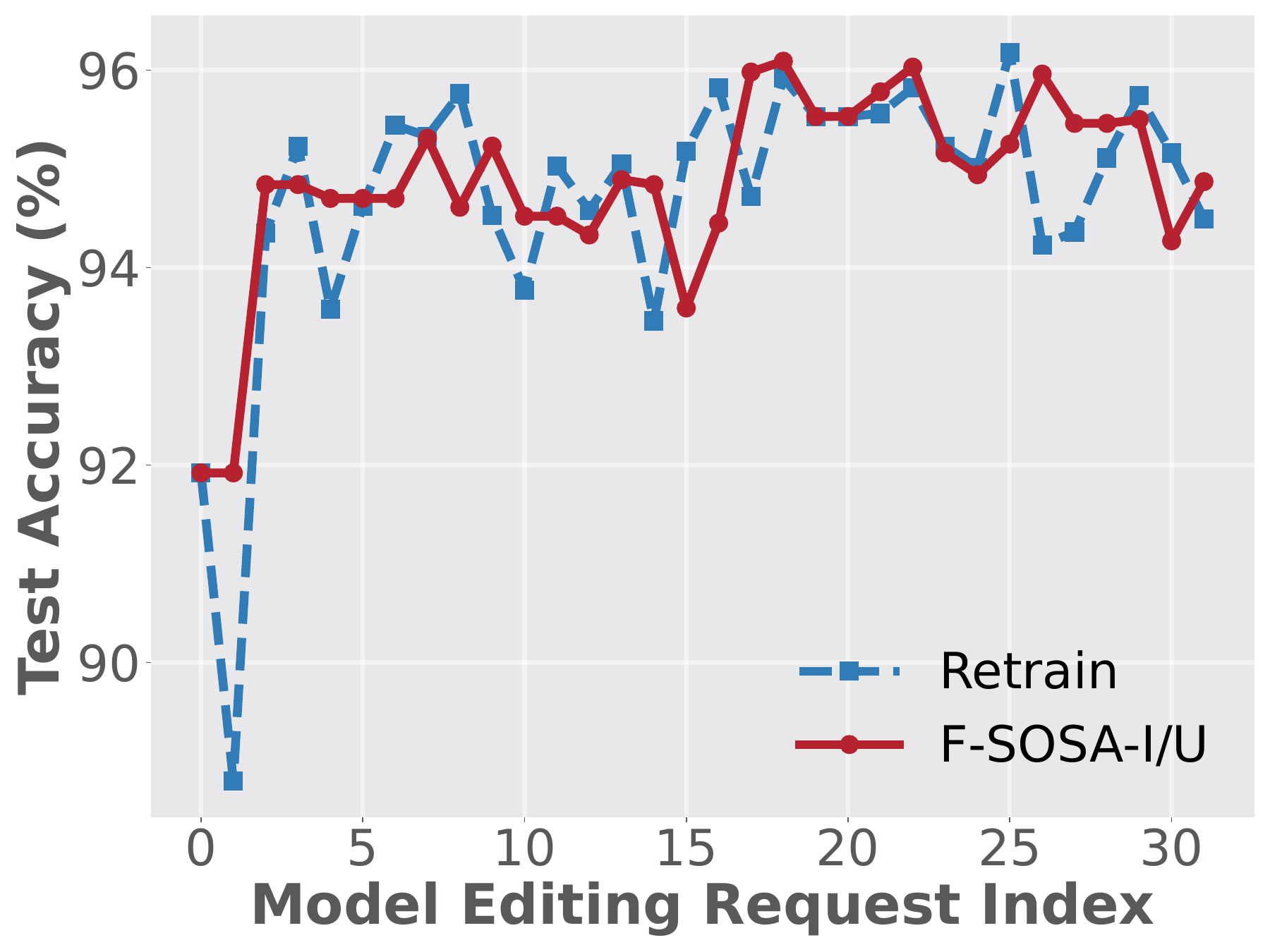}
    \hfill    
    \includegraphics[width=0.48\linewidth]{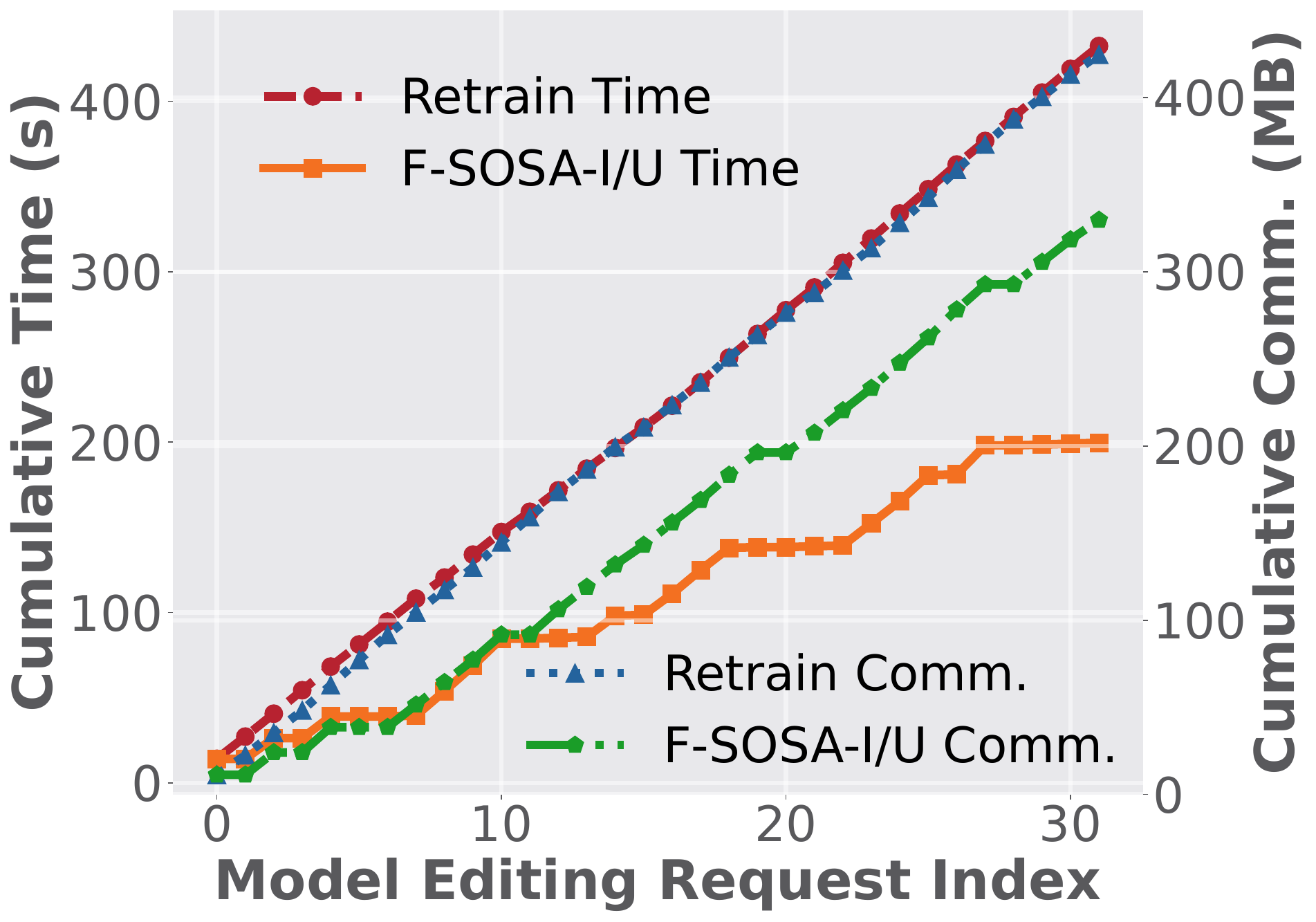}

    \includegraphics[width=0.45\linewidth]{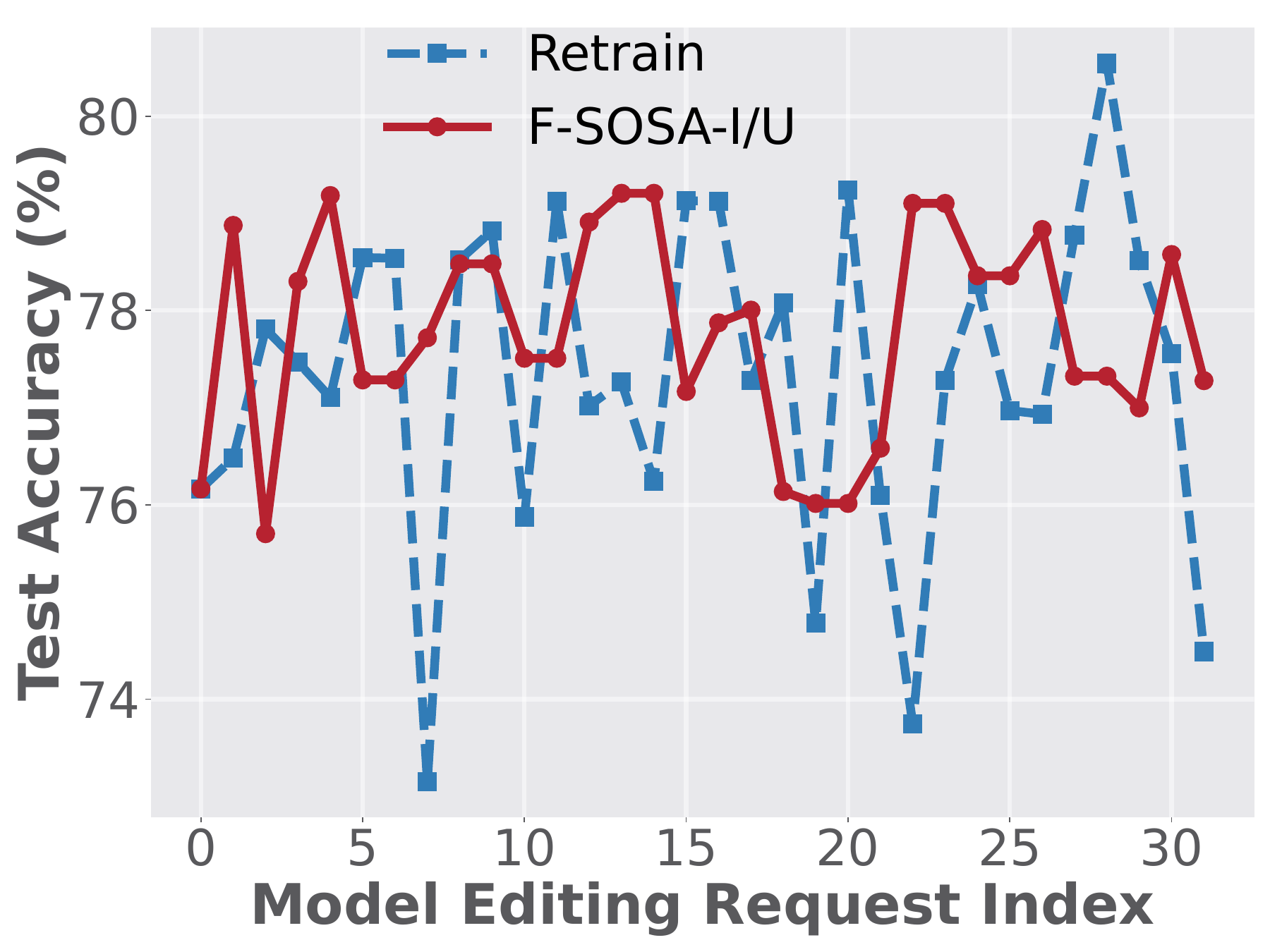}
    \hfill
    \includegraphics[width=0.48\linewidth]{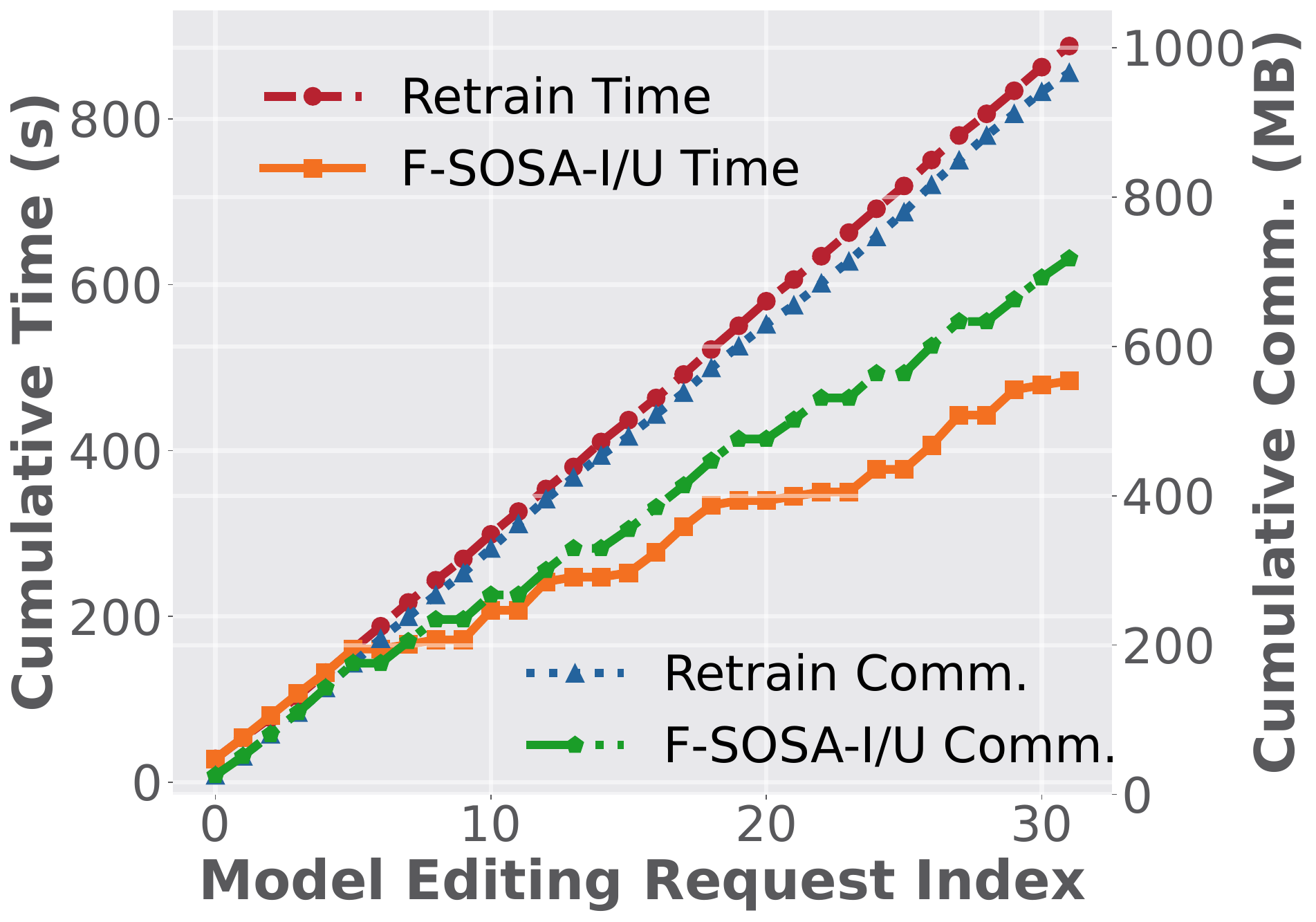}

    \includegraphics[width=0.45\linewidth]{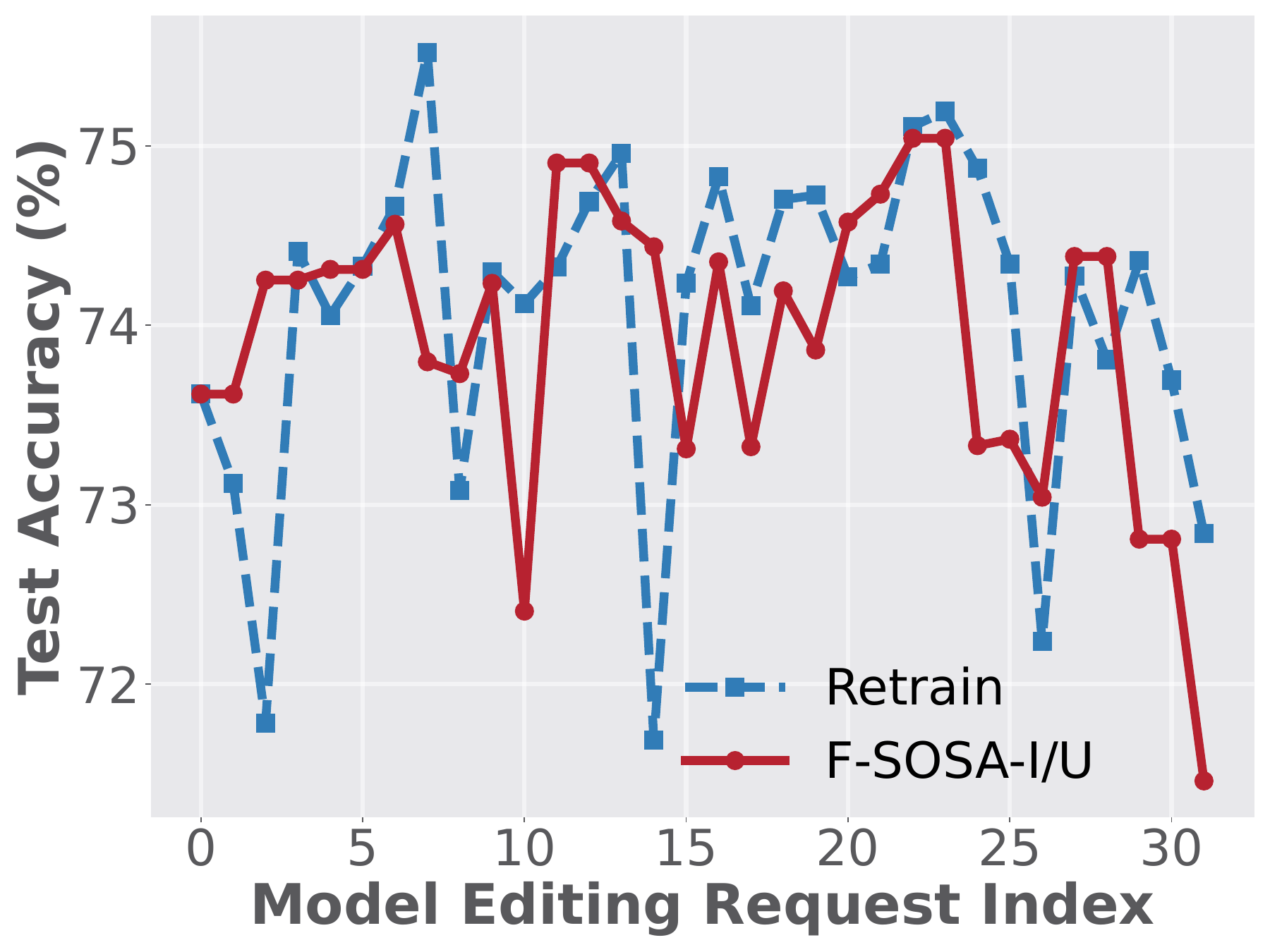}
    \hfill
    \includegraphics[width=0.48\linewidth]{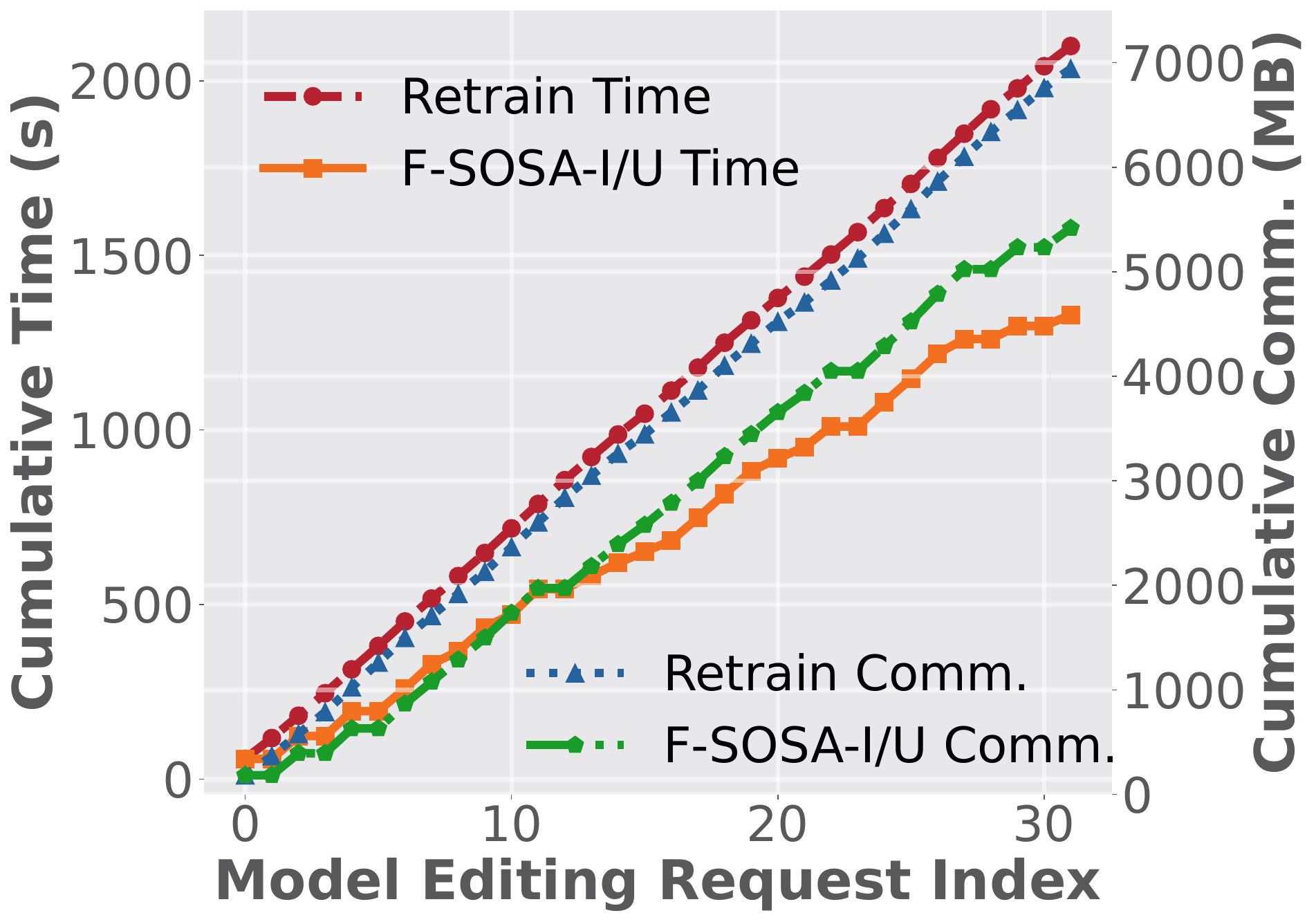}

    \includegraphics[width=0.45\linewidth]{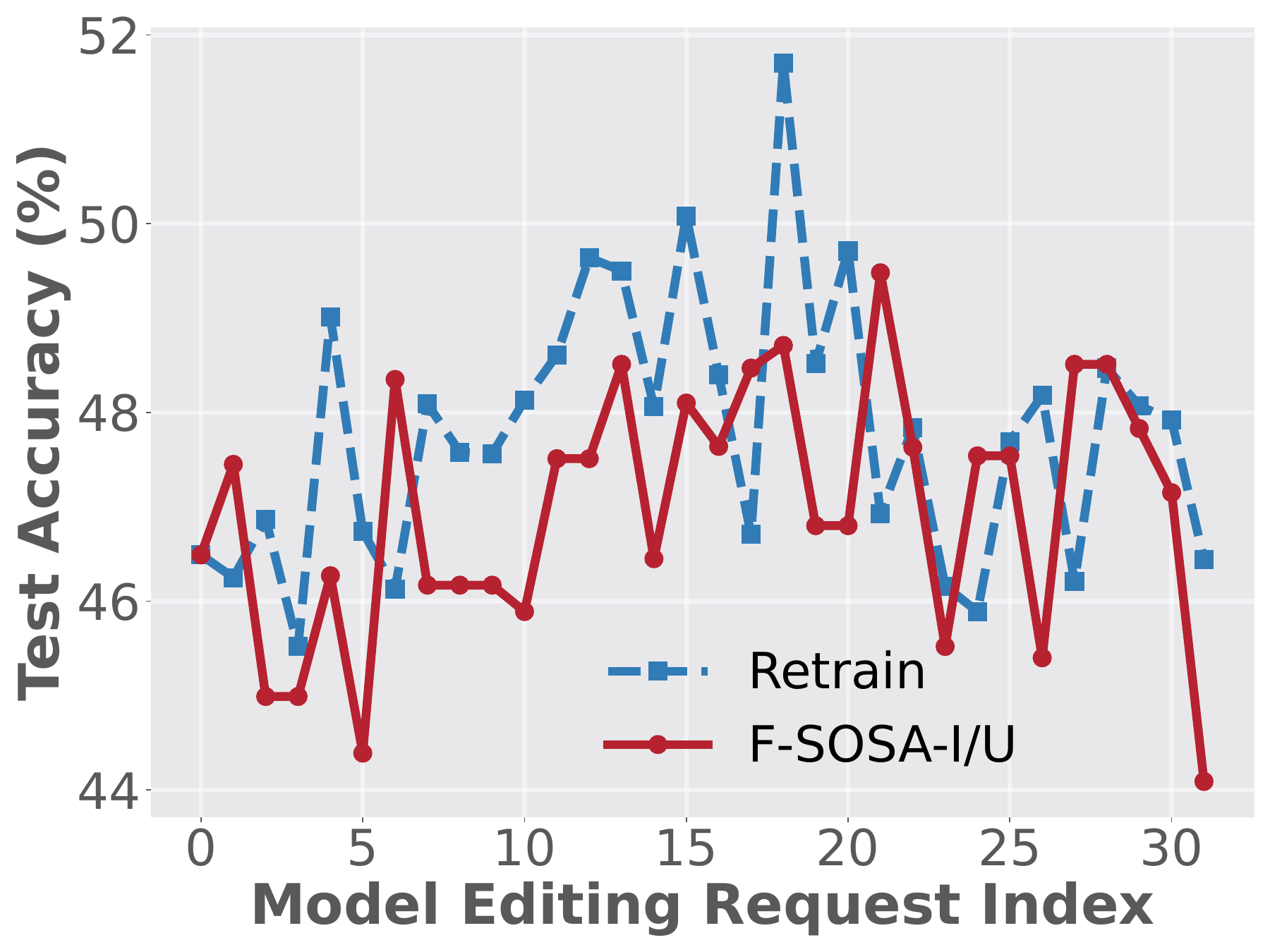}
    \hfill
    \includegraphics[width=0.48\linewidth]{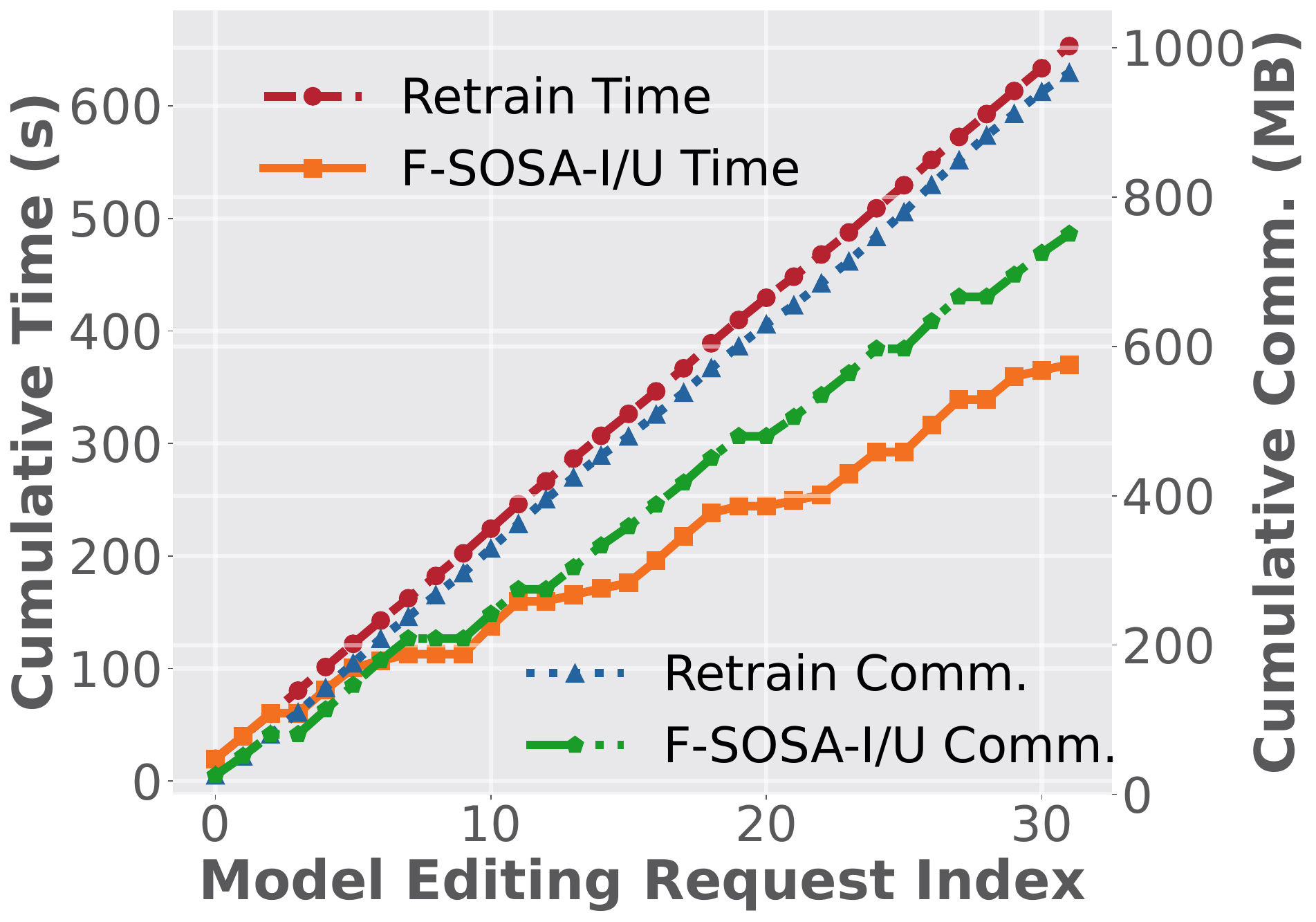}

    \caption{Test accuracy (left) and cumulative cost (right) of F-SOSA-I/U vs. Retrain under simulator-generated model editing requests. Rows: MNIST, SVHN, FEMNIST, CIFAR-10 (top to bottom).}
    \label{fig:my-figure4}
\end{figure}

\section{Experiments}\label{sec-experiments}
We empirically evaluate our framework with focus on: (1) one-shot learning performance, (2) model editing accuracy and efficiency, (3) trade-offs controlled by the stability level $\rho$, and (4) resilience to membership inference attacks. We also include a realistic case study based on vehicle mobility simulation.

\subsection{Experimental Setup}

\subsubsection{Datasets and Models}
We use MNIST~\cite{lecun1998gradient}, SVHN~\cite{netzer2011reading}, CIFAR-10~\cite{krizhevsky2009learning}, and FEMNIST from LEAF~\cite{caldas2018leaf}. For MNIST/SVHN/CIFAR-10, we simulate client heterogeneity using Dirichlet partitioning ($\alpha\in\{0.1, 0.5, 0.8\}$; default $\alpha{=}0.5$). FEMNIST, as a FL dataset, naturally exhibits non-IID characteristics; we follow~\cite{liu2024fedlpa}, sampling 10\% of users and grouping into $M$ clients. Unless otherwise specified, we use the two-layer CNN from~\cite{jhunjhunwala2024fedfisher} with K-FAC Fisher approximation. To evaluate F-SOSA beyond the default classification setting, we additionally consider ResNet18 on CIFAR-10 and YOLOv8s on the real-world driving dataset BDD100K; detailed results are reported in the Appendix.

\subsubsection{Baselines}
We compare F-SOSA with three one-shot FL baselines: FedFisher~\cite{jhunjhunwala2024fedfisher}, which uses client-specific Fisher information for model fusion; FedLPA~\cite{liu2024fedlpa}, which constructs local posteriors via layer-wise Laplace approximation; and Dense~\cite{zhang2022dense}, which performs data-free distillation using GAN-generated samples.
For dynamic-client experiments, we use a fixed-ratio implementation of F-SOSA-I/U to accommodate population changes that may extend beyond the bounded range assumed in the theoretical analysis.
Specifically, we set $P_t=\lfloor\rho M(t)\rfloor$, where $M(t)$ is the current active client population. This keeps the participation fraction approximately constant and adapts the aggregation size to population changes. 
For both unlearning and incremental learning, Retrain reruns F-SOSA on the updated client set using the same current subset size $P_t$.
For unlearning, we additionally compare with FedEraser~\cite{liu2021federaser}, which reconstructs the model using calibrated gradient updates; FedRR~\cite{liu2022right}, which uses diagonal Fisher information with momentum correction; and FATS-CU~\cite{tao2024communication}, which stabilizes FedAvg for rapid unlearning. For incremental learning, we also compare with Fine-Tuning, which adapts the global model on the new client's data using a tuned number of epochs selected from $\{5,10,\dots,50\}$. All methods are evaluated under the one-shot constraint for fair comparison.

\subsubsection{Hyperparameters}
We follow the settings in~\cite{jhunjhunwala2024fedfisher}, using learning rate $\eta=0.01$ and default local epoch number $K=50$. 
For learning experiments, we set $M=20$ clients; for model editing, we use at most $100$ clients and initialize with $50$ active clients, while the scalability experiments in the Appendix consider client populations from $20$ to $500$. The stability level is set to $\rho=0.5$ unless otherwise specified.

\subsubsection{Evaluation Metrics}
We report test accuracy, training loss, total wall-clock time, and total communication bytes, including global-model broadcasts, local-model uploads, and Fisher-information uploads. Time and communication efficiency measure the average accuracy gain per unit of time and communication, respectively.

\subsection{Experimental Results}

\subsubsection{Performance of F-SOSA in Heterogeneous Settings}

We evaluate F-SOSA under varying data heterogeneity levels ($\alpha \in \{0.1, 0.5, 0.8\}$) on MNIST/SVHN/CIFAR-10, and include FEMNIST as an inherently heterogeneous FL dataset. We compare against three state-of-the-art one-shot FL baselines. As shown in Table~\ref{tab:my-table1} and Table~\ref{tab:my-table2}, F-SOSA consistently achieves the lowest communication cost across all datasets and heterogeneity levels, leading to the highest communication efficiency. This benefit arises from its client subsampling mechanism, which reduces aggregation overhead. Time cost is also substantially lower, often by over $50\%$ compared to FedLPA and Dense, while remaining comparable to FedFisher in most cases. In terms of accuracy, F-SOSA matches or exceeds FedFisher under moderate heterogeneity ($\alpha = 0.5, 0.8$), and consistently outperforms Dense and FedLPA. Under high heterogeneity ($\alpha = 0.1$), its performance remains robust with minimal degradation. These results demonstrate that F-SOSA offers strong utility-efficiency trade-offs, maintaining competitive accuracy while substantially reducing communication and computation costs.

\subsubsection{Impact and Selection of Hyperparameters} 

We examine how the stability parameter $\rho$ and local epoch number $K$ affect F-SOSA's utility and efficiency. 
Figure~\ref{fig:my-figure2} shows that increasing $\rho$ improves accuracy, as more clients are aggregated, enhancing generalization. When $\rho = 1.0$, F-SOSA recovers FedFisher and achieves the highest accuracy. However, this comes at a cost of increased time and communication, especially on larger datasets like FEMNIST. A moderate setting ($\rho = 0.5$) provides a strong trade-off between utility and efficiency, and is used by default.
As shown in Figure~\ref{fig:my-figure3}, larger $K$ reduces training loss, indicating better local fitting. However, test accuracy peaks around $K = 30$ or $50$, beyond which overfitting may reduce generalization. Meanwhile, time cost grows linearly with $K$. We adopt $K = 50$ as default, balancing generalization and efficiency across datasets.

\subsubsection{Performance of F-SOSA-I/U}

We evaluate F-SOSA-I/U on CIFAR-10 using a fixed sequence of randomly generated client join and leave requests. All methods are tested on the same sequence for fair comparison. Figure~\ref{fig:my-figure-ulil} presents the results. Note that initial accuracy and overhead differ across unlearning baselines due to their learning methods: FedEraser, FedRR, and FATS-CU use FedAvg, while F-SOSA-U and Retrain use F-SOSA.
F-SOSA-I matches Retrain in accuracy while consistently outperforming Fine-Tuning by up to $18.6\%$. It also exhibits the lowest accuracy fluctuation ($2.28\%$), compared to $3.20\%$ (Retrain) and $9.12\%$ (Fine-Tuning), benefiting from its verification mechanism that avoids unnecessary re-aggregation. In contrast, Fine-Tuning overfits to newly joined clients, degrading generalization. F-SOSA-I reduces the final cumulative communication cost by $39.73\%$ compared with Retrain. Fine-Tuning incurs the lowest communication and time costs, but suffers from significant utility loss. 
F-SOSA-U reaches accuracy comparable to Retrain and surpasses FedEraser/FedRR/FATS-CU by up to $10\%$. This advantage arises from Fisher-aware aggregation. While FedEraser/FedRR/FATS-CU incur slightly lower time costs, their performance drops notably. F-SOSA-U also reduces accuracy fluctuation and maintains high efficiency by avoiding redundant recomputation.

\subsubsection{Trade-off Between Editing Utility and Efficiency w.r.t. Stability Parameter $\rho$}

We study how the stability parameter $\rho$ influences the utility-efficiency trade-off in F-SOSA-I/U across datasets, with results in Figure~\ref{fig:my-figure5}. As $\rho$ increases, accuracy improves due to greater client participation, which enhances generalization. However, this comes at the cost of increased time and communication overhead from larger aggregation sets. Notably, small values of $\rho$ (e.g., 0.1--0.3) yield significant accuracy gains with minimal efficiency loss, suggesting that editing benefits from a small number of well-chosen clients. Around $\rho = 0.5$, F-SOSA-I/U achieves a favorable balance between accuracy and resource usage across all datasets.

\subsubsection{Resistance to Membership Inference Attacks}

We assess unlearning effectiveness via membership inference attacks (MIA) on the global models produced by different methods on FEMNIST and CIFAR-10 (Table~\ref{tab:my-table3}). F-SOSA-U achieves near-random metrics (accuracy, precision, AUC around 50\%), closely matching Retrain across both datasets. This suggests that the model cannot reliably distinguish whether a sample was seen during training, confirming successful unlearning. 
On CIFAR-10, F-SOSA-U shows even smaller deviations from random guessing compared to other baselines, some of which exceed 5\%, indicating a lower risk of memorization leakage. Given its much lower time and communication costs, F-SOSA-U offers an effective and efficient solution for one-shot federated unlearning.

\subsubsection{F-SOSA Performance Across Model Architectures}
Experiments with ResNet18 on CIFAR-10 and YOLOv8s on BDD100K extend the default CNN evaluation to deeper architectures. Across varying data heterogeneity, F-SOSA maintains competitive utility with lower communication and computation costs (Appendix~\ref{app:model}).

\subsubsection{Scalability with the Number of Clients}
We evaluate F-SOSA-I/U with total client populations ranging from $20$ to $500$, using $15$ sequential incremental-learning and $15$ sequential unlearning requests at each scale. Across the evaluated range, F-SOSA-I/U maintain accuracy close to Retrain while consistently reducing communication and computation costs. The efficiency gains remain substantial even with $500$ clients, showing that selective editing continues to provide benefits as the client population grows. Detailed results are reported in the Appendix~\ref{app:scalability}.

\subsection{Case Study Using Simulation of Urban Mobility (SUMO)}
\label{sec:SUMO}
We further evaluate F-SOSA-I/U using the Simulation of Urban MObility (SUMO)~\cite{lopez2018microscopic}, focusing on a selected city center region in the Luxembourg scenario~\cite{codeca2017luxembourg} (see Figure~\ref{fig:simulation}). Editing requests are triggered by vehicle movements during the 08:00 morning rush hour, simulating dynamic client join/leave events. For this case study, we map each vehicle arrival to a new client contribution and each vehicle departure to an unlearning request issued before disconnection, thereby generating a mobility-driven editing sequence.
We apply F-SOSA-I/U and Retrain to the resulting request sequence across four datasets. Figure~\ref{fig:my-figure4} reports the test accuracy after each request and the cumulative time/communication costs. F-SOSA-I/U match the accuracy of Retrain while maintaining smaller fluctuations. On SVHN, for instance, Retrain exhibits swings over 5\%, whereas F-SOSA-I/U keep variations within 3.96\% across all datasets. This robustness stems from client subsampling and Bernoulli sampling, which avoid unnecessary updates when edits do not affect the aggregation set. In terms of efficiency, F-SOSA-I/U reduce cumulative time and communication overhead by $36.66\%$–$53.84\%$ and $21.97\%$–$25.76\%$, respectively, where the reported communication cost includes all model transfers and Fisher-information uploads.
In summary, F-SOSA-I/U deliver model editing with near-optimal accuracy and significantly improved efficiency, showing their practical benefits in dynamic environments.
We further replay the same mobility-driven editing sequence on a YOLOv8s object-detection model trained on BDD100K; the corresponding results are reported in the Appendix~\ref{app:driving_case}.

\section{Conclusions}\label{sec-conclusions}

We present a principled framework for exact model editing in one-shot FL, enabling both client unlearning and incremental learning. At its core is F-SOSA, a stable learning algorithm that supports efficient post-training edits. Together with the editing algorithms F-SOSA-U and F-SOSA-I, our method achieves statistical exactness while preserving model utility and efficiency. We provide theoretical guarantees and comprehensive experiments demonstrating its effectiveness across standard benchmarks and dynamic edge scenarios.


\bibliographystyle{IEEEtran}
\bibliography{references}



\newpage
\onecolumn

\appendices
\section{Proofs of Auxiliary Lemmas}
\label{app:aux_proofs}

\begin{proof}[Proof of Lemma~\ref{le-model-dist}]
    Part 1) is directly from \cite[Lemma~10]{jhunjhunwala2024fedfisher}. For Part 2), we first apply convexity of the squared norm to bound $\|\boldsymbol{\bar{W}}_\mathcal{P} - \boldsymbol{W}^{(0)}\|_2^2$ by the average $\|\boldsymbol{\tilde{W}}_i - \boldsymbol{W}^{(0)}\|_2^2$ over $i \in \mathcal{P}$. 
    \begin{flalign*}
        \left\|\boldsymbol{\overline{W}}_\mathcal{P}-\boldsymbol{W}^{(0)}\right\|_2^2\le\frac{1}{\rho M}\sum_{l\in\mathcal{P}}\left\|\boldsymbol{\tilde{W}}_l-\boldsymbol{W}^{(0)}\right\|_2^2,
    \end{flalign*}
    where each summation term can be bounded using Part~1. Expressing the resulting sum via the block-diagonal Gram matrix $\operatorname{diag}(\mathbf{H}_1^{\infty}, \dots, \mathbf{H}_M^{\infty})$, we have
    \begin{flalign*}
        &\left\|\boldsymbol{\overline{W}}_\mathcal{P}-\boldsymbol{W}^{(0)}\right\|_2^2\\
        \le&\frac{2\sum_{l\in\mathcal{P}}\boldsymbol{y}_l^\top(\mathbf{H}_l^{\infty})^{-1}\boldsymbol{y}_l}{\rho M}\!+\!O\left(\!\frac{N^2\sigma^2}{\lambda_0^2\delta^2}\!\right)\!+\!\frac{\operatorname{poly}(N, \lambda_0^{-1}, \delta^{-1})}{m^\frac{1}{2}\sigma}\\
        \le& \frac{2\sum_{i=1}^M\boldsymbol{y}_l^\top(\mathbf{H}_l^{\infty})^{-1}\boldsymbol{y}_l}{\rho M}\!+\!O\left(\!\frac{N^2\sigma^2}{\lambda_0^2\delta^2}\!\right)\!+\!\frac{\operatorname{poly}(N, \lambda_0^{-1}, \delta^{-1})}{m^\frac{1}{2}\sigma}\\
        =&\frac{2\boldsymbol{y}^\top(M\operatorname{dag}(\mathbf{H}_1^{\infty},\mathbf{H}_2^{\infty},\cdots,\mathbf{H}_M^{\infty}))^{-1}\boldsymbol{y}}{\rho}\\
        &+O\left(\frac{N^2\sigma^2}{\lambda_0^2\delta^2}\right)+\frac{\operatorname{poly}(N, \lambda_0^{-1}, \delta^{-1})}{m^\frac{1}{2}\sigma}\\
        \le&\frac{2\boldsymbol{y}^\top(\mathbf{H}^\infty)^{-1}\boldsymbol{y}}{\rho}+O\left(\frac{N^2\sigma^2}{\lambda_0^2\delta^2}\right)+\frac{\operatorname{poly}(N, \lambda_0^{-1}, \delta^{-1})}{m^\frac{1}{2}\sigma}.
    \end{flalign*}
    The proof concludes by the setting of $m$ and $\sigma$.
\end{proof}

\begin{proof}[Proof of Lemma~\ref{le-lse}]
We bound $\|\tilde{f}(\boldsymbol{\bar{W}}_{\mathcal{P}},\boldsymbol{x}_{ij})-\boldsymbol{\tilde{y}}\|_2^2$ by first expanding the difference as a sum of squared inner products and then applying $\|\phi_{\boldsymbol{a}}(\boldsymbol{W}, \boldsymbol{x})\|_2 \le 1$ from Lemma~\ref{le-bounded_phi}.
\begin{flalign*}
    &\|\tilde{f}(\boldsymbol{\bar{W}}_{\mathcal{P}},\boldsymbol{x}_{ij})\!-\!\boldsymbol{\tilde{y}}\|_2^2    =\sum_{i=1}^M\sum_{j=1}^n(\phi_{\boldsymbol{a}}(\boldsymbol{\tilde{W}}_i, x_{ij})^{\top}(\boldsymbol{\bar{W}}_{\mathcal{P}}\!-\!\boldsymbol{\tilde{W}}_i))^2\\
    \le&\!\sum_{i=1}^M\!\sum_{j=1}^n\|\phi_{\boldsymbol{a}}(\!\boldsymbol{\tilde{W}}_i, x_{ij}\!)\|_2^2\|\boldsymbol{\bar{W}}_{\mathcal{P}}\!-\!\boldsymbol{\tilde{W}}_i\|_2^2 
    \!\le\!\sum_{i=1}^M\!\sum_{j=1}^n\|\boldsymbol{\bar{W}}_{\mathcal{P}}\!-\!\boldsymbol{\tilde{W}}_i\|_2^2
\end{flalign*}
Triangle inequality gives a further bound via $\|\boldsymbol{\tilde{W}}_i - \boldsymbol{W}^{(0)}\|_2$ and $\|\boldsymbol{\bar{W}}_{\mathcal{P}} - \boldsymbol{W}^{(0)}\|_2$. 
\begin{flalign*}
    &\|\tilde{f}(\boldsymbol{\bar{W}}_{\mathcal{P}},\boldsymbol{x}_{ij})\!-\!\boldsymbol{\tilde{y}}\|_2^2\\
    \le&2\sum_{i=1}^M\sum_{j=1}^n(\|\boldsymbol{\bar{W}}_{\mathcal{P}}\!-\!\boldsymbol{W}^{(0)}\|_2^2\!+\!\|\boldsymbol{\tilde{W}}_i\!-\!\boldsymbol{W}^{(0)}\|_2^2)
\end{flalign*}
By Lemma~\ref{le-model-dist} and the setting of $m, \eta, \delta$, we obtain
\begin{flalign*}
    &\|\tilde{f}(\boldsymbol{\bar{W}}_{\mathcal{P}},\boldsymbol{x}_{ij})\!-\!\boldsymbol{\tilde{y}}\|_2^2\\
    \le& \sum_{i=1}^M\sum_{j=1}^n O(\frac{\boldsymbol{y}^\top(\mathbf{H}^\infty)^{-1}\boldsymbol{y}}{\rho}+\boldsymbol{y}_i^\top(\mathbf{H}^{\infty}_i)^{-1}\boldsymbol{y}_i)\\
    =& O(N(\frac{\boldsymbol{y}^\top(\mathbf{H}^\infty)^{-1}\boldsymbol{y}}{\rho}\!+\!\boldsymbol{y}^\top\frac{(\operatorname{dag}(\mathbf{H}_1^{\infty},\mathbf{H}_2^{\infty},\cdots,\mathbf{H}_M^{\infty}))^{-1}}{M}\boldsymbol{y}))\\
    =& O\left(\frac{N}{\rho}\boldsymbol{y}^\top(\mathbf{H}^\infty)^{-1}\boldsymbol{y}\right),
\end{flalign*}
\end{proof}

\begin{proof}[Proof of Lemma~\ref{le-H0}]
By Assumption~\ref{asp-degen} and Cauchy’s interlacing theorem, we have $\lambda_{\min}(\mathbf{H}^{\infty}_{\mathcal{P}}) \ge \lambda_0$. For any entry $(k,l)$, 
\begin{equation*}
    \mathbf{H}^{(0)}_{\mathcal{P}}(k,l)\!=\!\frac{\boldsymbol{x}_{k'k''}^\top\boldsymbol{x}_{l'l''}}{m}\!\sum_{r=1}^m \!\mathds{1} \{ \boldsymbol{x}_{k'k''}^\top \boldsymbol{w}^{(0)}_r \!\geq\! 0 \}\mathds{1} \{ \boldsymbol{x}_{l'l''}^\top \boldsymbol{w}^{(0)}_r \!\geq\! 0 \},
\end{equation*}
where $k'=i_{\lceil k/n \rceil}$, $k''=k-\lceil k/n \rceil+1$, $l'=i_{\lceil l/n \rceil}, l''=l-\lceil l/n \rceil+1$. Thus $\mathbf{H}^{(0)}_{\mathcal{P}}(k,l)$ is an average of $m$ bounded independent variables. Applying Hoeffding’s inequality yields 
\begin{equation*}
    | \mathbf{H}^{(0)}_{\mathcal{P}}(k,l) - \mathbf{H}^{\infty}_{\mathcal{P}}(k,l) | \le 4 \sqrt{ \log(\rho N / \delta) / m }.
\end{equation*}
for all $(k,l)$ with probability at least $1 - \delta$. Therefore,
\begin{equation*}
    \|\mathbf{H}^{(0)}_{\mathcal{P}} - \mathbf{H}^{\infty}_{\mathcal{P}}\|_2^2\le\|\mathbf{H}^{(0)}_{\mathcal{P}} - \mathbf{H}^{\infty}_{\mathcal{P}}\|_F^2 \le = 16 (\rho N)^2 \log(\rho N / \delta)/m.
\end{equation*}
Setting $m=\Omega\left(\frac{(\rho N)^2\log(\rho N/\delta)}{\lambda_0^2}\right)$ completes the proof.
\end{proof} 

\begin{proof}[Proof of Lemma~\ref{le-eigen}]
The $(k,l)$-th entry of $\mathbf{\tilde{H}}_{\mathcal{P}}$ is given by 
\begin{equation*}
    \frac{\boldsymbol{x}_{k^\prime k^{\prime\prime}}^\top\boldsymbol{x}_{k^\prime k^{\prime\prime}}}{m} \sum_{r=1}^m \mathds{1}\{\boldsymbol{x}_{kk''}^\top \boldsymbol{\tilde{w}}_{k',r} \ge 0\} \cdot \mathds{1}\{\boldsymbol{x}_{ll''}^\top \boldsymbol{\tilde{w}}_{l',r} \ge 0\},
\end{equation*}
where $k'=i_{\lceil k/n \rceil}$, $k''=k-\lceil k/n \rceil+1$, $l'=i_{\lceil l/n \rceil}, l''=l-\lceil l/n \rceil+1$. We have
\begin{flalign*}
    &\mathbb{E}[|\mathbf{\tilde{H}}_{\mathcal{P}}(k,l)-\mathbf{H}_\mathcal{P}^{(0)}(k,l)|]\\
    =&\mathbb{E}[\frac{|\boldsymbol{x}_{k^\prime k^{\prime\prime}}^\top\boldsymbol{x}_{k^\prime k^{\prime\prime}}|}{m}
    \!\sum\nolimits_{r=1}^m\!\mathds{1}\{\mathds{1}\!\{\boldsymbol{x}_{k^\prime k^{\prime\prime}}^\top\boldsymbol{\tilde{w}}_{k^\prime,r}\!\ge\! 0\} \mathds{1}\!\{\boldsymbol{x}_{l^\prime l^{\prime\prime}}^\top\boldsymbol{\tilde{w}}_{l^\prime,r}\!\ge\! 0\}\\
    &\neq\mathds{1}\!\{\!\boldsymbol{x}_{k^\prime k^{\prime\prime}}^\top\boldsymbol{w}^{(0)}_{r}\!\ge\! 0\} \mathds{1}\!\{\!\boldsymbol{x}_{l^\prime l^{\prime\prime}}^\top\boldsymbol{w}^{(0)}_{r}\!\ge\! 0\}\!\}]\\
    \le&\mathbb{E}[\frac{1}{m}
    \!\sum\nolimits_{r=1}^m\mathds{1}\{\mathds{1}\!\{\boldsymbol{x}_{k^\prime k^{\prime\prime}}^\top\boldsymbol{\tilde{w}}_{k^\prime,r}\!\ge\! 0\} \mathds{1}\!\{\boldsymbol{x}_{l^\prime l^{\prime\prime}}^\top\boldsymbol{\tilde{w}}_{l^\prime,r}\!\ge\! 0\}\\
    &\neq\mathds{1}\!\{\!\boldsymbol{x}_{k^\prime k^{\prime\prime}}^\top\boldsymbol{w}^{(0)}_{r}\!\ge\! 0\} \mathds{1}\!\{\!\boldsymbol{x}_{l^\prime l^{\prime\prime}}^\top\boldsymbol{w}^{(0)}_{r}\!\ge\! 0\}\!\}]\\
    \le&\mathbb{E}[\frac{1}{m}
    \!\sum\nolimits_{r=1}^m(\mathds{1}\{\mathds{1}\!\{\boldsymbol{x}_{k^\prime k^{\prime\prime}}^\top\boldsymbol{\tilde{w}}_{k^\prime,r}\!\ge\! 0\}\neq\mathds{1}\!\{\boldsymbol{x}_{l^\prime l^{\prime\prime}}^\top\boldsymbol{\tilde{w}}_{l^\prime,r}\!\ge\! 0\}\}\\
    &+\mathds{1}\{\mathds{1}\!\{\!\boldsymbol{x}_{k^\prime k^{\prime\prime}}^\top\boldsymbol{w}^{(0)}_{r}\!\ge\! 0\}\neq\mathds{1}\!\{\!\boldsymbol{x}_{l^\prime l^{\prime\prime}}^\top\boldsymbol{w}^{(0)}_{r}\!\ge\! 0\}\!\})]\\
    \le&\mathbb{E}[\frac{1}{m}\sum\nolimits_{r=1}^m\left(\mathds{1}\left\{\mathcal{E}_{k^\prime k^{\prime\prime}r}(R_0)\right\}+\mathds{1}\left\{\mathcal{E}_{l^\prime l^{\prime\prime}r}(R_0)\right\}\right)]\\
    =&\frac{1}{m}\sum\nolimits_{r=1}^m\mathbb{P}(\mathcal{E}_{k^\prime k^{\prime\prime}r}(R_0))+\mathbb{P}(\mathcal{E}_{l^\prime l^{\prime\prime}r}(R_0))
    \le\frac{4R_0}{\sqrt{2\pi}\sigma}.
\end{flalign*}
Then, we bound the Frobenius norm,
\begin{flalign*}
    \mathbb{E}[\|\mathbf{\tilde{H}}_{\mathcal{P}}-\mathbf{H}_\mathcal{P}^{(0)}\|_{\text{F}}]
    &\le\mathbb{E}[\sum_{k,l\in\mathcal{P}}|\mathbf{\tilde{H}}_{\mathcal{P}}(k,l)-\mathbf{H}_\mathcal{P}^{(0)}(k,l)|]\\
    &\le 4(\rho N)^2R_0/(\sqrt{2\pi}\sigma).
\end{flalign*}
Using Markov’s inequality, with probability $1 - \delta$, we get 
\begin{equation*}
    \|\mathbf{\tilde{H}}_{\mathcal{P}}-\mathbf{H}_\mathcal{P}^{(0)}\|_{\text{F}}\le\mathbb{E}[\|\mathbf{\tilde{H}}_{\mathcal{P}}-\mathbf{H}_\mathcal{P}^{(0)}\|_{\text{F}}]/\delta\le4(\rho N)^2R_0/(\sqrt{2\pi}\delta\sigma).
\end{equation*}
Thus, 
\begin{equation*}
    \|\mathbf{\tilde{H}}_{\mathcal{P}}-\mathbf{H}_\mathcal{P}^{(0)}\|_{2}\le\|\mathbf{\tilde{H}}_{\mathcal{P}}-\mathbf{H}_\mathcal{P}^{(0)}\|_{\text{F}}\le4(\rho N)^2R_0/(\sqrt{2\pi}\delta\sigma).
\end{equation*}
By Lemma~\ref{le-H0} and the settng of $m$, we have
\begin{equation*}
    \lambda_{\min}(\mathbf{\tilde{H}}_{\mathcal{P}})\ge\lambda_{\min}(\mathbf{H}_\mathcal{P}^{(0)})-4(\rho N)^2R_0/(\sqrt{2\pi}\delta\sigma)\ge\lambda_0/2.
\end{equation*}
To get upper bound of $\lambda_{\max}\left(\mathbf{\tilde{H}}_{\mathcal{P}}\right)$, we have
\begin{flalign*}
    &\|\mathbf{\tilde{H}}_{\mathcal{P}}\|_2
    =\|\mathbf{\tilde{A}}_{\mathcal{P}}\mathbf{\tilde{A}}_{\mathcal{P}}^\top\|_2=\|\mathbf{\tilde{A}}_{\mathcal{P}}^\top\mathbf{\tilde{A}}_{\mathcal{P}}\|_2\\
    =&\|\sum\nolimits_{i\in\mathcal{P}}\sum\nolimits_{j=1}^n\phi_{\boldsymbol{a}}(\boldsymbol{\tilde{W}}_i,x_{ij})\phi_{\boldsymbol{a}}(\boldsymbol{\tilde{W}}_i,x_{ij})^\top\|_2\\
    \le&\sum\nolimits_{i\in\mathcal{P}}\sum\nolimits_{j=1}^n\|\phi_{\boldsymbol{a}}\left(\boldsymbol{\tilde{W}}_i,x_{ij}\right)\phi_{\boldsymbol{a}}\left(\boldsymbol{\tilde{W}}_i,x_{ij}\right)^\top\|_2\\
    \le&\sum\nolimits_{i\in\mathcal{P}}\sum\nolimits_{j=1}^n\|\phi_{\boldsymbol{a}}\left(\boldsymbol{\tilde{W}}_i,x_{ij}\right)\|_2^2\le \rho N,
\end{flalign*}
where the last inequality uses Lemma~\ref{le-bounded_phi}.
\end{proof}

\begin{proof}[Proof of Lemma~\ref{le-fishdev}]
By \cite[Lemma~1]{jhunjhunwala2024fedfisher}, $\boldsymbol{\bar{W}}_{\mathcal{P}}^{\text{F}}$ is equivalent to the solution obtained via a gradient descent (GD) procedure. Let $\boldsymbol{\tilde{W}}_{\mathcal{P}}^{[t]}$ be the $t$-th iterate of the GD procedure with the initialization $\boldsymbol{\tilde{W}}_{\mathcal{P}}^{[0]}=\boldsymbol{\bar{W}}_{\mathcal{P}}$. 
Define the vectorized output $\tilde{f}(t)\in\mathbb{R}^N$ by stacking $\tilde{f}(\boldsymbol{\tilde{W}}_{\mathcal{P}}^{[t]}, x_{ij})$ across all $i\in[M], j\in[n]$:
\begin{equation*}
    \tilde{f}(t)\!\coloneqq\![\tilde{f}(\boldsymbol{\tilde{W}}_{\mathcal{P}}^{[t]}, x_{11}), \tilde{f}(\boldsymbol{\tilde{W}}_{\mathcal{P}}^{[t]}, x_{12}),\cdots,\tilde{f}(\boldsymbol{\tilde{W}}_{\mathcal{P}}^{[t]}, x_{Mn})]^\top\!\in\! \mathbb{R}^N.
\end{equation*}
Thus, GD produces the recurrence
\begin{equation}\label{eq-recurrence}
    \boldsymbol{\tilde{W}}_{\mathcal{P}}^{[t]} = \boldsymbol{\tilde{W}}_{\mathcal{P}}^{[t-1]} - \frac{\tilde{\eta}}{n} \mathbf{\tilde{A}}_{\mathcal{P}}^\top (\tilde{f}(t-1) - \boldsymbol{\tilde{y}}).
\end{equation}
which implies the prediction evolution
\begin{equation}\label{eq-predict-recurrence}
    \tilde{f}(t) = \tilde{f}(t-1) - \frac{\tilde{\eta}}{n} \mathbf{\tilde{H}}_{\mathcal{P}} (\tilde{f}(t-1) - \boldsymbol{\tilde{y}}).
\end{equation}
Unrolling the recurrence, we obtain for any $t$, 
\begin{equation}\label{eq-predict}
    \tilde{f}(t)- \boldsymbol{\tilde{y}}=(\mathbf{I} - \frac{\tilde{\eta}}{n} \mathbf{\tilde{H}}_{\mathcal{P}})(\tilde{f}(t-1)- \boldsymbol{\tilde{y}})=(\mathbf{I} - \frac{\tilde{\eta}}{n} \mathbf{\tilde{H}}_{\mathcal{P}})^t(\tilde{f}(0)- \boldsymbol{\tilde{y}}).
\end{equation}
The distance between the final global model and the initialization is expressed as a gradient series: 
\begin{equation}\label{eq-model-distance}
    \boldsymbol{\bar{W}}_{\mathcal{P}}^{\text{F}} - \boldsymbol{\bar{W}}_{\mathcal{P}} = -\frac{\tilde{\eta}}{n} \sum_{t=0}^\infty \mathbf{\tilde{A}}_{\mathcal{P}}^\top (\mathbf{I} - \frac{\tilde{\eta}}{n} \mathbf{\tilde{H}}_{\mathcal{P}})^t (\tilde{f}(0) - \boldsymbol{\tilde{y}}).
\end{equation}
Let $\mathbf{U}\coloneqq \frac{\tilde{\eta}}{n}(\mathbf{I} - \frac{\tilde{\eta}}{n} \mathbf{\tilde{H}}_{\mathcal{P}})^t$ be the geometric series matrix. Then
\begin{flalign*}
    \|\boldsymbol{\bar{W}}_{\mathcal{P}}^{\text{F}} - \boldsymbol{\bar{W}}_{\mathcal{P}}\|_2^2 
    &= \|\mathbf{\tilde{A}}_{\mathcal{P}}^\top \mathbf{U} (\tilde{f}(0) - \boldsymbol{\tilde{y}})\|_2^2\\
    &\le\left(\tilde{f}(0)-\boldsymbol{\tilde{y}}\right)^\top\mathbf{U}\mathbf{\tilde{H}}_{\mathcal{P}}\mathbf{U}\left(\tilde{f}(0)-\boldsymbol{\tilde{y}}\right).
\end{flalign*}
Using eigen-decomposition $\mathbf{\tilde{H}}_{\mathcal{P}} = \mathbf{V D V}^\top$, we obtain 
\begin{equation*}
    \mathbf{U}=\frac{\tilde{\eta}}{n}\mathbf{V}\sum_{t=1}^\infty(\mathbf{I}-\frac{\tilde{\eta}}{n}\mathbf{D})^t\mathbf{V}^\top=\mathbf{V}\mathbf{D}^{-1}\mathbf{V}^\top=\mathbf{\tilde{H}}_{\mathcal{P}}^{-1},
\end{equation*}
The squared distance becomes 
\begin{equation*}
    \|\boldsymbol{\bar{W}}_{\mathcal{P}}^{\text{F}} - \boldsymbol{\bar{W}}_{\mathcal{P}}\|_2^2
    \le(\tilde{f}(0) - \boldsymbol{\tilde{y}})^\top \mathbf{\tilde{H}}_{\mathcal{P}}^{-1} (\tilde{f}(0) - \boldsymbol{\tilde{y}})
    \le \frac{2}{\lambda_0} \| \tilde{f}(0) - \boldsymbol{\tilde{y}} \|_2^2,
\end{equation*}
where the last inequality is due to Lemma~\ref{le-eigen}. Finally, Lemma~\ref{le-lse} bounds $\| \tilde{f}(0) - \boldsymbol{\tilde{y}} \|_2^2$, which leads to the desired results.
\end{proof}

\begin{proof}[Proof of Lemma~\ref{le-fishopt}]
By~\eqref{eq-model-distance}, for each neuron $r$, we have, 
\begin{flalign*}
    &\|\boldsymbol{\bar{w}}_{\mathcal{P},r}^{\text{F}}-\boldsymbol{\bar{w}}_{\mathcal{P},r}\|_2\\
    =&\|\frac{\tilde{\eta}}{n}\mathbf{\tilde{A}}_{\mathcal{P}}[(r-1)p+1:rp]^\top(\mathbf{I} - \frac{\tilde{\eta}}{n} \mathbf{\tilde{H}}_{\mathcal{P}})^t(\tilde{f}(0)-\boldsymbol{\tilde{y}})\|_2\\
    \le&\frac{2}{\lambda_0}\|\mathbf{\tilde{A}}_{\mathcal{P}}[(r-1)p+1:rp]^\top\|_2\|\tilde{f}(0)-\boldsymbol{\tilde{y}}\|_2\\
    \le&\frac{2}{\lambda_0}\|\mathbf{\tilde{A}}_{\mathcal{P}}[(r-1)p+1:rp]^\top\|_{\text{F}}\|\tilde{f}(0)-\boldsymbol{\tilde{y}}\|_2\\
    \le&\frac{2\sqrt{\rho N}}{\lambda_0\sqrt{m}}\|\tilde{f}(0)-\boldsymbol{\tilde{y}}\|_2,
\end{flalign*}
where we use the notation $\boldsymbol{A}[x:y]$ to denote the submatrix of $\boldsymbol{A}$ having row numbers $x$ to $y$, and the last inequality follows from the definition of $\mathbf{\tilde{A}}_{\mathcal{P}}$ and the data normalization assumption (Assumption~\ref{asp-normalization}). Combing with Lemma~\eqref{le-lse}, we completes the proof.
\end{proof}

\begin{proof}[Proof of Lemma~\ref{le-mismatch-size}]
Each mismatch requires a sign flip between $\boldsymbol{\bar{w}}_{\mathcal{P},r}^{\text{F}}$ or $\boldsymbol{\tilde{w}}_{i,r}$ and the shared initialization $\boldsymbol{w}_r^{(0)}$. 
\begin{flalign*}
    &\mathbb{E}[|S_{ij}^{\text{F}\perp}|]  =\sum_{r=1}^m\mathbb{P}(\mathds{1}\{\boldsymbol{x}_{ij}^{\top}\boldsymbol{\bar{w}}_{\mathcal{P},r}^\text{F}\ge0\}\neq\mathds{1}\{\boldsymbol{x}_{ij}^{\top}\boldsymbol{\tilde{w}}_{i,r}\ge0\})\\
    \le& \sum_{r=1}^m\mathbb{P}(\{\mathds{1}\{\boldsymbol{x}_{ij}^{\top}\boldsymbol{\bar{w}}_{\mathcal{P},r}^\text{F}\ge0\}\neq\mathds{1}\{\boldsymbol{x}_{ij}^{\top}\boldsymbol{w}^{(0)}_{r}\ge0\}\}\\
    &\cup \{\mathds{1}\{\boldsymbol{x}_{ij}^{\top}\boldsymbol{\tilde{w}}_{i,r}\ge0\}\neq\mathds{1}\{\boldsymbol{x}_{ij}^{\top}\boldsymbol{w}^{(0)}_{r}\ge0\}\})\\
    \le&\sum_{r=1}^m[\mathbb{P}(\{\mathds{1}\{\boldsymbol{x}_{ij}^{\top}\boldsymbol{\bar{w}}_{\mathcal{P},r}^\text{F}\ge0\}\neq\mathds{1}\{\boldsymbol{x}_{ij}^{\top}\boldsymbol{w}^{(0)}_{r}\ge0\}\})\\
    &+ \mathbb{P}(\{\mathds{1}\{\boldsymbol{x}_{ij}^{\top}\boldsymbol{\tilde{w}}_{i,r}\ge0\}\neq\mathds{1}\{\boldsymbol{x}_{ij}^{\top}\boldsymbol{w}^{(0)}_{r}\ge0\}\})]\\
    \le&\sum_{r=1}^m(\mathbb{P}(\mathcal{E}_{ijr}(R_0+R_1))+\mathbb{P}(\mathcal{E}_{ijr}(R_0)))
    \le\frac{4m(R_0+R_1)}{\sqrt{2\pi}\sigma}.
\end{flalign*}
By applying Markov's inequality with failure probability $\delta/N$ and taking a union bound over all $(i,j)$ pairs, we obtain the desired bound. 
\end{proof}

\begin{proof}[Proof of Lemma~\ref{le-general-err2}]
By \cite[Lemma~2]{hu2023generalization}, $\sup_{f \in \mathcal{F}} \{ L_{\mathcal{T}}^{f,\ell} - L_S^{f,\ell} \}$ is less than
\begin{equation*}
\begin{aligned}
    & 2 \mathbb{E}_{\{D_i, \iota_i\}} [ \sup_{f \in \mathcal{F}} \frac{1}{M} \sum_{i=1}^M \iota_i \mathbb{E}_{\boldsymbol{z}_i \sim D_i} [\ell(f(\boldsymbol{W}, \boldsymbol{x}_i), y_i)] ]+ c\sqrt{\frac{\log\frac{2}{\delta}}{2P}}.
\end{aligned}
\end{equation*}
Let $\tilde{\ell}(\boldsymbol{W}, \boldsymbol{z}) \coloneqq \ell(f(\boldsymbol{W}, \boldsymbol{x}), y)$, the expectation term is
\begin{flalign*}
    &\mathbb{E}_{\substack{D_1,  \cdots, D_P\\ \iota_1, \cdots, \iota_P}}[\sup_{f\in\mathcal{F}}\frac{1}{M}\sum_{i=1}^M\iota_i\mathbb{E}_{\boldsymbol{z}_i\sim D_i}[\ell(f(\boldsymbol{W}, \boldsymbol{x}_i), y_i)]]\\
    = & \mathbb{E}_{\substack{\boldsymbol{z}_{11},  \cdots, \boldsymbol{z}_{Pn}\\ \iota_1, \cdots, \iota_P}}[\sup_{f\in\mathcal{F}}\frac{1}{M}\sum_{i=1}^M\frac{1}{n}\sum_{j=1}^n\iota_i\tilde{\ell}(\boldsymbol{W}, \boldsymbol{z}_{ij})]\\
    \le & \mathbb{E}_{\substack{\boldsymbol{z}_{11},  \cdots, \boldsymbol{z}_{Pn}\\ \iota_{11}, \cdots, \iota_{Pn}}}\![\sup_{f\in\mathcal{F}}\frac{1}{P}\sum_{i\in\mathcal{P}}\frac{1}{n}\sum_{j=1}^n\iota_{ij}\tilde{\ell}(\boldsymbol{W}, \boldsymbol{z}_{ij})]\\
    = & \mathbb{E}_{\boldsymbol{z}_{11},  \cdots, \boldsymbol{z}_{Pn}}[\mathbb{E}_{\iota_{11}, \cdots, \iota_{Pn}}[\sup_{f\in\mathcal{F}}\frac{1}{P}\sum_{i\in\mathcal{P}}\frac{1}{n}\sum_{j=1}^n\iota_{ij}\tilde{\ell}(\boldsymbol{W}, \boldsymbol{z}_{ij})]]\\
    = & \mathbb{E}_{\boldsymbol{z}_{11},  \cdots, \boldsymbol{z}_{Pn}}[\mathcal{R}(\mathcal{F})]
    \le \mathcal{R}(\mathcal{F}) + c\sqrt{\log(2/\delta)/(2N)},
\end{flalign*}
where the last inequality follows from McDiarmid's inequality, since replacing one point $\boldsymbol{z}_{ij}$ changes $\mathcal{R}(\mathcal{F})$ by at most $c/N$.
\end{proof}

\section{Additional Experimental Details and Results}
{
\subsection{Additional Experimental Details}
\subsubsection{Dataset and Models}
Beyond the default two-layer CNN, we further evaluate F-SOSA using an ImageNet-pretrained ResNet18 on CIFAR-10 and YOLOv8s~\cite{jocher2023yolov8} on BDD100K~\cite{yu2020bdd100k}. BDD100K is a large-scale real-world driving dataset containing 100,000 40-second videos recorded at 720p and 30 FPS across diverse road, weather, and lighting conditions. It supports ten autonomous-driving tasks, with bounding-box annotations for ten traffic-related object categories. For both settings, we simulate client heterogeneity using the same Dirichlet partitioning scheme ($\alpha\in\{0.1, 0.5, 0.8\}$; default $\alpha{=}0.5$).

\subsubsection{Hyperparameters}
For experiments on CIFAR-10 with ResNet18 and BDD100K with YOLOv8s, we set the local learning rate to $\eta=0.001$ and the total number of clients to $M=20$. For CIFAR-10, we set the number of local epochs to $K=50$ and the batch size to $64$; for BDD100K, we use $K=20$ local epochs, a batch size of $32$, and an input resolution of $640\times640$. Both settings employ SGD, with momentum $0.9$. For BDD100K, we additionally apply weight decay of $5\times10^{-4}$, a three-epoch warm-up, and cosine learning-rate decay.

\subsubsection{Metrics}
For the object-detection task on BDD100K, we use mAP$_{50\text{-}95}$ as the primary metric to evaluate the detection utility of F-SOSA and the baselines. It is computed by evaluating average precision (AP) at IoU thresholds from $0.50$ to $0.95$ in increments of $0.05$ and averaging the results across all thresholds and ten object categories. This metric jointly reflects object recognition and bounding-box localization accuracy, with higher values indicating better overall detection performance.

\subsection{F-SOSA Performance Across Model Architectures}
\label{app:model}

To demonstrate that F-SOSA generalizes across models with different capacities, we evaluate network architectures beyond the two-layer CNN, including ResNet18 on CIFAR-10 and YOLOv8s on the real-world autonomous-driving dataset BDD100K. We further evaluate F-SOSA under varying data heterogeneity levels ($\alpha \in \{0.1, 0.5, 0.8\}$) to assess its robustness across both model architectures and non-IID settings. As shown in Table~\ref{tab:my-table4}, on CIFAR-10 with ResNet18, F-SOSA consistently achieves the lowest communication cost of $462.44$ MB, reducing communication by $47.6\%$ relative to FedFisher, $48.3\%$ relative to Dense, and $94.3\%$ relative to FedLPA. It also achieves the lowest time cost across all heterogeneity levels, with reductions of up to $43.0\%$, $56.0\%$, and $91.1\%$ compared with FedFisher, Dense, and FedLPA, respectively. Although F-SOSA trails the best-performing baseline under high heterogeneity ($\alpha=0.1$), it still outperforms Dense by $7.13$ percentage points. At $\alpha=0.5$ and $0.8$, its accuracy increases to $65.98\%$ and $69.87\%$, narrowing the gap with FedFisher to $3.68$ and $2.74$ percentage points, respectively, while continuing to outperform Dense. Given its consistently lowest time and communication costs, these modest accuracy gaps represent a reasonable utility-efficiency trade-off. On BDD100K with YOLOv8s, F-SOSA maintains competitive detection performance, achieving an mAP$_{50\text{-}95}$ only 0.20 percentage points below the best result at $\alpha=0.1$, tying for the best result at $\alpha=0.5$, and obtaining the highest mAP$_{50\text{-}95}$ of $21.30\%$ at $\alpha=0.8$. Meanwhile, it consistently incurs the lowest communication cost of $0.46$ GB, representing reductions of $47.1\%$ over FedFisher, $47.7\%$ over Dense, and $95.2\%$ over FedLPA. F-SOSA also achieves the lowest time cost at $\alpha=0.1$ and $0.5$ and the second-lowest at $\alpha=0.8$, while delivering the highest communication efficiency under all settings. These results demonstrate that F-SOSA remains effective with larger
classification and detection models, offering a favorable utility--efficiency
trade-off across different model architectures, tasks, and degrees of data
heterogeneity.

\subsection{Scalability of F-SOSA-U and F-SOSA-I Across Client Populations}
\label{app:scalability}

To assess the scalability of F-SOSA-I/U with increasing edge-device density,
we consider five total client populations,
$\{20,50,100,200,500\}$. For each population, we execute $15$ sequential
incremental-learning requests and $15$ sequential unlearning requests.
Table~\ref{tab:my-table5} reports the mean accuracy, time-cost savings
relative to full retraining, and communication cost normalized to Retrain,
which together capture utility preservation and system efficiency.

For incremental learning, F-SOSA-I maintains accuracy comparable to Retrain
across the evaluated client scales. Even at $500$ clients, the accuracy gap
between F-SOSA-I and Retrain remains within $2.46$ percentage points across
all four datasets. In the $50$-client case, F-SOSA-I achieves the highest accuracy on MNIST, SVHN, and CIFAR-10, reaching $94.35\%$, $68.95\%$, and $59.99\%$, respectively. As the system scales to $200$ clients, it achieves the best performance on FEMNIST and CIFAR-10, with accuracies of $32.33\%$ and $46.27\%$, respectively. For the largest population of $500$ clients, the accuracy gap between F-SOSA-I and Retrain remains within $2.46$ percentage points across all four datasets, while F-SOSA-I outperforms Fine-tuning by $2.45$--$10.90$ percentage points. This utility advantage is particularly pronounced on the inherently heterogeneous FEMNIST dataset: as the client population increases from $100$ to $500$, F-SOSA-I maintains an accuracy of approximately $31\%$, whereas Fine-tuning remains around $20\%$. In terms of efficiency, the normalized communication cost of F-SOSA-I never exceeds $0.87\times$ that of Retrain and decreases to $0.67\times$ with $500$ clients, corresponding to a $33\%$ reduction. Although Fine-tuning incurs marginally lower communication overhead at some scales, these savings come at the cost of substantial accuracy degradation, particularly on MNIST, SVHN, and FEMNIST. F-SOSA-I therefore provides a more favorable utility--efficiency trade-off. Its time-cost savings remain positive across all configurations, ranging from $15.84\%$ to $49.04\%$. Even with a population of $500$ clients, F-SOSA-I saves $23.26\%$--$49.04\%$ of the retraining time, indicating that its computational benefits persist as the edge system becomes increasingly dense.

\renewcommand{\arraystretch}{1.3}
\setlength{\arrayrulewidth}{0.3mm}  
\begin{table*}[t]
\centering
\caption{Performance comparison of F-SOSA and baselines across CIFAR-10 and BDD100K under varying Dirichlet heterogeneity $\alpha$. Best results are in bold; second-best results are underlined.}
\label{tab:my-table4}
\resizebox{\textwidth}{!}{%
\begin{tabular}{cccccccccccccccc}
\hline
\multirow{2}{*}{Dataset} & \multirow{2}{*}{Metrics} & \multicolumn{4}{c}{$\alpha=0.1$} &  & \multicolumn{4}{c}{$\alpha=0.5$} &  & \multicolumn{4}{c}{$\alpha=0.8$} \\ \cline{3-6} \cline{8-11} \cline{13-16}
&  & FedFisher & FedLPA & Dense & F-SOSA &  & FedFisher & FedLPA & Dense & F-SOSA &  & FedFisher & FedLPA & Dense & F-SOSA \\ \hline
\multirow{5}{*}{\shortstack{CIFAR10\\(Resnet18)}} & Accuracy (\%) & \textbf{46.60} & {\ul 45.18} & 28.04 & 35.17 &  & \textbf{69.66} & {\ul 68.41} & 57.48 & 65.98 &  & \textbf{72.61} & {\ul 70.55} & 63.92 & 69.87 \\
& Time (s) & {\ul 361.02} & 2493.05 & 479.82 & \textbf{278.22} &  & {\ul 362.14} & 2355.83 & 474.14 & \textbf{210.34} &  & {\ul 365.50} & 1713.55 & 473.76 & \textbf{208.27} \\
& Comm. (MB) & {\ul 882.16} & 8066.21 & 895.13 & \textbf{462.44} &  & {\ul 882.16} & 8066.21 & 895.13 & \textbf{462.44} &  & {\ul 882.16} & 8066.21 & 895.13 & \textbf{462.44} \\
& Time Efficiency & \textbf{0.129} & 0.018 & 0.058 & {\ul 0.126} &  & {\ul 0.192} & 0.029 & 0.121 & \textbf{0.314} &  & {\ul 0.199} & 0.041 & 0.135 & \textbf{0.335} \\
& Comm. Efficiency & {\ul 0.053} & 0.006 & 0.031 & \textbf{0.076} &  & {\ul 0.079} & 0.008 & 0.064 & \textbf{0.143} &  & {\ul 0.082} & 0.009 & 0.071 & \textbf{0.151} \\ \hline
\multirow{5}{*}{\shortstack{BDD100K\\(YOLOv8s)}} & mAP$_{50\text{-}95}$ (\%) & {\ul 20.82} & \textbf{20.98} & 20.76 & 20.78 &  & \textbf{21.18} & {\ul 21.17} & 19.54 & \textbf{21.18} &  & 21.21 & {\ul 21.22} & 21.13 & \textbf{21.30} \\
& Time (min) & 39.88 & {\ul 36.41} & 37.28 & \textbf{32.52} &  & 37.77 & {\ul 31.32} & 34.67 & \textbf{28.96} &  & 37.69 & \textbf{31.10} & 37.11 & {\ul 34.31} \\
& Comm. (GB) & {\ul 0.87} & 9.55 & 0.88 & \textbf{0.46} &  & {\ul 0.87} & 9.55 & 0.88 & \textbf{0.46} &  & {\ul 0.87} & 9.55 & 0.88 & \textbf{0.46} \\
& Time Efficiency & 0.52 & {\ul 0.58} & 0.56 & \textbf{0.64} &  & 0.56 & {\ul 0.68} & 0.61 & \textbf{0.73} &  & 0.56 & \textbf{0.68} & 0.57 & {\ul 0.62} \\
& Comm. Efficiency & {\ul 23.93} & 2.20 & 23.59 & \textbf{45.17} &  & {\ul 24.34} & 2.22 & 22.20 & \textbf{46.04} &  & {\ul 24.38} & 2.22 & 24.01 & \textbf{46.30} \\ \hline
\end{tabular}%
}
\end{table*}

Regarding unlearning, F-SOSA-U exhibits similarly strong scalability. Across all evaluated client populations, its accuracy closely tracks or exceeds that of Retrain while consistently outperforming most unlearning baselines. The results for $100$ clients show that F-SOSA-U achieves the highest accuracy on MNIST, SVHN, and CIFAR-10. When the client population reaches 500, F-SOSA-U surpasses Retrain on SVHN, FEMNIST, and CIFAR-10 and remains within $0.89$ percentage points of Retrain on MNIST. In this largest-scale regime, F-SOSA-U also outperforms the strongest competing unlearning baseline by $1.24$--$4.85$ percentage points across the four datasets. Its normalized communication cost remains below that of Retrain throughout the evaluated range. With $500$ clients, F-SOSA-U reduces communication by $47\%$ relative to Retrain and FATS-CU and by approximately $60\%$ relative to FedEraser and FedRR, whose communication costs exceed $1.33\times$ that of Retrain. Moreover, F-SOSA-U continues to save $38.92\%$--$43.35\%$ of the retraining time at the maximum evaluated scale. Although FedEraser and FATS-CU achieve greater time-cost savings, their
accuracy degrades more noticeably as the client population increases.
Consequently, F-SOSA-U provides a more favorable overall trade-off among
utility, communication efficiency, and computational cost.

In addition, we note that the absolute accuracy on MNIST, SVHN, and CIFAR-10 decreases as the number of clients increases, whereas this trend is not observed on the inherently non-IID FEMNIST dataset. This behavior is a consequence of our data-partitioning strategy: increasing the number of clients reduces the amount of training data assigned to each client, thereby limiting the local data available for one-shot FL and subsequent editing. Importantly, F-SOSA-I and F-SOSA-U still achieve the best or near-best relative accuracy compared with their respective baselines across different client scales. This suggests that the decline is primarily associated with the reduced
amount of local data per client, rather than a deterioration unique to
F-SOSA-I/U.

Overall, these results show that the relative communication and computation benefits of F-SOSA-I/U remain substantial as the client population increases. Across the evaluated range of up to $500$ clients, both editing algorithms preserve utility close to Retrain while continuing to reduce its communication and computation costs. This indicates that the efficiency advantage of selective exact editing remains robust under increasingly dense client populations.

\subsection{Real-World Driving Case Study with BDD100K and SUMO Mobility}
\label{app:driving_case}

\begin{table*}[t]
\centering
\caption{Scalability over 15 incremental-learning or unlearning requests. We report mean accuracy (Acc.), mean time-cost saving (T-save), and mean communication cost normalized to Retrain (Comm.; Retrain $=1.00$). T-save is relative to Retrain; higher is better for Acc. and T-save, while lower is better for Comm. Best results are in bold; second-best results are underlined.}
\label{tab:my-table5}
\begingroup
\scriptsize
\setlength{\tabcolsep}{2.2pt}
\renewcommand{\arraystretch}{1.04}
\begin{tabular*}{\textwidth}{@{\extracolsep{\fill}}cl*{12}{c}@{}}
\toprule
\multirow{2}{*}{Clients} & \multirow{2}{*}{Method} & \multicolumn{3}{c}{MNIST} & \multicolumn{3}{c}{SVHN} & \multicolumn{3}{c}{FEMNIST} & \multicolumn{3}{c}{CIFAR10} \\ \cmidrule(lr){3-5} \cmidrule(lr){6-8} \cmidrule(lr){9-11} \cmidrule(lr){12-14}
 & & Acc. (\%) & T-save (\%) & Comm. ($\times$) & Acc. (\%) & T-save (\%) & Comm. ($\times$) & Acc. (\%) & T-save (\%) & Comm. ($\times$) & Acc. (\%) & T-save (\%) & Comm. ($\times$) \\
\midrule
\multicolumn{14}{c}{\textit{(a) Incremental learning}} \\
\midrule
\multirow{3}{*}{20} & Retrain & \textbf{96.33} & 0.00 & 1.00 & {\ul 73.07} & 0.00 & 1.00 & \textbf{27.06} & 0.00 & 1.00 & {\ul 68.51} & 0.00 & 1.00 \\
 & Fine-tuning & {\ul 95.74} & \textbf{79.68} & {\ul 0.78} & \textbf{74.67} & \textbf{81.48} & {\ul 0.78} & 21.65 & \textbf{69.13} & {\ul 0.78} & 67.86 & \textbf{78.89} & {\ul 0.78} \\
 & F-SOSA-I & 95.62 & {\ul 20.83} & \textbf{0.75} & 72.82 & {\ul 37.84} & \textbf{0.75} & {\ul 25.64} & {\ul 19.67} & \textbf{0.75} & \textbf{69.34} & {\ul 45.26} & \textbf{0.75} \\
\cmidrule(lr){1-14}
\multirow{3}{*}{50} & Retrain & {\ul 94.11} & 0.00 & 1.00 & {\ul 66.61} & 0.00 & 1.00 & \textbf{30.22} & 0.00 & 1.00 & {\ul 57.11} & 0.00 & 1.00 \\
 & Fine-tuning & 93.30 & \textbf{79.67} & \textbf{0.70} & 65.95 & \textbf{79.70} & \textbf{0.70} & 19.72 & \textbf{62.36} & \textbf{0.70} & 52.88 & \textbf{78.29} & \textbf{0.70} \\
 & F-SOSA-I & \textbf{94.35} & {\ul 21.64} & {\ul 0.72} & \textbf{68.95} & {\ul 38.63} & {\ul 0.72} & {\ul 30.09} & {\ul 20.47} & {\ul 0.72} & \textbf{59.99} & {\ul 46.08} & {\ul 0.72} \\
\cmidrule(lr){1-14}
\multirow{3}{*}{100} & Retrain & {\ul 92.81} & 0.00 & 1.00 & \textbf{62.55} & 0.00 & 1.00 & \textbf{31.24} & 0.00 & 1.00 & \textbf{56.01} & 0.00 & 1.00 \\
 & Fine-tuning & 91.32 & \textbf{72.55} & \textbf{0.68} & 55.83 & \textbf{79.64} & \textbf{0.68} & 20.28 & \textbf{64.89} & \textbf{0.68} & 47.70 & \textbf{77.37} & \textbf{0.68} \\
 & F-SOSA-I & \textbf{92.94} & {\ul 16.93} & {\ul 0.87} & {\ul 62.18} & {\ul 34.86} & {\ul 0.87} & {\ul 31.05} & {\ul 15.84} & {\ul 0.87} & {\ul 54.22} & {\ul 41.87} & {\ul 0.87} \\
\cmidrule(lr){1-14}
\multirow{3}{*}{200} & Retrain & \textbf{91.30} & 0.00 & 1.00 & \textbf{36.56} & 0.00 & 1.00 & {\ul 30.65} & 0.00 & 1.00 & {\ul 43.65} & 0.00 & 1.00 \\
 & Fine-tuning & 81.98 & \textbf{82.80} & \textbf{0.67} & 23.08 & \textbf{79.03} & \textbf{0.67} & 19.22 & \textbf{72.35} & \textbf{0.67} & 42.41 & \textbf{75.61} & \textbf{0.67} \\
 & F-SOSA-I & {\ul 90.22} & {\ul 18.25} & {\ul 0.87} & {\ul 35.94} & {\ul 36.04} & {\ul 0.87} & \textbf{32.33} & {\ul 17.16} & {\ul 0.87} & \textbf{46.27} & {\ul 43.18} & {\ul 0.87} \\
\cmidrule(lr){1-14}
\multirow{3}{*}{500} & Retrain & {\ul 85.38} & 0.00 & {\ul 1.00} & \textbf{19.60} & 0.00 & {\ul 1.00} & \textbf{31.88} & 0.00 & {\ul 1.00} & \textbf{35.65} & 0.00 & {\ul 1.00} \\
 & Fine-tuning & 75.52 & \textbf{80.26} & \textbf{0.67} & 14.77 & \textbf{79.95} & \textbf{0.67} & 20.15 & \textbf{76.82} & \textbf{0.67} & 30.74 & \textbf{74.48} & \textbf{0.67} \\
 & F-SOSA-I & \textbf{85.95} & {\ul 24.17} & \textbf{0.67} & {\ul 17.88} & {\ul 41.63} & \textbf{0.67} & {\ul 31.05} & {\ul 23.26} & \textbf{0.67} & {\ul 33.19} & {\ul 49.04} & \textbf{0.67} \\
\midrule
\multicolumn{14}{c}{\textit{(b) Unlearning}} \\
\midrule
\multirow{5}{*}{20} & Retrain & \textbf{95.73} & 0.00 & {\ul 1.00} & {\ul 65.04} & 0.00 & {\ul 1.00} & {\ul 26.12} & 0.00 & {\ul 1.00} & {\ul 64.75} & 0.00 & {\ul 1.00} \\
 & FedEraser & 93.95 & \textbf{78.64} & 1.35 & \textbf{68.48} & \textbf{79.12} & 1.35 & 21.17 & \textbf{77.85} & 1.35 & \textbf{67.74} & \textbf{78.31} & 1.35 \\
 & FedRR & 92.20 & 64.18 & 1.35 & 56.26 & 68.79 & 1.35 & 24.78 & 44.18 & 1.35 & 62.03 & 64.10 & 1.35 \\
 & FATS-CU & 88.72 & {\ul 76.92} & {\ul 1.00} & 61.25 & {\ul 77.36} & {\ul 1.00} & 20.11 & {\ul 75.94} & {\ul 1.00} & 61.85 & {\ul 76.58} & {\ul 1.00} \\
 & F-SOSA-U & {\ul 94.87} & 44.73 & \textbf{0.75} & 64.95 & 45.81 & \textbf{0.75} & \textbf{26.51} & 41.62 & \textbf{0.75} & 63.91 & 43.79 & \textbf{0.75} \\
\cmidrule(lr){1-14}
\multirow{5}{*}{50} & Retrain & \textbf{94.87} & 0.00 & {\ul 1.00} & \textbf{63.73} & 0.00 & {\ul 1.00} & {\ul 26.49} & 0.00 & {\ul 1.00} & 56.93 & 0.00 & {\ul 1.00} \\
 & FedEraser & 93.21 & \textbf{78.21} & 1.34 & 56.80 & \textbf{78.77} & 1.34 & 18.59 & \textbf{77.46} & 1.34 & \textbf{59.14} & \textbf{77.94} & 1.34 \\
 & FedRR & 91.97 & 63.76 & 1.34 & 39.11 & 68.31 & 1.34 & 21.22 & 43.76 & 1.34 & 56.10 & 63.69 & 1.34 \\
 & FATS-CU & 90.87 & {\ul 76.49} & {\ul 1.00} & 53.63 & {\ul 76.94} & {\ul 1.00} & 18.38 & {\ul 75.61} & {\ul 1.00} & 57.10 & {\ul 76.17} & {\ul 1.00} \\
 & F-SOSA-U & {\ul 94.28} & 44.16 & \textbf{0.87} & {\ul 62.88} & 45.22 & \textbf{0.87} & \textbf{28.01} & 41.08 & \textbf{0.87} & {\ul 57.86} & 43.25 & \textbf{0.87} \\
\cmidrule(lr){1-14}
\multirow{5}{*}{100} & Retrain & {\ul 92.26} & 0.00 & {\ul 1.00} & {\ul 58.58} & 0.00 & {\ul 1.00} & \textbf{30.68} & 0.00 & {\ul 1.00} & {\ul 51.64} & 0.00 & {\ul 1.00} \\
 & FedEraser & 91.31 & \textbf{77.83} & 1.34 & 45.50 & \textbf{78.39} & 1.34 & 23.82 & \textbf{77.02} & 1.34 & 51.57 & \textbf{77.56} & 1.34 \\
 & FedRR & 90.04 & 61.21 & 1.34 & 26.75 & 65.18 & 1.34 & 22.48 & 43.11 & 1.34 & 50.38 & 63.02 & 1.34 \\
 & FATS-CU & 90.24 & {\ul 76.12} & {\ul 1.00} & 44.65 & {\ul 76.58} & {\ul 1.00} & 23.51 & {\ul 75.18} & {\ul 1.00} & 50.67 & {\ul 75.83} & {\ul 1.00} \\
 & F-SOSA-U & \textbf{92.33} & 43.58 & \textbf{0.80} & \textbf{59.32} & 44.76 & \textbf{0.80} & {\ul 29.81} & 40.54 & \textbf{0.80} & \textbf{51.72} & 42.61 & \textbf{0.80} \\
\cmidrule(lr){1-14}
\multirow{5}{*}{200} & Retrain & \textbf{91.21} & 0.00 & {\ul 1.00} & \textbf{40.09} & 0.00 & {\ul 1.00} & {\ul 28.41} & 0.00 & {\ul 1.00} & \textbf{48.61} & 0.00 & {\ul 1.00} \\
 & FedEraser & 87.56 & \textbf{77.34} & 1.33 & 25.56 & \textbf{77.95} & 1.33 & 24.00 & \textbf{76.58} & 1.33 & 43.29 & \textbf{77.18} & 1.33 \\
 & FedRR & 86.95 & 59.84 & 1.33 & 19.59 & 62.46 & 1.33 & 25.90 & 42.42 & 1.33 & 40.06 & 62.74 & 1.33 \\
 & FATS-CU & 86.23 & {\ul 75.67} & {\ul 1.00} & 24.08 & {\ul 76.11} & {\ul 1.00} & 23.24 & {\ul 74.72} & {\ul 1.00} & 42.64 & {\ul 75.41} & {\ul 1.00} \\
 & F-SOSA-U & {\ul 90.28} & 42.91 & \textbf{0.67} & {\ul 39.62} & 44.13 & \textbf{0.67} & \textbf{29.09} & 39.86 & \textbf{0.67} & {\ul 46.22} & 41.94 & \textbf{0.67} \\
\cmidrule(lr){1-14}
\multirow{5}{*}{500} & Retrain & \textbf{86.62} & 0.00 & {\ul 1.00} & {\ul 19.66} & 0.00 & {\ul 1.00} & {\ul 30.45} & 0.00 & {\ul 1.00} & {\ul 33.71} & 0.00 & {\ul 1.00} \\
 & FedEraser & 80.99 & \textbf{76.82} & 1.33 & 19.59 & \textbf{77.46} & 1.33 & 25.11 & \textbf{76.11} & 1.33 & 31.90 & \textbf{76.73} & 1.33 \\
 & FedRR & 80.71 & 59.37 & 1.33 & 19.59 & 61.92 & 1.33 & 26.18 & 41.57 & 1.33 & 26.84 & 62.48 & 1.33 \\
 & FATS-CU & 79.63 & {\ul 75.13} & {\ul 1.00} & {\ul 19.66} & {\ul 75.62} & {\ul 1.00} & 25.38 & {\ul 74.29} & {\ul 1.00} & 31.71 & {\ul 74.96} & {\ul 1.00} \\
 & F-SOSA-U & {\ul 85.73} & 41.76 & \textbf{0.53} & \textbf{20.90} & 43.35 & \textbf{0.53} & \textbf{31.03} & 38.92 & \textbf{0.53} & \textbf{34.64} & 40.87 & \textbf{0.53} \\
\bottomrule
\end{tabular*}
\endgroup
\end{table*}

\begin{figure}[t]
    \centering
    \includegraphics[width=0.45\linewidth]{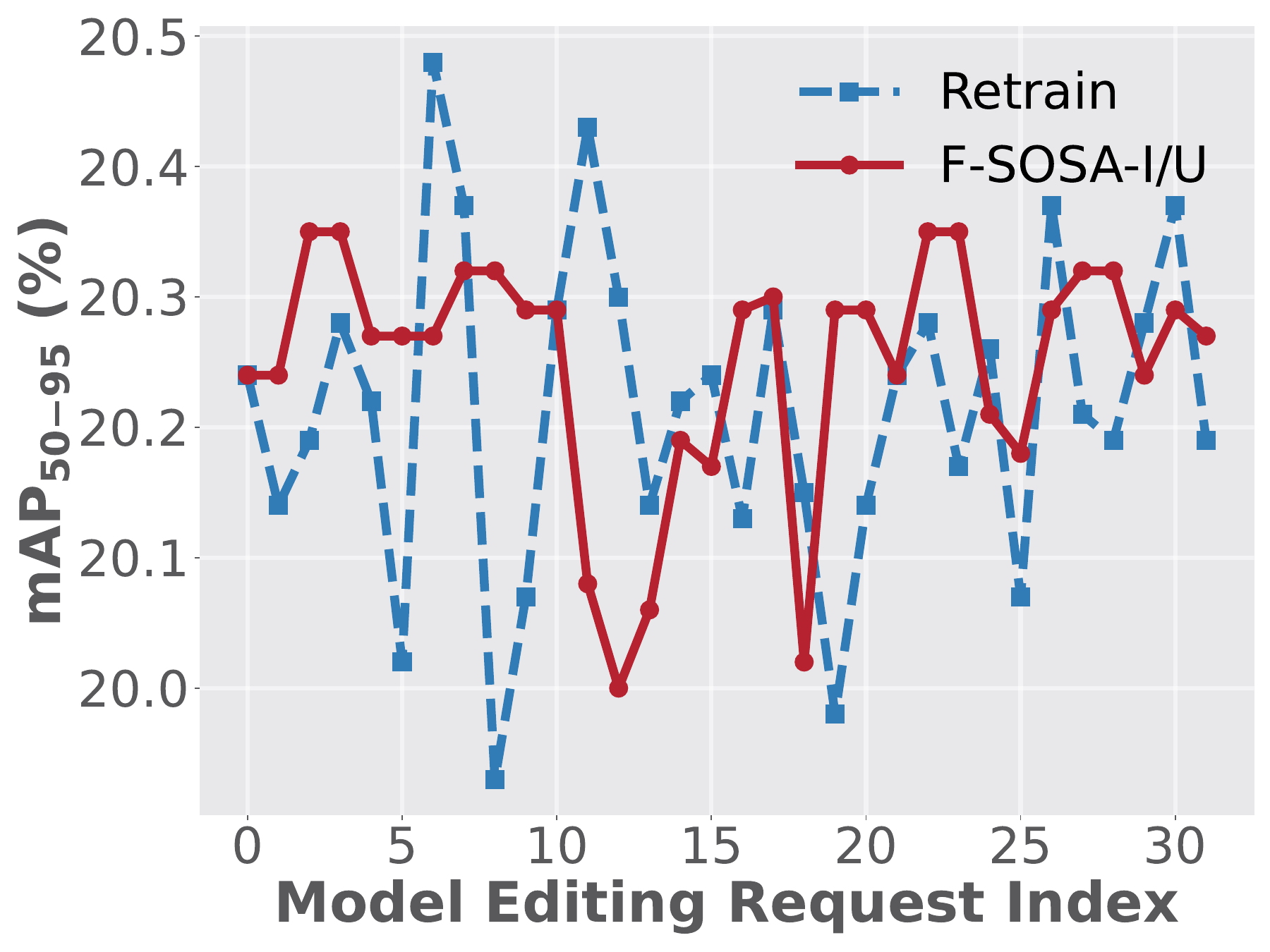}
    \hfill
    \includegraphics[width=0.45\linewidth]{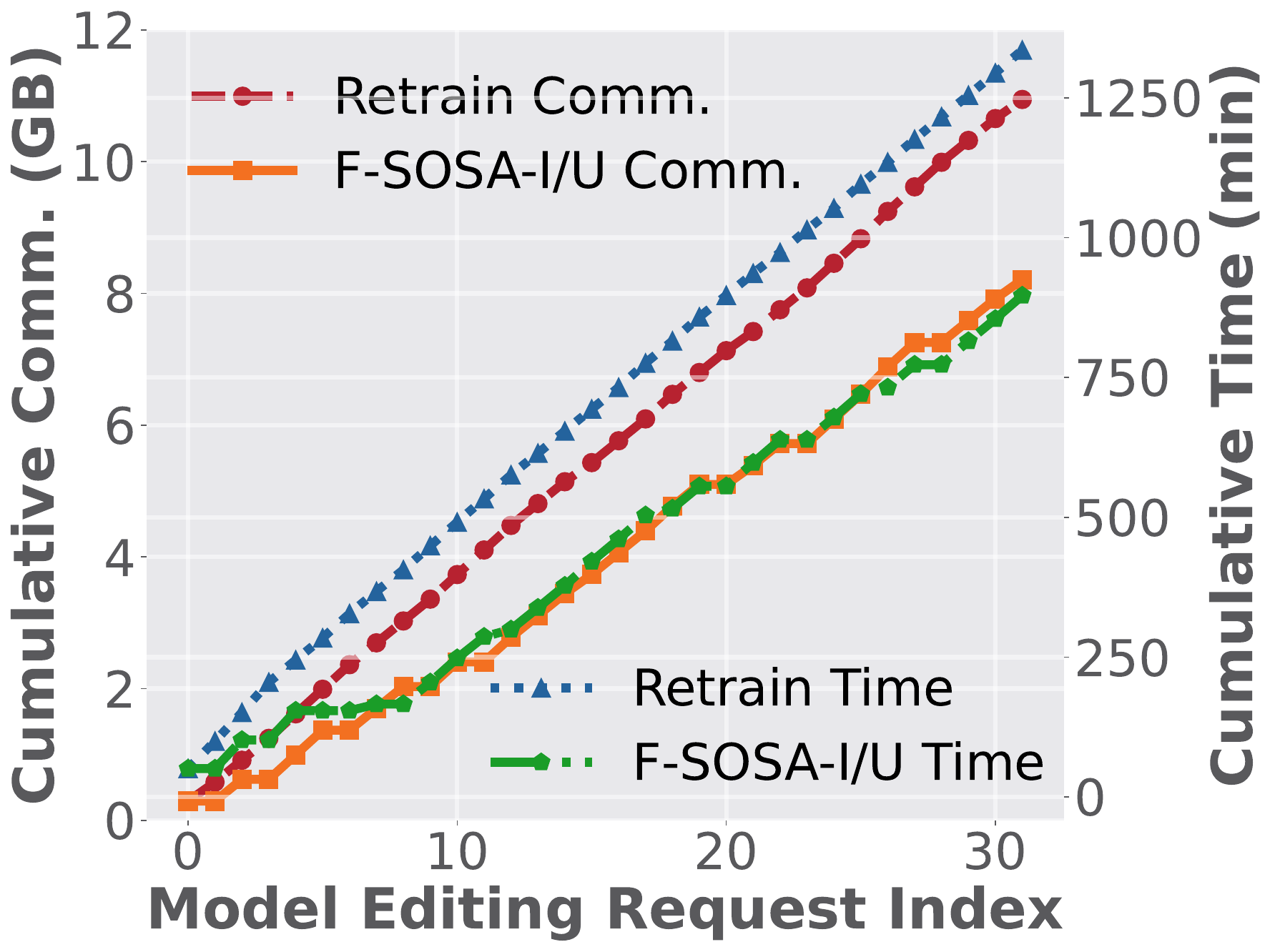}

    \caption{mAP$_{50\text{-}95}$ (left) and cumulative cost (right) of F-SOSA-I/U vs. Retrain under simulator-generated model editing requests on the real driving dataset BDD100K.}
    \label{fig:bdd100k-stream-editing}
\end{figure}

We further evaluate F-SOSA-I/U on a dynamic federated object-detection task by combining mobility traces generated by the Simulation of Urban MObility (SUMO) with the real-world BDD100K driving dataset. Following the mobility-trace setting described in section~\ref{sec:SUMO}, the trace produces $31$ sequential editing requests, comprising $16$ client arrivals and $15$ departures. We replay this request sequence on BDD100K using YOLOv8s and report the mAP$_{50\text{-}95}$ after each request, together with the cumulative computation time and communication cost, in Figure~\ref{fig:bdd100k-stream-editing}. F-SOSA-I/U closely tracks the utility of Retrain throughout the sequence, with a maximum mAP$_{50\text{-}95}$ difference of only $0.39$ percentage points. Across the $31$ editing requests, F-SOSA-I/U matches or outperforms Retrain after $19$ requests and achieves a slightly higher average mAP$_{50\text{-}95}$ of $20.25\%$, compared with $20.21\%$ for Retrain. After the final request, its mAP$_{50\text{-}95}$ reaches $20.27\%$, exceeding the $20.19\%$ achieved by Retrain. F-SOSA-I/U also exhibits greater stability: its mAP$_{50\text{-}95}$ varies within a range of $0.35$ percentage points, compared with $0.55$ for Retrain, while its maximum request-to-request variation is $0.28$ percentage points, versus $0.46$ for Retrain.

More notably, F-SOSA-I/U provides substantial efficiency gains. As shown in the figure, its communication and time curves remain essentially flat for $8$ of the $31$ requests. When the client involved in a request is not selected for aggregation, Bernoulli sampling avoids unnecessary computation and communication. These skipped updates save $2.74$ GB of communication and $343.73$ minutes of computation, accounting for $100\%$ and approximately $78.31\%$ of the overall communication and time savings, respectively. The savings do not arise solely from skipped updates: across the remaining $23$ requests that require model updates, F-SOSA-I/U saves an additional $95.19$ minutes. This additional time saving occurs because some requests require F-SOSA-I/U to update only one client with a relatively small local dataset, whereas Retrain must wait for all participating clients to complete local training and is therefore limited by the slowest client. After the complete request sequence, F-SOSA-I/U incurs $8.21$ GB of cumulative communication, compared with $10.94$ GB for Retrain, corresponding to a $25.00\%$ reduction. Its cumulative computation time decreases from $1335.21$ to $896.28$ minutes, yielding a $32.87\%$ reduction. These absolute savings are particularly significant on BDD100K, since repeatedly retraining YOLOv8s and communicating its model updates over complex real-world driving data is considerably more expensive than editing lightweight classification models.

Taken together, F-SOSA-I/U preserves near-retraining detection utility while substantially reducing the cost of frequent recomputation, demonstrating its practical value in dynamic and communication-constrained edge environments.

\subsection{Detailed Communication-Cost Breakdown}
\label{app:comm_break}

\begin{table*}[t]
\centering
\caption{Detailed communication costs (MB) of F-SOSA for $M=20$ and $\rho=0.5$, where $M$ is the total number of clients and $\rho$ is the client selection ratio. The downlink corresponds to one broadcast of the global model to all active clients. Model Up., Diag Up., and K-FAC Up. denote the aggregate uplink communication of the selected clients for local models, diagonal Fisher information, and Kronecker-Factored Approximate Curvature (K-FAC) Fisher information, respectively. Unselected clients incur no uplink communication. The final column reports the total communication costs of the Diag and K-FAC variants.}
\label{tab:fsosa_comm}
\setlength{\tabcolsep}{4pt}
\renewcommand{\arraystretch}{1.05}
\begin{tabular}{llrrrrrr}
\toprule
Dataset & Model
& \multicolumn{1}{c}{Broadcast}
& \multicolumn{3}{c}{Selected Clients}
& \multicolumn{1}{c}{Unselected Clients}
& Round Total \\
\cmidrule(lr){3-3}
\cmidrule(lr){4-6}
\cmidrule(lr){7-7}
& & Downlink & Model Up. & Diag Up. & K-FAC Up. & Uplink & Diag/K-FAC \\
\midrule
MNIST
& Two-Layer CNN
& 1.61 & 8.04 & 8.04 & 8.81 & 0
& 17.69/18.46 \\

SVHN
& Two-Layer CNN
& 3.58 & 17.92 & 17.92 & 18.81 & 0
& 39.42/40.31 \\

CIFAR10
& Two-Layer CNN
& 3.58 & 17.92 & 17.92 & 18.81 & 0
& 39.42/40.31 \\

CIFAR10
& ResNet18-Pretrained
& 42.63 & 213.13 & 213.13 & 206.68 & 0
& 468.89/462.44 \\

FEMNIST
& Two-Layer CNN
& 25.18 & 125.91 & 125.91 & 138.59 & 0
& 277.00/289.68 \\

BDD100K
& YOLOv8s
& 42.49 & 212.47 & 212.47 & 211.99 & 0
& 467.43/466.95 \\
\bottomrule
\end{tabular}
\end{table*}

To provide a clear breakdown of the communication costs, Table~\ref{tab:fsosa_comm} reports the per-round costs for selected and unselected clients in F-SOSA with $M=20$ and $\rho=0.5$. Under this configuration, ten clients are selected, while the remaining ten are unselected. The server first broadcasts the global model to all clients. Then, only the selected clients upload their local models and approximate Fisher information, represented by either Diag or K-FAC, whereas the unselected clients do not upload any information. Note that although four experiments in the table use two-layer CNNs, their architectures and parameter counts differ, resulting in different model sizes.

As shown in the table, the uplink cost of the Diag Fisher information is equal to that of the 16-bit quantized local model for all model architectures. Their combined size is therefore equal to the communication size of one FP32 model. This is because 16-bit quantization is applied to both the local models and the Diag Fisher information in our implementation. After receiving the quantized information, the server reconstructs the values and performs aggregation. More specifically, each local model and its diagonal Fisher information are quantized layer by layer to 16-bit precision before aggregation. They have the same communication cost because both contain one value for each model parameter. For K-FAC, the Fisher factors are further compressed using low-rank decomposition and quantization. Therefore, the K-FAC communication cost depends on the model architecture and the dimensions of its Fisher factors, and may differ slightly from the Diag cost. The table also shows that unselected clients incur no uplink cost. The final column reports the total communication costs of the Diag and K-FAC variants, respectively. The two-layer CNN has the same communication cost on SVHN and CIFAR-10 because the two settings use the same model architecture and number of parameters.

Furthermore, as shown in Tables~\ref{tab:mnist-mnistcnn-stream-comm-detail}--\ref{tab:bdd100k_stream_comm_detail}, we report a detailed breakdown of the actual amount of data transmitted during the initial training round and 31 sequential editing requests. The communication cost includes three parts: global-model broadcasts, local-model uploads from selected clients, and approximate Fisher information uploads from selected clients. Unselected clients do not upload models or Fisher information. Therefore, the reported totals include all communication required to exchange the current model and Fisher information between the server and clients. With this complete accounting, F-SOSA-I/U reduces the cumulative communication cost from $424.48$ to $329.61$ MB on MNIST, from $966.63$ to $717.63$ MB on SVHN, from $6942.18$ to $5416.97$ MB on FEMNIST, from $966.63$ to $750.59$ MB on CIFAR-10, and from $11207.23$ to $8405.42$ MB on BDD100K. These results correspond to reductions of $22.35\%$, $25.76\%$, $21.97\%$, $22.35\%$, and $25.00\%$, respectively. The detailed BDD100K results further show that the F-SOSA-I/U total of $8405.42$ MB consists of $1019.76$ MB of global-model broadcasts, $3696.95$ MB of local-model uploads, and $3688.71$ MB of Fisher-information uploads. In comparison, retraining requires $1359.68$, $4929.27$, and $4918.28$ MB for the same three parts, respectively. Thus, the reduction is obtained even after including the large Fisher-information uploads and all model transfers. When an editing request requires an update, F-SOSA-I/U has the same per-round communication components as retraining; however, it avoids the full broadcast and upload process when the request does not require updating the selected clients. This saves $94.87$, $249.00$, $1525.21$, $216.04$, and $2801.81$ MB on MNIST, SVHN, FEMNIST, CIFAR-10, and BDD100K, respectively. SVHN achieves the largest relative reduction of $25.76\%$, while BDD100K achieves the largest absolute reduction of $2801.81$ MB. Overall, F-SOSA-I/U provides consistent communication savings across datasets, ranging from $21.97\%$ on FEMNIST to $25.76\%$ on SVHN, while preserving performance comparable to retraining.

\begin{table*}[t]
\centering
\caption{Detailed communication costs (MB) of sequential editing on MNIST with $\rho=0.5$. Each communication entry is reported as F-SOSA-I/U versus Retrain, with the first value corresponding to F-SOSA-I/U and the second to Retrain. Model Up. and Fisher Up. denote the communication costs of uploading the model and the approximate Fisher information, respectively. In the Request column, the number identifies the affected client, while $+$ and $-$ denote client addition and client removal, respectively.}
\label{tab:mnist-mnistcnn-stream-comm-detail}
\setlength{\tabcolsep}{3pt}
\renewcommand{\arraystretch}{1.05}
\begin{tabular}{lrrrrr}
\toprule
Request & Broadcast & \multicolumn{2}{c}{Selected Clients}
& Unselected Clients
& Round Total \\
\cmidrule(lr){2-2}\cmidrule(lr){3-4}\cmidrule(lr){5-5}
& Downlink & Model Up. & Fisher Up. & Uplink & \\
\midrule
Initial & 1.61/1.61 & 4.82/4.82 & 4.82/4.82 & 0.00/0.00 & 11.25/11.25 \\
12+ & 0.00/1.61 & 0.00/4.82 & 0.00/4.82 & 0.00/0.00 & 0.00/11.25 \\
13+ & 1.61/1.61 & 5.63/5.63 & 5.63/5.63 & 0.00/0.00 & 12.87/12.87 \\
14+ & 0.00/1.61 & 0.00/5.63 & 0.00/5.63 & 0.00/0.00 & 0.00/12.87 \\
15+ & 1.61/1.61 & 6.43/6.43 & 6.43/6.43 & 0.00/0.00 & 14.47/14.47 \\
16+ & 0.00/1.61 & 0.00/6.43 & 0.00/6.43 & 0.00/0.00 & 0.00/14.47 \\
3- & 0.00/1.61 & 0.00/6.43 & 0.00/6.43 & 0.00/0.00 & 0.00/14.47 \\
6- & 1.61/1.61 & 5.63/5.63 & 5.63/5.63 & 0.00/0.00 & 12.87/12.87 \\
7- & 1.61/1.61 & 5.63/5.63 & 5.63/5.63 & 0.00/0.00 & 12.87/12.87 \\
17+ & 1.61/1.61 & 5.63/5.63 & 5.63/5.63 & 0.00/0.00 & 12.87/12.87 \\
18+ & 1.61/1.61 & 6.43/6.43 & 6.43/6.43 & 0.00/0.00 & 14.47/14.47 \\
19+ & 0.00/1.61 & 0.00/6.43 & 0.00/6.43 & 0.00/0.00 & 0.00/14.47 \\
1- & 1.61/1.61 & 6.43/6.43 & 6.43/6.43 & 0.00/0.00 & 14.47/14.47 \\
11- & 1.61/1.61 & 5.63/5.63 & 5.63/5.63 & 0.00/0.00 & 12.87/12.87 \\
16- & 1.61/1.61 & 5.63/5.63 & 5.63/5.63 & 0.00/0.00 & 12.87/12.87 \\
2- & 1.61/1.61 & 4.82/4.82 & 4.82/4.82 & 0.00/0.00 & 11.25/11.25 \\
20+ & 1.61/1.61 & 5.63/5.63 & 5.63/5.63 & 0.00/0.00 & 12.87/12.87 \\
21+ & 1.61/1.61 & 5.63/5.63 & 5.63/5.63 & 0.00/0.00 & 12.87/12.87 \\
22+ & 1.61/1.61 & 6.43/6.43 & 6.43/6.43 & 0.00/0.00 & 14.47/14.47 \\
4- & 1.61/1.61 & 5.63/5.63 & 5.63/5.63 & 0.00/0.00 & 12.87/12.87 \\
8- & 0.00/1.61 & 0.00/5.63 & 0.00/5.63 & 0.00/0.00 & 0.00/12.87 \\
9- & 1.61/1.61 & 4.82/4.82 & 4.82/4.82 & 0.00/0.00 & 11.25/11.25 \\
23+ & 1.61/1.61 & 5.63/5.63 & 5.63/5.63 & 0.00/0.00 & 12.87/12.87 \\
24+ & 1.61/1.61 & 5.63/5.63 & 5.63/5.63 & 0.00/0.00 & 12.87/12.87 \\
25+ & 1.61/1.61 & 6.43/6.43 & 6.43/6.43 & 0.00/0.00 & 14.47/14.47 \\
26+ & 1.61/1.61 & 6.43/6.43 & 6.43/6.43 & 0.00/0.00 & 14.47/14.47 \\
27+ & 1.61/1.61 & 7.23/7.23 & 7.23/7.23 & 0.00/0.00 & 16.07/16.07 \\
0- & 1.61/1.61 & 6.43/6.43 & 6.43/6.43 & 0.00/0.00 & 14.47/14.47 \\
10- & 0.00/1.61 & 0.00/6.43 & 0.00/6.43 & 0.00/0.00 & 0.00/14.47 \\
17- & 1.61/1.61 & 5.63/5.63 & 5.63/5.63 & 0.00/0.00 & 12.87/12.87 \\
18- & 1.61/1.61 & 5.63/5.63 & 5.63/5.63 & 0.00/0.00 & 12.87/12.87 \\
19- & 1.61/1.61 & 4.82/4.82 & 4.82/4.82 & 0.00/0.00 & 11.25/11.25 \\
\midrule
\textbf{Total} & \textbf{40.25/51.52} & \textbf{144.68/186.48} & \textbf{144.68/186.48} & \textbf{0.00/0.00} & \textbf{329.61/424.48} \\
\bottomrule
\end{tabular}
\end{table*}

\begin{table*}[t]
\centering
\caption{Detailed communication costs (MB) of sequential editing on SVHN with $\rho=0.5$. Each communication entry is reported as F-SOSA-I/U versus Retrain, with the first value corresponding to F-SOSA-I/U and the second to Retrain. Model Up. and Fisher Up. denote the communication costs of uploading the model and the approximate Fisher information, respectively. In the Request column, the number identifies the affected client, while $+$ and $-$ denote client addition and client removal, respectively.}
\label{tab:svhn-cnn-stream-comm-detail}
\setlength{\tabcolsep}{3pt}
\renewcommand{\arraystretch}{1.05}
\begin{tabular}{lrrrrr}
\toprule
Request & Broadcast & \multicolumn{2}{c}{Selected Clients} & Unselected Clients & Round Total \\
\cmidrule(lr){2-2}\cmidrule(lr){3-4}\cmidrule(lr){5-5}
& Downlink & Model Up. & Fisher Up. & Uplink & \\
\midrule
Initial & 3.58/3.58 & 10.75/10.75 & 11.28/11.28 & 0.00/0.00 & 25.61/25.61 \\
12+ & 3.58/3.58 & 10.75/10.75 & 11.28/11.28 & 0.00/0.00 & 25.61/25.61 \\
13+ & 3.58/3.58 & 12.55/12.55 & 13.16/13.16 & 0.00/0.00 & 29.29/29.29 \\
14+ & 3.58/3.58 & 12.55/12.55 & 13.16/13.16 & 0.00/0.00 & 29.29/29.29 \\
15+ & 3.58/3.58 & 14.34/14.34 & 15.04/15.04 & 0.00/0.00 & 32.96/32.96 \\
16+ & 3.58/3.58 & 14.34/14.34 & 15.04/15.04 & 0.00/0.00 & 32.96/32.96 \\
3- & 0.00/3.58 & 0.00/14.34 & 0.00/15.04 & 0.00/0.00 & 0.00/32.96 \\
6- & 3.58/3.58 & 12.55/12.55 & 13.16/13.16 & 0.00/0.00 & 29.29/29.29 \\
7- & 3.58/3.58 & 12.55/12.55 & 13.16/13.16 & 0.00/0.00 & 29.29/29.29 \\
17+ & 0.00/3.58 & 0.00/12.55 & 0.00/13.16 & 0.00/0.00 & 0.00/29.29 \\
18+ & 3.58/3.58 & 14.34/14.34 & 15.04/15.04 & 0.00/0.00 & 32.96/32.96 \\
19+ & 0.00/3.58 & 0.00/14.34 & 0.00/15.04 & 0.00/0.00 & 0.00/32.96 \\
1- & 3.58/3.58 & 14.34/14.34 & 15.04/15.04 & 0.00/0.00 & 32.96/32.96 \\
11- & 3.58/3.58 & 12.55/12.55 & 13.16/13.16 & 0.00/0.00 & 29.29/29.29 \\
16- & 0.00/3.58 & 0.00/12.55 & 0.00/13.16 & 0.00/0.00 & 0.00/29.29 \\
2- & 3.58/3.58 & 10.75/10.75 & 11.28/11.28 & 0.00/0.00 & 25.61/25.61 \\
20+ & 3.58/3.58 & 12.55/12.55 & 13.16/13.16 & 0.00/0.00 & 29.29/29.29 \\
21+ & 3.58/3.58 & 12.55/12.55 & 13.16/13.16 & 0.00/0.00 & 29.29/29.29 \\
22+ & 3.58/3.58 & 14.34/14.34 & 15.04/15.04 & 0.00/0.00 & 32.96/32.96 \\
4- & 3.58/3.58 & 12.55/12.55 & 13.16/13.16 & 0.00/0.00 & 29.29/29.29 \\
8- & 0.00/3.58 & 0.00/12.55 & 0.00/13.16 & 0.00/0.00 & 0.00/29.29 \\
9- & 3.58/3.58 & 10.75/10.75 & 11.28/11.28 & 0.00/0.00 & 25.61/25.61 \\
23+ & 3.58/3.58 & 12.55/12.55 & 13.16/13.16 & 0.00/0.00 & 29.29/29.29 \\
24+ & 0.00/3.58 & 0.00/12.55 & 0.00/13.16 & 0.00/0.00 & 0.00/29.29 \\
25+ & 3.58/3.58 & 14.34/14.34 & 15.04/15.04 & 0.00/0.00 & 32.96/32.96 \\
26+ & 0.00/3.58 & 0.00/14.34 & 0.00/15.04 & 0.00/0.00 & 0.00/32.96 \\
27+ & 3.58/3.58 & 16.13/16.13 & 16.93/16.93 & 0.00/0.00 & 36.64/36.64 \\
0- & 3.58/3.58 & 14.34/14.34 & 15.04/15.04 & 0.00/0.00 & 32.96/32.96 \\
10- & 0.00/3.58 & 0.00/14.34 & 0.00/15.04 & 0.00/0.00 & 0.00/32.96 \\
17- & 3.58/3.58 & 12.55/12.55 & 13.16/13.16 & 0.00/0.00 & 29.29/29.29 \\
18- & 3.58/3.58 & 12.55/12.55 & 13.16/13.16 & 0.00/0.00 & 29.29/29.29 \\
19- & 3.58/3.58 & 10.75/10.75 & 11.28/11.28 & 0.00/0.00 & 25.61/25.61 \\
\midrule
\textbf{Total} & \textbf{85.92/114.56} & \textbf{308.25/415.78} & \textbf{323.46/436.29} & \textbf{0.00/0.00} & \textbf{717.63/966.63} \\
\bottomrule
\end{tabular}
\end{table*}

\begin{table*}[t]
\centering
\caption{Detailed communication costs (MB) of sequential editing on FEMNIST with $\rho=0.5$. Each communication entry is reported as F-SOSA-I/U versus Retrain, with the first value corresponding to F-SOSA-I/U and the second to Retrain. Model Up. and Fisher Up. denote the communication costs of uploading the model and the approximate Fisher information, respectively. In the Request column, the number identifies the affected client, while $+$ and $-$ denote client addition and client removal, respectively.}
\label{tab:femnist-nistcnn-stream-comm-detail}
\setlength{\tabcolsep}{3pt}
\renewcommand{\arraystretch}{1.05}
\begin{tabular}{lrrrrr}
\toprule
Request & Broadcast & \multicolumn{2}{c}{Selected Clients} & Unselected Clients & Round Total \\
\cmidrule(lr){2-2}\cmidrule(lr){3-4}\cmidrule(lr){5-5}
& Downlink & Model Up. & Fisher Up. & Uplink & \\
\midrule
Initial & 25.18/25.18 & 75.55/75.55 & 83.15/83.15 & 0.00/0.00 & 183.88/183.88 \\
12+ & 0.00/25.18 & 0.00/75.55 & 0.00/83.15 & 0.00/0.00 & 0.00/183.88 \\
13+ & 25.18/25.18 & 88.14/88.14 & 97.01/97.01 & 0.00/0.00 & 210.33/210.33 \\
14+ & 0.00/25.18 & 0.00/88.14 & 0.00/97.01 & 0.00/0.00 & 0.00/210.33 \\
15+ & 25.18/25.18 & 100.73/100.73 & 110.87/110.87 & 0.00/0.00 & 236.78/236.78 \\
16+ & 0.00/25.18 & 0.00/100.73 & 0.00/110.87 & 0.00/0.00 & 0.00/236.78 \\
3- & 25.18/25.18 & 100.73/100.73 & 110.87/110.87 & 0.00/0.00 & 236.78/236.78 \\
6- & 25.18/25.18 & 88.14/88.14 & 97.01/97.01 & 0.00/0.00 & 210.33/210.33 \\
7- & 25.18/25.18 & 88.14/88.14 & 97.01/97.01 & 0.00/0.00 & 210.33/210.33 \\
17+ & 25.18/25.18 & 88.14/88.14 & 97.01/97.01 & 0.00/0.00 & 210.33/210.33 \\
18+ & 25.18/25.18 & 100.73/100.73 & 110.87/110.87 & 0.00/0.00 & 236.78/236.78 \\
19+ & 25.18/25.18 & 100.73/100.73 & 110.87/110.87 & 0.00/0.00 & 236.78/236.78 \\
1- & 0.00/25.18 & 0.00/100.73 & 0.00/110.87 & 0.00/0.00 & 0.00/236.78 \\
11- & 25.18/25.18 & 88.14/88.14 & 97.01/97.01 & 0.00/0.00 & 210.33/210.33 \\
16- & 25.18/25.18 & 88.14/88.14 & 97.01/97.01 & 0.00/0.00 & 210.33/210.33 \\
2- & 25.18/25.18 & 75.55/75.55 & 83.15/83.15 & 0.00/0.00 & 183.88/183.88 \\
20+ & 25.18/25.18 & 88.14/88.14 & 97.01/97.01 & 0.00/0.00 & 210.33/210.33 \\
21+ & 25.18/25.18 & 88.14/88.14 & 97.01/97.01 & 0.00/0.00 & 210.33/210.33 \\
22+ & 25.18/25.18 & 100.73/100.73 & 110.87/110.87 & 0.00/0.00 & 236.78/236.78 \\
4- & 25.18/25.18 & 88.14/88.14 & 97.01/97.01 & 0.00/0.00 & 210.33/210.33 \\
8- & 25.18/25.18 & 88.14/88.14 & 97.01/97.01 & 0.00/0.00 & 210.33/210.33 \\
9- & 25.18/25.18 & 75.55/75.55 & 83.15/83.15 & 0.00/0.00 & 183.88/183.88 \\
23+ & 25.18/25.18 & 88.14/88.14 & 97.01/97.01 & 0.00/0.00 & 210.33/210.33 \\
24+ & 0.00/25.18 & 0.00/88.14 & 0.00/97.01 & 0.00/0.00 & 0.00/210.33 \\
25+ & 25.18/25.18 & 100.73/100.73 & 110.87/110.87 & 0.00/0.00 & 236.78/236.78 \\
26+ & 25.18/25.18 & 100.73/100.73 & 110.87/110.87 & 0.00/0.00 & 236.78/236.78 \\
27+ & 25.18/25.18 & 113.32/113.32 & 124.73/124.73 & 0.00/0.00 & 263.23/263.23 \\
0- & 25.18/25.18 & 100.73/100.73 & 110.87/110.87 & 0.00/0.00 & 236.78/236.78 \\
10- & 0.00/25.18 & 0.00/100.73 & 0.00/110.87 & 0.00/0.00 & 0.00/236.78 \\
17- & 25.18/25.18 & 88.14/88.14 & 97.01/97.01 & 0.00/0.00 & 210.33/210.33 \\
18- & 0.00/25.18 & 0.00/88.14 & 0.00/97.01 & 0.00/0.00 & 0.00/210.33 \\
19- & 25.18/25.18 & 75.55/75.55 & 83.15/83.15 & 0.00/0.00 & 183.88/183.88 \\
\midrule
\textbf{Total} & \textbf{629.50/805.76} & \textbf{2279.04/2921.20} & \textbf{2508.43/3215.22} & \textbf{0.00/0.00} & \textbf{5416.97/6942.18} \\
\bottomrule
\end{tabular}
\end{table*}

\begin{table*}[t]
\centering
\caption{Detailed communication costs (MB) of sequential editing on CIFAR10 with $\rho=0.5$. Each communication entry is reported as F-SOSA-I/U versus Retrain, with the first value corresponding to F-SOSA-I/U and the second to Retrain. Model Up. and Fisher Up. denote the communication costs of uploading the model and the approximate Fisher information, respectively. In the Request column, the number identifies the affected client, while $+$ and $-$ denote client addition and client removal, respectively.}
\label{tab:cifar10-cnn-stream-comm-detail}
\setlength{\tabcolsep}{3pt}
\renewcommand{\arraystretch}{1.05}
\begin{tabular}{lrrrrr}
\toprule
Request & Broadcast & \multicolumn{2}{c}{Selected Clients} & Unselected Clients & Round Total \\
\cmidrule(lr){2-2}\cmidrule(lr){3-4}\cmidrule(lr){5-5}
& Downlink & Model Up. & Fisher Up. & Uplink & \\
\midrule
Initial & 3.58/3.58 & 10.75/10.75 & 11.28/11.28 & 0.00/0.00 & 25.61/25.61 \\
12+ & 3.58/3.58 & 10.75/10.75 & 11.28/11.28 & 0.00/0.00 & 25.61/25.61 \\
13+ & 3.58/3.58 & 12.55/12.55 & 13.16/13.16 & 0.00/0.00 & 29.29/29.29 \\
14+ & 0.00/3.58 & 0.00/12.55 & 0.00/13.16 & 0.00/0.00 & 0.00/29.29 \\
15+ & 3.58/3.58 & 14.34/14.34 & 15.04/15.04 & 0.00/0.00 & 32.96/32.96 \\
16+ & 3.58/3.58 & 14.34/14.34 & 15.04/15.04 & 0.00/0.00 & 32.96/32.96 \\
3- & 3.58/3.58 & 14.34/14.34 & 15.04/15.04 & 0.00/0.00 & 32.96/32.96 \\
6- & 3.58/3.58 & 12.55/12.55 & 13.16/13.16 & 0.00/0.00 & 29.29/29.29 \\
7- & 0.00/3.58 & 0.00/12.55 & 0.00/13.16 & 0.00/0.00 & 0.00/29.29 \\
17+ & 0.00/3.58 & 0.00/12.55 & 0.00/13.16 & 0.00/0.00 & 0.00/29.29 \\
18+ & 3.58/3.58 & 14.34/14.34 & 15.04/15.04 & 0.00/0.00 & 32.96/32.96 \\
19+ & 3.58/3.58 & 14.34/14.34 & 15.04/15.04 & 0.00/0.00 & 32.96/32.96 \\
1- & 0.00/3.58 & 0.00/14.34 & 0.00/15.04 & 0.00/0.00 & 0.00/32.96 \\
11- & 3.58/3.58 & 12.55/12.55 & 13.16/13.16 & 0.00/0.00 & 29.29/29.29 \\
16- & 3.58/3.58 & 12.55/12.55 & 13.16/13.16 & 0.00/0.00 & 29.29/29.29 \\
2- & 3.58/3.58 & 10.75/10.75 & 11.28/11.28 & 0.00/0.00 & 25.61/25.61 \\
20+ & 3.58/3.58 & 12.55/12.55 & 13.16/13.16 & 0.00/0.00 & 29.29/29.29 \\
21+ & 3.58/3.58 & 12.55/12.55 & 13.16/13.16 & 0.00/0.00 & 29.29/29.29 \\
22+ & 3.58/3.58 & 14.34/14.34 & 15.04/15.04 & 0.00/0.00 & 32.96/32.96 \\
4- & 3.58/3.58 & 12.55/12.55 & 13.16/13.16 & 0.00/0.00 & 29.29/29.29 \\
8- & 0.00/3.58 & 0.00/12.55 & 0.00/13.16 & 0.00/0.00 & 0.00/29.29 \\
9- & 3.58/3.58 & 10.75/10.75 & 11.28/11.28 & 0.00/0.00 & 25.61/25.61 \\
23+ & 3.58/3.58 & 12.55/12.55 & 13.16/13.16 & 0.00/0.00 & 29.29/29.29 \\
24+ & 3.58/3.58 & 12.55/12.55 & 13.16/13.16 & 0.00/0.00 & 29.29/29.29 \\
25+ & 3.58/3.58 & 14.34/14.34 & 15.04/15.04 & 0.00/0.00 & 32.96/32.96 \\
26+ & 0.00/3.58 & 0.00/14.34 & 0.00/15.04 & 0.00/0.00 & 0.00/32.96 \\
27+ & 3.58/3.58 & 16.13/16.13 & 16.93/16.93 & 0.00/0.00 & 36.64/36.64 \\
0- & 3.58/3.58 & 14.34/14.34 & 15.04/15.04 & 0.00/0.00 & 32.96/32.96 \\
10- & 0.00/3.58 & 0.00/14.34 & 0.00/15.04 & 0.00/0.00 & 0.00/32.96 \\
17- & 3.58/3.58 & 12.55/12.55 & 13.16/13.16 & 0.00/0.00 & 29.29/29.29 \\
18- & 3.58/3.58 & 12.55/12.55 & 13.16/13.16 & 0.00/0.00 & 29.29/29.29 \\
19- & 3.58/3.58 & 10.75/10.75 & 11.28/11.28 & 0.00/0.00 & 25.61/25.61 \\
\midrule
\textbf{Total} & \textbf{89.50/114.56} & \textbf{322.59/415.78} & \textbf{338.50/436.29} & \textbf{0.00/0.00} & \textbf{750.59/966.63} \\
\bottomrule
\end{tabular}
\end{table*}

\begin{table*}[t]
\centering
\caption{Detailed communication costs (MB) of sequential editing on BDD100K with $\rho=0.5$. Each communication entry is reported as F-SOSA-I/U versus Retrain, with the first value corresponding to F-SOSA-I/U and the second to Retrain. Model Up. and Fisher Up. denote the communication costs of uploading the model and the approximate Fisher information, respectively. In the Request column, the number identifies the affected client, while $+$ and $-$ denote client addition and client removal, respectively.}
\label{tab:bdd100k_stream_comm_detail}
\setlength{\tabcolsep}{3pt}
\renewcommand{\arraystretch}{1.05}
\begin{tabular}{lrrrrr}
\toprule
Request & Broadcast & \multicolumn{2}{c}{Selected Clients} & Unselected Clients & Round Total \\
\cmidrule(lr){2-2}\cmidrule(lr){3-4}\cmidrule(lr){5-5}
& Downlink & Model Up. & Fisher Up. & Uplink & \\
\midrule
Initial & 42.49/42.49 & 127.48/127.48 & 127.20/127.20 & 0.00/0.00 & 297.17/297.17 \\
12+ & 0.00/42.49 & 0.00/127.48 & 0.00/127.20 & 0.00/0.00 & 0.00/297.17 \\
13+ & 42.49/42.49 & 148.73/148.73 & 148.40/148.40 & 0.00/0.00 & 339.62/339.62 \\
14+ & 0.00/42.49 & 0.00/148.73 & 0.00/148.40 & 0.00/0.00 & 0.00/339.62 \\
15+ & 42.49/42.49 & 169.97/169.97 & 169.60/169.60 & 0.00/0.00 & 382.06/382.06 \\
16+ & 42.49/42.49 & 169.97/169.97 & 169.60/169.60 & 0.00/0.00 & 382.06/382.06 \\
3- & 0.00/42.49 & 0.00/169.97 & 0.00/169.60 & 0.00/0.00 & 0.00/382.06 \\
6- & 42.49/42.49 & 148.73/148.73 & 148.40/148.40 & 0.00/0.00 & 339.62/339.62 \\
7- & 42.49/42.49 & 148.73/148.73 & 148.40/148.40 & 0.00/0.00 & 339.62/339.62 \\
17+ & 0.00/42.49 & 0.00/148.73 & 0.00/148.40 & 0.00/0.00 & 0.00/339.62 \\
18+ & 42.49/42.49 & 169.97/169.97 & 169.60/169.60 & 0.00/0.00 & 382.06/382.06 \\
19+ & 0.00/42.49 & 0.00/169.97 & 0.00/169.60 & 0.00/0.00 & 0.00/382.06 \\
1- & 42.49/42.49 & 169.97/169.97 & 169.60/169.60 & 0.00/0.00 & 382.06/382.06 \\
11- & 42.49/42.49 & 148.73/148.73 & 148.40/148.40 & 0.00/0.00 & 339.62/339.62 \\
16- & 42.49/42.49 & 148.73/148.73 & 148.40/148.40 & 0.00/0.00 & 339.62/339.62 \\
2- & 42.49/42.49 & 127.48/127.48 & 127.20/127.20 & 0.00/0.00 & 297.17/297.17 \\
20+ & 42.49/42.49 & 148.73/148.73 & 148.40/148.40 & 0.00/0.00 & 339.62/339.62 \\
21+ & 42.49/42.49 & 148.73/148.73 & 148.40/148.40 & 0.00/0.00 & 339.62/339.62 \\
22+ & 42.49/42.49 & 169.97/169.97 & 169.60/169.60 & 0.00/0.00 & 382.06/382.06 \\
4- & 42.49/42.49 & 148.73/148.73 & 148.40/148.40 & 0.00/0.00 & 339.62/339.62 \\
8- & 0.00/42.49 & 0.00/148.73 & 0.00/148.40 & 0.00/0.00 & 0.00/339.62 \\
9- & 42.49/42.49 & 127.48/127.48 & 127.20/127.20 & 0.00/0.00 & 297.17/297.17 \\
23+ & 42.49/42.49 & 148.73/148.73 & 148.40/148.40 & 0.00/0.00 & 339.62/339.62 \\
24+ & 0.00/42.49 & 0.00/148.73 & 0.00/148.40 & 0.00/0.00 & 0.00/339.62 \\
25+ & 42.49/42.49 & 169.97/169.97 & 169.60/169.60 & 0.00/0.00 & 382.06/382.06 \\
26+ & 42.49/42.49 & 169.97/169.97 & 169.60/169.60 & 0.00/0.00 & 382.06/382.06 \\
27+ & 42.49/42.49 & 191.22/191.22 & 190.80/190.80 & 0.00/0.00 & 424.51/424.51 \\
0- & 42.49/42.49 & 169.97/169.97 & 169.60/169.60 & 0.00/0.00 & 382.06/382.06 \\
10- & 0.00/42.49 & 0.00/169.97 & 0.00/169.60 & 0.00/0.00 & 0.00/382.06 \\
17- & 42.49/42.49 & 148.73/148.73 & 148.40/148.40 & 0.00/0.00 & 339.62/339.62 \\
18- & 42.49/42.49 & 148.73/148.73 & 148.40/148.40 & 0.00/0.00 & 339.62/339.62 \\
19- & 42.49/42.49 & 127.48/127.48 & 127.20/127.20 & 0.00/0.00 & 297.17/297.17 \\
\midrule
\textbf{Total} & \textbf{1019.76/1359.68} & \textbf{3696.95/4929.27} & \textbf{3688.71/4918.28} & \textbf{0.00/0.00} & \textbf{8405.42/11207.23} \\
\bottomrule
\end{tabular}
\end{table*}

}


\end{document}